\def\eqref#1{equation~\ref{#1}}
\def\1{\bm{1}}
\DeclareMathAlphabet{\mathsfit}{\encodingdefault}{\sfdefault}{m}{sl}
\SetMathAlphabet{\mathsfit}{bold}{\encodingdefault}{\sfdefault}{bx}{n}
\newcommand{\R}{\mathbb{R}}
\DeclareMathOperator*{\argmax}{arg\,max}
\theoremstyle{plain}
\newtheorem{theorem}{Theorem}[section]
\newtheorem{lemma}[theorem]{Lemma}
\newtheorem{corollary}[theorem]{Corollary}
\theoremstyle{definition}
\theoremstyle{remark}
\newtheorem{remark}[theorem]{Remark}
\DeclarePairedDelimiter{\norm}{\lVert}{\rVert}
\icmltitlerunning{Intriguing Properties of Input-Dependent Randomized Smoothing}
\begin{document}

\twocolumn[
\icmltitle{Intriguing Properties of Input-Dependent Randomized Smoothing}



\icmlsetsymbol{equal}{*}

\begin{icmlauthorlist}
\icmlauthor{Peter Súkeník}{ist}
\icmlauthor{Aleksei Kuvshinov}{tum}
\icmlauthor{Stephan Günnemann}{tum,mdsi}
\end{icmlauthorlist}

\icmlaffiliation{ist}{Institute of Science and Technology Austria, Klosterneuburg, Austria}
\icmlaffiliation{tum}{Technical University of Munich, School of Computation, Information and Technology, Munich, Germany}
\icmlaffiliation{mdsi}{Munich Data Science Institute, Munich, Germany}

\icmlcorrespondingauthor{Peter Súkeník}{peter.sukenik@ista.ac.at}

\icmlkeywords{Machine Learning, ICML}

\vskip 0.3in
]



\printAffiliationsAndNotice{}  

\begin{abstract}
Randomized smoothing is currently considered the state-of-the-art method to obtain certifiably robust classifiers. Despite its remarkable performance, the method is associated with various serious problems such as ``certified accuracy waterfalls'', certification vs.\ accuracy trade-off, or even fairness issues. Input-dependent smoothing approaches have been proposed with intention of overcoming these flaws. However, we demonstrate that these methods lack formal guarantees and so the resulting certificates are not justified. We show that in general, the input-dependent smoothing suffers from the curse of dimensionality, forcing the variance function to have low semi-elasticity. On the other hand, we provide a theoretical and practical framework that enables the usage of input-dependent smoothing even in the presence of the curse of dimensionality, under strict restrictions. We present one concrete design of the smoothing variance function and test it on CIFAR10 and MNIST. Our design mitigates some of the problems of classical smoothing and is formally underlined, yet further improvement of the design is still necessary.

\end{abstract}

\section{Introduction} \label{intro}
Deep neural networks are one of the dominating recently used machine learning methods. They achieve state-of-the-art performance in a variety of applications like computer vision, natural language processing, and many others. The key property that makes neural networks so powerful is their expressivity \citep{guhring2020expressivity}. However, as a price, they possess a weakness - a vulnerability against \textit{adversarial attacks} \citep{szegedy2013intriguing, 10.1007/978-3-642-40994-3_25}. The adversarial attack on a sample $x_0$ is a point $x_1$ such that the distance $d(x_0, x_1)$ is small, yet the predictions of model $f$ on $x_0$ and $x_1$ differ. Such examples are often easy to construct, for example by optimizing for a change in prediction $f(x)$ \citep{10.1007/978-3-642-40994-3_25}. Even worse, these attacks are present even if the model's prediction on $x$ is unequivocal. 

This property is highly undesirable because in several sensitive applications, misclassifying a sample just because it does not follow the natural distribution might lead to serious and harmful consequences. A well-known example is a sticker placed on a traffic sign, which could confuse the self-driving car and cause an accident \citep{eykholt2018robust}. As a result, the robustness of classifiers against adversarial examples has begun to be a strongly discussed topic. Though many methods claim to provide robust classifiers, just some of them are \textit{certifiably} robust, i.e. the robustness is mathematically guaranteed. The certifiability turns out to be essential since more sophisticated attacks can break empirical defenses \citep{carlini2017adversarial}.

Currently, the dominant method to achieve the certifiable robustness is \textit{randomized smoothing} (RS). This clever idea to get rid of adversarial examples using randomization of input was introduced by \citet{lecuyer2019certified} and \citet{li2018certified} and fully formalized and improved by \citet{cohen2019certified}. Let $f$ be a classifier assigning inputs $x \in \mathbb{R}^N$ to one of the classes $C \in \mathcal{C}$. Given a random deviation $\epsilon \sim \mathcal{N}(0, \sigma^2 I)$, the \textit{smoothed classifier} $g$, made of $f$, is defined as: $g(x) = \argmax_C\mathbb{P}(f(x+\epsilon)=C),$ for $C \in \mathcal{C}$. In other words, the smoothed classifier classifies a class that has the highest probability under the sampling of $f(x+\epsilon)$. Consequently, an adversarial attack $x^\prime$ on $f$ is less dangerous for $g$, because $g$ does not look directly at $x^\prime$, but rather at its whole neighborhood, in a weighted manner. This way we can get rid of local artifacts that $f$ possesses -- thus the name ``smoothing''. It turns out, that $g$ enjoys strong robustness properties against attacks bounded by a specifically computed $l_2$-norm threshold, especially if $f$ is trained under a Gaussian noise augmentation \citep{cohen2019certified}. 

Unfortunately, since the introduction of the RS, several serious problems were reported to be connected to the technique. \citet{cohen2019certified} mention two of them. First is the usage of lower confidence bounds to estimate the leading class's probability. With a high probability, this leads to smaller reported certified radiuses in comparison with the true ones. Moreover, it yields a theoretical threshold, which upper-bounds the maximal possible certified radius and causes the ``certified accuracy waterfalls'', which significantly decreases the certified accuracy. This problem is particularly pronounced for small levels of the used smoothing variance $\sigma^2$, which motivates to use larger variance. Second, RS possesses a robustness vs.\ accuracy trade-off problem. The bigger $\sigma$ we use as the smoothing variance, the smaller \textit{clean accuracy} will the smoothed classifier have. This motivates to use rather smaller levels of $\sigma$. Third, as pointed out by \citet{mohapatra2020rethinking}, RS smooths the decision boundary of $f$ in such a way that bounded or convex regions begin to \textit{shrink} as $\sigma$ increases, while the unbounded and anti-convex regions expand. This, as the authors empirically demonstrate, creates a imbalance in class-wise accuracies (accuracies computed per each class separately) of $g$ and causes serious fairness issues. Therefore the smaller values of $\sigma$ are again more preferable. See Appendix~\ref{appA: the motivation} for a detailed discussion. 

Clearly, the usage of a global, constant $\sigma$ is suboptimal. For the samples close to the decision boundary, we want to use small $\sigma$, so that $f$ and $g$ have similar decision boundaries and the expressivity of $f$ is not lost (where not necessary). On the other hand, far from the decision boundary of $f$, where the probability of the dominant class is close to 1, we need bigger $\sigma$ to avoid the severe under-certification (see Appendix~\ref{appA: the motivation}). All together, using a non-constant $\sigma(x)$ rather than constant $\sigma$, a suitable smoothing variance could be used to achieve optimal robustness. 

To support this reasoning, we present a toy example. We train a network on a 2D dataset of a circular shape with the classes being two complementary sectors, one of which is of a very small angle. In Figure~\ref{fig: toy experiment main text} we show the difference between constant and input-dependent $\sigma$. Using the non-constant $\sigma(x)$ defined in Equation~\ref{eq: the sigma fcn}, we obtain an improvement both in terms of the certified radiuses as well as clean accuracy. For more details see Appendix~\ref{appA: the motivation}. Even though there are some works introducing this concept (see Appendix~\ref{appB: concurrent work}), most of them lack mathematical reasoning about the correctness of their method or fairness of their comparison, which, as we show, turns out to be critical. 

\begin{figure*}[t!]
    \centering
    \begin{minipage}[b]{0.32\linewidth}
        \includegraphics[width=\textwidth]{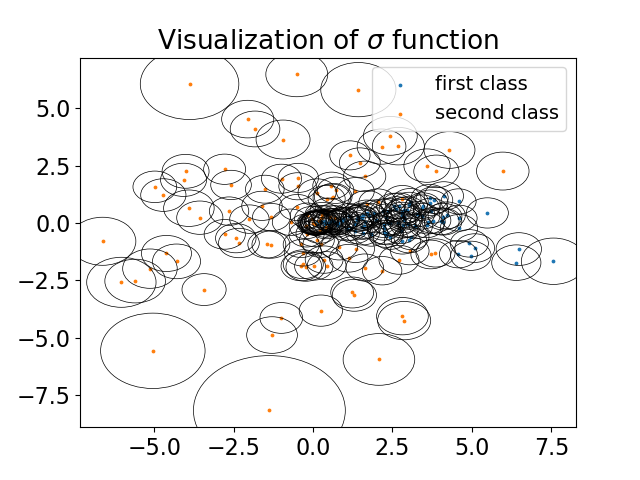}
    \end{minipage}
    \begin{minipage}[b]{0.32\linewidth}
        \includegraphics[width=\textwidth]{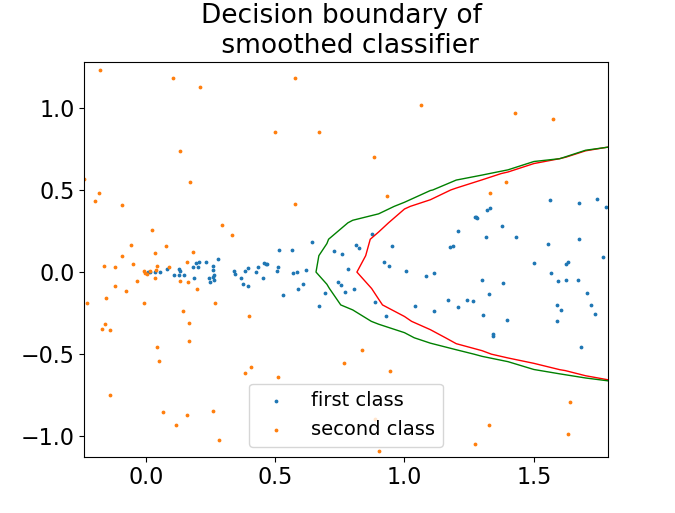}
    \end{minipage}
    \begin{minipage}[b]{0.32\linewidth}
        \includegraphics[width=\textwidth]{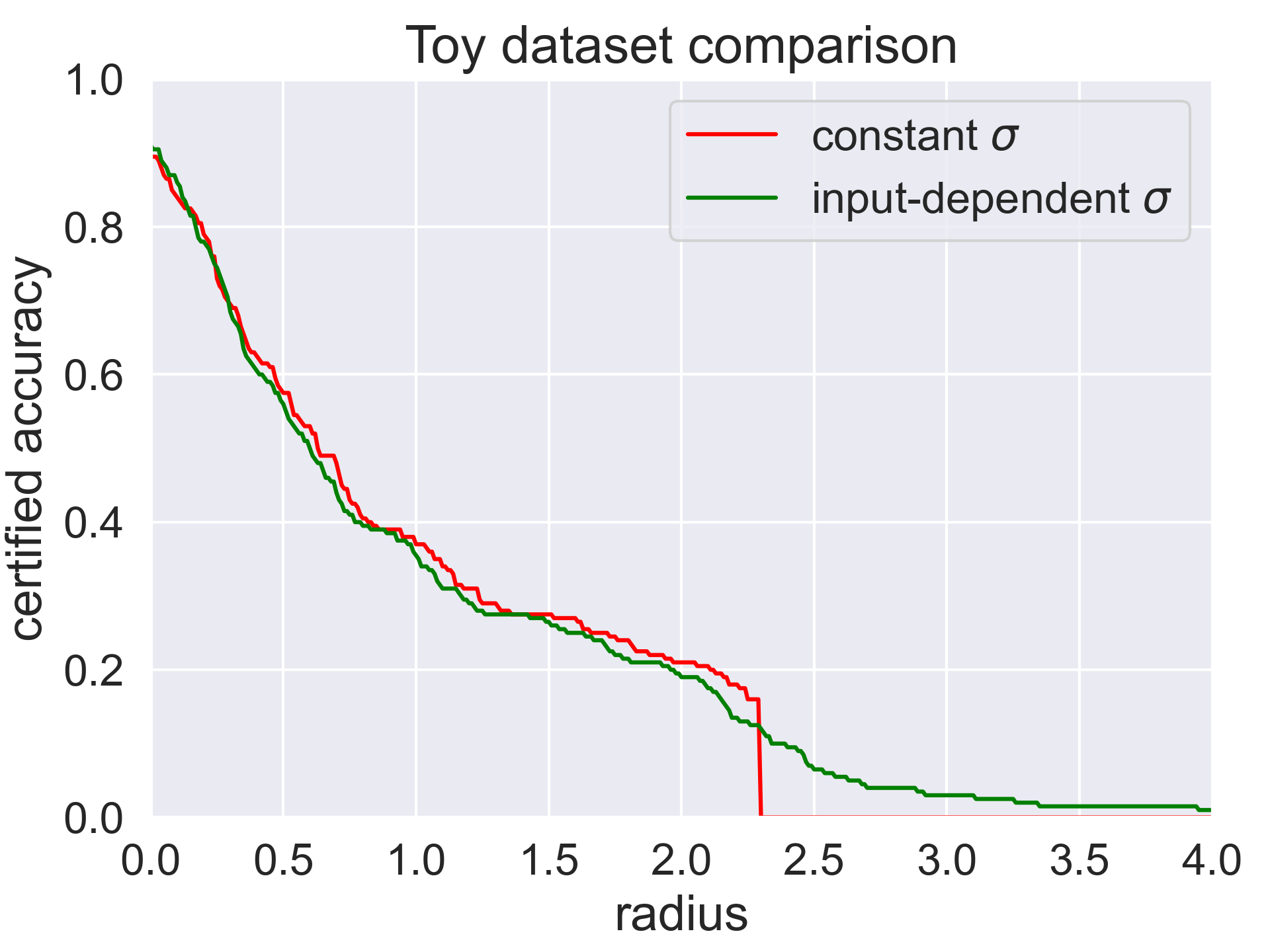}
    \end{minipage}
    \caption{Motivating toy experiment. We use constant $\sigma=0.6$ and input-dependent $\sigma(x)$ equal in average to the constant $\sigma$. \textbf{Left:} Dataset and the variance function depicted as circles with the radius equal to $\sigma(x)$ and centers at the data points. \textbf{Middle:} Zoomed in part of the dataset and decision boundaries of the smoothed classifiers with constant $\sigma$ (red) and input-dependent $\sigma(x)$ (green). Note that we recover a part of the misclassified data points by using a more appropriate smoothing strength close to the decision boundary. \textbf{Right:} Certified accuracy plot. The waterfall effect vanishes since the points far from the decision boundary are certified with a correspondingly large $\sigma(x)$.}
    \label{fig: toy experiment main text}
\end{figure*}

The main contributions of this work are fourfold. First, we generalize the methodology of \citet{cohen2019certified} for the case of the input-dependent RS (IDRS), obtaining useful and important insights about how to use the Neyman-Pearson lemma in this general case. Second and most importantly, we show that the IDRS suffers from the curse of dimensionality in the sense that the semi-elasticity coefficient $r$ of $\sigma(x)$ (a positive number such that \mbox{$|\log(\sigma(x_0))-\log(\sigma(x_1))|\le r\norm{x_0-x_1} \hspace{1mm} \forall x_0, x_1 \in \mathbb{R}^N$}) in a high-dimensional setting is restricted to be very small. This means, that even if we wanted to vary $\sigma(x)$ significantly with varying $x$, we cannot. The maximal reasonable speed of change of $\sigma(x)$ turns out to be almost too small to handle, especially in high dimensions. Third, in contrast, we also study the conditions on $\sigma(x)$ under which it is applicable in high-dimensional regime and prepare a theoretical framework necessary to build an efficient certification algorithm. We are the first to do so for $\sigma(x)$ functions, which are not locally constant (as in \cite{wang2021pretraintofinetune}). Finally, we provide a concrete design of the $\sigma(x)$ function, test it extensively and compare it to the classical RS on the CIFAR10 and MNIST datasets. We discuss to what extent the method treats the issues mentioned above. 


\section{IDRS and the Curse of Dimensionality}\label{sec:theory}
Let $\mathcal{C}$ be the set of classes, $f: \R^N \xrightarrow[]{} \mathcal{C}$ a classifier (referred to as the \textit{base} classifier), $\sigma: \mathbb{R}^N \xrightarrow[]{} \mathbb{R}$ a non-negative function and $\mathcal{P}(\mathcal{C})$ a set of distributions over $\mathcal{C}$. Then we call $G_f: \mathbb{R}^N \xrightarrow[]{} \mathcal{P}(\mathcal{C})$ the \textit{smoothed class probability predictor}, if ${G_f(x)}_C=\mathbb{P}(f(x+\epsilon)=C),$ where $\epsilon \sim \mathcal{N}(0, \sigma(x)^2 I)$ and $g_f: \mathbb{R}^N \xrightarrow[]{} \mathcal{C}$ is called \textit{smoothed classifier} if \mbox{$g_f(x)=\argmax_C {G_f(x)}_C$}, for $C \in \mathcal{C}$. We will omit the subscript $f$ in $g_f$ often, since it is usually clear from the context to which base classifier the $g$ corresponds. Furthermore, let $A := g(x)$ refer to the most likely class under the random variable $f(\mathcal{N}(x, \sigma^2 I))$, and $B$ denote the second most likely class. Define $p_A={G_f(x)}_A$ and $p_B={G_f(x)}_B$ as the respective probabilities. It is important to note that in practice, it is impossible to estimate $p_A$ and $p_B$ precisely. Instead, $p_A$ is estimated as a lower confidence bound (LCB) of the relative occurence of class $A$ in the predictions of $f$ given certain number of Monte-Carlo samples $n$ and a confidence level $\alpha$. The estimate is denoted as $\underline{p_A}$. Similarly to \citet{cohen2019certified} we use the exact Clopper-Pearson interval for estimation of the LCB. The same applies for $p_B$. We work with $l_2$-norms denoted as $\norm{x}$. When we speak of certified radius at sample $x_0$, we always mean the biggest $R \ge 0$ for which the underlying theory provides the guarantee $g(x_0)=g(x_1) \hspace{2mm} \forall x_1: \norm{x_0-x_1}\le R$. 

First of all, we summarize the main steps in the derivation of certified radius around $x_0$ using any method that relies on the Neyman-Pearson lemma (e.g. by \citet{cohen2019certified}). 
\begin{enumerate}
\item For a potential adversary $x_1$ specify the \textit{worst-case} classifier $f^*$, such that $\mathbb{P}(f^*(\mathcal{N}(x_0, \sigma^2 I))=A)=p_A$, while $\mathbb{P}(f^*(\mathcal{N}(x_1, \sigma^2 I))=B)$ is maximized.
\item Express 
${G_{f^*}(x_1)}_B=\mathbb{P}(f^*(\mathcal{N}(x_1, \sigma^2 I))=B)$ as a function depending on $x_1$. 
\item Determine the conditions on $x_1$ (possibly related to $\norm{x_0-x_1}$) for which this probability is $\le 0.5$. From these conditions, derive the certified radius. 
\end{enumerate}

\citet{cohen2019certified} proceeds in this way to obtain a tight certified radius $R=\frac{\sigma}{2}(\Phi^{-1}(\underline{p_A})-\Phi^{-1}(\overline{p_B}))$. Unfortunately, their result is not directly applicable to the input-dependent case. Constant $\sigma$ simplifies the derivation of $f^*$ that turns out to be a linear classifier. This is not the case for non-constant $\sigma(x)$ anymore. Therefore, we generalize the methodology of \citet{cohen2019certified}. We put $p_B=1-p_A$ for simplicity (yet it is not necessary to assume this, see Appendix~\ref{appC: multi-class regime}). Let $x_0$ be the point to certify, $x_1$ the potential adversary point, $\delta=x_1-x_0$ the shift and $\sigma_0=\sigma(x_0)$, $\sigma_1=\sigma(x_1)$ the standard deviations used in $x_0$ and $x_1$, respectively. Furthermore, let $q_i$ be a density and $\mathbb{P}_i$ a probability measure corresponding to $\mathcal{N}(x_i, \sigma_i^2 I)$, $i \in \{0, 1\}$. 

\begin{lemma} \label{np lemma}
Out of all possible classifiers $f$ such that ${G_f(x_0)}_B \le p_B = 1-p_A$, the one, for which ${G_{f}(x_0+\delta)}_B$ is maximized predicts class $B$ in a region determined by the likelihood ratio: \begin{displaymath} B=\left\{x \in \mathbb{R}^N: \frac{q_1(x)}{q_0(x)} \ge \frac{1}{r}\right\}, \end{displaymath} where $r$ is fixed, such that $\mathbb{P}_0(B)=p_B$. Note that we use $B$ to denote both the class and the region of that class.
\end{lemma}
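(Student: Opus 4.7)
The plan is to recognize this as a direct instance of the Neyman–Pearson lemma, with $\mathbb{P}_0$ playing the role of the null measure (corresponding to the constraint on $G_f(x_0)_B$) and $\mathbb{P}_1$ playing the role of the alternative (the quantity to be maximized). First, I would reformulate the problem in purely measure-theoretic terms: since only the event $\{f = B\}$ matters in both the constraint and the objective, I identify a candidate classifier with the measurable rejection region $S_f = f^{-1}(B) \subseteq \mathbb{R}^N$. The lemma then reduces to the claim that the set $S^* = \{x : q_1(x)/q_0(x) \ge 1/r\}$ maximizes $\mathbb{P}_1(S)$ over all measurable $S$ satisfying $\mathbb{P}_0(S) \le p_B$, for a suitable threshold $r$.

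Next I would verify that the threshold $r$ can be chosen so that $\mathbb{P}_0(S^*) = p_B$ exactly. This is where a small but genuine subtlety lives: one needs the distribution of the likelihood ratio $q_1/q_0$ under $\mathbb{P}_0$ to be atomless. Writing out $q_1/q_0$ as the exponential of an affine-or-quadratic function of $x$ (affine when $\sigma_0 = \sigma_1$, quadratic when $\sigma_0 \neq \sigma_1$) in the Gaussian densities, the resulting random variable has a continuous distribution under the non-degenerate Gaussian $\mathbb{P}_0$, so such an $r$ always exists. This removes any need for randomization at the boundary of the likelihood-ratio test.

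With $S^*$ well-defined and tight, the core of the proof is the standard Neyman–Pearson swap argument. For any competing region $S$ with $\mathbb{P}_0(S) \le p_B$, I would write
\begin{equation*}
\mathbb{P}_1(S^*) - \mathbb{P}_1(S) = \int_{S^* \setminus S} q_1\, dx - \int_{S \setminus S^*} q_1\, dx,
\end{equation*}
then use that $q_1 \ge q_0/r$ on $S^*$ and $q_1 < q_0/r$ on its complement. This gives
\begin{equation*}
\mathbb{P}_1(S^*) - \mathbb{P}_1(S) \ge \frac{1}{r}\bigl(\mathbb{P}_0(S^* \setminus S) - \mathbb{P}_0(S \setminus S^*)\bigr) = \frac{1}{r}\bigl(\mathbb{P}_0(S^*) - \mathbb{P}_0(S)\bigr) \ge 0,
\end{equation*}
where the last inequality uses $\mathbb{P}_0(S^*) = p_B \ge \mathbb{P}_0(S)$. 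This establishes optimality of $S^*$.

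The main obstacle I anticipate is not the NP inequality itself, which is textbook, but rather the bookkeeping connecting ``classifiers $f$ with $G_f(x_0)_B \le p_B$'' to measurable subsets of $\mathbb{R}^N$, together with the atomlessness check needed for tight calibration of $r$. Once those are in place, the chain of inequalities above finishes the argument. I would note explicitly that this reformulation is cleaner than the constant-$\sigma$ case of \citet{cohen2019certified} precisely because $S^*$ is no longer a halfspace when $\sigma_0 \neq \sigma_1$, which foreshadows the more involved computation of $\mathbb{P}_1(S^*)$ needed in the subsequent steps of the certification pipeline.
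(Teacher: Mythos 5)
Your proof is correct and takes essentially the same route as the paper: both recognize the statement as a direct instance of the Neyman--Pearson lemma, identify each classifier with its class-$B$ region, and apply likelihood-ratio optimality with $\mathbb{P}_0,\mathbb{P}_1$ playing the roles of null and alternative measures. The only differences are presentational --- the paper cites the Neyman--Pearson swap inequality (from Cohen et al.) rather than re-deriving it inline as you do, and it leaves implicit the calibration of the threshold $r$ so that $\mathbb{P}_0(B)=p_B$ holds exactly, a small gap your atomlessness observation closes.
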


We use this lemma to compute the decision boundary of the worst-case classifier $f^*$. 

\begin{theorem} \label{lrt set}
If $\sigma_0 > \sigma_1$, then set $B$ is an $N$-dimensional ball with the center at $S_>$ and radius \mbox{$R_> := R_>(\sigma_0, \sigma_1, \delta, N, r)$}, defined in \mbox{Appendix~\ref{appF: proofs}}: $$S_>=x_0+\frac{\sigma_0^2}{\sigma_0^2 - \sigma_1^2}\delta.$$ 
If $\sigma_0 < \sigma_1$, then set $B$ is the complement of an $N$-dimensional ball with the center at $S_<$ and radius \mbox{$R_< := R_<(\sigma_0, \sigma_1, \delta, N, r)$}, expressed in Appendix~\ref{appF: proofs}: $$S_<=x_0-\frac{\sigma_0^2}{\sigma_1^2 - \sigma_0^2}\delta.$$
\end{theorem}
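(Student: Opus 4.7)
The plan is to write the log-likelihood ratio of the two Gaussians explicitly, turn the condition $q_1(x)/q_0(x)\ge 1/r$ into a quadratic inequality in $x$, and then complete the square to recognize the region as a ball or the complement of a ball, with center matching $S_>$ or $S_<$.

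First I would compute
\[
\log\frac{q_1(x)}{q_0(x)} \;=\; -\frac{N}{2}\log\frac{\sigma_1^2}{\sigma_0^2} \;+\; \frac{\|x-x_0\|^2}{2\sigma_0^2} \;-\; \frac{\|x-x_1\|^2}{2\sigma_1^2},
\]
so that $q_1(x)/q_0(x)\ge 1/r$, after multiplication by $2\sigma_0^2\sigma_1^2$, is equivalent to
\[
\sigma_1^2\|x-x_0\|^2 \;-\; \sigma_0^2\|x-x_1\|^2 \;\ge\; K,
\]
where $K=K(\sigma_0,\sigma_1,N,r)$ absorbs $-\log r$ and the volume term. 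Expanding the two squared norms, the coefficient of $\|x\|^2$ on the left equals $\sigma_1^2-\sigma_0^2$, whose sign drives the two cases of the theorem.

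Next I would complete the square in $x$. Substituting $x_1 = x_0+\delta$, a direct calculation shows that the unique stationary point of the quadratic form is
\[
c \;=\; \frac{\sigma_1^2 x_0 - \sigma_0^2 x_1}{\sigma_1^2-\sigma_0^2} \;=\; x_0 \;-\; \frac{\sigma_0^2}{\sigma_1^2-\sigma_0^2}\delta,
\]
which coincides with $S_<$ when $\sigma_0<\sigma_1$ and, after rewriting as $x_0+\tfrac{\sigma_0^2}{\sigma_0^2-\sigma_1^2}\delta$, with $S_>$ when $\sigma_0>\sigma_1$. The inequality then reduces to $(\sigma_1^2-\sigma_0^2)\,\|x-c\|^2 \ge K'$, for an explicit constant $K'$ depending on $\sigma_0,\sigma_1,\delta,N,r$. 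Dividing by $\sigma_1^2-\sigma_0^2$ (and flipping the inequality in the case $\sigma_0>\sigma_1$) gives either $\|x-c\|^2\le R_>^2$, i.e.\ a ball centered at $S_>$, or $\|x-c\|^2\ge R_<^2$, i.e.\ the complement of a ball centered at $S_<$. The closed-form of $R_>^2$ and $R_<^2$ in terms of $K'$ and $\sigma_1^2-\sigma_0^2$ is exactly what is deferred to Appendix~\ref{appF: proofs}.

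Finally, the parameter $r$ is not free but is pinned down implicitly by the calibration $\mathbb{P}_0(B)=p_B$ from Lemma~\ref{np lemma}; since the theorem only asserts the geometric shape and the center, one simply notes that this calibration fixes a unique radius in each case. The main obstacle is purely algebraic bookkeeping: carefully tracking the sign of $\sigma_1^2-\sigma_0^2$ when dividing, verifying that the two equivalent expressions for $c$ genuinely match $S_>$ and $S_<$ in the respective regimes, and collecting the constants into a clean $R_>^2,R_<^2$. The degenerate case $\sigma_0=\sigma_1$, in which the quadratic coefficient vanishes and $B$ becomes a halfspace (recovering the classical Cohen et al.\ situation), is excluded by the hypothesis of the theorem and needs no separate treatment.
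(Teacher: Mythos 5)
Your proposal is correct and takes essentially the same approach as the paper: write out the likelihood ratio, clear denominators to obtain a quadratic inequality in $x$ whose leading coefficient is $\sigma_1^2-\sigma_0^2$, and complete the square. The only cosmetic difference is that the paper first uses spherical symmetry to reduce to $\delta=(a,0,\dots,0)$ before expanding, whereas you work coordinate-free throughout; both bookkeeping choices land on the same center $c = x_0 - \frac{\sigma_0^2}{\sigma_1^2-\sigma_0^2}\delta$ (identical to $S_>$ and $S_<$, which are the same expression written to keep the denominator positive in each regime) and the same radius formulas.
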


\begin{figure}[t!]
    \centering
    \begin{minipage}[b]{0.48\linewidth}
        \includegraphics[width=\textwidth]{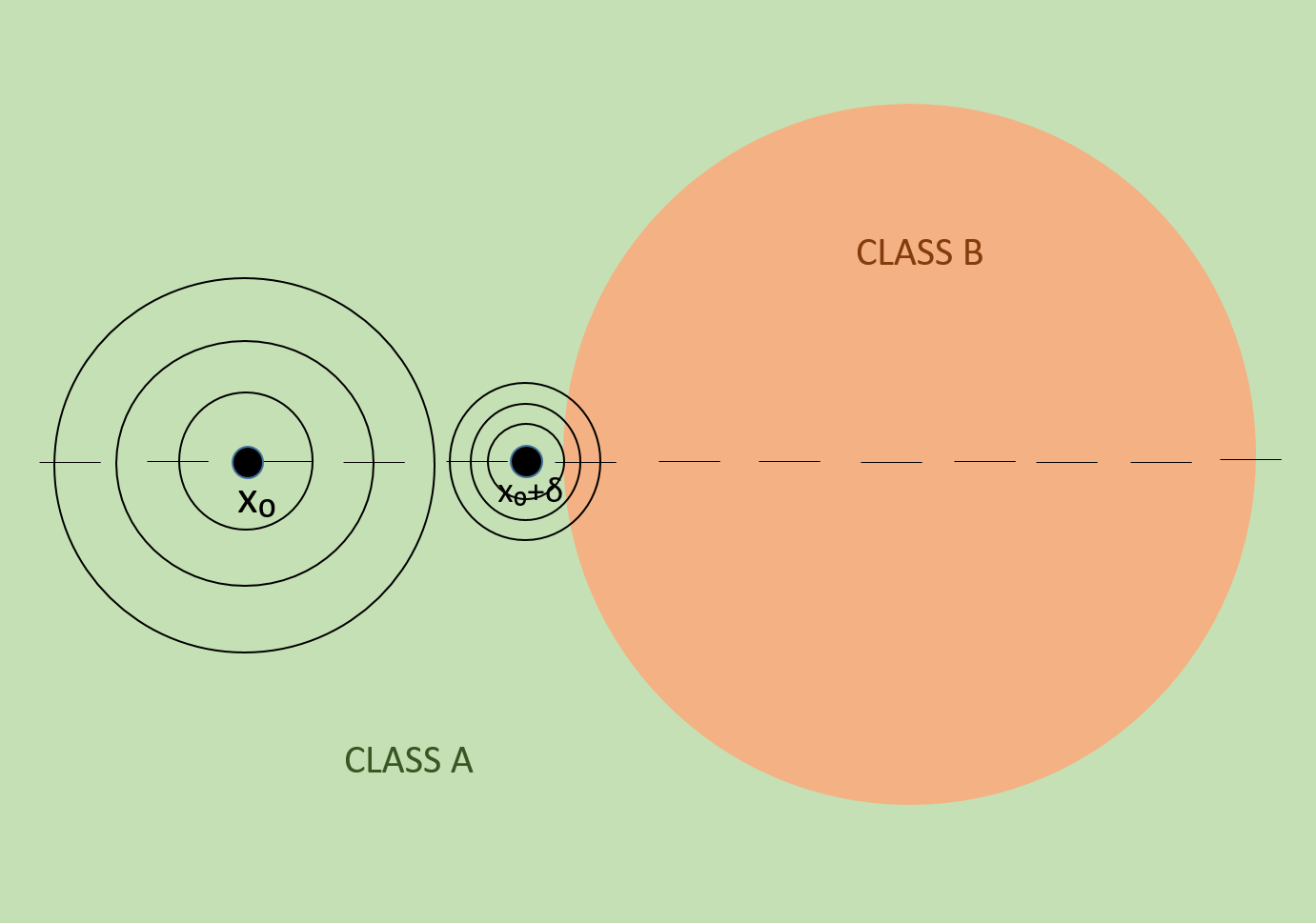}
    \end{minipage}
    \begin{minipage}[b]{0.48\linewidth}
        \includegraphics[width=\textwidth]{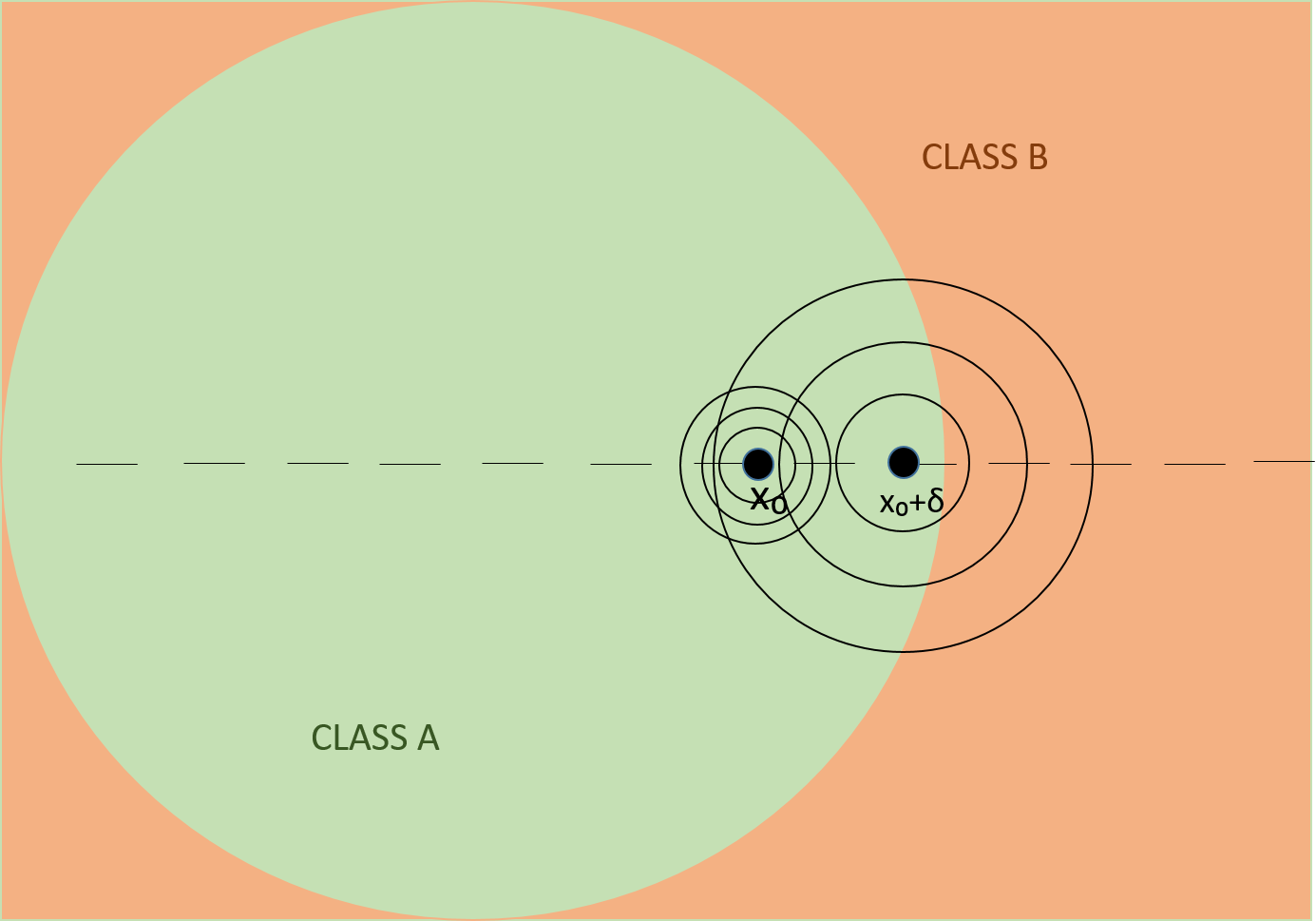}
    \end{minipage}
    \caption{Decision regions of the worst-case classifier $f^*$. \textbf{Left:} $\sigma_0>\sigma_1$ \textbf{Right:} $\sigma_0<\sigma_1$.}
    \label{fig: worst case classifiers main text}
\end{figure} 

As we depict in Figure~\ref{fig: worst case classifiers main text}, both resulting balls are centered on the line connecting $x_0, x_1$. Moreover, the centers of the balls are always further from $x_0$, than $x_1$ is from $x_0$ (even in the case $\sigma_0<\sigma_1$). In both cases, it depends on $p_A$ (since $r$ is fixed such that $\mathbb{P}_0(B)=p_B$) and the ball can, but might not cover $x_0$ and/or $x_1$. Note that if $\sigma_0=\sigma_1$, which can happen even in input-dependent regime, the worst-case classifier is the half-space described by \citet{cohen2019certified}.

To compute the probability of a ball under an isotropic Gaussian probability measure is more challenging than the probability of a half-space. In fact, there is no closed-form expression for it. However, this probability is connected to the non-central chi-squared distribution (NCCHSQ). More precisely, the probability of an $N$-dimensional ball centered at $z$ with radius $r$ under $\mathcal{N}(0, I)$ can be expressed as a cumulative distribution fucntion (cdf) of NCCHSQ with $N$ degrees of freedom, non-centrality parameter $\norm{z}^2$ and argument $r^2$. With this knowledge, we can express $\mathbb{P}_0(B)$ and $\mathbb{P}_1(B)$ in terms of the cdf of NCCHSQ as follows. 

\begin{theorem} \label{thm ncchsq}
$$\mathbb{P}_0(B)=\chi^2_N\left(\frac{\sigma_0^2}{(\sigma_0^2 - \sigma_1^2)^2}\norm{\delta}^2, \frac{R_{<,>}^2}{\sigma_0^2}\right),$$
$$\mathbb{P}_1(B)=\chi^2_N\left(\frac{\sigma_1^2}{(\sigma_0^2 - \sigma_1^2)^2}\norm{\delta}^2, \frac{R_{<,>}^2}{\sigma_1^2}\right),$$
where the sign $<$ or $>$ is chosen according to the inequality between $\sigma_0$ and $\sigma_1$, and $\chi^2_N$ is the cdf of NCCHSQ with $N$ degrees of freedom. 
\end{theorem}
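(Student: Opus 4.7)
The plan is to reduce both Gaussian ball probabilities to the cdf of the non-central chi-squared via an affine standardization plus the classical identity: if $Y \sim \mathcal{N}(\mu, I_N)$, then $\norm{Y}^2$ is distributed as $\chi^2_N(\norm{\mu}^2)$ (non-central chi-squared with $N$ degrees of freedom and non-centrality $\norm{\mu}^2$). This is the only probabilistic ingredient; the rest is bookkeeping.

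Concretely, I would treat $i\in\{0,1\}$ in a single stroke. Under $\mathbb{P}_i$ the change of variables $Z_i := (X - x_i)/\sigma_i$ produces $Z_i \sim \mathcal{N}(0, I_N)$, and turns the ball of center $S$ and radius $R$ (with $S\in\{S_>, S_<\}$ and $R\in\{R_>, R_<\}$ as provided by Theorem~\ref{lrt set}) into the ball in $Z_i$-space with center $c_i := (S - x_i)/\sigma_i$ and radius $R/\sigma_i$. Hence
\begin{displaymath}
\mathbb{P}_i(\text{ball}) \;=\; \mathbb{P}\bigl(\norm{Z_i - c_i}^2 \le R^2/\sigma_i^2\bigr),
\end{displaymath}
and since $Z_i - c_i \sim \mathcal{N}(-c_i, I_N)$, the cited identity gives $\norm{Z_i - c_i}^2 \sim \chi^2_N(\norm{c_i}^2)$, so this probability equals $\chi^2_N\!\bigl(\norm{c_i}^2,\, R^2/\sigma_i^2\bigr)$.

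It then remains to verify that $\norm{c_i}^2$ equals the stated non-centrality parameter. Using the explicit expression of $S_>$ from Theorem~\ref{lrt set}, I would write $S_> - x_0 = \tfrac{\sigma_0^2}{\sigma_0^2 - \sigma_1^2}\delta$ and $S_> - x_1 = (S_> - x_0) - \delta = \tfrac{\sigma_1^2}{\sigma_0^2 - \sigma_1^2}\delta$; dividing by $\sigma_i$ and squaring gives exactly $\sigma_i^2 \norm{\delta}^2 / (\sigma_0^2 - \sigma_1^2)^2$, matching the claim. The case $\sigma_0 < \sigma_1$ (with $S_<$) differs only in signs of the affine coefficients, and because the denominator $(\sigma_0^2 - \sigma_1^2)^2$ is squared, the squared norms $\norm{c_i}^2$ coincide with the $\sigma_0 > \sigma_1$ expressions — which is precisely what makes the single formula work across both branches.

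The only subtle point, and the main pitfall to flag, is that in the $\sigma_0 < \sigma_1$ regime Theorem~\ref{lrt set} identifies $B$ with the \emph{complement} of a ball rather than a ball itself. Strictly one then obtains $\mathbb{P}_i(B) = 1 - \chi^2_N\!\bigl(\norm{c_i}^2, R_<^2/\sigma_i^2\bigr)$; the unified statement of Theorem~\ref{thm ncchsq} should thus be read under the convention (consistent with the definition of $R_<$ in Appendix~\ref{appF: proofs}) that absorbs this complementation, or applied to the ball complement and then complemented. Beyond this sign/complement care, I expect no real obstacle: the probabilistic content is exhausted by the one chi-squared identity, and the remainder is the routine algebra sketched above.
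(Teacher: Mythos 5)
Your proof is correct and follows essentially the same route as the paper's: affine standardization of the Gaussian to $\mathcal{N}(0, I)$ (the paper shifts to the ball center and rescales by $1/\sigma_i$, you shift to $x_i$ and rescale — the same thing up to reparameterization), followed by the non-central chi-squared identity for $\norm{Z-c}^2$, plus the bookkeeping that verifies $\norm{c_i}^2 = \sigma_i^2\norm{\delta}^2/(\sigma_0^2-\sigma_1^2)^2$ in both the $S_>$ and $S_<$ branches. Your flag about the $\sigma_0<\sigma_1$ complement is on point: the paper's own proof ends by noting one obtains a ``$1-$'' in that case (and the $\xi_<$ formula in the text is indeed written with the $1-$), even though the theorem statement itself is terse about this.
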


Note that both Theorem~\ref{lrt set} and Theorem~\ref{thm ncchsq} work well also for $\delta=0$ (see the proofs in Appendix~\ref{appF: proofs}). In this case, we encounter a ball centered at $x_0=x_1$ and all the cdf functions become cdf functions of \textit{central} chi-squared.  

We expressed the probabilities of the decision region of the worst-case class $B$  using the cdf of NCCHCSQ. Now, how do we do the certification? We start with the certification just for two points, $x_0$ for which the certified radius is in question and its potential adversary $x_1$. We ask, under which circumstances can $x_1$ be certified from the point of view of $x_0$. To obtain $\mathbb{P}_1(B)$ as a function of $x_1$ as in step 2 of the certified radius derivation scheme from page 2, we first need to fix $\mathbb{P}_0(B)=1-p_A=p_B$. Having $x_0, x_1$, $p_A$ and $\sigma_0>\sigma_1$, we obtain such $R$, that \mbox{$\mathbb{P}_0(B) = \chi^2_N\left( \norm{\delta}^2 \sigma_0^2 / (\sigma_0^2 - \sigma_1^2)^2, R^2\right)=1-p_A=p_B$} simply by putting it into the quantile function. This way we get \mbox{$R^2 = \chi^2_{N,\text{qf}}\left(\norm{\delta}^2 \sigma_0^2 / (\sigma_0^2 - \sigma_1^2)^2, 1-p_A\right)$}. In the next step we substitute it directly into \mbox{$\mathbb{P}_1(B) = \chi^2_N\left(\norm{\delta}^2 \sigma_1^2 / (\sigma_0^2 - \sigma_1^2)^2, R^2 \sigma_0^2 / \sigma_1^2\right)$}. This way, we obtain $\mathbb{P}_1(B)$ and can judge, whether $\mathbb{P}_1(B)<1/2$ or not. Similar computation can be done if $\sigma_0<\sigma_1$. Denote $a:=\norm{\delta}$. We express $\mathbb{P}_1(B)$ more simply as a function of $a$ for $\sigma_0>\sigma_1$ as $$\xi_>(a):=\mathbb{P}_1(B)=$$
$$\chi^2_N\left(\frac{\sigma_1^2}{(\sigma_0^2 - \sigma_1^2)^2}a^2, \frac{\sigma_0^2}{\sigma_1^2}\chi^2_{N,qf}\left(\frac{\sigma_0^2}{(\sigma_0^2 - \sigma_1^2)^2}a^2, 1-p_A\right)\right)$$ and for $\sigma_0<\sigma_1$ as $$\xi_<(a):=\mathbb{P}_1(B)=$$
$$1-\chi^2_N\left(\frac{\sigma_1^2}{(\sigma_1^2 - \sigma_0^2)^2}a^2, \frac{\sigma_0^2}{\sigma_1^2}\chi^2_{N,qf}\left(\frac{\sigma_0^2}{(\sigma_1^2 - \sigma_0^2)^2}a^2, p_A\right)\right)$$ 

With this in mind, if we have $x_0, x_1, p_A, \sigma_0, \sigma_1$, then we can certify $x_1$ w.r.t $x_0$ simply by choosing the correct sign ($<, >$), computing $\xi_<(\norm{x_0-x_1})$ or $\xi_>(\norm{x_0-x_1})$ and comparing it with $0.5$. The sample plots of these $\xi$ functions can be found in Appendix~\ref{appC: more on theory}. 

Now, we are ready to discuss the curse of dimensionality. The problem that arises is that having a high dimension $N$ and $\sigma_0, \sigma_1$ differing a lot from each other, $\xi$ functions are already big at 0, even for considerably small $p_B$. For fixed ratio $\sigma_0 / \sigma_1$ and probability $p_B$, $\xi(0)$ increases with growing dimension $N$ and soon becomes bigger than 0.5. This, together with monotonicity of the $\xi$ function yields that no $x_1$ can be certified w.r.t. $x_0$, if $\sigma_0, \sigma_1$ are used. The more dissimilar the $\sigma_0$ and $\sigma_1$ are, the smaller the dimension $N$ needs to be for this situation to occur. If we want to certify $x_1$ in a reasonable distance from $x_0$, we need to use similar $\sigma_0, \sigma_1$. This restricts the variability of the $\sigma(x)$ function. We formalize the curse of dimensionality in the following theorems. We discuss more why the curse of dimensionality is present in Appendix~\ref{appB: the curse part}.

\begin{theorem}[curse of dimensionality] \label{main theorem}
Let $x_0$, $x_1$, $p_A$, $\sigma_0$, $\sigma_1$ and $N$ be as usual. The following two implications hold: 
\begin{enumerate}
    \item If $\sigma_0>\sigma_1$ and $$\log\left(\frac{\sigma_1^2}{\sigma_0^2}\right)+1-\frac{\sigma_1^2}{\sigma_0^2} < \frac{2\log(1-p_A)}{N},$$
    then $x_1$ is not certified w.r.t. $x_0$. 
    \item If $\sigma_0<\sigma_1$ and 
    $$\log\left(\frac{\sigma_1^2}{\sigma_0^2}\frac{N-1}{N}\right)+1-\frac{\sigma_1^2}{\sigma_0^2}\frac{N-1}{N} < \frac{2\log(1-p_A)}{N},$$
    then $x_1$ is not certified w.r.t. $x_0$. 
\end{enumerate}
\end{theorem}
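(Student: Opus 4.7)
The plan is to establish $\xi(0) \ge 1/2$ under each hypothesis; the monotonicity of $\xi_>$ and $\xi_<$ in $a = \|\delta\|$ noted immediately before the theorem then forces $\xi(a) \ge 1/2$ for all $a \ge 0$, so no $x_1$ is certifiable w.r.t.\ $x_0$. At $\delta=0$ the non-centrality parameters in Theorem~\ref{thm ncchsq} vanish, so the NCCHSQ cdf collapses to the central $\chi^2_N$ cdf $F_N$; one reads off $R^2 = \sigma_0^2 F_N^{-1}(1-p_A)$ in Case~1 (the ball $B$ carries mass $1-p_A$ under $\mathbb{P}_0$) and $R^2 = \sigma_0^2 F_N^{-1}(p_A)$ in Case~2 ($B$ is the complement of the ball, which therefore carries mass $p_A$). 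Substituting gives $\xi_>(0) = F_N\bigl((\sigma_0^2/\sigma_1^2)F_N^{-1}(1-p_A)\bigr)$ and $\xi_<(0) = 1 - F_N\bigl((\sigma_0^2/\sigma_1^2)F_N^{-1}(p_A)\bigr)$, so, writing $m_N$ for the median of $\chi^2_N$, the target reduces to (1) $(\sigma_0^2/\sigma_1^2)F_N^{-1}(1-p_A) \ge m_N$ or (2) $(\sigma_0^2/\sigma_1^2)F_N^{-1}(p_A) \le m_N$, respectively.

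The central tool is the Cram\'er--Chernoff estimate on $\chi^2_N$ obtained by optimising the MGF $(1-2s)^{-N/2}$: with $h(x) := \log x + 1 - x$, one gets $\mathbb{P}(\chi^2_N \le t) \le \exp\bigl((N/2)h(t/N)\bigr)$ for $t \le N$ together with the dual $\mathbb{P}(\chi^2_N \ge t) \le \exp\bigl((N/2)h(t/N)\bigr)$ for $t \ge N$. The function $h$ vanishes at $1$, is strictly increasing on $(0,1)$, and strictly decreasing on $(1,\infty)$.

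For Case~1 I would use the right-skew bound $m_N < N$, reducing the goal to $F_N^{-1}(1-p_A) > (\sigma_1^2/\sigma_0^2)N$. Applying the lower-tail Chernoff bound at $t = F_N^{-1}(1-p_A) < N$ (valid since $p_A > 1/2$) yields $h(t/N) \ge (2/N)\log(1-p_A)$; combined with the hypothesised $h(\sigma_1^2/\sigma_0^2) < (2/N)\log(1-p_A)$ and the strict monotonicity of $h$ on $(0,1)$—where both arguments lie—one reads off $\sigma_1^2/\sigma_0^2 < t/N$, as required. For Case~2 I would replace the median estimate by the lower bound $m_N \ge N-1$, a consequence of the classical Chen--Rubin-type bound $m_N \ge N - \log 2$, which reduces the goal to $F_N^{-1}(p_A) < (\sigma_1^2/\sigma_0^2)(N-1)$. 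Applying the upper-tail Chernoff bound at $t = F_N^{-1}(p_A) > N$ gives $h(t/N) \ge (2/N)\log(1-p_A)$; setting $\beta := (\sigma_1^2/\sigma_0^2)(N-1)/N$, the hypothesis $h(\beta) < (2/N)\log(1-p_A) \le h(t/N)$ together with strict monotonicity of $h$ on $(1,\infty)$ (one checks $\beta > 1$ and $t/N > 1$ in the regime where the hypothesis is non-trivial, since the condition itself is not satisfiable when $\sigma_1^2/\sigma_0^2$ is too close to $1$) forces $t/N < \beta$, which is exactly the desired inequality.

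The main obstacle I anticipate is keeping the Chernoff application on the correct side of $N$: one must verify $F_N^{-1}(1-p_A) < N$ for Case~1 and $F_N^{-1}(p_A) > N$ for Case~2, both of which follow from $p_A > F_N(N)$, a threshold only slightly above $1/2$ and implied by the hypothesis in the relevant regimes. A secondary point is sharpness: any weaker lower bound on $m_N$ than $N-1$ would inflate the threshold and weaken the statement, and it is precisely the Chen--Rubin-style bound that produces the clean factor $(N-1)/N$ in Case~2. Everything else reduces to single-variable calculus on $h$ and the already-established monotonicity of $\xi_>$ and $\xi_<$ in $a$.
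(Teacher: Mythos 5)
Your proof is correct and follows essentially the same route as the paper's: reduce via the monotonicity of $\xi_{>,<}$ to bounding $\xi(0)$ (where the NCCHSQ collapses to the central $\chi^2_N$), then combine the two-sided Chernoff tail bound on $\chi^2_N$ with the median estimates $N-1\le m_N<N$. The only cosmetic differences are that you phrase the Chernoff step through the function $h(x)=\log x+1-x$ and evaluate it at the quantile point (comparing $h$-values) rather than plugging the target point directly into the tail bound as the paper does, and you invoke a Chen--Rubin-type median bound where the paper cites Robert (1990) — both deliver the same $m_N \ge N-1$, and both write-ups leave the sanity check $\beta>1$ in Case~2 similarly implicit.
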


\begin{corollary}[one-sided simpler bound] \label{main thm corollary}
Let $x_0$, $x_1$, $p_A$, $\sigma_0$, $\sigma_1$ and $N$ be as usual and assume now $\sigma_0>\sigma_1$. Then, if
$$\frac{\sigma_1}{\sigma_0}<\sqrt{1-2\sqrt{\frac{-\log(1-p_A)}{N}}},$$ then $x_1$ is not certified w.r.t $x_0$. 
\end{corollary}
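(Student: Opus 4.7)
The plan is to deduce this corollary directly from part 1 of Theorem~\ref{main theorem} by replacing the tight but ugly inequality
\[
\log(\sigma_1^2/\sigma_0^2) + 1 - \sigma_1^2/\sigma_0^2 < \frac{2\log(1-p_A)}{N}
\]
with a simpler, strictly weaker sufficient condition obtained through a one-sided Taylor-type bound on $\log$. Concretely, I will substitute $t = \sigma_1^2/\sigma_0^2 \in (0,1)$ (this is the regime $\sigma_0 > \sigma_1$) and write the left-hand side as $h(t) := \log t + 1 - t$. The idea is to upper-bound $h(t)$ by a simple quadratic in $1-t$, invert the resulting inequality to get an explicit threshold on $t$, and finally take square roots to recover the statement on $\sigma_1/\sigma_0$.

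The key analytic step is the elementary inequality
\[
\log(1-u) \le -u - \tfrac{1}{2}u^2 \qquad \text{for } u \in [0,1),
\]
equivalently $h(t) \le -\tfrac{1}{2}(1-t)^2$ on $(0,1]$. I would verify this in one line by differentiating $\varphi(u) := \log(1-u) + u + u^2/2$ to get $\varphi'(u) = -u^2/(1-u) \le 0$ together with $\varphi(0) = 0$. Given this bound, the hypothesis of Theorem~\ref{main theorem} part 1 is implied by $-\tfrac{1}{2}(1-t)^2 < 2\log(1-p_A)/N$, i.e.\ $(1-t)^2 > -4\log(1-p_A)/N$. Since $t < 1$ and the right-hand side is nonnegative, this rearranges to $t < 1 - 2\sqrt{-\log(1-p_A)/N}$, and then taking square roots yields exactly the assumed bound $\sigma_1/\sigma_0 < \sqrt{1 - 2\sqrt{-\log(1-p_A)/N}}$ of the corollary.

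To finish, I would simply chain the implications: the corollary's hypothesis forces $\sigma_1^2/\sigma_0^2 < 1 - 2\sqrt{-\log(1-p_A)/N}$, which by the quadratic bound forces $h(\sigma_1^2/\sigma_0^2) < 2\log(1-p_A)/N$, which by Theorem~\ref{main theorem} part 1 forces that $x_1$ is not certified with respect to $x_0$. The only subtle point — and really the only one worth spending thought on — is making sure the radicand $1 - 2\sqrt{-\log(1-p_A)/N}$ is nonnegative so that the square-root manipulation is valid; but this is automatic, because if the radicand is negative then the corollary's hypothesis is vacuous and there is nothing to prove. No obstacle is genuinely hard here: the entire content of the corollary is the convenience of the quadratic bound, and the proof is a short monotone calculation on top of Theorem~\ref{main theorem}.
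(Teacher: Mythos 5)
Your proof is correct and follows essentially the same route as the paper: both reduce Corollary~\ref{main thm corollary} to part~1 of Theorem~\ref{main theorem} via the elementary bound $\log(1-u)+u \le -u^2/2$ on $(0,1)$, then solve the resulting quadratic inequality in $1-\sigma_1^2/\sigma_0^2$. The only cosmetic difference is that you justify the key inequality by differentiating $\varphi(u)=\log(1-u)+u+u^2/2$ rather than citing the Taylor expansion directly, and you explicitly note the (vacuous) case of a negative radicand, which the paper leaves implicit.
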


Note that both Theorem~\ref{main theorem} and Corrolary~\ref{main thm corollary} can be adjusted to the case where we have a separate estimate $\overline{p_B}$ and do not put $\overline{p_B}=1-\underline{p_A}$ (see Appendix~\ref{appC: multi-class regime}). We emphasize, that the bounds obtained in Theorem~\ref{main theorem} are very tight. In other words, if the ratio $\sigma_1/\sigma_0$ is just slightly bigger than the minimal possible threshold from Theorem~\ref{main theorem}, $\xi_>(0)$ becomes smaller than $0.5$ and similarly for $\xi_<(0)$. This is because the only two estimates used in the proof of Theorem~\ref{main theorem} are the estimates on the median, which are very tight and constant with respect to $N$, and the multiplicative Chernoff bound, which is generally considered to be tight too and improves for larger $N$. The tightness is depicted in Figure~\ref{fig:tightness}, where we plot the minimal possible threshold $\sigma_1/\sigma_0$ given by Theorem~\ref{main theorem} and minimal threshold for which $\xi_>(0)<0.5$ as a function of $N$. 

\begin{figure}[t!]
    \centering
    \begin{minipage}[b]{0.48\linewidth}
        \includegraphics[width=\textwidth]{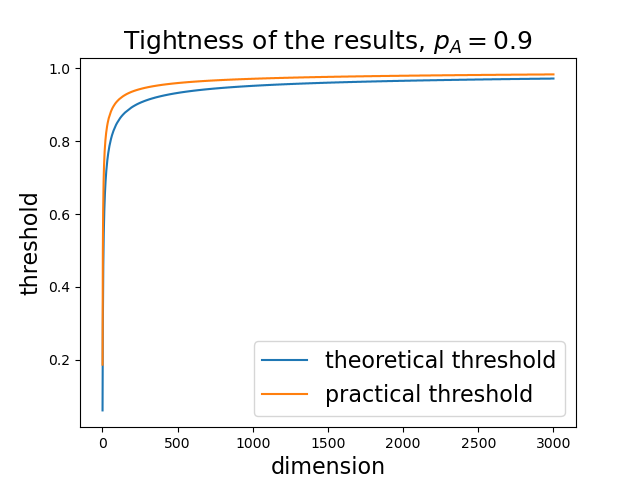}
    \end{minipage}
    \begin{minipage}[b]{0.48\linewidth}
        \includegraphics[width=\textwidth]{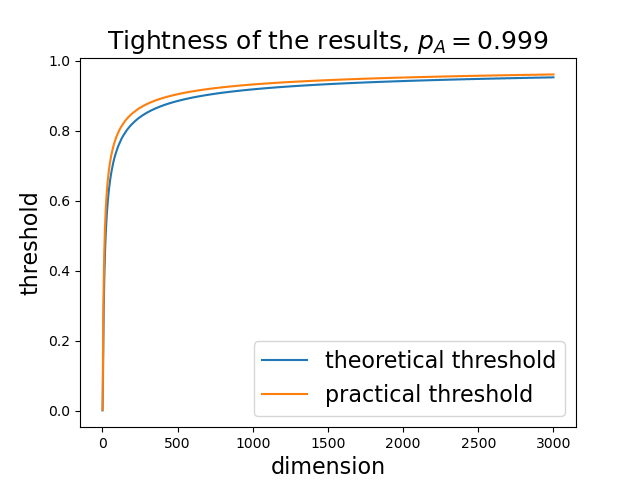}
    \end{minipage}
    \caption{Plots depicting tightness of results of Theorem~\ref{main theorem}. In both figures, the biggest possible threshold of $\sigma_1/\sigma_0$ for which the condition in Theorem~\ref{main theorem} is satisfied (theoretical threshold) and the numerically computed threshold for which $\xi_>(0)$ exceeds the threshold 0.5 (practical threshold) are depicted. \textbf{Left:} Plot for $p_A=0.9$, \textbf{Right:} Plot for $p_A=0.999$.}
    \label{fig:tightness}
\end{figure} 

To get a better feeling about the concrete numbers, we provide the theoretical threshold values from Theorem~\ref{main theorem} in Table~\ref{tab: theoretical thresholds}. If $\sigma_1/\sigma_0$ is smaller than the threshold, we are not able to certify any pair of $x_0, x_1$ using $\sigma_0, \sigma_1$. 

\begin{table}[t!]
\caption{Theoretical lower-thresholds for $\sigma_1/\sigma_0$ for different data dimensions and class $A$ probabilities. The ImageNet spatial size is assumed to be 3x256x256, while CIFAR10 spatial size is 32x32x3 and MNIST spatial size is 28x28x1.}
\vspace{2mm}
\centering
\begin{tabular}{||c||c|c|c|c||} 
\hline\hline
$p_A$ & $0.9$ & $0.99$ & $0.999$ & $0.99993$ \\ 
\hline\hline
MNIST & 0.946 & 0.924 & 0.908 & 0.892 \\ 
\hline
CIFAR10 & 0.973 & 0.961 & 0.953 & 0.945 \\
\hline
ImageNet & 0.997 & 0.995 & 0.994 & 0.993 \\
\hline\hline
\end{tabular}
\label{tab: theoretical thresholds}
\end{table}

Results from Table~\ref{tab: theoretical thresholds} are very restrictive. Assume we have a CIFAR10 sample with $\underline{p_A}=0.999$. For such a probability, \textit{constant} $\sigma=0.5$ is more than sufficient to guarantee the certified radius of more than 1. However, in the non-constant regime, to certify $R \ge 1$, we first need to guarantee that \textit{no} sample within the distance of 1 from $x_0$ uses $\sigma_1<0.953\sigma_0$. To even strengthen this statement, note that one needs to guarantee $\sigma_1$ to be even much closer to $\sigma_0$ in practice. Why? The results of Theorem~\ref{main theorem} lower-bound the $\xi$ functions at 0. However, since $\xi$ functions are strictly increasing (as shown in Appendix~\ref{appF: proofs}), one usually needs $\sigma_0$ and $\sigma_1$ to be much closer to each other to guarantee $\xi(a)$ being smaller than $0.5$ at $a\gg0$. This not only forces the $\sigma(x)$ function to have really small semi-elasticity but also makes it problematic to define a stochastic $\sigma(x)$. For more, see Appendix~\ref{appB: the curse part}. 

To fully understand how the curse of dimensionality affects the usage of IDRS, we mention two more significant effects. First, with increasing dimension the average distance between samples tends to grow as $\sqrt{N}$. This enables bigger distance to change $\sigma(x)$. On the other hand, the average level of $\sigma(x)$ (like $\sim 0.12, 0.25, \dots$) needs to be adjusted also as $\sqrt{N}$ with increasing dimension. The bigger average level of $\sigma(x)$ we use, the more is the semi-elasticity of $\sigma(x)$ restricted by Theorem~\ref{main theorem} and Theorem~\ref{thm the concrete method}. All together, these two effects combine in a final trend that for $\sigma_0$ and $\sigma_1$ being variances used in two random test samples, $|\sigma_0/\sigma_1-1|$ is restricted to go to 0 as $1/\sqrt{N}.$ For detailed explanation, see Appendix~\ref{appC: total effect of CoD}.

\section{How to Use IDRS properly} \label{sec: practical framework}
As we discuss above, usage of the IDRS is challenging. How can we obtain valid, mathematically justified certified radiuses? Fix some design $\sigma(x)$. If $\sigma(x)$ is not trivial, to get a certified radius at $x_0$, we need to go over all the possible adversaries $x_1$ in the neighborhood of $x_0$ and compute $\sigma_1$ and $\xi_{<,>}(a)$. Then, the certified radius is the infimum over $\norm{x_0-x_1}$ for all \textit{uncertified} $x_1$ points. Of course, this is a priori infeasible. Fortunately, the $\xi$ functions possess a property that helps to simplify this procedure. For convenience, we extend the notation of $\xi$ such that $\xi(a, \sigma_1)$ additionally denotes the dependence on the $\sigma_1$ value.

\begin{theorem} \label{correctness of certification procedure}
Let $x_0$, $x_1$, $p_A$, $\sigma_0$ be as usual and denote $\norm{x_0-x_1}$ by $R$. Then, the following two statements hold:  
\begin{enumerate}
    \item Let $\sigma_1 \le \sigma_0$. Then, for all $\sigma_2: \sigma_1 \le \sigma_2 \le \sigma_0$, if $\xi_>(R, \sigma_2)>0.5$, then $\xi_>(R, \sigma_1)>0.5$.
    \item Let $\sigma_1 \ge \sigma_0$. Then, for all $\sigma_2: \sigma_1 \ge \sigma_2 \ge \sigma_0$, if $\xi_<(R, \sigma_2)>0.5$, then $\xi_<(R, \sigma_1)>0.5$.
\end{enumerate}
\end{theorem}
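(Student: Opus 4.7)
The plan is to prove both statements by combining Neyman--Pearson optimality with a convex-combination monotonicity of Gaussian probabilities, handling Statement~2 by contradiction. Two auxiliary facts carry the weight. First, a \emph{convex monotonicity} lemma: for a convex set $K \subset \mathbb{R}^N$, any $\mu \in K$, and $Z \sim \mathcal{N}(0,I)$, the map $\sigma \mapsto \mathbb{P}(\mu + \sigma Z \in K)$ is non-increasing on $(0,\infty)$. The proof is one line: for $0 < \sigma' \le \sigma$, $\mu + \sigma' Z = (1 - \sigma'/\sigma)\mu + (\sigma'/\sigma)(\mu + \sigma Z)$ is a convex combination of $\mu$ and $\mu + \sigma Z$, so $\{\mu + \sigma Z \in K\} \subseteq \{\mu + \sigma' Z \in K\}$. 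Second, a \emph{median bound}: the median of $\chi^2_N(\lambda)$ is strictly larger than $\lambda$, and consequently $\mathbb{P}_{\mathcal{N}(y,\sigma^2 I)}(B(c,r)) \ge 1/2$ forces $y \in B(c,r)$. I would prove this by writing $\chi^2_N(\lambda) = \|\mu + Z\|^2$ with $\|\mu\|^2 = \lambda$, rotating $\mu$ onto the first coordinate axis, and noting $\{\|\mu + Z\|^2 \le \lambda\} \subseteq \{Z_1 \in [-2\sqrt{\lambda},0]\}$, an event of probability strictly below $1/2$.

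For Statement~1, take $B_2$ to be the worst-case ball for $(\sigma_0, \sigma_2)$ given by Theorem~\ref{lrt set}, so that $\mathbb{P}_0(B_2) = p_B$ and $\mathbb{P}_1^{\sigma_2}(B_2) = \xi_>(R, \sigma_2) > 1/2$. The median bound yields $x_1 \in B_2$, and since a ball is convex, the monotonicity lemma applied with $\mu = x_1$ gives $\mathbb{P}_1^{\sigma_1}(B_2) \ge \mathbb{P}_1^{\sigma_2}(B_2) > 1/2$ whenever $\sigma_1 \le \sigma_2$. The admissibility constraint $\mathbb{P}_0 \le p_B$ does not involve $\sigma_1$, so $B_2$ remains admissible at $\sigma_1$, and Neyman--Pearson optimality (Lemma~\ref{np lemma}) then delivers $\xi_>(R, \sigma_1) \ge \mathbb{P}_1^{\sigma_1}(B_2) > 1/2$.

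For Statement~2 I would argue by contradiction, because now the hypothesis $\xi_<(R, \sigma_2) > 1/2$ concerns the \emph{complement} of the ball and the median bound points the wrong way. Assume $\xi_<(R, \sigma_1) \le 1/2$; by Theorem~\ref{lrt set} this reads $\mathbb{P}_1^{\sigma_1}(D_1) \ge 1/2$, where $D_1$ is the worst-case class-$A$ ball at $\sigma_1$. The analogous ball $D_2$ for $\sigma_2$ satisfies the same admissibility $\mathbb{P}_0(D_2) = p_A$, so by Neyman--Pearson optimality of $D_1$ (which \emph{minimizes} $\mathbb{P}_1^{\sigma_1}$ over admissible balls) we have $\mathbb{P}_1^{\sigma_1}(D_2) \ge \mathbb{P}_1^{\sigma_1}(D_1) \ge 1/2$. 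Now the median bound applies to $D_2$ at variance $\sigma_1$ and yields $x_1 \in D_2$; convex monotonicity (using $\sigma_2 \le \sigma_1$) then gives $\mathbb{P}_1^{\sigma_2}(D_2) \ge \mathbb{P}_1^{\sigma_1}(D_2) \ge 1/2$, whence $\xi_<(R, \sigma_2) = 1 - \mathbb{P}_1^{\sigma_2}(D_2) \le 1/2$, contradicting the hypothesis.

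The hard point, and the reason Statement~2 cannot be proved directly in the way Statement~1 is, lies in the asymmetric roles played by the ball and its complement in the Neyman--Pearson characterization: the convex-combination argument needs a convex target, but only the class-$A$ ball is convex, while the certified quantity $\xi_<$ is attached to its complement. The contradiction manoeuvre is precisely what transports the ``$\ge 1/2$'' condition from the optimal ball $D_1$ to the ``wrong'' ball $D_2$ via the Neyman--Pearson inequality $\mathbb{P}_1^{\sigma_1}(D_2) \ge \mathbb{P}_1^{\sigma_1}(D_1)$, after which the median bound and the convex lemma can be applied exactly as in Statement~1.
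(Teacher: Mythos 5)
Your proof is correct, and for Statement~1 it follows the paper's argument essentially verbatim: the ``convex monotonicity'' lemma you state is the same homothety/rescaling observation the paper uses (a ball containing $x_1$ is contained in its dilation about $x_1$ by a factor $\ge 1$), and the median bound $\mathrm{median}(\chi^2_N(\lambda)) > \lambda$ is what the paper invokes to conclude $x_1 \in B_2$. For Statement~2, you argue by contradiction while the paper argues directly with a case split on whether $x_1 \in A_2$: if $x_1 \in A_2$ the paper applies the same homothety (now shrinking, since $\sigma_2 \le \sigma_1$) to get $\mathbb{P}_1^{\sigma_1}(A_2) \le \mathbb{P}_1^{\sigma_2}(A_2) < 1/2$, and if $x_1 \notin A_2$ the median bound gives $\mathbb{P}_1^{\sigma_1}(A_2) < 1/2$ outright; then NP optimality $\mathbb{P}_1^{\sigma_1}(A_1) \le \mathbb{P}_1^{\sigma_1}(A_2)$ finishes. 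Your contradiction is exactly the contrapositive of this and absorbs the case split, since the contradiction hypothesis forces $x_1 \in D_2$. So the two proofs are logically equivalent and use identical ingredients. The only quibble is your closing remark that Statement~2 \emph{cannot} be proved directly the way Statement~1 is: it can, as the paper shows, at the cost of a case split — your observation that the median bound ``points the wrong way'' when $\mathbb{P}_1^{\sigma_2}(A_2) < 1/2$ is the reason the case split is needed, not a reason the direct route fails. A minor stylistic point: you say $D_1$ ``minimizes over admissible balls,'' but Neyman--Pearson gives minimality over all measurable admissible sets; you only need that $D_2$ is one of them, so no harm done, but the stronger statement is the one the cited lemma delivers.
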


Theorem~\ref{correctness of certification procedure} serves as a monotonicity property. The main gain is, that for each distance $R$ from $x_0$, it is sufficient to consider just two adversaries -- the one with the biggest $\sigma_1$ (if bigger than $\sigma_0$) and the one with the smallest $\sigma_1$ (if smaller than $\sigma_0$). If we cannot certify some point $x_1$ at the distance $R$ from $x_0$, then we will for sure not be able to certify at least one of the two adversaries with the most extreme $\sigma_1$ values. 

This, however, does not suffice for most of the reasonable $\sigma(x)$ designs, since it might be still too hard to determine the two most extreme $\sigma_1$ values at some distance from $x_0$. Therefore, we assume that $\sigma(x)$ is semi-elastic with coefficient $r$. Then we have a guarantee that \mbox{$\sigma(x_0)\exp(-ra)\le\sigma(x_1)\le\sigma(x_0)\exp(ra)$} holds. Thus, we bound the worst-case extreme $\sigma_1$-s for every distance $a$. Using this, we guarantee the following certified radius. 

\begin{theorem} \label{thm the concrete method}
Let $\sigma(x)$ be an $r$-semi-elastic function and $x_0$, $p_A$, $N$, $\sigma_0$ as usual. Then, the certified radius at $x_0$ guaranteed by our method is
$$
\text{CR}(x_0) = \sup \left\{R \ge 0:~
\begin{matrix}
\xi_>(R, \sigma_0 \exp(-rR))<0.5 \\
\xi_<(R, \sigma_0 \exp(rR))<0.5
\end{matrix}
\right\}
$$
If the underlying set is empty, which is equivalent to \mbox{$p_A \le 0.5$}
 (that might occur in practice, as we use a lower bound $\underline{p_A}$ instead of $p_A$), we cannot certify any radius.
\end{theorem}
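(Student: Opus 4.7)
The plan is to fix any $R \ge 0$ satisfying both defining inequalities and show that for every candidate adversary $x_1$ with $a := \norm{x_0 - x_1} \le R$, the worst-case class-$B$ probability at $x_1$ is strictly below $1/2$, which by Lemma~\ref{np lemma} together with Theorems~\ref{lrt set}--\ref{thm ncchsq} certifies $g(x_1) = g(x_0)$. Taking the supremum over all such admissible $R$ then yields the stated formula.

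The heart of the argument is to collapse the continuum of $(\sigma(x_1), a)$ pairs reachable at distance $\le R$ onto the two explicit extreme pairs appearing in the statement. First, $r$-semi-elasticity together with $a \le R$ gives the nested two-sided bound $\sigma_0 e^{-rR} \le \sigma_0 e^{-ra} \le \sigma(x_1) \le \sigma_0 e^{ra} \le \sigma_0 e^{rR}$. Then, in the case $\sigma(x_1) \le \sigma_0$, I would first use strict monotonicity of $\xi_>$ in its distance argument (Appendix~\ref{appF: proofs}) to bound $\xi_>(a, \sigma(x_1)) \le \xi_>(R, \sigma(x_1))$, and then invoke the contrapositive of item~1 of Theorem~\ref{correctness of certification procedure} with $\sigma_0 e^{-rR}$ playing the role of the smaller variance and $\sigma(x_1)$ playing the role of $\sigma_2$: since $\sigma_0 e^{-rR} \le \sigma(x_1) \le \sigma_0$ by the nested bound and the case assumption, the hypothesis $\xi_>(R, \sigma_0 e^{-rR}) < 1/2$ transfers to $\xi_>(R, \sigma(x_1)) < 1/2$. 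The case $\sigma(x_1) \ge \sigma_0$ is completely symmetric, using $\xi_<$ together with item~2 of Theorem~\ref{correctness of certification procedure}, and the boundary case $\sigma(x_1) = \sigma_0$ collapses to the classical half-space analysis of \citet{cohen2019certified} and is subsumed by either case.

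For the emptiness claim, observe that at $R = 0$ one has $\delta = 0$ and the two extreme perturbation variances both coincide with $\sigma_0$, so Theorem~\ref{lrt set} degenerates (its ball is centered at $x_0 = x_1$) and Theorem~\ref{thm ncchsq} reduces to a central chi-squared cdf evaluated at its own quantile; this collapses to $\xi_>(0, \sigma_0) = \xi_<(0, \sigma_0) = 1 - p_A$, so the feasible set is non-empty exactly when $p_A > 1/2$.

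The main obstacle I anticipate is cleanly chaining the two monotonicities---strict-in-$a$ and threshold-in-$\sigma$---because Theorem~\ref{correctness of certification procedure} is stated only as a $1/2$-threshold implication rather than a pointwise inequality on $\xi$, so the argument has to be routed through the level set $\{\xi = 1/2\}$ instead of through a single monotone comparison. A secondary subtlety is verifying that the two cases $\sigma(x_1) \le \sigma_0$ and $\sigma(x_1) \ge \sigma_0$ really do cover every adversary without double counting or gaps; this is guaranteed by the fact that the two extreme $\sigma_1$ values in the statement are obtained by moving $\sigma(x_1)$ monotonically away from $\sigma_0$ in opposite directions, so the partition is automatic.
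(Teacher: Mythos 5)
Your proposal is correct and takes essentially the same route as the paper, which simply invokes Theorem~\ref{correctness of certification procedure}; you make explicit the ingredients the paper leaves unstated, namely the semi-elasticity bound $\sigma_0 e^{-rR}\le\sigma(x_1)\le\sigma_0 e^{rR}$ for $\norm{x_0-x_1}\le R$, the monotonicity of $\xi$ in the distance argument (Lemma~\ref{monotonicity of xi}), and the contrapositive of Theorem~\ref{correctness of certification procedure} to pass from the extreme $\sigma$ at radius $R$ to the actual $\sigma(x_1)$. Your anticipated obstacle dissolves once the two monotonicities are ordered as you do — first monotonicity in $a$ at the fixed $\sigma(x_1)$ to get up to distance $R$, then the $1/2$-threshold propagation in $\sigma$ at that fixed distance $R$ — which is exactly the order that lines up with how Theorem~\ref{correctness of certification procedure} is stated.
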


Note that Theorem~\ref{thm the concrete method} can be adjusted to the case where we have a separate estimate $\overline{p_B}$ and do not put $\overline{p_B}=1-\underline{p_A}$ (see Appendix~\ref{appC: multi-class regime}). Since the bigger the semi-elasticity constant of $\sigma(x)$ is, the worse certifications we obtain, it is important to estimate the constant tightly. Even with a good estimate of $r$, we still get smaller certified radiuses in comparison with using the $\sigma(x)$ exactly, but that is a prize that is inevitable for the feasibility of the method. 

The algorithm is then very easy - we just pick sufficiently dense space of possible radiuses and determine the smallest, for which either $\xi_>(R, \sigma_0 \exp(-rR))$ or $\xi_<(R, \sigma_0 \exp(rR))$ is larger than $0.5$. The only non-trivial part is how to evaluate the $\xi$ functions. For small values of $R$, the $\exp(-rR)$ is very close to 1 and from the definition of $\xi$ functions it is obvious that this results in extremely big inputs to the cdf and quantile function of NCCHSQ. To avoid numerical problems, we employ a simple hack where we assume thresholds for $\sigma_1$ such that for $R$ small enough, these thresholds are used instead of $\sigma_0 \exp(\pm rR))$. Unfortunately, the numerical stability still disables the usage of this method on really high-dimensional datasets like ImageNet. For more details on implementation, see Appendix~\ref{appD: implementation details}.

\section{The Design of $\sigma(x)$ and Experiments} \label{experiments}
The only missing ingredient to finally being able to use IDRS is the $\sigma(x)$ function. As we have seen, this function has to be $r$-semi-elastic for rather small $r$ and ideally deterministic. Yet it should at least roughly fulfill the requirements imposed by the motivation -- it should possess big values for points far from the decision boundary of $f$ and rather small for points close to it. Adhering to these restrictions, we use the following function: \begin{equation} \sigma(x) = \sigma_b \exp\left(r \left(\frac{1}{k}  \sum\limits_{x_i \in \mathcal{N}_k(x)} \norm{x-x_i} -m\right)\right)\label{eq: the sigma fcn}\end{equation} for $\sigma_b$ being a \textit{base standard deviation}, $r$ the required semi-elasticity, $\{x_i\}_{i=1}^d$ the training set, $\mathcal{N}_k(x)$ the $k$ nearest neighbors of $x$ and $m$ the normalization constant. Intuitively, if a sample is far from all other samples, it will be far from the decision boundary, unless the network overfits to this sample. On the other hand, the dense clusters of samples are more likely to be positioned near the decision boundary, since such clusters have a high leverage on the network's weights, forcing the decision boundary to adapt well to the geometry of the cluster (for more insight on the relation between the geometry of the data and distance from decision boundary see, for example, \citet{baldock2021deep}). To use such a function, however, we first prove that it is indeed $r$-semi-elastic. 

\begin{theorem} \label{r semi elasticity of sigma(x)} The standard deviation function $\sigma(x)$ defined in \eqref{eq: the sigma fcn} is $r$-semi-elastic. 
\end{theorem}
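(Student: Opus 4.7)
The plan is to reduce the statement, via taking logarithms, to showing that the map $\phi(x) := \frac{1}{k}\sum_{x_i \in \mathcal{N}_k(x)} \norm{x - x_i}$ is $1$-Lipschitz with respect to the $\ell_2$-norm. Once this is established, taking logs of the defining equation \eqref{eq: the sigma fcn} gives $\log \sigma(x) = \log \sigma_b + r(\phi(x) - m)$, so
\[
|\log\sigma(x_0) - \log\sigma(x_1)| = r\,|\phi(x_0) - \phi(x_1)| \le r\,\norm{x_0 - x_1},
\]
which is exactly the definition of $r$-semi-elasticity.

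To prove the Lipschitz bound on $\phi$, I would first fix $x_0, x_1$ and write out $\phi(x_0) = \tfrac{1}{k}\sum_{x_i \in \mathcal{N}_k(x_0)} \norm{x_0 - x_i}$. The key observation is the \emph{minimality} property of the $k$-nearest-neighbor set: for any $k$-element subset $S$ of the training set,
\[
\phi(x_1) = \frac{1}{k}\sum_{x_i \in \mathcal{N}_k(x_1)} \norm{x_1 - x_i} \le \frac{1}{k}\sum_{x_i \in S} \norm{x_1 - x_i}.
\]
Applying this with $S = \mathcal{N}_k(x_0)$ and then using the triangle inequality termwise,
\[
\phi(x_1) \le \frac{1}{k}\sum_{x_i \in \mathcal{N}_k(x_0)} \bigl(\norm{x_1 - x_0} + \norm{x_0 - x_i}\bigr) = \norm{x_1 - x_0} + \phi(x_0).
\]
By symmetry (swap the roles of $x_0$ and $x_1$, and choose $S = \mathcal{N}_k(x_1)$), we likewise obtain $\phi(x_0) - \phi(x_1) \le \norm{x_0 - x_1}$, yielding the desired $|\phi(x_0) - \phi(x_1)| \le \norm{x_0 - x_1}$.

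The main (mild) obstacle is recognizing that the nearest-neighbor set itself depends on $x$, which could at first glance break Lipschitzness. This is handled precisely by the minimality argument above: we never need to control how $\mathcal{N}_k(\cdot)$ varies with $x$, only to upper-bound $\phi(x_1)$ by what one would get if we used $x_0$'s neighbors at $x_1$. Ties in nearest-neighbor selection are harmless since the inequality only requires $\mathcal{N}_k(x_1)$ to be a minimizer (not the unique one). Combining this Lipschitz bound with monotonicity of $\exp$ and the first display completes the proof.
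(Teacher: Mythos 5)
Your proof is correct, and it takes a genuinely different and cleaner route than the paper's. The paper decomposes $\mathbb{R}^N$ into finitely many regions (one per choice of $k$-nearest-neighbor set), shows that $\phi(x) = \tfrac{1}{k}\sum_{x_i \in \mathcal{N}_k(x)}\norm{x-x_i}$ is $1$-Lipschitz on each region (by the triangle inequality termwise), and then invokes a separate and fairly intricate lemma (Lemma~\ref{lipschitz continuity of many lc functions}) stating that a \emph{continuous} piecewise $1$-Lipschitz function on a finite partition is globally $1$-Lipschitz. That lemma is proved by a somewhat delicate ``coloring'' argument along the segment joining two points, together with a continuity hypothesis that must be checked for $\phi$. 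Your argument instead exploits the variational characterization
\[
\phi(x) = \min_{\substack{S \subseteq \{x_i\},\, |S|=k}} \frac{1}{k}\sum_{x_i \in S}\norm{x-x_i},
\]
so that the nearest-neighbor set at $x_1$ never needs to be tracked: plugging in the suboptimal set $\mathcal{N}_k(x_0)$ at the point $x_1$ and applying the triangle inequality termwise yields $\phi(x_1) \le \norm{x_1-x_0} + \phi(x_0)$, and the reverse bound follows by symmetry. This avoids the region decomposition, the continuity check, and the auxiliary lemma entirely. The tradeoff is that the paper's piecewise-Lipschitz lemma is stated in a reusable, general form and could serve other (future) $\sigma(x)$ designs that lack such a clean minimum structure, whereas your argument is tailored to the pointwise-minimum structure of this particular $\sigma(x)$ — but for the theorem as stated, your route is shorter, self-contained, and handles ties in neighbor selection without any case analysis.
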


We test our IDRS and $\sigma(x)$ function extensively. For both CIFAR10 \citep{cifar} and MNIST \citep{mnist} datasets, we analyze series of different experimental setups, including experiments with an input-dependent train-time Gaussian data augmentation. We present a direct comparison of our method with the constant $\sigma$ method using evaluation strategy from \cite{cohen2019certified} (all other experiments, including ablation studies, and the discussion on the hyperparameter selection are presented in Appendix~\ref{appE: experiments and ablations}). Here, we compare \cite{cohen2019certified}'s evaluations for $\sigma= 0.12, 0.25, 0.50$ with our evaluations, setting $\sigma_b=\sigma$, $r=0.01, 0.02$, $k=20$, $m=5, 1.5$ (for CIFAR10 and MNIST, respectively), applied on models trained with Gaussian data augmentation, using constant standard deviation roughly equal to the average test-time $\sigma(x)$ or test-time $\sigma$. For CIFAR10, these levels of train-time standard deviation are $\sigma_{tr}= 0.126, 0.263, 0.53$ and for MNIST $\sigma_{tr}= 0.124, 0.258, 0.517$. In this way, the levels of $\sigma(x)$ we use in the direct comparison are spread from the values roughly equal to \cite{cohen2019certified}'s constant $\sigma$ to higher values. The results are depicted in Figure~\ref{me vs. cohen main comparison}.

\begin{figure*}[h!]
    \centering
    \begin{minipage}[b]{0.32\linewidth}
        \includegraphics[width=\textwidth]{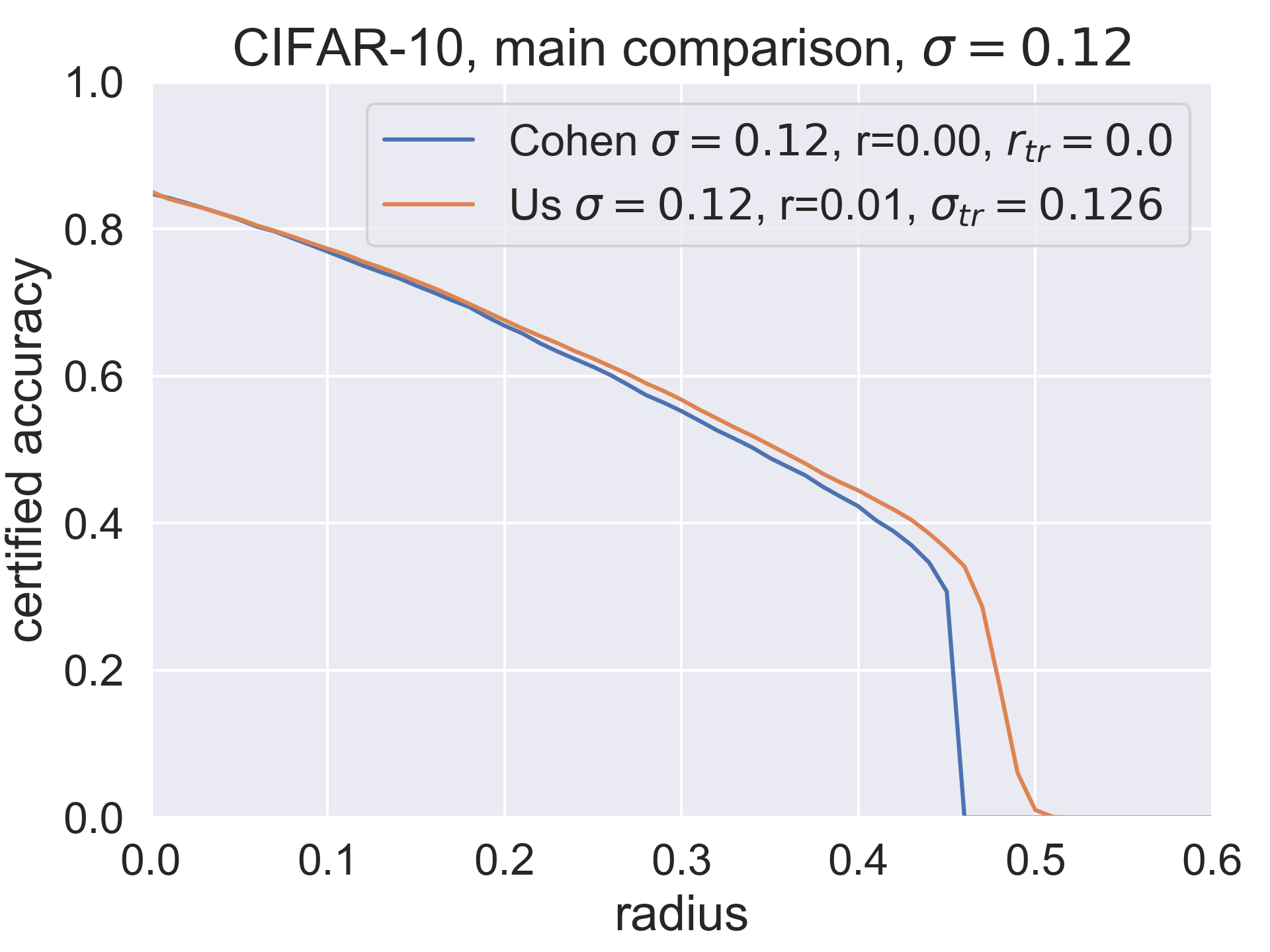}
    \end{minipage}
    \begin{minipage}[b]{0.32\linewidth}
        \includegraphics[width=\textwidth]{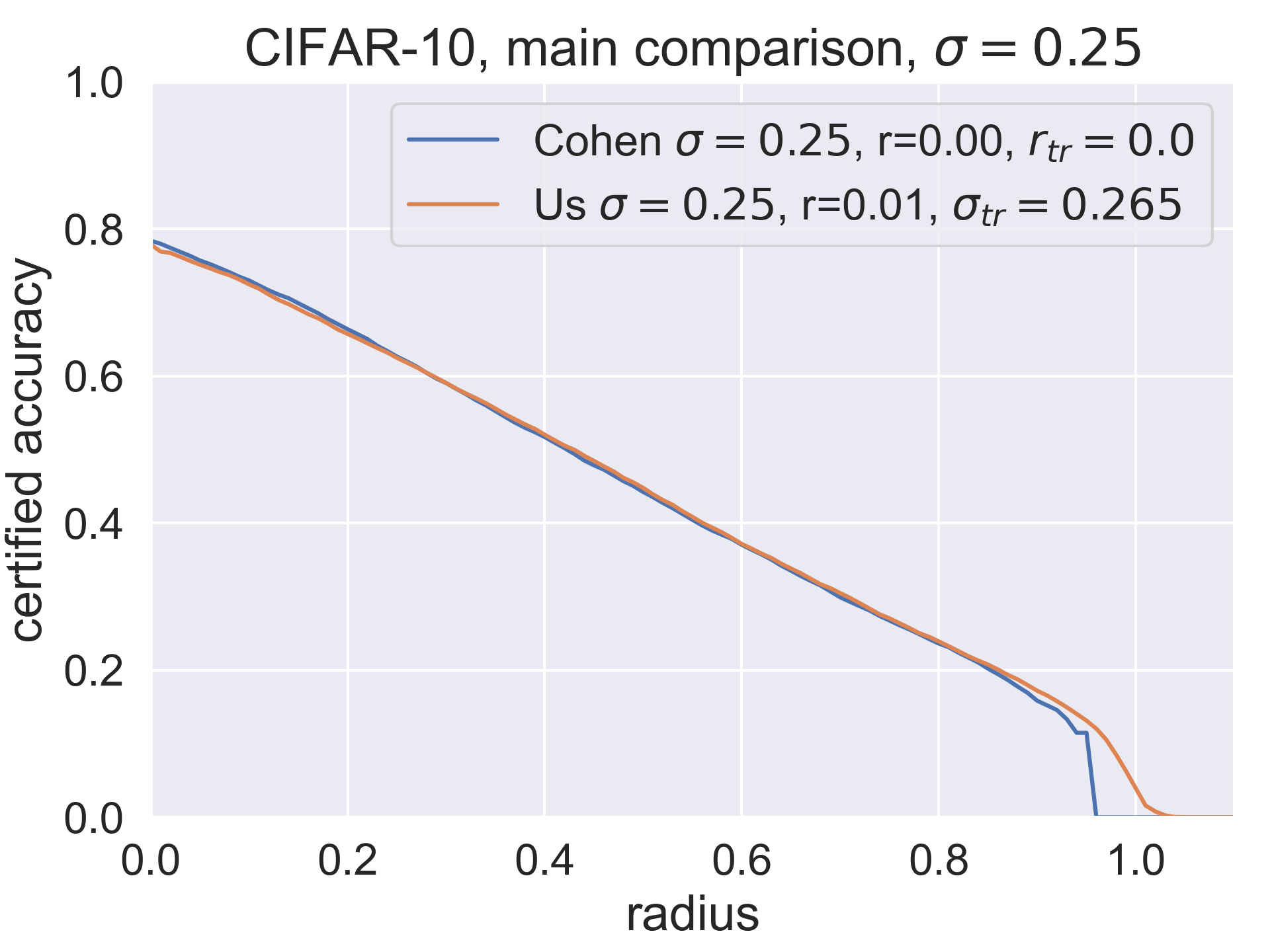}
    \end{minipage}
    \begin{minipage}[b]{0.32\linewidth}
        \includegraphics[width=\textwidth]{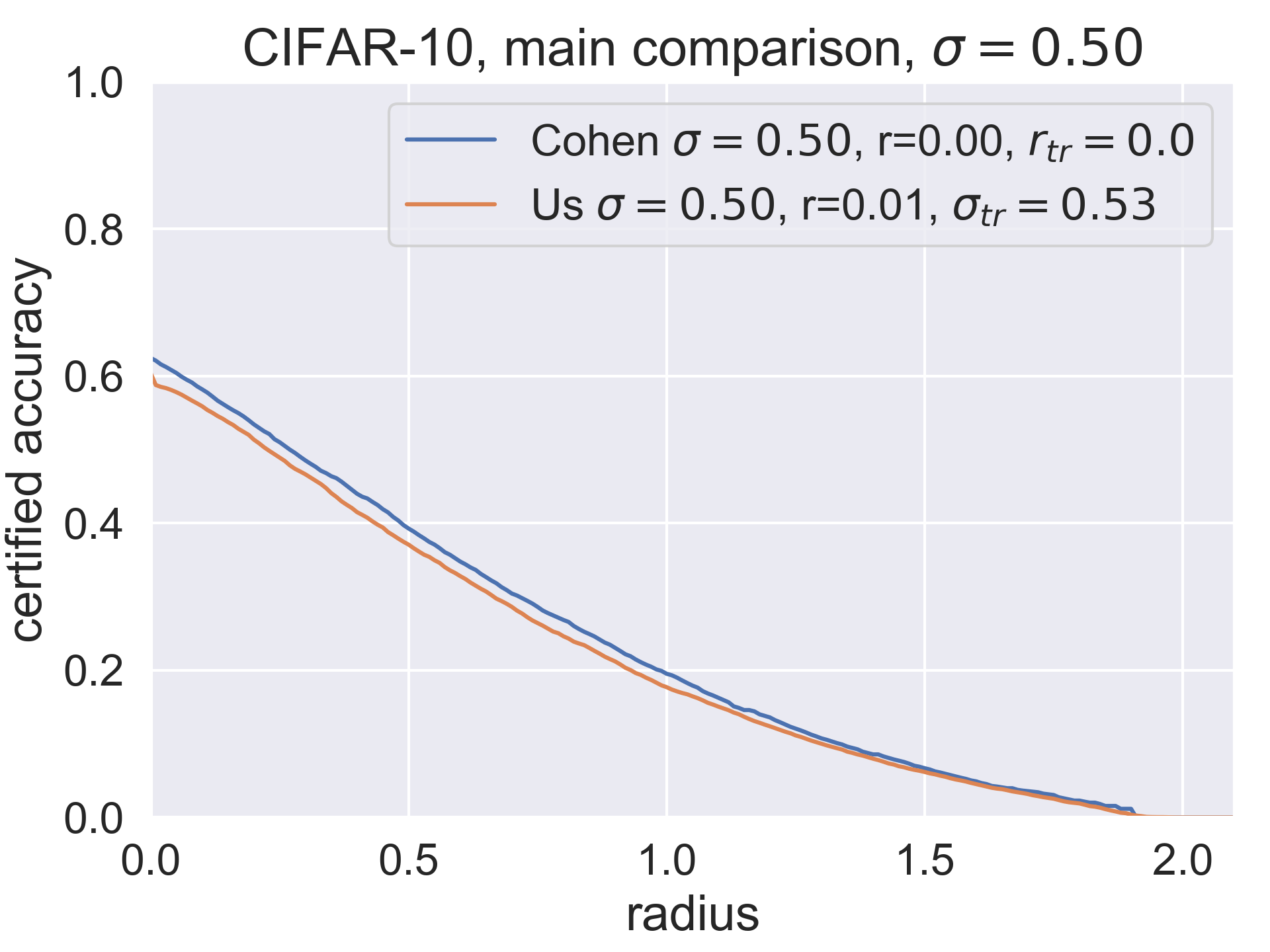}
    \end{minipage}
    \begin{minipage}[b]{0.32\linewidth}
        \includegraphics[width=\textwidth]{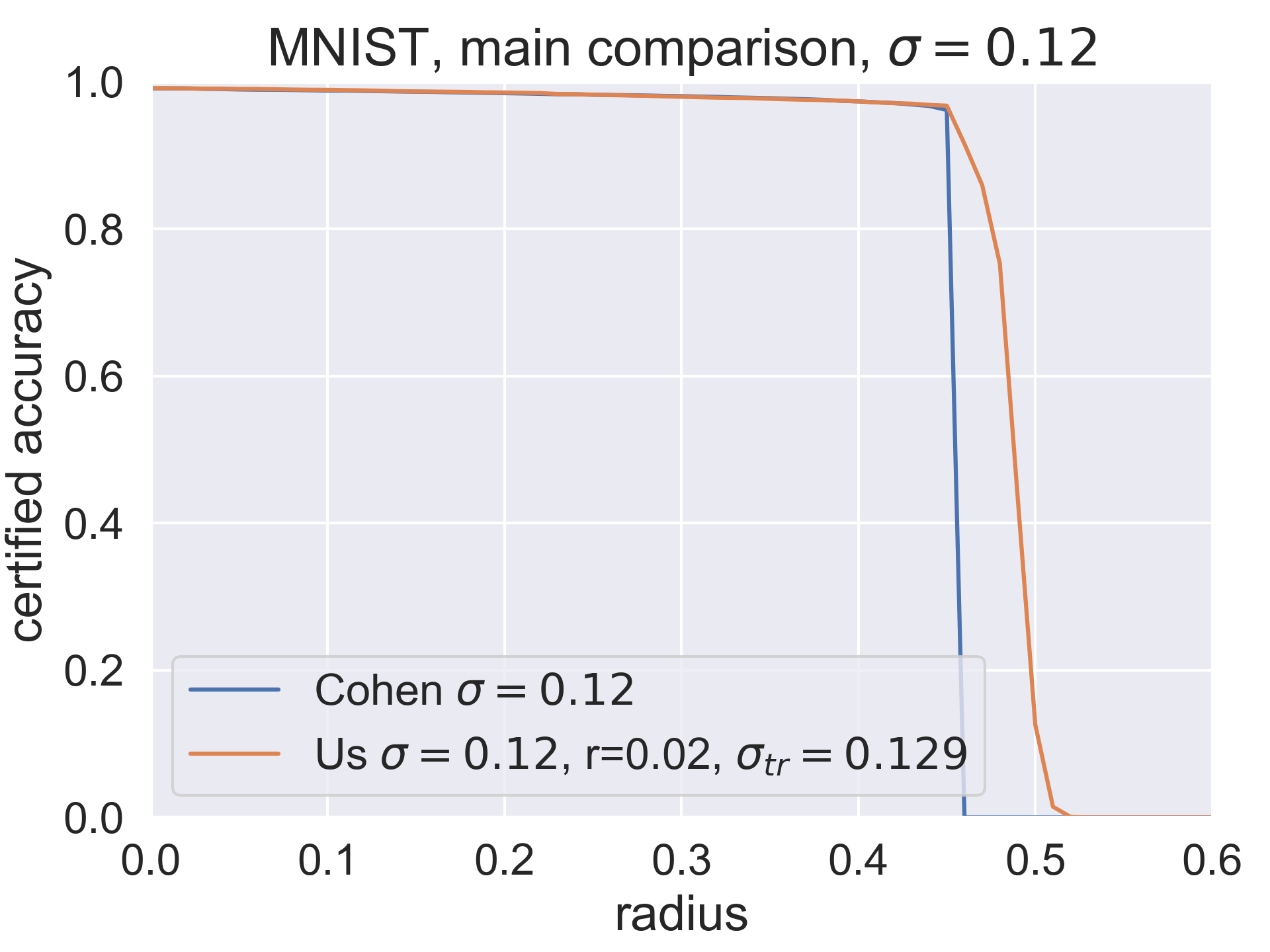}
    \end{minipage}
    \begin{minipage}[b]{0.32\linewidth}
        \includegraphics[width=\textwidth]{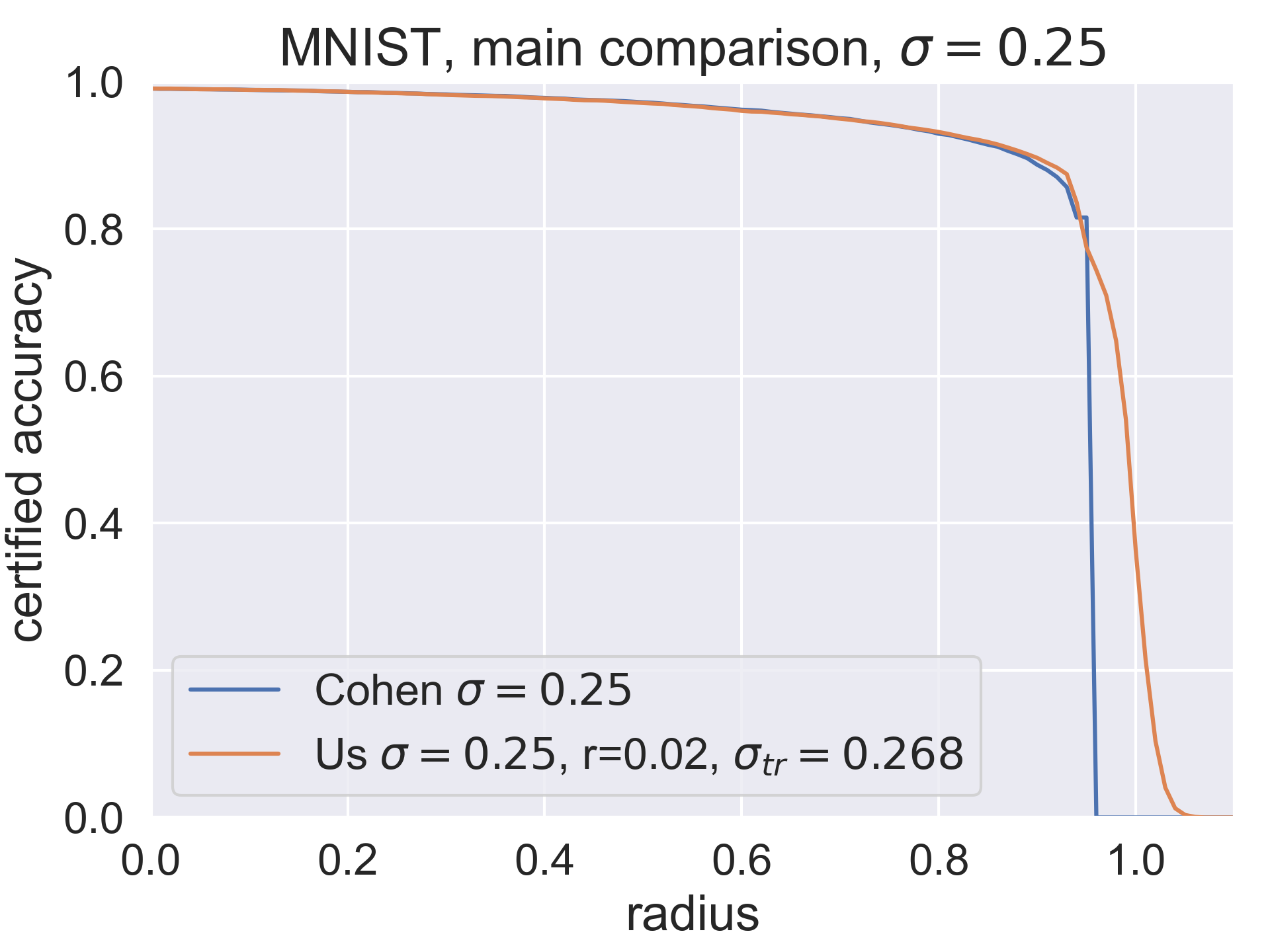}
    \end{minipage}
    \begin{minipage}[b]{0.32\linewidth}
        \includegraphics[width=\textwidth]{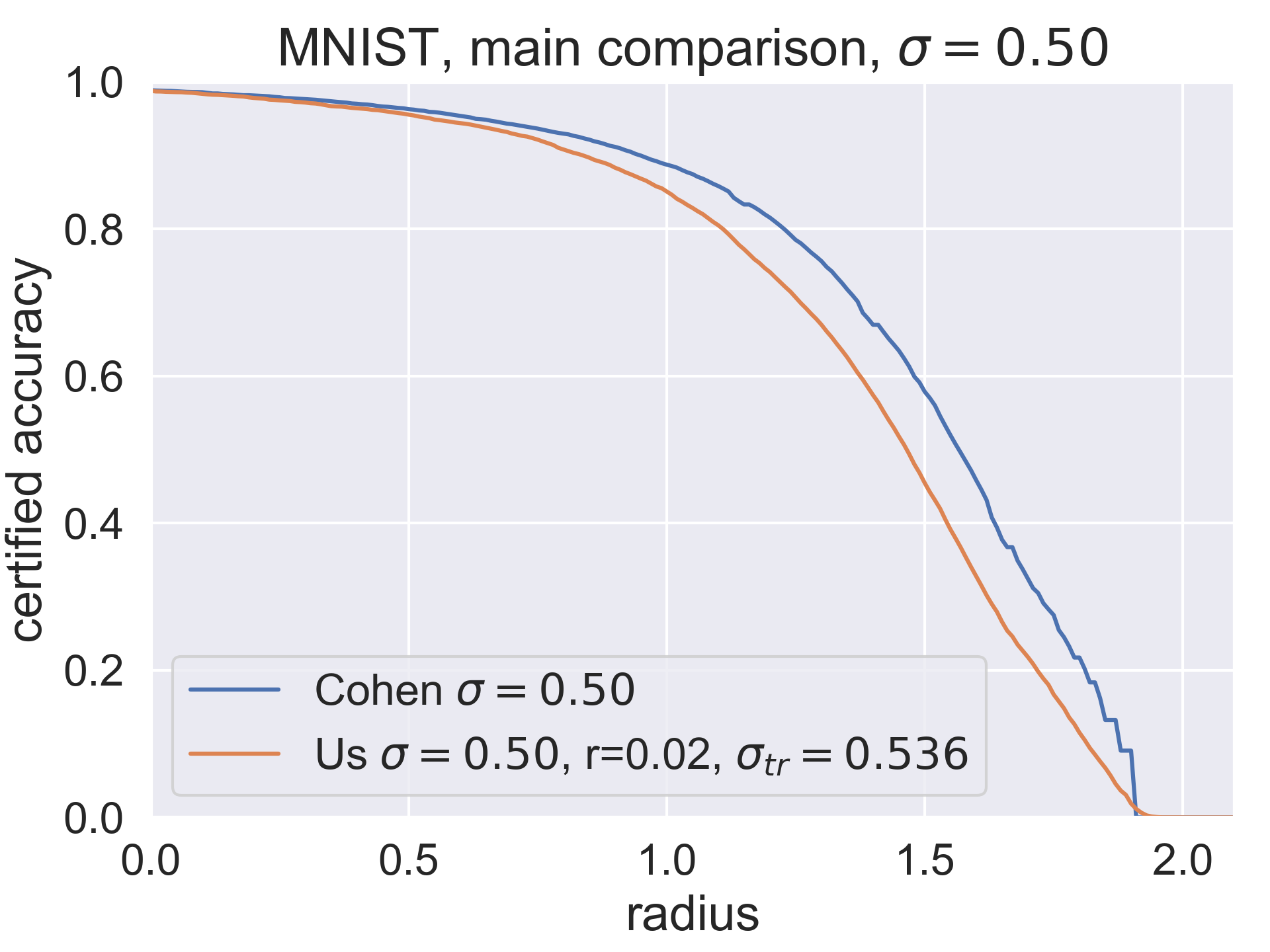}
    \end{minipage}
    \caption{Comparison of certified accuracy plots for \citet{cohen2019certified}'s method and our work.}
    \label{me vs. cohen main comparison}
\end{figure*}

From Figure~\ref{me vs. cohen main comparison} we see that we outperform the constant $\sigma$ for small levels of $\sigma$, such as $0.12$ or $0.25$. On higher levels of $\sigma$, we are, in general, worse (see explanation in Appendix~\ref{appC: ssec: why small sigma values better}). The most visible improvement is in mitigation of the truncation of certified accuracy (certified accuracy waterfall). To comment on the other two issues, we provide \mbox{Tables \ref{accuracy table main text comp} and \ref{standard devs of class-wise accuracies main text comp}} with the clean accuracies and class-wise accuracy standard deviations, respectively. These results are the averages of 8 independent runs and in Table~\ref{accuracy table main text comp}, the displayed error values are equal to empirical standard deviations. 

\begin{table*}[ht]
\centering
\caption{Clean accuracies for both input-dependent and constant $\sigma$ evaluation strategies on CIFAR10 (C) and MNIST (M).}
\vspace{2mm}
\begin{tabular}{||c||c|c|c|c|c||} 
\hline\hline
 & dataset & $\sigma=0.12$ & $\sigma=0.25$ & $\sigma=0.50$ \\ 
\hline\hline
$r=0.01, \sigma_{tr}$ increased & C & $0.852 \pm  0.002$ & $0.780 \pm 0.013$ & $0.673 \pm 0.008$ \\ 
\hline
$r=0.00$ & C & $0.851 \pm 0.006$ & $0.792 \pm 0.004$ & $0.674 \pm 0.018$ \\
\hline\hline
$r=0.01, \sigma_{tr}$ increased & M & $0.9912 \pm 0.0003$ & $0.9910 \pm 0.0006$ & $0.9881 \pm 0.0003$ \\ 
\hline
$r=0.00$ & M & $0.9914 \pm 0.0004$ & $0.9907 \pm 0.0004$ & $0.9886 \pm 0.0005$ \\
\hline\hline
\end{tabular}
\label{accuracy table main text comp}
\end{table*}

\begin{table*}[t!]
\caption{Class-wise accuracy (i.e. how many points of a certain class are correctly classified divided by the size of this class) standard deviations for both input-dependent and constant $\sigma$ evaluation strategies on CIFAR10 (C) and MNIST (M).}
\vspace{2mm}
\centering
\begin{tabular}{||c||c|c|c|c|c||} 
\hline\hline
 & dataset & $\sigma=0.12$ & $\sigma=0.25$ & $\sigma=0.50$ \\ 
\hline\hline
$r=0.01, \sigma_{tr}$ increased & C & 0.076 & 0.099 & 0.120 \\ 
\hline
$r=0.00$ & C & 0.076 & 0.097 & 0.122 \\
\hline\hline
$r=0.01, \sigma_{tr}$ increased & M & 0.00775 & 0.00777 & 0.00930 \\ 
\hline
$r=0.00$ & M & 0.00751 & 0.00778 & 0.00934 \\
\hline\hline
\end{tabular}
\label{standard devs of class-wise accuracies main text comp}
\end{table*}

From \mbox{Tables \ref{accuracy table main text comp} and \ref{standard devs of class-wise accuracies main text comp}}, we draw two conclusions. First, it is not easy to judge about the robustness vs.\ accuracy trade-off, because the differences in clean accuracies are not statistically significant in any of the experiments (not even for CIFAR10 and $\sigma=0.25$, where the difference is at least pronounced). However, the general trend in Table~\ref{accuracy table main text comp} indicates that the clean accuracies tend to slightly decrease with the increasing rate. The decrease is roughly equivalent to the drop in accuracy for the case when we just use constant $\sigma$ during evaluation with its value set to the average $\sigma(x)$. In that case, the certified radiuses would be also roughly equal in average, but ours would still encounter a less severe certified accuracy drop. Second, the standard deviations of the class-wise accuracies, which serve as a good measure of the impact of the shrinking phenomenon and subsequent fairness, don't significantly change after applying the non-constant RS. 


\section{Related Work}
\label{sec: related work}
Since the vulnerability of deep neural networks against adversarial attacks has been noticed by \citet{szegedy2013intriguing, 10.1007/978-3-642-40994-3_25}, a lot of effort has been put into making neural nets more robust. There are two types of solutions -- empirical and certified defenses. While empirical defenses suggest heuristics to make models robust, certified approaches additionally provide a way to compute a mathematically valid robust radius.

One of the most effective empirical defenses, \textit{adversarial training} \citep{goodfellow2014explaining, kurakin2016adversarial, madry2017towards}, is based on a intuitive idea to use adversarial examples for training. Unfortunately, together with adversarial training, other promising empirical defenses were subsequently broken by more sophisticated adversarial methods (for instance by \citet{carlini2017adversarial, athalye2018robustness, athalye2018obfuscated}, among others). 

Among many certified defenses \citep{NEURIPS2018_48584348, pmlr-v97-anil19a, NIPS2017_e077e1a5, wong2018provable, raghunathan2018certified, pmlr-v80-mirman18b, pmlr-v80-weng18a}, one of the most successful yet is RS. While \citet{lecuyer2019certified} introduced the method within the context of differential privacy, \citet{li2018certified} proceeded via R{\'e}nyi divergences. Possibly the most prominent work on RS is that of \citet{cohen2019certified}, where authors fully established RS and proved tight certification guarantees.  

Later, a lot of authors further worked with RS. The work of \citet{yang2020randomized} generalizes the certification provided by \citet{cohen2019certified} to certifications with respect to the general $l_p$ norms and provide the optimal smoothing distributions for each of the norms. Other works point out different problems or weaknesses of RS such as the curse of dimensionality \citep{kumar2020curse, hayes2020extensions, wu2021completing}, robustness vs.\ accuracy trade-off \citep{gao2020analyzing} or a shrinking phenomenon \citep{mohapatra2020rethinking}, which yields serious fairness issues \citep{mohapatra2020rethinking}. 

The work of \citet{mohapatra2020higher} improves RS further by introducing the first-order information about $g$. In this work, authors not only estimate $g(x)$, but also $\nabla g(x)$, making more restrictions on the possible base models $f$ that might have created $g$. \citet{zhai2020macer} and \citet{salman2019provably} improve the training procedure of $f$ to yield better robustness guarantees of $g$. \citet{salman2019provably} directly use adversarial training of the base classifier $f$. Finally, \citet{zhai2020macer} introduce the so-called \textit{soft smoothing}, which enables to compute gradients directly for $g$ and construct a training method, which optimizes directly for the robustness of $g$ via the gradient descent.

To address several issues connected to randomized smoothing, there have already been four works that introduce the usage of IDRS. \citet{wang2021pretraintofinetune} divide $\mathbb{R}^N$ into several regions $R_i, i \in \{1, \dots, K\}$ and optimize for $\sigma_i, i \in \{1, \dots, K\}$ locally, such that $\sigma_i$ is a most suitable choice for the region $R_i$. Yet this work partially solves some problems of randomized smoothing, it also possesses some practical and philosophical issues (see Appendix~\ref{appB: concurrent work}). \citet{alfarra2020data, eiras2021ancer, chen2021insta} suggest to optimize for locally optimal $\sigma_i$, for each sample $x_i$ from the test set. A similar strategy is proposed by these works in the training phase, with the intention of obtaining the base model $f$ that is most suitable for the construction of the smoothed classifier $g$. They demonstrate, that by using this input-dependent approach, one can overcome some of the main problems of randomized smoothing. However, as we demonstrate in Appendix~\ref{appB: concurrent work}, their methodology is not valid and therefore their results are not trustworthy. In the latest version of their work, \citet{alfarra2020data} themselves point out this issue and try to fix it by a very similar strategy to the one of \citet{wang2021pretraintofinetune}. Thus, their work still carries all the problems connected to this method. 

\section{Conclusions} \label{conclusions}
We show in this work that input-dependent randomized smoothing suffers from the curse of dimensionality. In the high-dimensional regime, the usage of input-dependent $\sigma(x)$ is put under strict constraints. The $\sigma(x)$ function is forced to have very small semi-elasticity. This is in conflict with some recent works, which have used the input-dependent randomized smoothing without mathematical justification and therefore claim invalid results, or results of questionable comparability and practical usability. It seems that input-dependent randomized smoothing has limited potential of improvement over the classical, constant-$\sigma$ RS. Moreover, due to numerical instability, the computation of certified radiuses on high-dimensional datasets like ImageNet remains to be an open challenge. 

On the other hand, we prepare a ready-to-use mathematically underlined framework for the usage of the input-dependent RS and show that it works well for small to medium-sized problems. We also show, via extensive experiments, that our concrete design of the $\sigma(x)$ function reasonably mitigates the truncation issue connected to constant-$\sigma$ RS and is capable of mitigating the robustness vs.\ accuracy one on simpler datasets. The most intriguing and promising direction for the future work lies in the development of new $\sigma(x)$ functions, which could treat the mentioned issues even more efficiently.

This improvement is necessary to make IDRS able to significantly beat constant smoothing, as it happens in our toy example in Appendix~\ref{appA: the motivation}. The difference between CIFAR10 and MNIST and the toy dataset is that $\sigma(x)$ from \eqref{eq: the sigma fcn} is well-suited for the toy dataset's geometry, but not to the same extent for the geometry of images. One possible reason is that this $\sigma(x)$ does not correspond to the distances from the decision boundary across the entire dataset, because the euclidean distances between images do not align with the ``distances in images' content''. We believe, however, that improvements in $\sigma(x)$ design are possible and we leave the investigation in this matter as an open and attractive research question. One way to go is to define a metric on the input space that would better reflect the geometry of images and convolutional neural networks.


\bibliography{example_paper}
\bibliographystyle{icml2022}

\newpage
\appendix
\onecolumn
\section{The Issues of Constant $\sigma$ Smoothing} \label{appA: the motivation}
\subsection{Toy Example} \label{ssec: toy example}
To better demonstrate our ideas, we prepared a two-dimensional simple toy dataset. This dataset can be seen in Figure~\ref{toy dataset}. The dataset is generated in polar coordinates, having uniform angle and the distance distributed as a square root of suitable chi-square distribution. The classes are positioned in a circle sectors, one in a sector with a very sharp angle. The number of training samples is 500 for each class, number of test samples is 100 for each class (except demonstrative figures, where we increased it to 300). The model that was trained on this dataset was a simple fully connected three-layer neural network with ReLU activations and a maximal width of 20. 

\begin{figure}[b]
    \centering
    \begin{minipage}[b]{0.45\linewidth}
        \includegraphics[width=\textwidth]{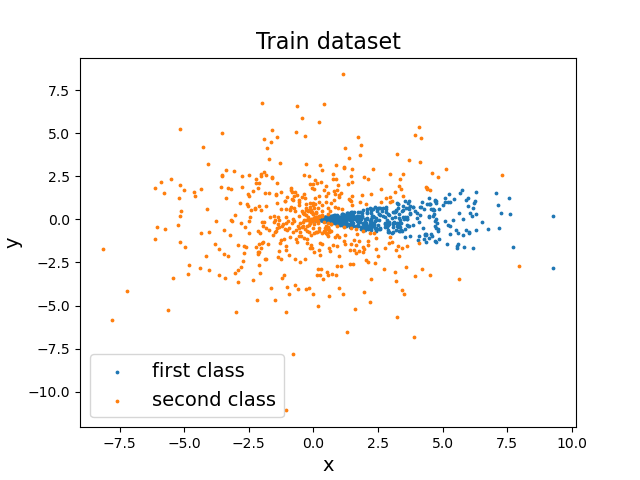}
    \end{minipage}
    \hspace{0.5cm}
    \begin{minipage}[b]{0.45\linewidth}
        \includegraphics[width=\textwidth]{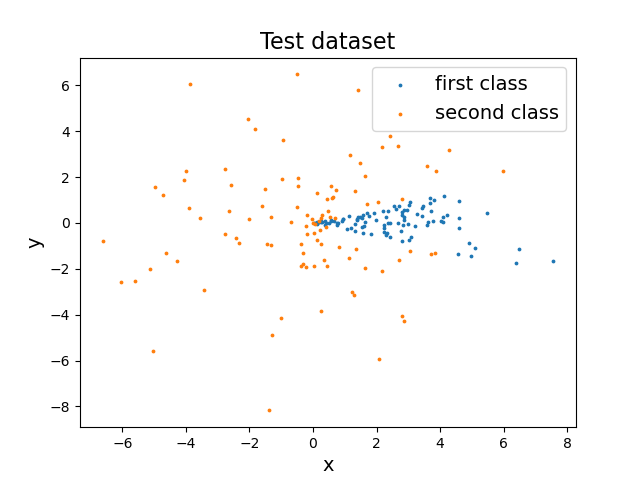}
    \end{minipage}
    \caption{The toy dataset.}
    \label{toy dataset}
\end{figure} 

\subsection{Undercertification Caused by the Use of Lower Confidence Bounds} \label{sec:undercertification}

As we mention in Section~\ref{intro}, one can not usually obtain exact values of $p_A$ and $p_B$. However, it is obvious, that for vast majority of evaluated samples, $\underline{p_A}<p_A$ and $\overline{p_B}>p_B$. Given the nature of our certified radius, it follows that $\underline{R}<R$, where $\underline{R}$ denotes the certified radius coming from the certification procedure with $\underline{p_A}$ and $\overline{p_B}$, while $R$ here stands for the certified radius corresponding to true values $p_A, p_B$. 

It is, therefore, clear, that we face a certain level of under-certification. But how serious under-certification it is? Assume the case with a linear base classifier. Imagine, that we move the point $x$ further and further away from the decision boundary. Therefore, $p_A \xrightarrow[]{} 1$. At some point, the probability will be so large, that with high probability, all $n$ samplings in our evaluation of $\underline{p_A}$ will be classified as $A$, obtaining $\hat p_A=1$ - the empirical probability. The lower confidence bound $\underline{p_A}$ is therefore bounded by having $\hat p_A=1$. Thus, from some point, the certification will yield the same $\underline{p_A}$ regardless of the true value of $p_A$. So in practice, we have an upper bound on the certified radius $R$ in the case of the linear boundary. In Figure~\ref{the lcb problems} (left), we see the truncation effect. Using $\sigma=1$, from a distance of roughly 4, we can no longer achieve a better certified radius, despite its theoretical value equals the distance. Similarly, if we fix a distance of $x$ from decision boundary and vary $\sigma$, for very small values of $\sigma$, the value of $\Phi^{-1}(\underline{p_A})$ will no longer increase, but the values of $\sigma$ will pull $R$ towards zero. This behaviour is depicted in Figure~\ref{the lcb problems} (right). 

We can also look at it differently - what is the ratio between $\Phi^{-1}(p_A)$ and $\Phi^{-1}(\underline{p_A})$ for different values of $p_A$? Since $\underline{R}=\sigma \Phi^{-1}(\underline{p_A})$ and $R=\sigma \Phi^{-1}(p_A)$, the ratio represents the ``undercertification rate''. In Figure~\ref{the radius ratio} we plot $\frac{\Phi^{-1}(\underline{p_A})}{\Phi^{-1}(p_A)}$ as a function of $p_A$ for two different ranges of values. The situation is worst for very small and very big values of $p_A$. In the case of very big values, this can be explained due to extreme nature of $\Phi^{-1}$. For small values of $p_A$, it can be explained as a consequence of a fact, that even small difference between $p_A$ and $\underline{p_A}$ will yield big ratio between $\Phi^{-1}(p_A)$ and $\Phi^{-1}(\underline{p_A})$ due to the fact, that these values are close to 0. 

\begin{figure}[t!]
    \centering
    \begin{minipage}[b]{0.45\linewidth}
        \includegraphics[width=\textwidth]{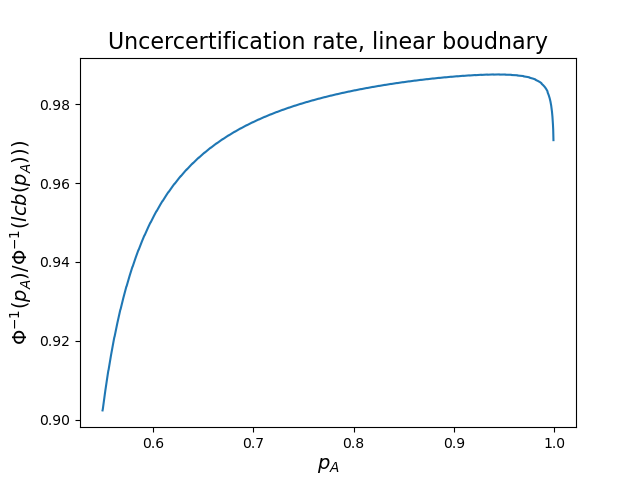}
    \end{minipage}
    \hspace{0.5cm}
    \begin{minipage}[b]{0.45\linewidth}
        \includegraphics[width=\textwidth]{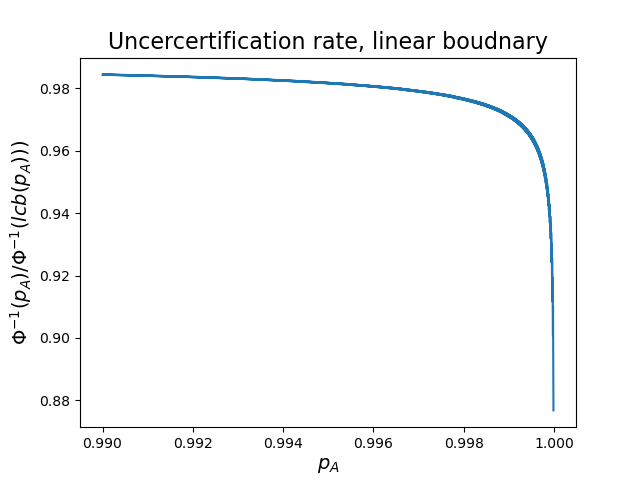}
    \end{minipage}
    \caption{The ratio between certified radius if using lower confidence bounds and if using exact values for the case of linear boundary.}
    \label{the radius ratio}
\end{figure} 

\begin{figure}[t!]
    \centering
    \begin{minipage}[b]{0.45\linewidth}
        \includegraphics[width=\textwidth]{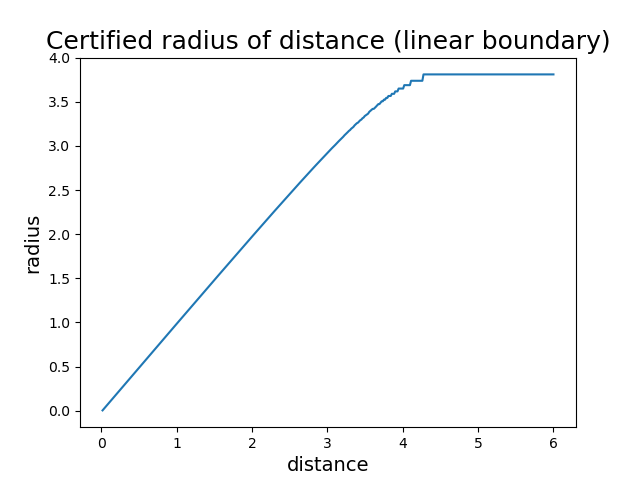}
    \end{minipage}
    \hspace{0.5cm}
    \begin{minipage}[b]{0.45\linewidth}
        \includegraphics[width=\textwidth]{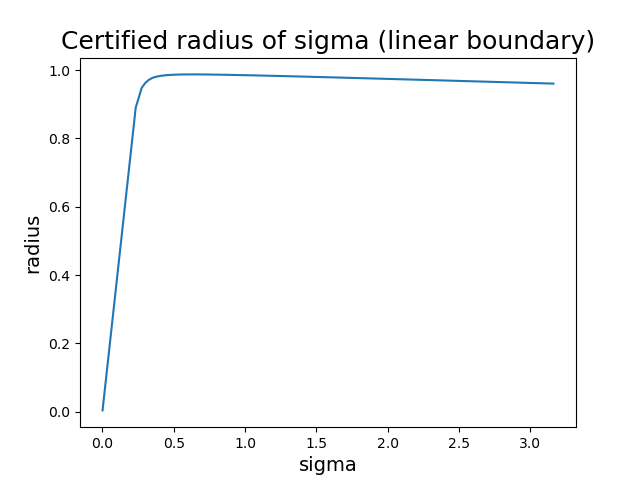}
    \end{minipage}
    \caption{\textbf{Left:} Certified radius as a function of distance in linear boundary case. The truncation is due to the use of lower confidence bounds. The parameters are $n=100000, \alpha=0.001, \sigma=1$. \textbf{Right:} Certified radius for a point $x$ at fixed distance 1 from linear boundary as a function of used $\sigma$. The undercertification follows from usage of lower confidence bounds.}
    \label{the lcb problems}
\end{figure} 

If we look at the left plot on Figure~\ref{toy cohen results} we see, that the certified accuracy plots also possess the truncations. Above some radius, no sample is certified anymore. The problem is obviously more serious for small values of $\sigma$. On the right plot of Figure~\ref{toy cohen results}, we see, that samples far from the decision boundary are obviously under-certified. We can also see, that certified radiuses remain constant, even though in reality they would increase with increasing distance from the decision boundary. 

\begin{figure}[t!]
    \centering
    \begin{minipage}[b]{0.47\linewidth}
        \includegraphics[width=\textwidth]{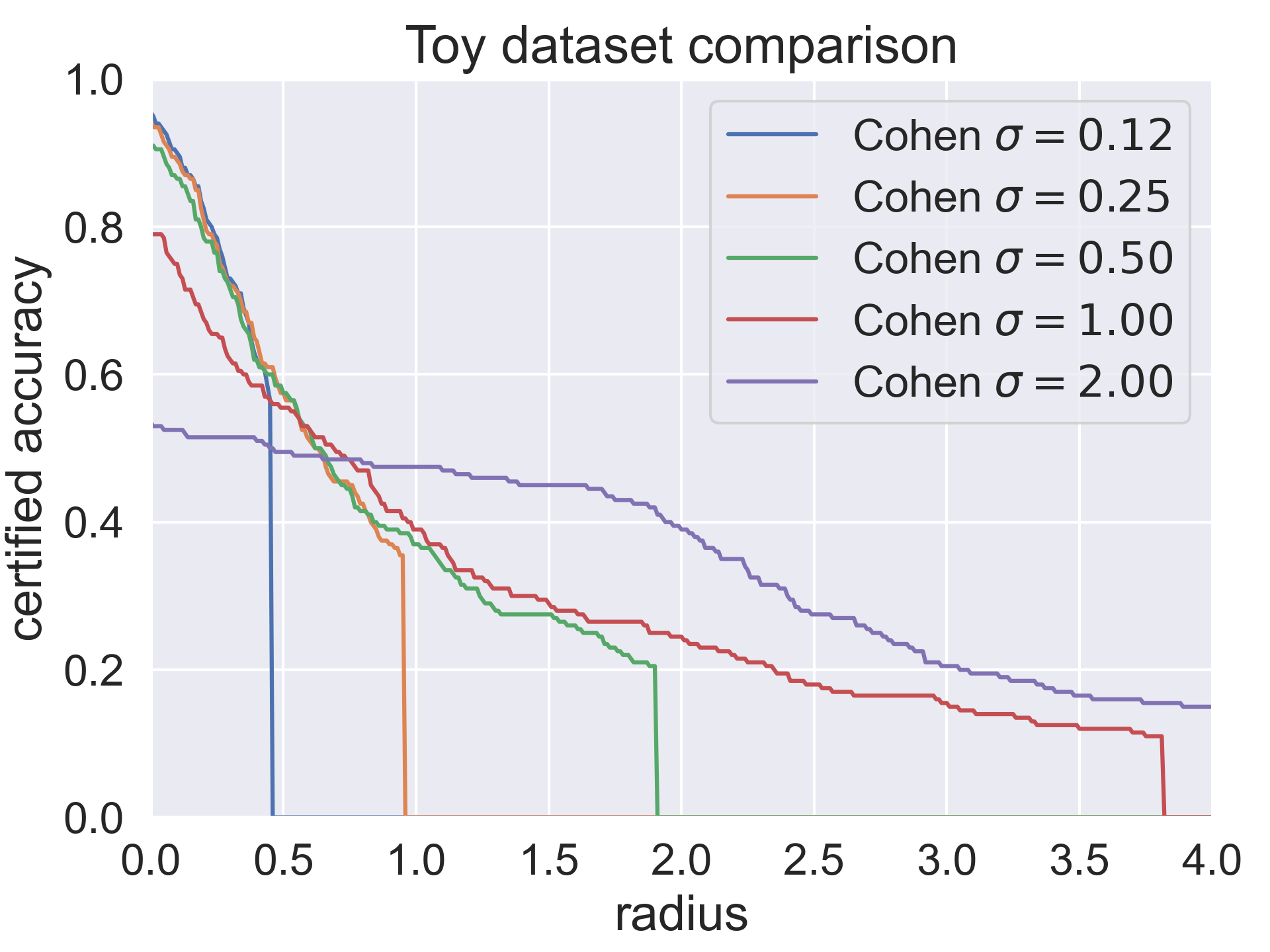}
    \end{minipage}
    \hspace{0.5cm}
    \begin{minipage}[b]{0.47\linewidth}
        \includegraphics[width=\textwidth, height=0.78\textwidth]{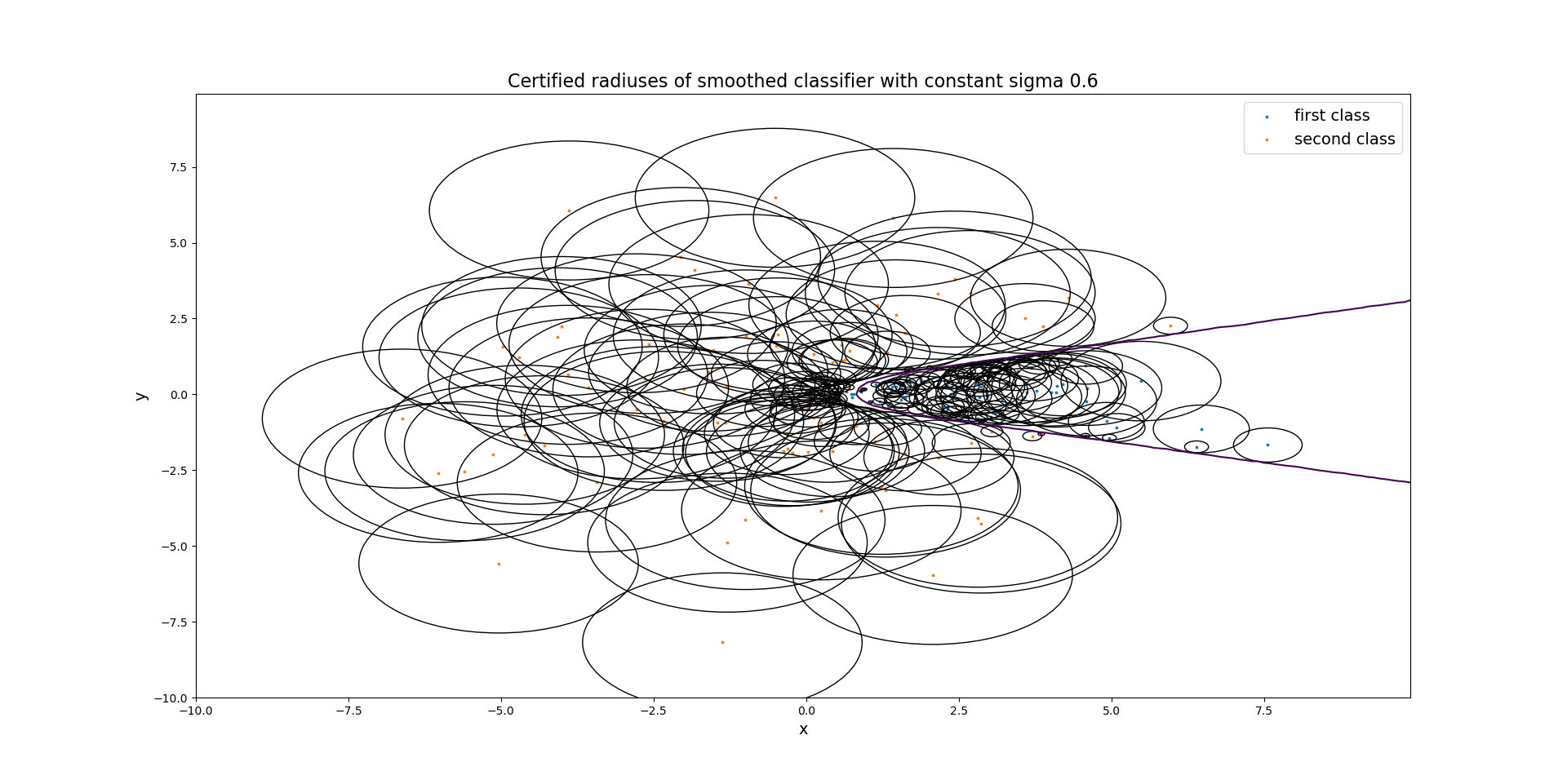}
    \end{minipage}
    \caption{Results of certification on toy dataset. \textbf{Left:} Certified accuracy for different levels of $\sigma$. \textbf{Right:} Certified radiuses and decision boundary of $g$ visualized directly on test set.}
    \label{toy cohen results}
\end{figure} 

All the observations so far motivate us to use rather large values of $\sigma$ in order to avoid the truncation problem. However, as we will see in the next sections, using a large $\sigma$ carries a different, yet equally serious burden. 

\subsection{Robustness vs.\ Accuracy Trade-off} \label{sec: robustness vs. accuracy}

As we demonstrate in the previous subsection, it is be useful to use large values of $\sigma$ to prevent the under-certification. But does it come without a prize? If we have a closer look at Figure~\ref{toy cohen results} (right), we might notice, that the accuracy on the threshold $0$, i.e. ``clean accuracy'', decreases as $\sigma$ increases. This effect has been noticed in the literature \citep{cohen2019certified, gao2020analyzing, mohapatra2020rethinking} and is called \textit{robustness vs.\ accuracy tradeoff}.

There are several reasons, why this problem occurs. Generally, changing $\sigma$ changes the decision boundary of $g$ and we might assume, that due to the high complexity of the boundary of $f$, the decision boundary of $g$ becomes smoother. If $\sigma$ is too large, however, the decision boundary will be so smooth, that it might lose some amount of the base classifier's expressivity. Another reason for the accuracy drop is also the increase in the number of samples, for which the evaluation is abstained. This is because using big values of $\sigma$ makes more classes ``within the reach of our distribution'', making the $p_A$ and $\underline{p_A}$ small. If $\underline{p_A} < 0.5$ and we do not estimate $p_B$ but set $\overline{p}_B = 1 - \underline{p_A}$, then we are not able to classify the sample as class $A$, yet we cannot classify it as a different class either, which forces us to abstain. To demonstrate these results, we computed not only the clean accuracies of \cite{cohen2019certified} evaluations but also the abstention rates. Results are depicted in the Table~\ref{cohen acc, abstention table}. 

\begin{table}[t!]
\centering
\begin{tabular}{||c||c|c|c||} 
\hline\hline
 & Accuracy & Abstention rate & Misclassification rate \\ 
\hline\hline
$\sigma=0.12$ & 0.814 & 0.038 & 0.148 \\ 
\hline
$\sigma=0.25$ & 0.748 & 0.086 & 0.166 \\
\hline
$\sigma=0.50$ & 0.652 & 0.166 & 0.182 \\
\hline
$\sigma=1.00$ & 0.472 & 0.29 & 0.238 \\
\hline\hline
\end{tabular}
\vspace{2mm}
\caption{Accuracies, rates of abstentions and misclassification rates of \cite{cohen2019certified} evaluation for different levels of $\sigma$.}
\label{cohen acc, abstention table}
\end{table}

From the table, it is obvious, that the abstention rate is possibly even bigger cause of accuracy drop than the ``clean misclassification''. This problem can be partially solved if one estimated $\overline{p_B}$ together with $\underline{p_A}$ too. In this way, using big $\sigma$ yields generally small estimated class probabilities, but since $p_A \ge p_B$, the problematic $\overline{p_B} \ge \underline{p_A}$ occur just very rarely. Another option is to increase the number of Monte-Carlo samplings for the classification decision, what is almost for free. 

Yet another reason for the decrease in the accuracy is the so-called \textit{shrinking phenomenon}, which we will discuss in the next subsection. 

In contrast with the truncation effect, the robustness vs.\ accuracy trade-off motivates the usage of smaller values of $\sigma$ in order to prevent the accuracy loss, which is definitely a very serious issue. 

\subsection{Shrinking Phenomenon} \label{sec: shrinking phenomenon}

How exactly does the decision boundary of $g$ change, as we change the $\sigma$? For instance, if $f$ is a linear classifier, then the boundary does not change at all. To answer this question, we employ the following experiment: For our toy base classifier $f$ on our toy dataset, we increase $\sigma$ and plot the heatmap of $f, g$, together with its decision boundary. This experiment is depicted on Figure~\ref{heatmaps}. As we see from the plots, increasing $\sigma$ causes several effects. First of all, the heatmap becomes more and more blurred, what proves, that stronger smoothing implies stronger smoothness.

Second, crucial, effect is that the bigger the $\sigma$, the smaller the decision boundary of a submissive class is. The shrinkage becomes pronounced from $\sigma=1$. Already for $\sigma=4$, there is hardly any decision boundary anymore. Generally, as $\sigma \xrightarrow[]{} \infty$, $g$ will predict the class with the biggest volume in the input space (in the case of bounded input space, like in image domain, this is very well defined). For extreme values of sigma, the $p_A$ will practically just be the ratio between the volume of $A$ and the actual volume of the input space (for bounded input spaces). 

\begin{figure}[t!]
    \centering
    \begin{minipage}[b]{0.32\linewidth}
        \includegraphics[width=\textwidth]{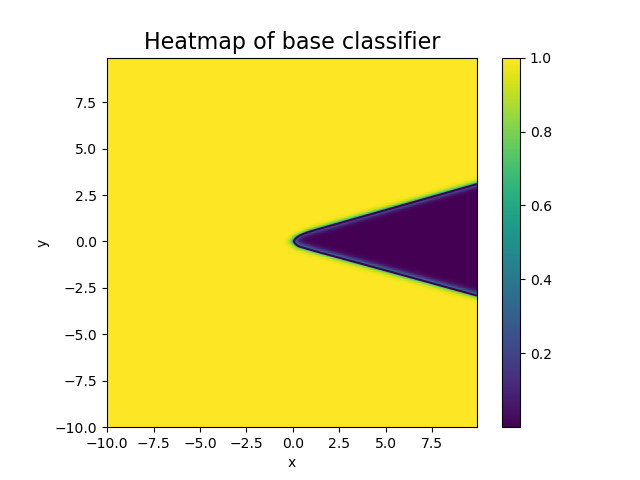}
    \end{minipage}
    \begin{minipage}[b]{0.32\linewidth}
        \includegraphics[width=\textwidth]{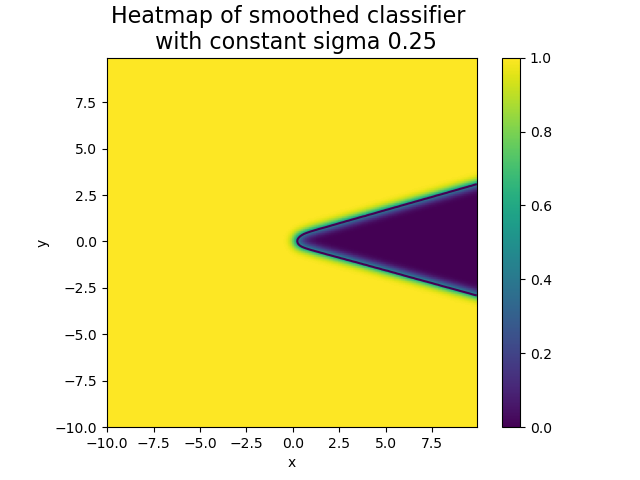}
    \end{minipage}
    \begin{minipage}[b]{0.32\linewidth}
        \includegraphics[width=\textwidth]{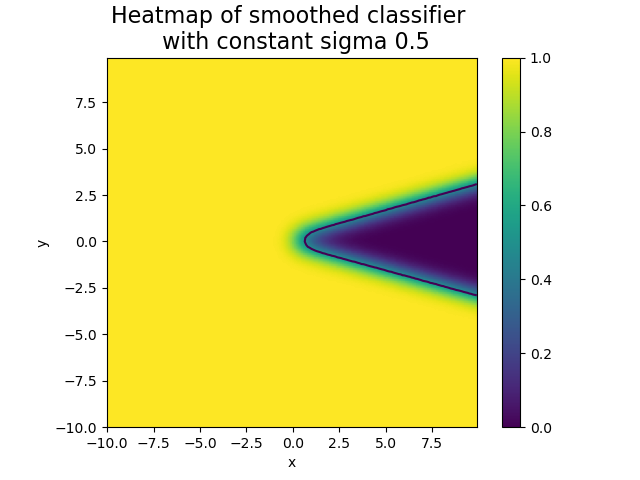}
    \end{minipage}
    \begin{minipage}[b]{0.32\linewidth}
        \includegraphics[width=\textwidth]{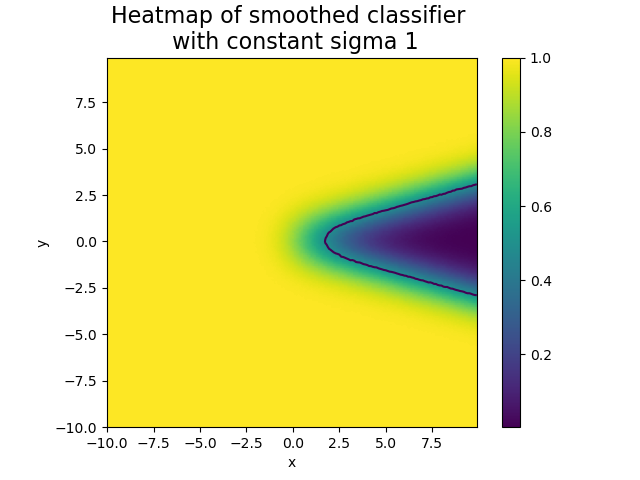}
    \end{minipage}
    \begin{minipage}[b]{0.32\linewidth}
        \includegraphics[width=\textwidth]{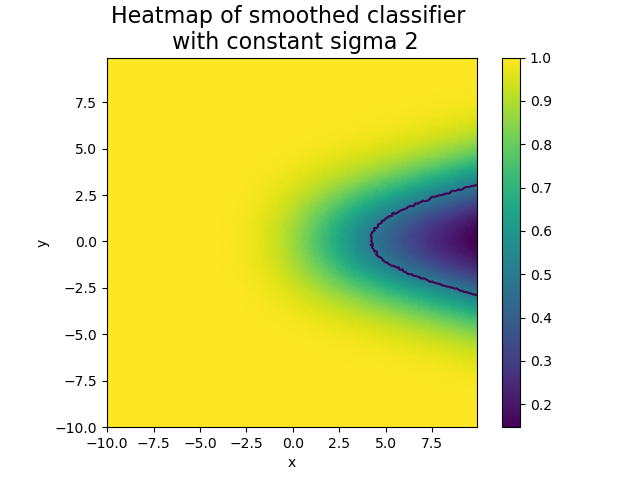}
    \end{minipage}
    \begin{minipage}[b]{0.32\linewidth}
        \includegraphics[width=\textwidth]{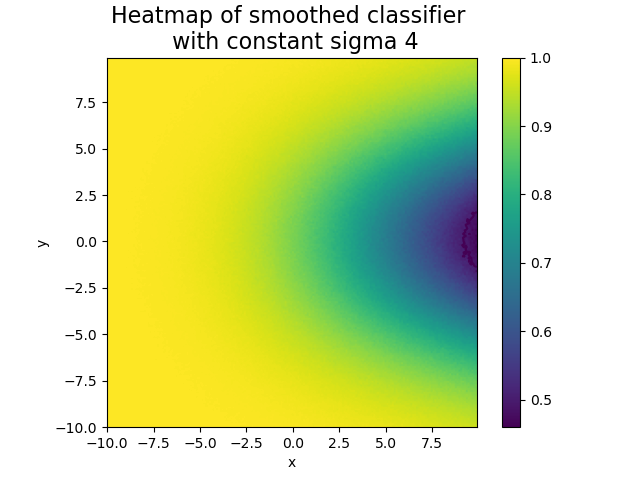}
    \end{minipage}
    \caption{Heatmaps and decision boudnary of base classifier (top left) and the smoothed classifier for increasing levels of $\sigma$. As $\sigma$ increases, the classifier is more smooth and the decision boundary recedes.}
    \label{heatmaps}
\end{figure} 

Following from these results, but also from basic intuition, it seems, that an undesired effect becomes present as $\sigma$ increases - the bounded/convex regions become to shrink, like in Figure~\ref{heatmaps}, while the unbounded/big/anti-convex regions expand. This is called \textit{shrinking phenomenon}. \cite{mohapatra2020rethinking} investigate this effect rather closely. They define the shrinkage and vanishing of regions formally and prove rigorously, that if $\sigma \xrightarrow[]{} \infty$, bounded regions, or semi-bounded regions (see \cite{mohapatra2020rethinking}) will eventually vanish. We formulate the main result in this direction. 

\begin{theorem} \label{mohapatra main}
Let us have $K$ the number of classes and the dimension $N$. Assume, that we have some bounded decision region $\mathcal{D}$ for a specific class roughly centered around 0. Further assume, that this is the only region where the class is classified. Let $R$ be a smallest radius such that $\mathcal{D} \subset B_R(0)$. Then, this decision region will vanish at most for $\sigma \ge \frac{R\sqrt{K}}{\sqrt{N}}$. 
\end{theorem}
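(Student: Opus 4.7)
The plan is to reduce the theorem to a pigeonhole argument: it suffices to show that the smoothed probability $p_C(x) := \mathbb{P}(f(x+\epsilon) = C) = \mathbb{P}(x + \epsilon \in \mathcal{D})$ of the class $C$ under consideration is strictly below $1/K$ for every $x$. Indeed, since there are $K$ classes and their smoothed probabilities at any point $x$ sum to one, at least one of them must be $\ge 1/K > p_C(x)$, so $C$ is never the argmax and its decision region is empty. The task therefore reduces to establishing
\begin{equation*}
\sup_{x \in \mathbb{R}^N} \mathbb{P}(x + \epsilon \in \mathcal{D}) < \frac{1}{K}
\qquad \text{whenever } \sigma \ge R\sqrt{K}/\sqrt{N}.
\end{equation*}

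Next I would eliminate both the $x$ dependence and the shape of $\mathcal{D}$ in a single step. Using the containment $\mathcal{D} \subset B_R(0)$ we get
\begin{equation*}
\mathbb{P}(x + \epsilon \in \mathcal{D}) \le \mathbb{P}(x + \epsilon \in B_R(0)) = \mathbb{P}(\epsilon \in B_R(-x)).
\end{equation*}
Because $\epsilon \sim \mathcal{N}(0, \sigma^2 I)$ has a radially symmetric, log-concave density peaked at $0$ (equivalently, by Anderson's inequality for symmetric convex bodies), the probability of a ball of fixed radius under $\epsilon$ is maximized when the ball is centered at the origin. Hence the supremum is bounded above by $\mathbb{P}(\|\epsilon\| \le R)$, which depends on $\mathcal{D}$ only through the enclosing radius $R$.

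It remains to show $\mathbb{P}(\|\epsilon\| \le R) < 1/K$ under $\sigma \ge R\sqrt{K}/\sqrt{N}$, i.e.\ $R^2/\sigma^2 \le N/K$. Since $\|\epsilon\|^2/\sigma^2 \sim \chi^2_N$ with mean $N$ and variance $2N$, I would invoke a lower-tail concentration inequality for the chi-squared distribution. Two natural candidates are (i) Chebyshev applied to $N\sigma^2 - \|\epsilon\|^2$, yielding $\mathbb{P}(\|\epsilon\|^2 \le R^2) \le 2N\sigma^4/(N\sigma^2 - R^2)^2$, and (ii) the volume bound $\mathbb{P}(\|\epsilon\| \le R) \le R^N/\bigl((2\sigma^2)^{N/2}\Gamma(N/2+1)\bigr)$ obtained by upper-bounding the Gaussian density on $B_R(0)$ by its maximum. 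Plugging in $R^2/\sigma^2 = N/K$ into either bound and using Stirling (in the case of (ii)) produces an expression that is strictly below $1/K$ in the regime of interest.

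The main obstacle is precisely this last quantitative step: the crude tail bounds achieve something of order $1/K$ but one has to choose an inequality tight enough to give strict inequality at the threshold $\sigma = R\sqrt{K/N}$, which is essentially only possible in the regime $N \gg K$ where $\chi^2_N$ concentrates sharply around $N$. The earlier steps (pigeonhole reduction to $p_C < 1/K$ and reduction to a centered ball via unimodality of the Gaussian) are structural and tight up to the enclosing-ball relaxation $\mathcal{D} \subset B_R(0)$; any further improvement of the constant $\sqrt{K}$ in the theorem would have to come from sharpening this final chi-squared estimate.
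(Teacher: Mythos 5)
Your proof mirrors the paper's own (admittedly informal) sketch step for step: reduce to bounding the smoothed mass of the enclosing ball $B_R(0)$, observe that this mass is maximized when the Gaussian is centered at the origin (the paper phrases this as ``an intuitive geometrical statement''; Anderson's inequality is indeed the precise tool), and finish with a lower-tail estimate for $\chi^2_N$. Where you go a bit further than the paper is in naming concrete tail bounds, and you are right to flag that this is the pressure point: Chebyshev applied as you describe yields $\mathbb{P}(\chi^2_N \le N/K) \le 2K^2/\bigl(N(K-1)^2\bigr)$, which is below $1/K$ only when $N \gtrsim 2K^3/(K-1)^2$, while the density-maximum bound produces the factor $(e/K)^{N/2}$, which diverges rather than decays when $K<e$ --- so neither candidate actually closes the gap at the stated threshold for all $(N,K)$. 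The paper leaves precisely this step unproven (it gestures at ``insights about the incomplete gamma function'' and ``other integration tricks'' attributed to Mohapatra et al.), so your proposal is at the same level of completeness as the paper's own argument; a sharper $\chi^2$ lower-tail Chernoff bound (such as the one the paper itself uses in the proof of Theorem~\ref{main theorem}) improves the required $N$ but still needs $N$ to be moderately large relative to $K$.
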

\begin{proof}
The idea of the proof is not very hard. First, the authors prove, that the smoothed region will be a subset of the smoothed $B_R(0)$. Then, they upper-bound the vanishing threshold of such a ball in two steps. First, they show, that if 0 is not classified as the class, then no other point will be (this is quite an intuitive geometrical statement. The $B_R(0)$ has the biggest probability under $\mathcal{N}(x, \sigma^2 I)$ if $x \equiv 0$). Second, they upper-bound the threshold for $\sigma$, under which $B_R(0)$ will have probability below $\frac{1}{K}$ (since they use slightly different setting as \cite{cohen2019certified}) for the $\mathcal{N}(x, \sigma^2 I)$. Using some insights about incomplete gamma function, which is known to be also the cdf of central chi-square distribution, and some other integration tricks, they obtain the resulting bound. 
\end{proof}

Besides Theorem~\ref{mohapatra main}, authors also claim many other statements abound shrinking, including shrinking of semi-bounded regions. Moreover, they also conduct experiments on CIFAR10 and ImageNet to support their theoretical findings. They also point out serious fairness issue that comes out as a consequence of the shrinkage phenomenon. For increasing levels of $\sigma$, they measure the class-wise clean accuracy of the smoothed classifier. If $f$ is trained with Gaussian data augmentation (what is known to be a good practice in randomized smoothing), using $\sigma=0.12$, the worst class \textit{cat} has test accuracy of 67\%, while the best class \textit{automobile} attains the accuracy of 92\%. The figures, however, change drastically, if we use $\sigma=1$ instead. In this case, the worst predicted class \textit{cat} has accuracy of poor 22\%, while \textit{ship} has reasonable accuracy 68\%. As authors claim, this is a consequence of the fact that samples of \textit{cat} are situated more in bounded, convex regions, that suffer from shrinking, while samples of \textit{ship} are mostly placed in expanded regions of anti-convex shape that will expand as the $\sigma$ grows. In addition, the authors also show, that the Gaussian data augmentation or adversarial training will reduce the shrinking phenomenon just partially and for moderate and high values of $\sigma$, this effect will be present anyway. 

We must emphasize, that this is a serious fairness issue, that has to be treated before randomized smoothing can be fully used in practice. For instance, if we trained a neural network to classify humans into several categories, fairness of classification is inevitable and the neural network cannot be used until this issue is solved. 

Similarly as the robustness vs.\ accuracy trade-off, this issue also motivates to use rather smaller values of $\sigma$. We see, that it is not possible to address all three problems consistently because they disagree on whether to use smaller, or bigger values of $\sigma$.

\subsection{Experiments on High-dimensional toy Dataset} \label{AppA: experiments on multitoy dataset}

In this subsection, we present the results of our motivational experiment on a synthetic dataset. Before reading this section, please read our main text, because we will use the necessary notation of the paper. 

The dataset we evaluated our method on is a generalization of the dataset visualized on Figure~\ref{fig: toy experiment main text}. The data points from one class lie in a cone of small angle and the points are generated such that the density is higher near the vertex of the cone (which is put in origin). Points from other class are generated from a spherically symmetrical distribution (where points sampled into the cone are excluded) with density again highest in the center (note, that the density peak is more pronounced than in the case of normal distribution, where the density around the center resembles uniform distribution). This dataset is chosen so that the $\sigma(x)$ function designed in Equation~\ref{eq: the sigma fcn} well corresponds to the geometry of the decision boundary. Moreover it is chosen so that the conic decision region will shrink rather fast with increasing $\sigma$. The motivation of this example is to show that if the $\sigma(x)$ function is well-designed, our IDRS can outperform the constant RS considerably. 

The setup of our experiment is as follows: We evaluate dimensions $N= 2, 6, 18, 60, 180, 400$. The $\sigma$ used for constant smoothing is $\sigma = 0.5, 0.5, 1.0, 1.0, 2.0, 2.0$ respectively. The $\sigma_b$ used is $0.4, 0.5$ for $N=2$, $0.4$ for $N=6$, $0.8$ for $N=18$, $0.8, 1.0$ for $N=60$, $1.9$ for $N=180$ and $1.95$ for $N=400$. The rates are $r = 0.2, 0.1, 0.05, 0.03, 0.01, 0.005$ respectively. The training was executed without data augmentation (because samples from different classes are very close to each other). Moreover, we have set maximal $\sigma(x)$ threshold for numerical purposes, because some samples were outliers and were way too far from other samples (and if the $\sigma(x)$ is way too big, the method encounteres numerical problems). In this case we set $\sigma(x) \le 5 \sigma_b$, but we are aware that also much bigger thresholds would have been possible. We present our comparisons in Figure~\ref{fig: multi toy experiment} and Table~\ref{tab: multi toy experiment}. 

\begin{figure}[t!]
    \centering
    \begin{minipage}[b]{0.40\linewidth}
        \includegraphics[width=\textwidth]{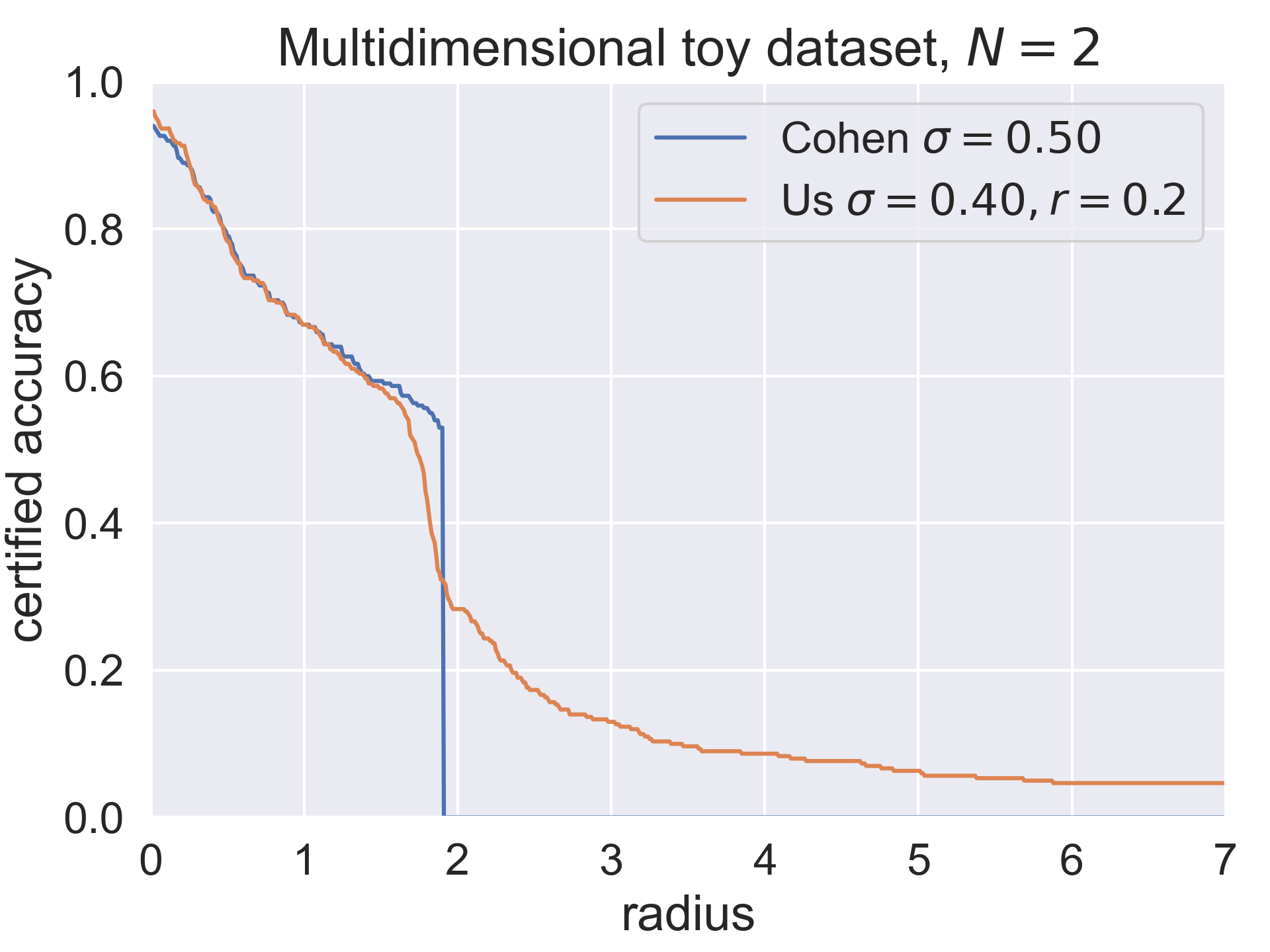}
    \end{minipage}
    \begin{minipage}[b]{0.40\linewidth}
        \includegraphics[width=\textwidth]{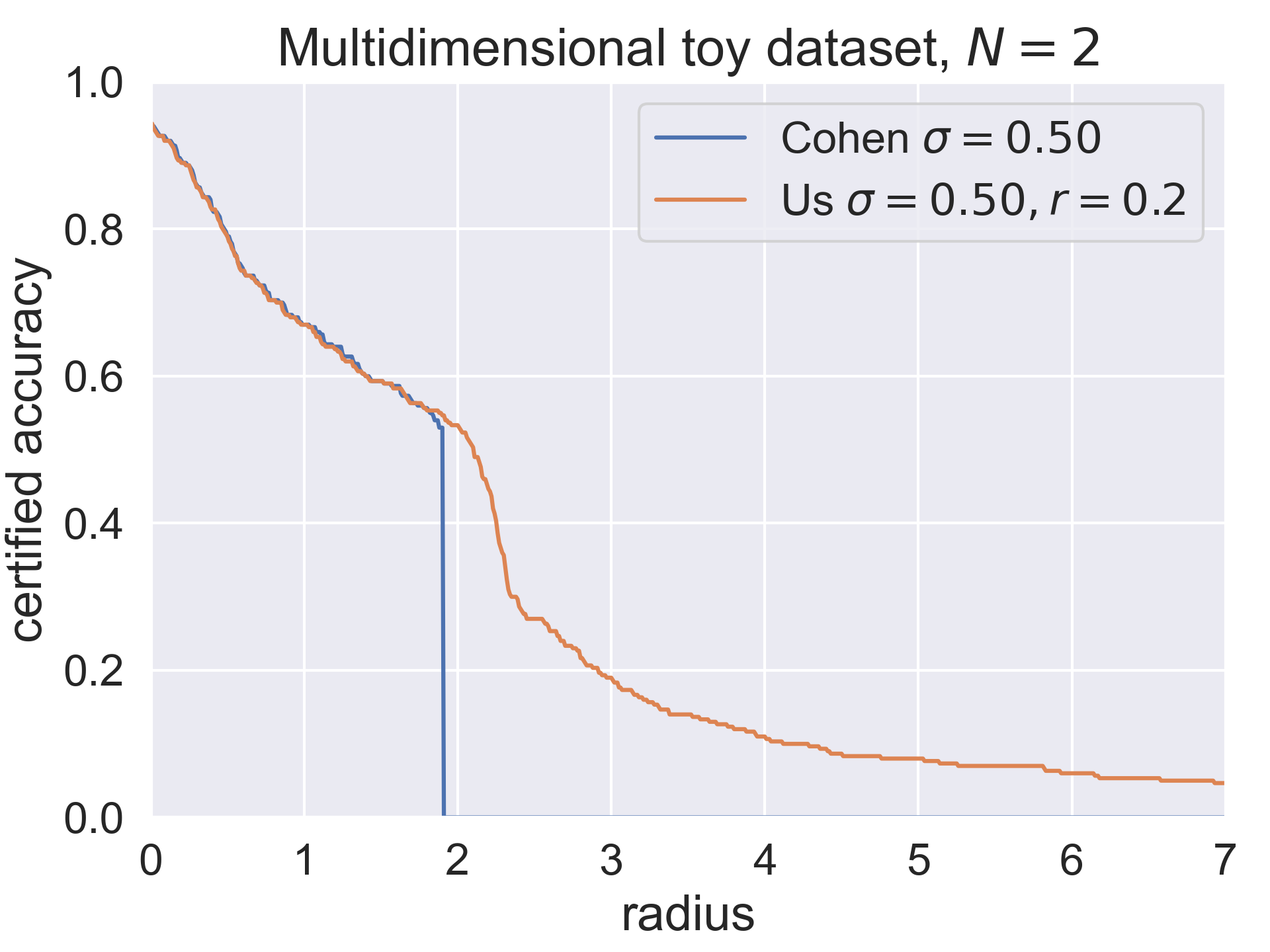}
    \end{minipage}
    \begin{minipage}[b]{0.40\linewidth}
        \includegraphics[width=\textwidth]{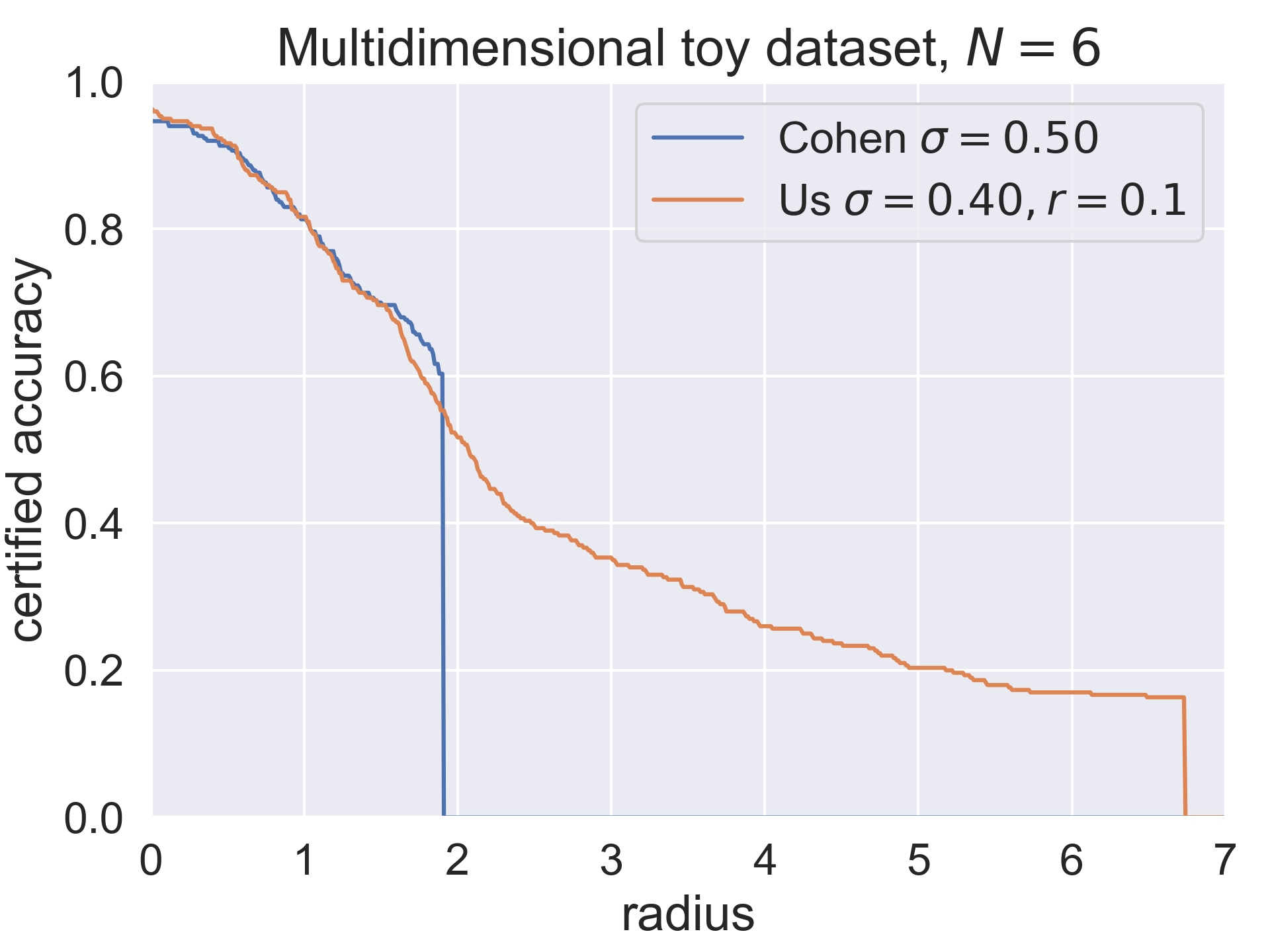}
    \end{minipage}
    \begin{minipage}[b]{0.40\linewidth}
        \includegraphics[width=\textwidth]{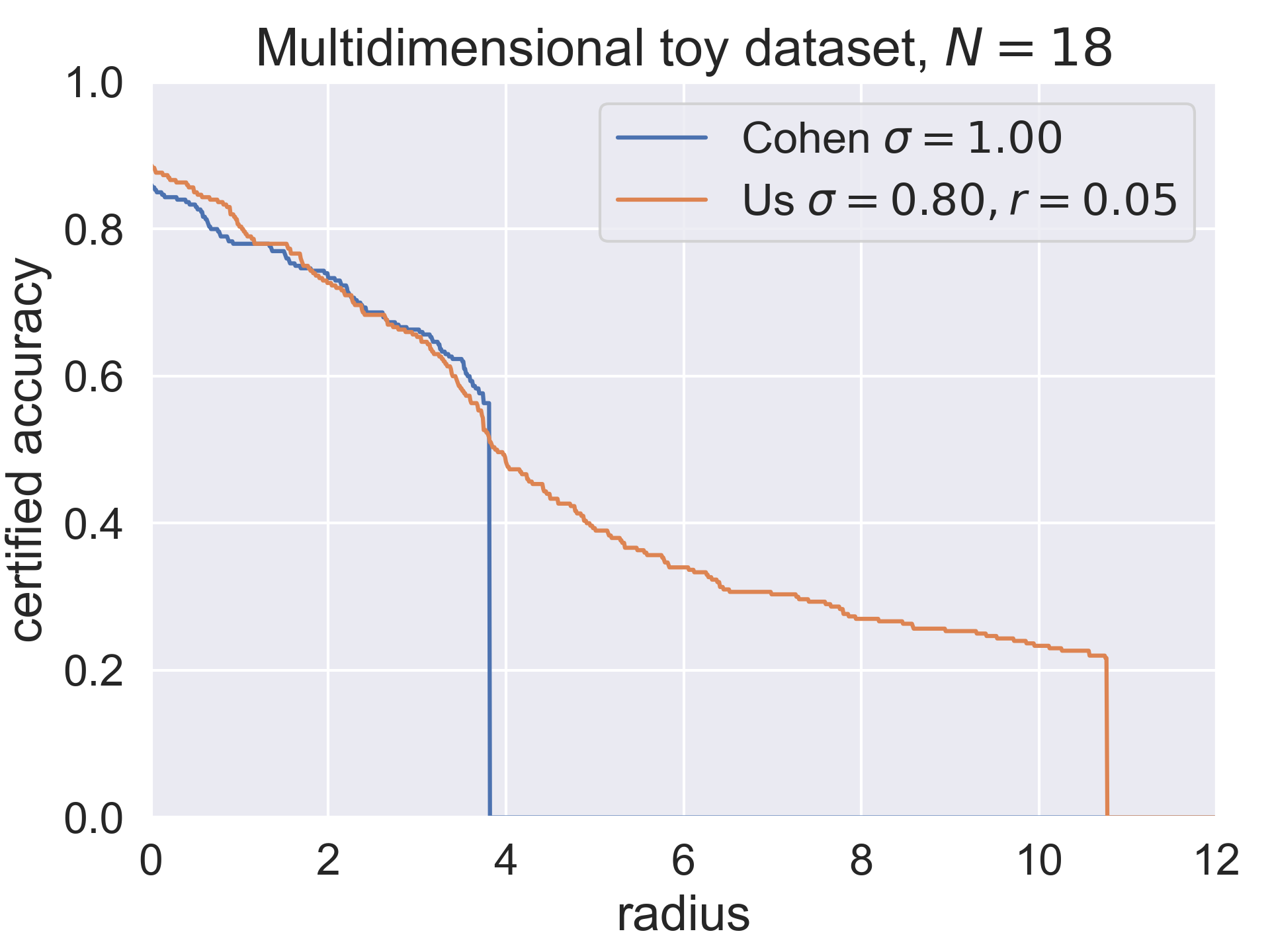}
    \end{minipage}
    \begin{minipage}[b]{0.40\linewidth}
        \includegraphics[width=\textwidth]{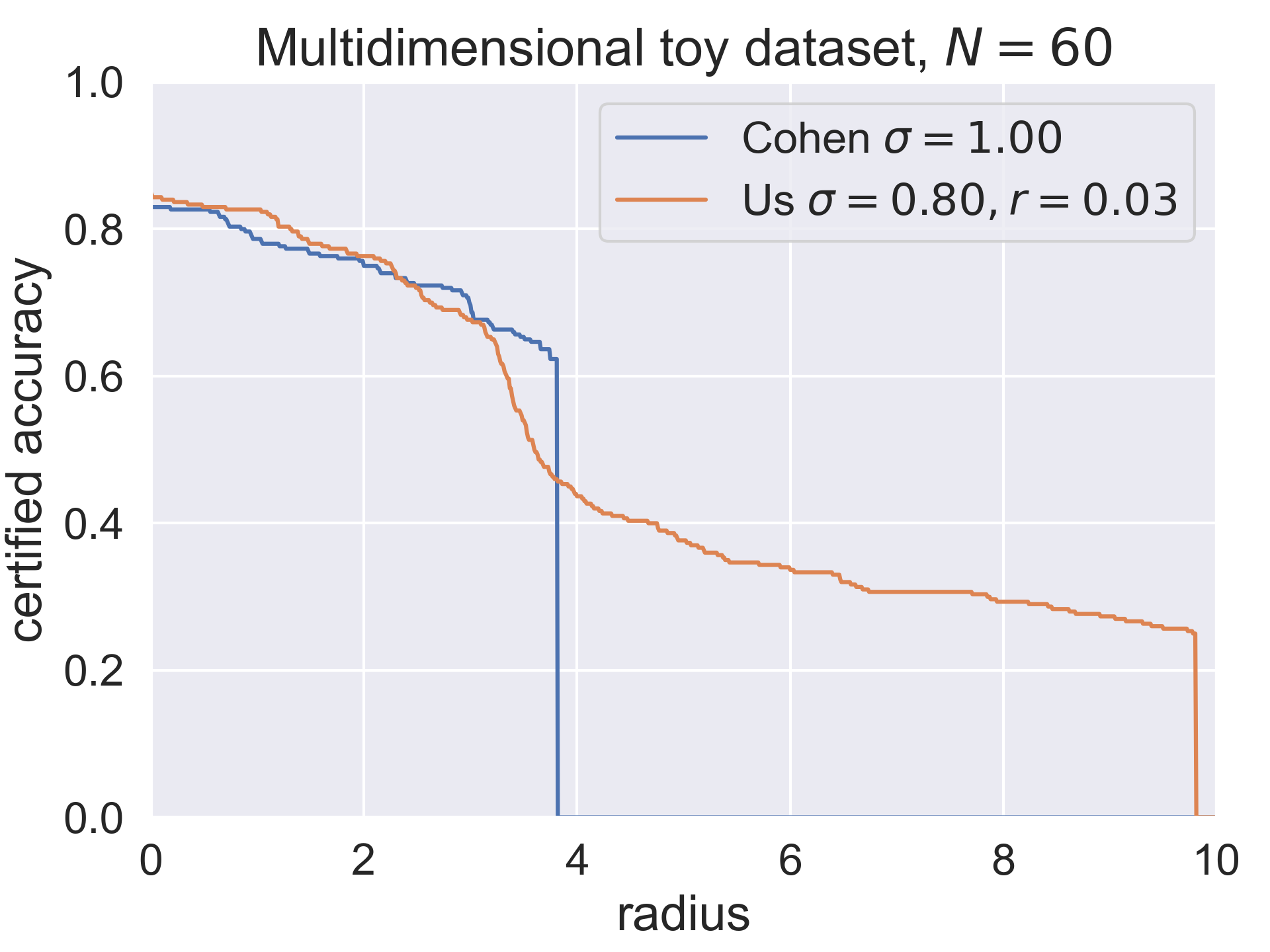}
    \end{minipage}
    \begin{minipage}[b]{0.40\linewidth}
        \includegraphics[width=\textwidth]{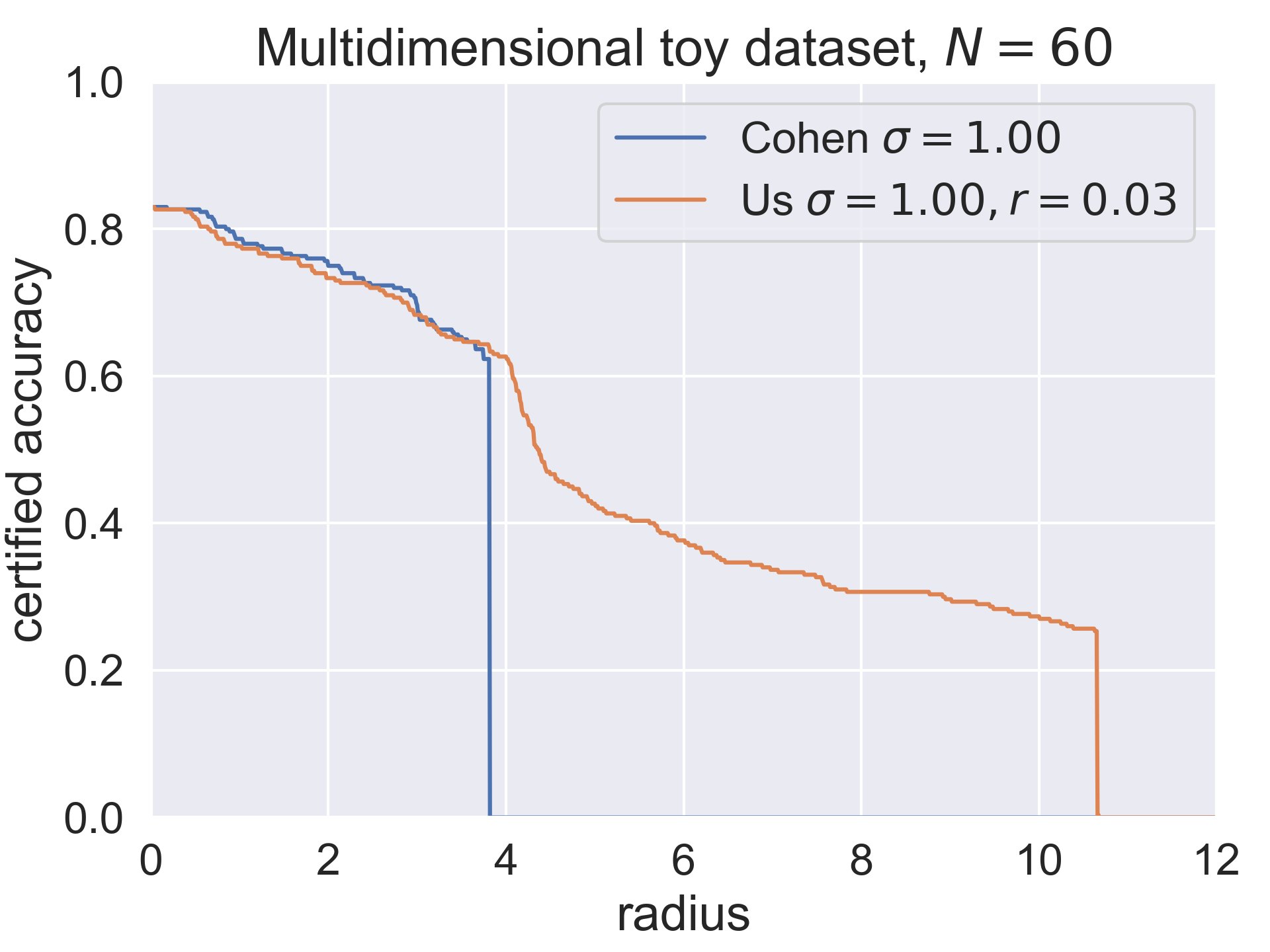}
    \end{minipage}
    \begin{minipage}[b]{0.40\linewidth}
        \includegraphics[width=\textwidth]{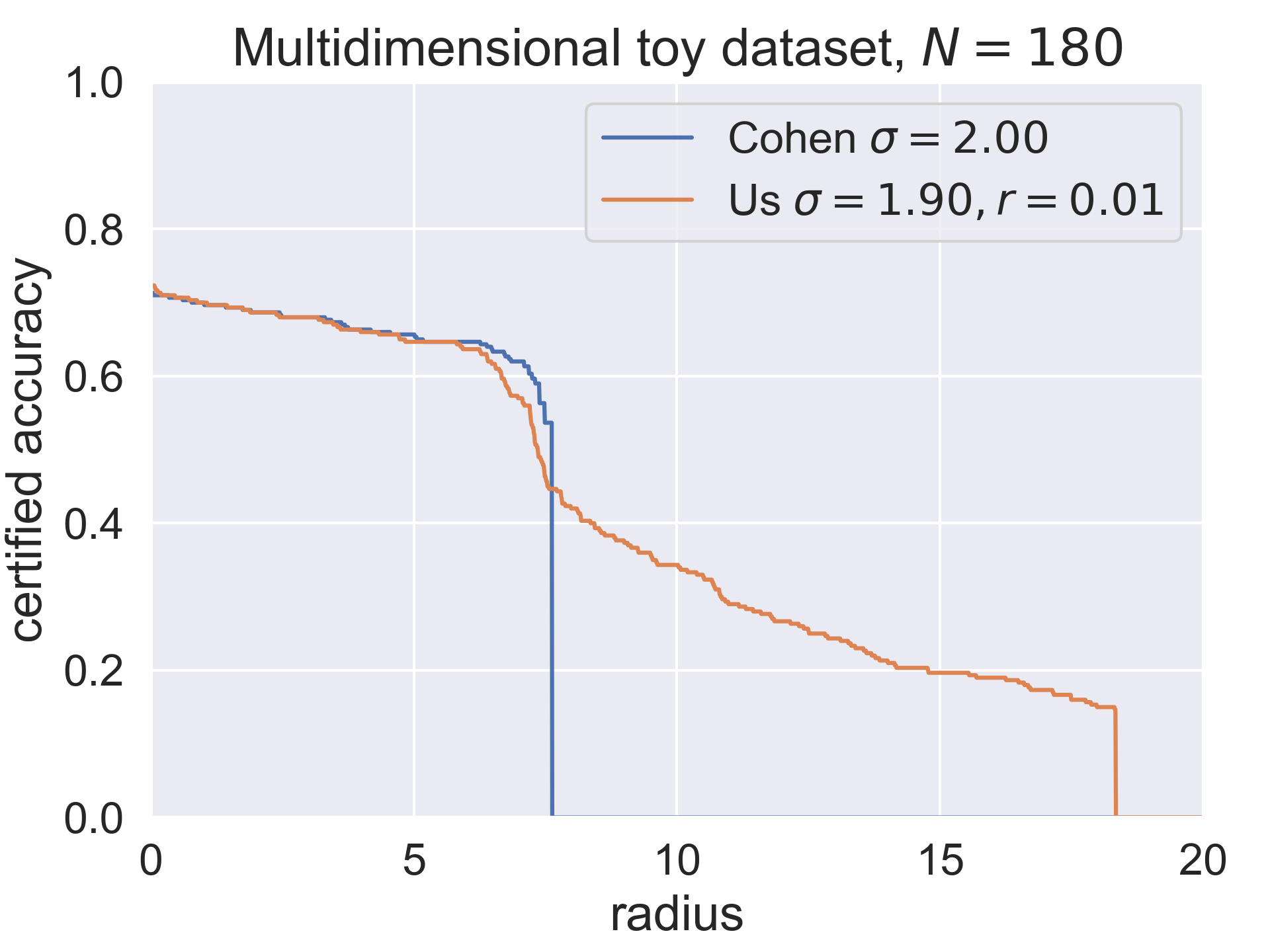}
    \end{minipage}
    \begin{minipage}[b]{0.40\linewidth}
        \includegraphics[width=\textwidth]{figures/comparison_dim_180_cohen_sigma_2_00_us_sigma_1_90_rate_0_01.png}
    \end{minipage}
    \caption{Certified accuracy plots of our multidimensional toy experiments.}
    \label{fig: multi toy experiment}
\end{figure} 

\begin{table}[t!]
\centering
\begin{tabular}{||c||c|c|c||} 
\hline\hline
Dimension & $\sigma$ ($\sigma_b$) & $r$ & Accuracy \\ 
\hline\hline
2 & 0.5 & - & 0.943 \\ 
\hline
2 & 0.4 & 0.2 & 0.96 \\
\hline
2 & 0.5 & 0.2 & 0.943 \\
\hline
6 & 0.5 & - & 0.946 \\
\hline
6 & 0.4 & 0.1 & 0.963 \\
\hline
18 & 1.0 & - & 0.86 \\
\hline
18 & 0.8 & 0.05 & 0.886 \\
\hline
60 & 1.0 & - & 0.83 \\
\hline
60 & 0.8 & 0.03 & 0.85 \\
\hline
60 & 1.0 & 0.03 & 0.83 \\
\hline
180 & 2.0 & - & 0.713 \\
\hline
180 & 1.9 & 0.01 & 0.726 \\
\hline
400 & 2.0 & - & 0.623 \\
\hline
400 & 1.95 & 0.005 & 0.623 \\
\hline\hline
\end{tabular}
\vspace{2mm}
\caption{Clean accuracies of different evaluations of our toy experiment. }
\label{tab: multi toy experiment}
\end{table}

From both the Figure~\ref{fig: multi toy experiment} an Table~\ref{tab: multi toy experiment} it is clear that the IDRS can outperform the constant $\sigma$ RS considerably, if we use really suitable $\sigma(x)$ function. We manage to improve significantly the certified radiuses without losing a single correct classification. On the other hand, in cases where $\sigma_b<\sigma$, we outperform constant $\sigma$ both in clean accuracy and in certified radiuses. This example is synthetic and designed in our favour. The main message is not how perfect our design of $\sigma(x)$ is, but the fact, that if $\sigma(x)$ is designed well, the IDRS can bring real advantages, even in moderate dimensions. 

\section{More on Theory} \label{appC: more on theory}
\subsection{Generalization of Results by \citet{li2018certified}} \label{appB: the li part}
In our main text, we mostly focus on the generalization of the methods from \cite{cohen2019certified}. This is because these methods yield tight radiuses and because the application of Neyman-Pearson lemma is beautiful. However, the methodology from \cite{li2018certified} can also be generalized for the input-dependent RS. To be able to do it, we need some auxiliary statements about the R{\'e}nyi divergence. 

\begin{lemma} \label{renyi non-constant}
The R{\'e}nyi divergence between two one-dimensional normal distributions is as follows: 
$$D_\alpha(\mathcal{N}(\mu_1, \sigma_1^2)||\mathcal{N}(\mu_0, \sigma_0^2))=\frac{\alpha(\mu_1-\mu_2)^2}{2\sigma_\alpha^2}+\frac{1}{1-\alpha}\log\left(\frac{\sigma_\alpha}{\sigma_1^{1-\alpha}\sigma_0^\alpha}\right),$$
provided, that $\sigma_\alpha^2:=(1-\alpha)\sigma_1^2+\alpha\sigma_0^2 \ge 0$. 
\end{lemma}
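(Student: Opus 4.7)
The plan is to prove the identity by direct computation starting from the definition
\[
D_\alpha(P \,\|\, Q) = \frac{1}{\alpha - 1} \log \int_{\R} p(x)^\alpha q(x)^{1-\alpha}\, dx,
\]
with $p$ the density of $\mathcal{N}(\mu_1, \sigma_1^2)$ and $q$ the density of $\mathcal{N}(\mu_0, \sigma_0^2)$. Since both densities are Gaussian, the integrand factors into a multiplicative normalization constant times an exponential of a quadratic in $x$, so the whole argument reduces to (i) completing the square and (ii) evaluating a one-dimensional Gaussian integral.

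First, I would pull out the prefactor $(2\pi)^{-1/2}\sigma_1^{-\alpha}\sigma_0^{-(1-\alpha)}$ and write the exponent as
\[
-\frac{\alpha(x-\mu_1)^2}{2\sigma_1^2} - \frac{(1-\alpha)(x-\mu_0)^2}{2\sigma_0^2}.
\]
Introducing the abbreviations $a := \alpha/\sigma_1^2$ and $b := (1-\alpha)/\sigma_0^2$, this becomes $-\tfrac12\bigl[(a+b)x^2 - 2(a\mu_1+b\mu_0)x + (a\mu_1^2+b\mu_0^2)\bigr]$. Completing the square in $x$ and carrying out the integral gives a factor $\sqrt{2\pi/(a+b)}$, while the leftover constant in the exponent simplifies, after cancellations, to $-\frac{ab}{2(a+b)}(\mu_1-\mu_0)^2$.

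Next, I would recognize the two algebraic identities
\[
a+b \;=\; \frac{\sigma_\alpha^2}{\sigma_1^2\sigma_0^2}, \qquad \frac{ab}{a+b} \;=\; \frac{\alpha(1-\alpha)}{\sigma_\alpha^2},
\]
where $\sigma_\alpha^2 = (1-\alpha)\sigma_1^2 + \alpha\sigma_0^2$. The assumption $\sigma_\alpha^2 \ge 0$ is precisely the condition under which $a+b \ge 0$, so that the Gaussian integral converges (this matters only when $\alpha > 1$). Substituting, the integral evaluates to
\[
\int_{\R} p^\alpha q^{1-\alpha}\, dx \;=\; \frac{\sigma_1^{1-\alpha}\sigma_0^\alpha}{\sigma_\alpha}\exp\!\left(-\frac{\alpha(1-\alpha)(\mu_1-\mu_0)^2}{2\sigma_\alpha^2}\right).
\]

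Finally, applying $\tfrac{1}{\alpha-1}\log(\cdot)$ and distributing the factor $\tfrac{1}{\alpha-1}$ across the two terms, the $(1-\alpha)$ cancels the numerator of the quadratic term leaving $\alpha(\mu_1-\mu_0)^2/(2\sigma_\alpha^2)$, while the log term becomes $\tfrac{-1}{1-\alpha}\log(\sigma_1^{1-\alpha}\sigma_0^\alpha/\sigma_\alpha) = \tfrac{1}{1-\alpha}\log(\sigma_\alpha/(\sigma_1^{1-\alpha}\sigma_0^\alpha))$, yielding the stated formula. The only mildly delicate step is the simplification $ab/(a+b) = \alpha(1-\alpha)/\sigma_\alpha^2$ and tracking the sign when inverting the ratio inside the logarithm; everything else is bookkeeping of a standard Gaussian integral.
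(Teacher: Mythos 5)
Your proof is correct. The paper does not actually carry out any computation here: it simply cites van Erven and Harremo\"{e}s (2014) for the closed form. You, in contrast, prove the identity from scratch by the standard Gaussian-integral calculation: pull out the prefactor $(2\pi)^{-1/2}\sigma_1^{-\alpha}\sigma_0^{-(1-\alpha)}$, complete the square with $a=\alpha/\sigma_1^2$, $b=(1-\alpha)/\sigma_0^2$, identify $a+b=\sigma_\alpha^2/(\sigma_1^2\sigma_0^2)$ and $ab/(a+b)=\alpha(1-\alpha)/\sigma_\alpha^2$, and apply $\tfrac{1}{\alpha-1}\log(\cdot)$. Every algebraic step checks out (note the statement's $(\mu_1-\mu_2)^2$ is a typo for $(\mu_1-\mu_0)^2$, which is what your computation delivers). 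You also correctly flag that the condition $\sigma_\alpha^2\ge 0$ (strictly, $>0$) is exactly what makes $a+b>0$ so the Gaussian integral converges, and that this only bites when $\alpha>1$. So you have given a self-contained elementary argument where the paper merely offloads to a reference; the two approaches are trivially compatible, but yours has the advantage of making visible where the sign condition on $\sigma_\alpha^2$ enters.
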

\begin{proof}
See \cite{van2014renyi}.
\end{proof}

Note, that this proposition induces some assumptions on how $\sigma_0, \sigma_1, \alpha$ should be related. If $\sigma_0>\sigma_1$, then the required inequality holds for any $1 \neq \alpha>0$. If $\sigma_0<\sigma_1$, then $\alpha$ is restricted and we need to keep that in mind.

\begin{lemma} \label{renyi product measure}
Assume, we have some one-dimensional distributions $\mathcal{P}_1, \mathcal{P}_1, \dots, \mathcal{P}_N$ and $\mathcal{Q}_1, \mathcal{Q}_2, \dots, \mathcal{Q}_N$ defined on common space for pairs with the same index. Then, assuming product space with product $\sigma$-algebra, we have the following identity: 

$$D_\alpha(\mathcal{P}_1 \times \mathcal{P}_2 \times \dots \times \mathcal{P}_N || \mathcal{Q}_1 \times \mathcal{Q}_2 \times \dots \times \mathcal{Q}_N) = \sum_{i=1}^N D_\alpha(\mathcal{P}_i || \mathcal{Q}_i).$$
\end{lemma}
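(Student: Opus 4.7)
The plan is to reduce the product statement to the standard integral definition of the Rényi divergence and then use Fubini to turn a single product integral into a product of one-dimensional integrals, after which the logarithm converts the product into the claimed sum.

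First I would recall the definition: for distributions $P, Q$ on a common space with densities $p, q$ with respect to a common $\sigma$-finite reference measure (e.g.\ Lebesgue for the continuous case used in the paper, or $Q$ itself via the Radon--Nikodym derivative),
$$D_\alpha(P\|Q) \;=\; \frac{1}{\alpha-1}\,\log \int p(x)^{\alpha} q(x)^{1-\alpha}\, dx.$$
Since the $\mathcal{P}_i, \mathcal{Q}_i$ are defined on the same factor space for each $i$, I may pick a common reference measure $\mu_i$ on that factor and let $\mu = \mu_1 \otimes \cdots \otimes \mu_N$ be the product reference measure on the product space. Then the densities of the product measures factor: the density of $\mathcal{P}_1 \times \cdots \times \mathcal{P}_N$ with respect to $\mu$ equals $\prod_i p_i(x_i)$, and likewise for the $\mathcal{Q}_i$.

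The central calculation is then
$$\int \prod_{i=1}^{N} p_i(x_i)^{\alpha} q_i(x_i)^{1-\alpha}\, d\mu(x_1,\dots,x_N) \;=\; \prod_{i=1}^{N} \int p_i(x_i)^{\alpha} q_i(x_i)^{1-\alpha}\, d\mu_i(x_i),$$
where the factorization follows from Tonelli's theorem, the integrands being non-negative and the reference measure being a product. Applying $\tfrac{1}{\alpha-1}\log(\cdot)$ to both sides turns the product on the right into a sum and yields
$$D_\alpha\!\left(\prod_i \mathcal{P}_i \,\Big\|\, \prod_i \mathcal{Q}_i\right) \;=\; \sum_{i=1}^{N} D_\alpha(\mathcal{P}_i\|\mathcal{Q}_i),$$
which is the claimed identity.

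The only subtle point, and the one I would flag explicitly, is well-definedness of the integrals: one needs $\mathcal{P}_i \ll \mathcal{Q}_i$ for every $i$ (or the integrals $\int p_i^\alpha q_i^{1-\alpha}$ to be finite) so that the expressions make sense; otherwise both sides are formally $+\infty$ and the equality still holds by convention. For the usage in the paper the $\mathcal{P}_i, \mathcal{Q}_i$ will be one-dimensional Gaussians with positive variances, so absolute continuity is automatic and no further care is needed. This is really the only place where something could go wrong; the core argument is just factorization of the density combined with Tonelli, so I do not expect a hard step beyond bookkeeping.
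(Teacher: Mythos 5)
Your argument is correct. The paper does not prove this lemma itself --- it simply cites van Erven and Harremo\"es --- and the density-factorization plus Tonelli argument you give is exactly the standard proof one finds in that reference, so you are reproducing rather than deviating from the intended route. One minor remark: your integral formula $D_\alpha(P\|Q)=\frac{1}{\alpha-1}\log\int p^\alpha q^{1-\alpha}\,dx$ presumes $\alpha\neq 1$; the $\alpha=1$ (KL) case follows by the same tensorization argument or by taking a limit, but since the paper only invokes the lemma for $\alpha\neq 1$ this is not a gap.
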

\begin{proof}
See \cite{van2014renyi}.
\end{proof}

Using these two propositions, we are now able to derive a formula for R{\'e}nyi divergence between two multivariate isotropic normal distributions: 

\begin{lemma}
$$D_\alpha(\mathcal{N}(x_1, \sigma_1^2 I) || \mathcal{N}(x_0, \sigma_0^2 I)) = \frac{\alpha\norm{x_0-x_1}^2}{2\sigma_1^2+2\alpha(\sigma_0^2-\sigma_1^1)}+N\frac{\log\left(\frac{\sigma_\alpha}{\sigma_1}\right)}{1-\alpha}-N\frac{\alpha}{1-\alpha}\log\left(\frac{\sigma_0}{\sigma_1}\right).$$
\end{lemma}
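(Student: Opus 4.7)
The plan is to combine the two preceding lemmas in the natural way: decompose each multivariate isotropic Gaussian as a product of one-dimensional Gaussians, apply the additivity of the Rényi divergence across product measures, and then apply the one-dimensional formula coordinate by coordinate.

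First I would write
\[
\mathcal{N}(x_i, \sigma_i^2 I) = \mathcal{N}((x_i)_1, \sigma_i^2)\times \mathcal{N}((x_i)_2, \sigma_i^2)\times \cdots \times \mathcal{N}((x_i)_N, \sigma_i^2)
\]
for $i \in \{0,1\}$, which is valid because the covariance is diagonal so the coordinates are independent. Then by the product-measure lemma (Lemma on additivity of Rényi divergence),
\[
D_\alpha(\mathcal{N}(x_1,\sigma_1^2 I)\,\|\,\mathcal{N}(x_0,\sigma_0^2 I)) = \sum_{k=1}^{N} D_\alpha(\mathcal{N}((x_1)_k,\sigma_1^2)\,\|\,\mathcal{N}((x_0)_k,\sigma_0^2)).
\]

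Next I would substitute the one-dimensional formula from the previous lemma into each summand. Letting $\sigma_\alpha^2 := (1-\alpha)\sigma_1^2 + \alpha \sigma_0^2$, every term contributes the same log piece $\tfrac{1}{1-\alpha}\log(\sigma_\alpha/(\sigma_1^{1-\alpha}\sigma_0^\alpha))$, and the quadratic piece sums to $\tfrac{\alpha}{2\sigma_\alpha^2}\sum_{k=1}^N ((x_0)_k-(x_1)_k)^2 = \tfrac{\alpha \norm{x_0-x_1}^2}{2\sigma_\alpha^2}$. Recognizing $2\sigma_\alpha^2 = 2\sigma_1^2 + 2\alpha(\sigma_0^2-\sigma_1^2)$ gives the first term in the claimed identity (modulo the typo $\sigma_1^1$ in the statement, which should read $\sigma_1^2$).

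Finally I would unpack the aggregated log term: expanding
\[
\frac{N}{1-\alpha}\log\!\frac{\sigma_\alpha}{\sigma_1^{1-\alpha}\sigma_0^\alpha} = \frac{N}{1-\alpha}\log\!\frac{\sigma_\alpha}{\sigma_1} - \frac{N\alpha}{1-\alpha}\log\!\frac{\sigma_0}{\sigma_1},
\]
which is exactly the second and third terms of the target. Throughout, I would carry the assumption $\sigma_\alpha^2 \ge 0$ from the one-dimensional lemma, which (as noted there) is automatic when $\sigma_0>\sigma_1$ and places a restriction on $\alpha$ when $\sigma_0<\sigma_1$. There is no substantial obstacle here; the proof is essentially bookkeeping, and the only place one could slip is in the algebraic rearrangement of the logarithm, which the identity above makes transparent.
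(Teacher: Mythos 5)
Your proposal is correct and follows the same route as the paper's own proof: factor each isotropic Gaussian into its $N$ independent one-dimensional coordinates, invoke the product-measure additivity of the R\'enyi divergence, substitute the one-dimensional formula into each summand, and rearrange the resulting log term. The algebraic step splitting $\frac{1}{1-\alpha}\log\bigl(\sigma_\alpha/(\sigma_1^{1-\alpha}\sigma_0^\alpha)\bigr)$ into the two displayed log pieces matches the paper's computation exactly, and you correctly flag the typo ($\sigma_1^1$ should read $\sigma_1^2$) in the statement.
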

\begin{proof}
Imporant property that is needed here is, that isotropic gaussian distributions factorize to one-dimensinal independent marignals. In other words: 
$$\mathcal{N}(x_1, \sigma_1^2 I)=\mathcal{N}(x_{11}, \sigma_1^2) \times \mathcal{N}(x_{12}, \sigma_1^2) \times \dots \times \mathcal{N}(x_{1N}, \sigma_1^2),$$ and analogically for $x_0$. 
Therefore, using Lemma~\ref{renyi product measure} we see: 
$$D_\alpha(\mathcal{N}(x_1, \sigma_1^2 I) || \mathcal{N}(x_0, \sigma_0^2 I))=\sum_{i=1}^N D_\alpha(\mathcal{N}(x_{1i}, \sigma_1^2) || \mathcal{N}(x_{0i}, \sigma_0^2)).$$
Now, it suffices to plug in the formula from Proposition~\ref{renyi non-constant} to obtain the required result: 
\begin{align*}
D_\alpha(\mathcal{N}(x_{1i}, \sigma_1^2 I) || \mathcal{N}(x_{0i}, \sigma_0^2 I))&=\frac{\alpha(x_{1i}-x_{2i})^2}{2\sigma_\alpha^2}+\frac{1}{1-\alpha}\log\left(\frac{\sigma_\alpha}{\sigma_1^{1-\alpha}\sigma_0^\alpha}\right) \\&=\frac{\alpha(x_{1i}-x_{2i})^2}{2\sigma_1^2+2\alpha(\sigma_0^2-\sigma_1^1)}+\frac{\log\left(\frac{\sigma_\alpha}{\sigma_1}\right)}{1-\alpha}-\frac{\alpha}{1-\alpha}\log\left(\frac{\sigma_0}{\sigma_1}\right)
\end{align*}
Now it suffices to sum up over $i$ and the result follows. 
\end{proof}

To obtain the certified radius, we also need a result from \cite{li2018certified}, which gives a guarantee that two measures on the set of classes will share the modus if the Rényi divergence between them is small enough. 

\begin{lemma} \label{li discrete dists}
Let $\mathbb{P}=(p_1, p_2, \dots, p_K)$ and $\mathbb{Q}=(q_1, q_2, \dots, q_K)$ two discrete measures on $\mathcal{C}$. Let $p_A, p_B$ correspond to two biggest probabilities in distribution $\mathbb{P}$. Let $M_1(a,b)=\frac{a+b}{2}$ and $M_{1-\alpha}(a,b)=(\frac{a^{1-\alpha}+b^{1-\alpha}}{2})^{\frac{1}{1-\alpha}}$ If $$D_\alpha(\mathbb{Q}||\mathbb{P}) \le -\log(1-2M_1(p_A, p_B)+2M_{1-\alpha}(p_A, p_B)),$$

then the distributions $\mathbb{P}$ and $\mathbb{Q}$ agree on the class with maximal assigned probability. 
\end{lemma}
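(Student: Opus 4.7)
The plan is to argue by contraposition. Assume $\mathbb{P}$ and $\mathbb{Q}$ do not agree on their mode: let $A$ be the mode of $\mathbb{P}$ and pick some $A' \neq A$ with $q_{A'} \ge q_A$ (such an $A'$ must exist if $A$ is not the mode of $\mathbb{Q}$). Note that $p_{A'} \le p_B$ by the definition of $p_B$ as the second-largest entry of $\mathbb{P}$. The goal is then to show
$$D_\alpha(\mathbb{Q}\|\mathbb{P}) > -\log\!\bigl(1-2M_1(p_A,p_B)+2M_{1-\alpha}(p_A,p_B)\bigr),$$
which contradicts the hypothesis.

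First I would reduce to a three-class problem via the data processing inequality for Rényi divergence. Applying the deterministic map on $\mathcal{C}$ that keeps $A$ and $A'$ as separate symbols but lumps all remaining classes into a single ``other'' symbol can only decrease $D_\alpha$, and it preserves both the inequality $q_{A'} \ge q_A$ and the probabilities $p_A, p_{A'}$. Hence it suffices to lower-bound the divergence for $\mathbb{P}=(p_A, p_{A'}, 1-p_A-p_{A'})$ and $\mathbb{Q}=(q_A, q_{A'}, 1-q_A-q_{A'})$. Next I would minimise $D_\alpha(\mathbb{Q}\|\mathbb{P})$ in this ternary setting over admissible $\mathbb{Q}$ with $\mathbb{P}$ fixed, subject to $q_{A'} \ge q_A$. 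Convexity of $D_\alpha(\cdot\|\mathbb{P})$ implies the minimum lies on the boundary $q_A = q_{A'} =: s$, reducing the problem to a one-dimensional optimisation over $s \in [0,1/2]$. The first-order condition in $s$ (after handling the sign of $\tfrac{1}{\alpha-1}$ separately for $\alpha>1$ and $\alpha<1$) produces a closed-form optimal $s^\ast$ whose value depends on $p_A$ and $p_{A'}$ only through the arithmetic mean $(p_A+p_{A'})/2$ and the $(1-\alpha)$-power mean of $p_A$ and $p_{A'}$.

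Finally, I would verify that the resulting minimum value is monotone in $p_{A'}$ on $[0, p_A]$ in a direction making $p_{A'} = p_B$ (the largest admissible choice) the worst case, so that restricting attention to $p_{A'}=p_B$ is justified. Substituting $p_{A'} = p_B$ and recognising $\tfrac{1}{2}(p_A+p_B)$ as $M_1(p_A,p_B)$ and $\bigl(\tfrac{1}{2}(p_A^{1-\alpha}+p_B^{1-\alpha})\bigr)^{1/(1-\alpha)}$ as $M_{1-\alpha}(p_A,p_B)$, the minimum rearranges precisely to $-\log\!\bigl(1-2M_1(p_A,p_B)+2M_{1-\alpha}(p_A,p_B)\bigr)$. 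The main obstacle is this last identification: the one-dimensional calculus for $s^\ast$ and the monotonicity check in $p_{A'}$ are routine, but one must be careful with sign conventions for $\alpha \gtrless 1$, with verifying that the critical point indeed lies in the feasible interval $[0,1/2]$, and with the algebraic rearrangement that funnels $(p_A,p_B)$ into the particular combination of arithmetic and power means appearing in the statement.
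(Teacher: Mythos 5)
Your proposal is correct and takes a genuinely different, more self-contained route than the paper, which simply defers to the KKT computation in Li et al.\ over all $2K$ variables $(p_i,q_i)$. The key reorganization in your plan is the data-processing reduction to three symbols, which collapses everything to a one-parameter calculus problem; after the boundary restriction $q_A=q_{A'}=:s$, the stationary point is $s^\ast = M/(1-p_A-p_{A'}+2M)$ with $M:=M_{1-\alpha}(p_A,p_{A'})$, and the value there telescopes (using $M^{\alpha}\bigl(p_A^{1-\alpha}+p_{A'}^{1-\alpha}\bigr)=2M$) to exactly $-\log\bigl(1-p_A-p_{A'}+2M\bigr)$, which is the target expression once $p_{A'}$ is replaced by $p_B$. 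The residual $p_{A'}$-monotonicity is then clean: $\partial_{p_{A'}} M_{1-\alpha}(p_A,p_{A'}) = \tfrac12 \bigl(M_{1-\alpha}/p_{A'}\bigr)^{\alpha}$, which is $\geq \tfrac12$ because a power mean of $p_A\geq p_{A'}$ with $p_{A'}$ is $\geq p_{A'}$, so the argument of the log is nondecreasing in $p_{A'}$. Two spots to tighten when writing this out: (i) convexity alone does not force the minimizer onto the boundary $q_A=q_{A'}$; you also need that the unconstrained minimizer $\mathbb{Q}=\mathbb{P}$ is infeasible (it is, since $A$ is the strict mode of $\mathbb{P}$), and for $\alpha>1$ one should invoke quasi-convexity of $D_\alpha(\cdot\,\|\,\mathbb{P})$ rather than convexity, since only $\sum_i q_i^{\alpha}p_i^{1-\alpha}$, not its scaled log, is convex there; (ii) the admissible range for $p_{A'}$ is $[0,p_B]$, not $[0,p_A]$, though since $p_B\leq p_A$ this only makes the endpoint selection easier. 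Compared with the paper's cited multi-variable KKT argument, your route is more transparent about why the bound is fundamentally a three-class (indeed, effectively two-class-plus-remainder) phenomenon, at the modest cost of a separate monotonicity verification in $p_{A'}$.
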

\begin{proof}
This lemma can be proved by directly computing the minimal required $D_\alpha$ to be able to disagree on the maximal class probabilities via a constrained optimization problem (with variables $p_i, q_i, i \in \{1, \dots, K\}$), solving KKT conditions. For details, consult \cite{li2018certified}.
\end{proof}

Having explicit formula for the Rényi divergence, we can mimic the methodology of \cite{li2018certified} to obtain the certified radius: 

\begin{theorem}
Given $x_0, p_A, p_B, \sigma_0, N$, the certified radius squared for all $x_1$ such that fixed $\sigma_1$ is used is:
\begin{align*}
R^2=\underset{\alpha \in S_{\sigma_0, \sigma_1}}{\sup} \frac{2\sigma_1^2+2\alpha(\sigma_0^2-\sigma_1^1)}{\alpha}\Bigg(N\frac{\alpha}{1-\alpha}\log\left(\frac{\sigma_0}{\sigma_1}\right)-N\frac{\log\left(\frac{\sigma_\alpha}{\sigma_1}\right)}{1-\alpha} \\-\log(1-2M_1(p_A, p_B)+2M_{1-\alpha}(p_A, p_B))\Bigg),
\end{align*} where $S_{\sigma_0, \sigma_1}= \mathbb{R}_+$, if $\sigma_0>\sigma_1$ and $S_{\sigma_0, \sigma_1}= \left( 0, \frac{\sigma_1^2}{\sigma_1^2-\sigma_0^2} \right]$ if $\sigma_0<\sigma_1$.
\end{theorem}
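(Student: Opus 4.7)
My plan is to transplant the argument of \citet{li2018certified} to the input-dependent setting, using the two ingredients just established: the closed-form Rényi divergence between two isotropic Gaussians with differing means \emph{and} variances, together with the discrete-distribution criterion of Lemma~\ref{li discrete dists}.

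First, let $\mathbb{P}=G_f(x_0)$ and $\mathbb{Q}=G_f(x_1)$ denote the class distributions at $x_0$ and $x_1$. Both arise as pushforwards of the ambient Gaussians through the fixed measurable map $f$, so the data-processing inequality for Rényi divergence yields
$$
D_\alpha(\mathbb{Q}\,\|\,\mathbb{P}) \;\le\; D_\alpha\bigl(\mathcal{N}(x_1,\sigma_1^2 I)\,\|\,\mathcal{N}(x_0,\sigma_0^2 I)\bigr).
$$
Combining this with Lemma~\ref{li discrete dists}, a sufficient condition for $g_f(x_1)=g_f(x_0)$ is that for some admissible $\alpha$,
$$
D_\alpha\bigl(\mathcal{N}(x_1,\sigma_1^2 I)\,\|\,\mathcal{N}(x_0,\sigma_0^2 I)\bigr) \;\le\; -\log\bigl(1-2M_1(p_A,p_B)+2M_{1-\alpha}(p_A,p_B)\bigr).
$$

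Second, I substitute the explicit multivariate Gaussian divergence formula derived just above the theorem, which is affine in $\|x_0-x_1\|^2$. Since $2\sigma_\alpha^2=2\sigma_1^2+2\alpha(\sigma_0^2-\sigma_1^2)$ is strictly positive on the admissible range and $\alpha>0$, isolating $\|x_0-x_1\|^2$ on one side gives precisely the bracketed expression in the theorem, multiplied by the prefactor $2\sigma_\alpha^2/\alpha$. Because $\alpha$ is a free parameter in the Rényi route, the tightest certificate is obtained by taking the supremum over all admissible $\alpha$, producing the stated $R^2$.

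Third, I characterize $S_{\sigma_0,\sigma_1}$. Lemma~\ref{renyi non-constant} requires $\sigma_\alpha^2=\sigma_1^2+\alpha(\sigma_0^2-\sigma_1^2)\ge 0$. If $\sigma_0>\sigma_1$ the coefficient of $\alpha$ is positive and the constraint holds for every $\alpha>0$, so $S_{\sigma_0,\sigma_1}=\mathbb{R}_+$; if $\sigma_0<\sigma_1$ it rearranges to $\alpha\le \sigma_1^2/(\sigma_1^2-\sigma_0^2)$, reproducing the restricted interval. The removable singularity at $\alpha=1$ (where the formula uses $1/(1-\alpha)$) is handled by continuity, as is the boundary of the second interval.

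The main obstacle I expect is not the algebra but controlling the constrained optimization: unlike the constant-$\sigma$ case, where the Rényi divergence reduces to a clean quadratic in $\|x_0-x_1\|$ and the optimal $\alpha$ admits a closed form, here the additional ``variance-gap'' term $N\log(\sigma_\alpha/\sigma_1)/(1-\alpha)-N\alpha\log(\sigma_0/\sigma_1)/(1-\alpha)$ does not vanish, so the optimum is typically interior. To confirm the statement is non-vacuous, I would check the limit $\sigma_0\to\sigma_1$: the variance-gap term vanishes identically, $\sigma_\alpha\to\sigma_1$, and the prefactor reduces to $2\sigma_1^2/\alpha$, recovering exactly the original Li--Wang--Chen radius bound. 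This sanity check, together with positivity of the right-hand side whenever $p_A>1/2$ and $\alpha$ is small enough, guarantees that the supremum defines a genuine certified radius.
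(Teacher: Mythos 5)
Your proposal follows the paper's proof essentially step for step: post-processing inequality for Rényi divergence, reduction to Lemma~\ref{li discrete dists}, substitution of the closed-form Gaussian divergence, isolation of $\norm{x_0-x_1}^2$, and supremum over the admissible $\alpha$-range; the derivation of $S_{\sigma_0,\sigma_1}$ from the constraint $\sigma_\alpha^2\ge 0$ likewise matches. One small point where you are in fact more careful than the paper's write-up: after moving the divergence-formula terms across, the correct inequality direction is $\norm{x_0-x_1}^2\le(\cdots)$, which is what your ``isolating'' step implies; the paper's displayed inequality has the reversed sign, a typo that your phrasing avoids.
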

\begin{proof}
Let us fix $x_1$ and assume, that $\alpha \in S_{\sigma_0, \sigma_1}$. Then, due to post-processing inequality for Renyi divergence, it follows that 
\begin{align*}
&D_\alpha(f(x_1+\mathcal{N}(0,\sigma_1^2 I))||f(x_0+\mathcal{N}(0, \sigma_0^2 I))) \le D_\alpha(x_1+\mathcal{N}(0, \sigma_1^2 I)|| x_0+\mathcal{N}(0, \sigma_0^2 I))\\ &=\frac{\alpha\norm{x_0-x_1}^2}{2\sigma_1^2+2\alpha(\sigma_0^2-\sigma_1^1)}+N\frac{\log\left(\frac{\sigma_\alpha}{\sigma_1}\right)}{1-\alpha}-N\frac{\alpha}{1-\alpha}\log\left(\frac{\sigma_0}{\sigma_1}\right).
\end{align*}
Due to Lemma~\ref{li discrete dists}, it suffices that the following inequality holds for \textit{some} $\alpha \in S_{\sigma_0, \sigma_1}$: 
\begin{align*}
&\frac{\alpha\norm{x_0-x_1}^2}{2\sigma_1^2+2\alpha(\sigma_0^2-\sigma_1^1)}+N\frac{\log\left(\frac{\sigma_\alpha}{\sigma_1}\right)}{1-\alpha}-N\frac{\alpha}{1-\alpha}\log\left(\frac{\sigma_0}{\sigma_1}\right) \le \\ &-\log(1-2M_1(p_A, p_B)+2M_{1-\alpha}(p_A, p_B)).
\end{align*}
This can be rewritten w.r.t. $\norm{x_0-x_1}^2$:
\begin{align*}
\norm{x_0-x_1}^2 \ge \frac{2\sigma_1^2+2\alpha(\sigma_0^2-\sigma_1^1)}{\alpha}\Bigg(N\frac{\alpha}{1-\alpha}\log\left(\frac{\sigma_0}{\sigma_1}\right)-N\frac{\log\left(\frac{\sigma_\alpha}{\sigma_1}\right)}{1-\alpha} \\-\log(1-2M_1(p_A, p_B)+2M_{1-\alpha}(p_A, p_B))\Bigg).    
\end{align*}
The resulting certified radius squared is now simply obtained by taking maximum over $\norm{x_0-x_1}^2$ s.t. $\exists \alpha \in S_{\sigma_0, \sigma_1}$ such that the preceding inequality holds. 
\end{proof}

Note, that this theorem is formulated assuming, that except in $x_0$, we use $\sigma_1$ everywhere. It would require some further work to generalize this for general $\sigma(x)$ functions, but to demonstrate the next point, it is not even necessary. Looking at the expression, we can observe that $$N\frac{\alpha}{1-\alpha}\log\left(\frac{\sigma_0}{\sigma_1}\right)-N\frac{\log\left(\frac{\sigma_\alpha}{\sigma_1}\right)}{1-\alpha}$$ depends highly on $N$ and even for a ratio of $\frac{\sigma_0}{\sigma_1}$ close to 1, we already obtain very strong negative values for high dimensions. The expression $\log(1-2M_1(p_A, p_B)+2M_{1-\alpha}(p_A, p_B))$ is far less sensitive w.r.t $p_A$ and for large dimensions of $N$ it is easily ``beaten'' by the first expression. Therefore, the higher the dimension $N$ is, the bigger $p_A$ or the closer to 1 the $\frac{\sigma_0}{\sigma_1}$ has to be in order to obtain even valid certified radius (not to speak about big). This points out that also the method of \cite{li2018certified} suffers from the curse of dimensionality, as we know it must have done. This method is not useful for big $N$, because the conditions on $p_A, \sigma_0, \sigma_1$ are so extreme, that barely any inputs would yield a positive certified radius. This fact is depicted in the Figure~\ref{cr li fcn of N}.

\begin{figure}[h!]
    \centering
    \includegraphics[width=0.60\textwidth]{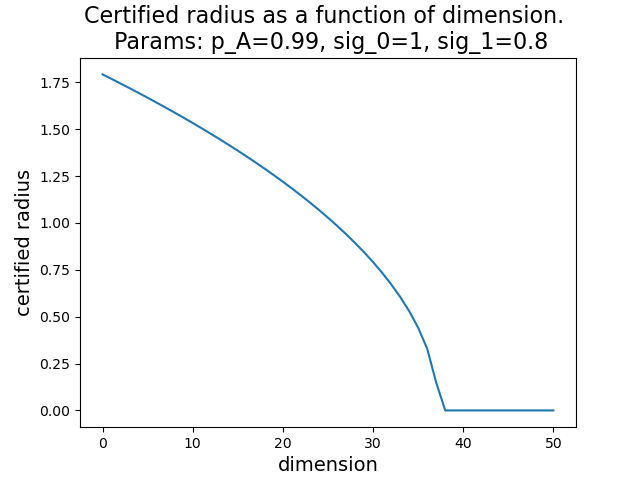}
    \caption{The certified radius as a function of dimension. Paremeters are $p_A=0.99, \sigma_0=1, \sigma_1=0.8$} 
    \label{cr li fcn of N}
\end{figure}

The key reason why this happens if done via R{\'e}nyi divergences is that while the divergence \\ $D_\alpha(\mathcal{N}(x_1, \sigma_1^2 I) || \mathcal{N}(x_0, \sigma_0^2 I))$ grows independently of dimension as $\norm{x_0-x_1}$ grows, it drastically increases for big $N$ even if $x_1=x_0$! This reflects the effect, that if $\sigma_0 \neq \sigma_1$, then the more dimensions we have, the more dissimilar are $\mathcal{N}(x_1, \sigma_1^2 I)$ and $\mathcal{N}(x_0, \sigma_0^2 I))$. We can think of it as a consequence of standard fact from statistics that the more data we have, the more confident statistics against the null hypothesis $\sigma_0=\sigma_1$ will we get if the null hypothesis is false. Since isotropic normal distributions can be actually treated as a sample of one-dimensional normal distributions, this is in accordance with our multivariate distributions setting.

\subsection{The Explanation of the Curse of Dimensionality} \label{appB: the curse part}

In the Section~\ref{sec:theory} we show that input-dependent RS suffers from the curse of dimenisonality. Now we will elaborate a bit more on this phenomenon and try to explain why it occurs. First, it is obvious from the Subsection~\ref{appB: the li part}, that also the generalized method of \cite{li2018certified} suffers from the curse of dimensionality, because the R{\'e}nyi divergence between two isotropic Gaussians with different variances grows considerably with respect to dimension. This suggests that the input-dependent RS might suffer from the curse of dimensionality in general. To motivate this idea even further, we present this easy observation: 

\begin{theorem} \label{thm the intuition crs}
Denote $R_C$ to be a certified radius given for $p_A$ and $\sigma_0$ at $x_0$ assuming the constant $\sigma_0$ and following the certification of \cite{cohen2019certified} \footnote{The ``C'' in the subscript of certified radius might come both from ``constant'' and ``Cohen et. al.''}. Assume, that we do the certification for each $x_1$ by assuming the worst case-classifier as in Theorem~\ref{lrt set}. Then, for any $x_0$, any function $\sigma(x)$ and any $p_A$, the following inequality holds: 
$$R \le R_C$$
\end{theorem}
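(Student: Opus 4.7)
The plan is to exhibit a single concrete adversary classifier---namely the Cohen halfspace built at the constant variance $\sigma_0$---that already forces the adversarial probability $\mathbb{P}_1(B) \ge 1/2$ at every distance $\|\delta\| \ge R_C$, \emph{independently of the value of} $\sigma_1 = \sigma(x_1)$. By Lemma~\ref{np lemma}, the input-dependent worst-case region (the ball from Theorem~\ref{lrt set}) is the maximizer of $\mathbb{P}_1(B)$ over all classifiers satisfying the feasibility constraint $\mathbb{P}_0(B) \le p_B$, so its $\mathbb{P}_1$-mass is lower-bounded by that of any other feasible classifier. Since the Cohen halfspace is feasible for this same constraint, the input-dependent worst case inherits the bound $\ge 1/2$, and no point at distance $\ge R_C$ from $x_0$ can be certified, yielding $R \le R_C$.

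Concretely, I would fix $\delta = x_1 - x_0$ and set $B^{\star} = \{ x : \langle x - x_0,\, \delta/\|\delta\|\rangle \ge -\sigma_0 \Phi^{-1}(p_B)\}$. Projecting both Gaussians onto the unit direction $\delta / \|\delta\|$ reduces everything to one dimension: under $\mathcal{N}(x_0, \sigma_0^2 I)$ the projected coordinate is $\mathcal{N}(0, \sigma_0^2)$, giving $\mathbb{P}_0(B^{\star}) = p_B$; under $\mathcal{N}(x_1, \sigma_1^2 I)$ it is $\mathcal{N}(\|\delta\|, \sigma_1^2)$, giving
\[
\mathbb{P}_1(B^{\star}) = \Phi\!\left(\frac{\|\delta\| + \sigma_0 \Phi^{-1}(p_B)}{\sigma_1}\right).
\]
For any $\sigma_1 > 0$ the sign of the numerator does not depend on $\sigma_1$, so $\mathbb{P}_1(B^{\star}) \ge 1/2$ is equivalent to $\|\delta\| \ge -\sigma_0 \Phi^{-1}(p_B) = \sigma_0 \Phi^{-1}(p_A) = R_C$ (using $p_B = 1 - p_A$).

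Then, for any $x_1$ with $\|\delta\| \ge R_C$ and any function $\sigma(x)$, Lemma~\ref{np lemma} applied to the pair $\bigl(\mathcal{N}(x_0, \sigma_0^2 I), \mathcal{N}(x_1, \sigma_1^2 I)\bigr)$ produces a worst-case region $B^{\star\star}$ (the ball of Theorem~\ref{lrt set}) satisfying $\mathbb{P}_1(B^{\star\star}) \ge \mathbb{P}_1(B^{\star}) \ge 1/2$, because $B^{\star}$ is a feasible competitor in the very same maximization problem solved by $B^{\star\star}$. Hence the input-dependent certification procedure fails at $x_1$, so the certified radius satisfies $R \le R_C$.

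The only genuine subtlety I anticipate is bookkeeping: verifying that the feasibility condition of Lemma~\ref{np lemma} constrains only the $\mathbb{P}_0$-mass of the $B$-region---it does, since it reads $G_f(x_0)_B \le p_B$---so that $B^{\star}$ remains feasible when we replace $\sigma_0$ by $\sigma_1$ in the second measure. The edge case $p_A \le 1/2$ is vacuous ($R_C = 0$), continuity of $\Phi$ takes care of the boundary $\|\delta\| = R_C$, and the degenerate case $\sigma_1 = \sigma_0$ is also covered since the ball of Theorem~\ref{lrt set} then limits to the Cohen halfspace itself, in which case $B^{\star\star} = B^{\star}$ and the inequality holds with equality.
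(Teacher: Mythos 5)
Your proof is correct and follows essentially the same route as the paper's: both exploit the fact that Cohen's halfspace $B_l$ is a feasible competitor in the Neyman–Pearson maximization that defines the worst-case ball, so $\mathbb{P}_1(B_{\text{ball}}) \ge \mathbb{P}_1(B_l)$, and both then note that whether $\mathbb{P}_1(B_l)$ exceeds $1/2$ is governed solely by the sign of $\|\delta\| - R_C$ and not by $\sigma_1$. The paper packages this as a proof by contradiction and states the halfspace threshold invariance verbally ("the probability of a half-space exceeds half iff the mean is inside it"), while you present the same content directly, with the explicit one-dimensional formula $\mathbb{P}_1(B^{\star}) = \Phi\bigl((\|\delta\| + \sigma_0\Phi^{-1}(p_B))/\sigma_1\bigr)$ making the $\sigma_1$-invariance of the sign transparent; this is a cosmetic, not substantive, difference.
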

\begin{proof}
Fix $x_1$ and $\sigma_1$. From Theorem~\ref{lrt set} we know that the worst-case classifier $f^*$ defines a ball $B$ such that $\mathbb{P}_0(B)=1-p_A$. From this it obviously follows, that the linear classifier $f_l$ and the linear space $B_l$ that assume constant $\sigma_0$ also for $x_1$ and is the worst-case for $\sigma_0$ such that $\mathbb{P}_0(B_l)=1-p_A$ is \textit{not} worst-case for the case of using $\sigma_1$ instead. Therefore, $\mathbb{P}_1(B_l) \le \mathbb{P}_1(B)$. 

Moreover, let $\mathbb{P}_1^C$ be a probability measure corresponding to $\mathcal{N}(x_1, \sigma_0 I)$, i.e. the probability measure assuming constant $\sigma_0$. It is easy to see that $\mathbb{P}_1^C(B_l)>0.5 \iff \mathbb{P}_1(B_l)>0.5$ because the probability of a linear half-space under isotropic normal distribution is bigger than half if and only if the mean is contained in the half-space. 

Assume, for contradiction that $R > R_C$. From that, it exists a particular $x_1$ such that $\mathbb{P}_1^C(B_l) > 0.5 > \mathbb{P}_1(B)$, because otherwise there would be no such point, which would cause $R > R_C$. However, $\mathbb{P}_1^C(B_l) > 0.5 \implies \mathbb{P}_1(B_l) > 0.5$, thus $\mathbb{P}_1(B_l)>\mathbb{P}_1(B)$ and that is contradiction. 
\end{proof}

This theorem shows, that we can never achieve a better certified radius at $x_0$ using $\sigma_0$ and having probability $p_A$ than that, which we would get by \cite{cohen2019certified}'s certification. Of course, this does not mean, that using non-constant $\sigma$ is useless, since $\sigma_0$ can vary. The question is, how much do we lose using non-constant $\sigma$. To get a better intuition, we plot the functions $\xi_<$ and $\xi_>$ under different setups in Figure~\ref{real ball probs}, together with $\mathbb{P}_1(B_l)$ from the proof of Theorem~\ref{thm the intuition crs}. From the top row we can deduce that dimension $N$ has a very significant impact on the probabilities and therefore also on the certified radius. We particularly point out the fact, that even $\xi_>(0), \xi_<(0)$ can have significant margin w.r.t. to the probability coming out of linear classifier.\footnote{Notice the similarity with R{\'e}nyi divergence, which also has positive value even for $x_0=x_1$ if $\sigma_0 \neq \sigma_1$ and then grows rather reasonably with distance.} Already for $N=90$, we are not able to certify $p_A=0.99$ for rather conservative value of $\frac{\sigma_0}{\sigma_1}$. From middle row we see, that decreasing $\frac{\sigma_0}{\sigma_1}$ can mitigate this effect strongly. For instance, for $\sigma_0=1, \sigma_1=0.95$ the difference between $\mathbb{P}_1(B)$ and $\mathbb{P}_1(B_l)$ is almost negotiated. Bottom row compares $\xi_>(a), \xi_<(a)$ and the respective linear classifier probabilities. We can see, that the case $\sigma_0<\sigma_1$ might cause stronger restrictions on our certification (yet we deduce it just form the picture). 

What is the reason for $\xi_>(a), \xi_<(a)$ being so big even at 0? The problem is following: Assume $\sigma_0>\sigma_1$. If $x_0=x_1$, the worst-case classifier coming from Lemma~\ref{lrt set} will be a ball $B$ centered right at $x_0$, such that $\mathbb{P}_0(B)=1-p_A$. If we look at $\mathbb{P}_1(B)$, we see, that we have the same ball centered directly at the mean, but the variance of the distribution is smaller. Using spherical symmetry of the isotropic gaussian distribution, this is equivalent to evaluating the probability of a bigger ball. If we fix $\frac{\sigma_0}{\sigma_1}$ and look at the ratio of probabilities $\frac{\mathbb{P}_1(B)}{\mathbb{P}_0(B)}$ with increasing $N$, the curse of dimensionality comes into the game. For $N=2$, the ratio is not too big. However, if $N=3072$, like in CIFAR10, this ratio is far bigger. This can be intuitively seen from a property of chi-square distribution (which is present in the case $x_0=x_1$), that while expectation is $N$, the standard deviation is ``just'' $\sqrt{2N}$, i.e. $\frac{\sqrt{Var(\chi^2_N)}}{\mathbb{E}(\chi^2_N)} \xrightarrow[]{} 0$ as $N \xrightarrow[]{} \infty$.

\begin{figure}[t!]
    \centering
    \begin{minipage}[b]{0.40\linewidth}
        \includegraphics[width=\textwidth]{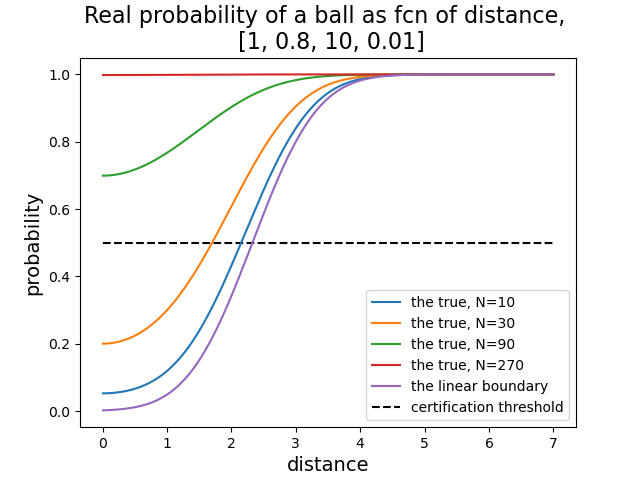}
    \end{minipage}
    \begin{minipage}[b]{0.40\linewidth}
        \includegraphics[width=\textwidth]{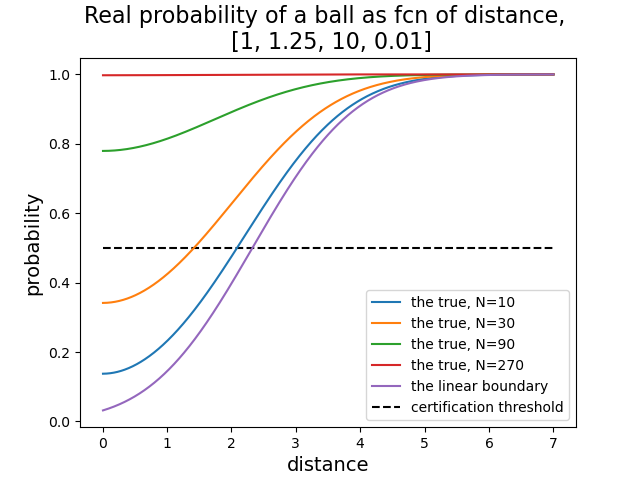}
    \end{minipage}
    \begin{minipage}[b]{0.40\linewidth}
        \includegraphics[width=\textwidth]{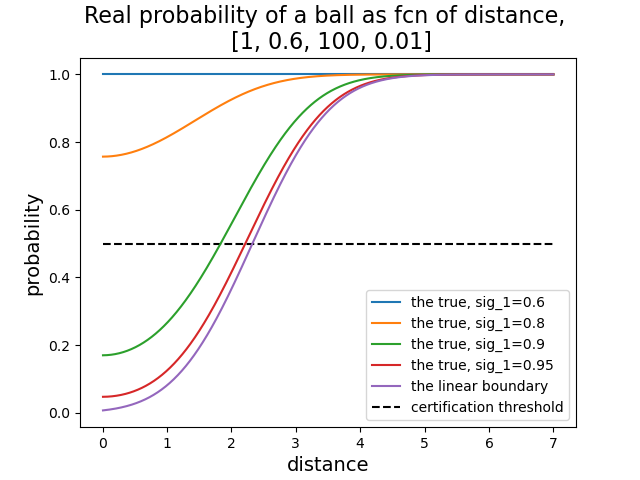}
    \end{minipage}
    \begin{minipage}[b]{0.40\linewidth}
        \includegraphics[width=\textwidth]{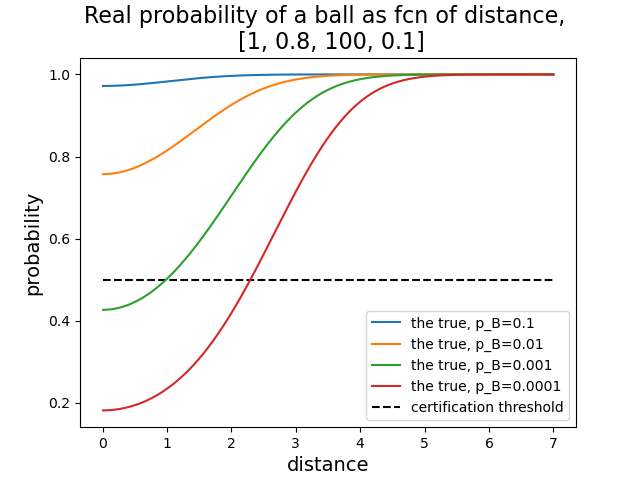}
    \end{minipage}
    \begin{minipage}[b]{0.40\linewidth}
        \includegraphics[width=\textwidth]{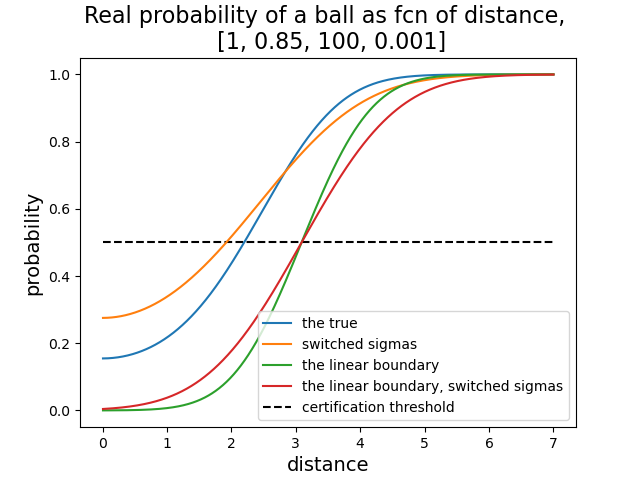}
    \end{minipage}
    \caption{Plots of $\xi_>(a), \xi_<(a)$ for different setups. Coding for parameters is: $[\sigma_0, \sigma_1, N, p_B]$ \textbf{Top:} $\xi_>(a)$ left, $\xi_<(a)$ right, varying values of $N$. \textbf{Center:} On the left, $\xi_>(a)$ for varying $\sigma_1$, on the right $\xi_>(a)$ for varying $p_B$. \textbf{Bottom:} $\xi_>(a)$ and $\xi_<(a)$ compared.}
    \label{real ball probs}
\end{figure} 

\subsection{Why Does the Input-dependent Smoothing Work Better for Small $\sigma$ Values?} \label{appC: ssec: why small sigma values better}
As can be observed in Section~\ref{experiments} and Appendix~\ref{appE: experiments and ablations}, the bigger the $\sigma_b=\sigma$ we use, the harder it is to keep up to standards of constant smoothing. An interesting question is, why is the usage of small $\sigma_b=\sigma$ helpful for the input-dependent smoothing? 

Assume fixed $\sigma$, say $\sigma_b=\sigma=0.12$. The theoretical bound on the certified radius given 100000 Monte-Carlo samplings and 0.001 confidence level using constant smoothing is about 0.48. Having $\sigma(x) \sim 0.12$, we cannot expect much bigger certified radius. Therefore, if we follow Theorem~\ref{thm the concrete method}, the values of $\exp(-rR)$ and $\exp(rR)$ in the critical distance $\sim 0.5$ will be much closer to 1, than the values of $\exp(-rR)$ and $\exp(rR)$ if we used $\sigma_b=\sigma=0.50$ instead, where the critical values of $R$ could be much bigger than 0.5. Therefore, the ``gain'' in $\mathbb{P}_1(B)$ imposed by the curse of dimensionality, compared to $\mathbb{P}_1(B)$ assuming constant $\sigma$ will not be that severe yet. This means, that the loss in certified radius caused by the curse of dimensionality will be much less pronounced on the ``active'' range of certified radiuses (those for which the constant smoothing still works), compared to using big $\sigma_b=\sigma$. To support this idea, we demonstrate it on Figure~\ref{fig: small sigma working better}, where we depict the certified radius as a function of distance from decision boundary, assuming $f$ to be a linear classifier, using $\sigma_b=\sigma=0.12$ and $\sigma_b=\sigma=0.50$ for comparison. 

\begin{figure}[t!]
    \centering
    \begin{minipage}[b]{0.48\linewidth}
        \includegraphics[width=\textwidth]{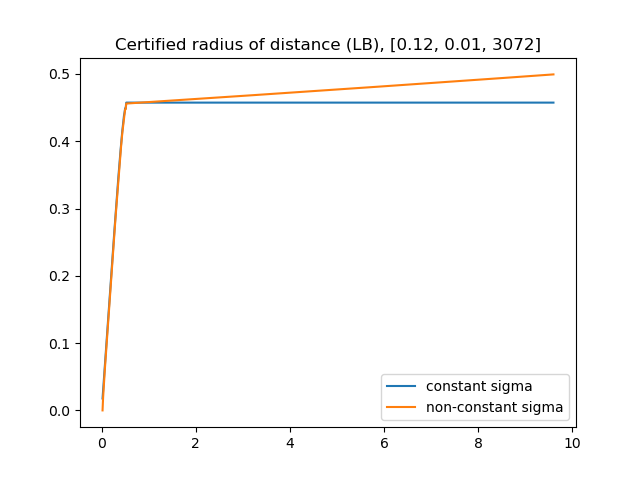}
    \end{minipage}
    \begin{minipage}[b]{0.48\linewidth}
        \includegraphics[width=\textwidth]{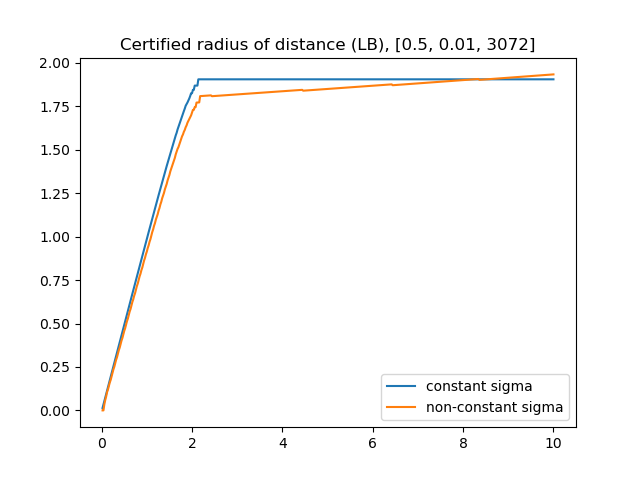}
    \end{minipage}
    \caption{Comparison of certified radius as a function of distance for constant and input-dependent smoothing. Left: $\sigma_b=\sigma=0.12$, right: $\sigma_b=\sigma=0.50$.}
    \label{fig: small sigma working better}
\end{figure}

\subsection{How Does the Curse of Dimensionality Affect the Total Possible Variability of $\sigma(x)$?} \label{appC: total effect of CoD}

Fix certain type of task, say RGB image classification with images of similar object, but consider many possible resolutions (dimensions $N$). Given two random images from the test set ($x_0, x_1$), what is the biggest reasonable value of $|\sigma(x_0)/\sigma(x_1)-1|$? Theoretically, the expression is bounded by $|\exp(\pm r \norm{x_0-x_1})-1|$, given that $r$ is the semi-elasticity constant of $\sigma(x)$. However, the average distance between two samples from a test set of constant size, but increasing dimension scales as $\sqrt{N}$. Therefore, with constant $r$, this upper-bound increases.

The increasing distance between samples is, therefore, a countereffect to the curse of dimensionality. In simple words, we have ``more distance to change $\sigma(x_0)$ to $\sigma(x_1)$''. Even if the $r$ decreased just as $1/\sqrt{N}$, the increasing distances would cancel the effect of the curse of dimensionaity and as a result, the maximal reasonable value of $|\sigma(x_0)/\sigma(x_1)-1|$ would remain roughly constant w.r.t. $N$. However, we need to take into account another effect. As the dimension increases, also the average distance of samples from the decision boundary increases. This is because the distances in general grow with dimension and if we assume that the number of intersections of a line segment between $x_0$ and $x_1$ with the decision boundary of the network remains roughly constant then the average distance from the decision boundary grows as $\sqrt{N}$ too. In order to compensate for this, we need to adjust the basic level of $\sigma(x)$ (which we later call $\sigma_b$ and can be understood as the general offset of our $\sigma(x)$) as $\sqrt{N}$ too. This is because the maximal attainable certified radius given fixed confidence level $\alpha$ and the number of Monte-Carlo samples is a constant multiple of $\sigma(x)$. 

However, with increased $\sigma$, we need to decrease the semi-elasticity rate $r$ in order to obtain full certifications (see also Appendix~\ref{appC: ssec: why small sigma values better} for intuition behind this).

As a sketch of proof, we provide a simple computation, which tells us the approximate asymptotic behavior of $|\sigma(x_0)/\sigma(x_1)-1|$. By Theorem~\ref{main thm corollary} it holds: $$\exp(-rc\sqrt{N})\ge \sqrt{1-2\sqrt{\frac{-\log(p_B)}{N}}},$$

if we want to be able to predict a certified radius of $c\sqrt{N}$ (though this is just a necessary condition. For sufficiency, the LHS must be much closer to 1). After simple manipulation, we obtain: $$r \le -\frac{1}{2c\sqrt{N}} \log\left(1-2\sqrt{\frac{-\log(p_B)}{N}}\right) \sim -\frac{1}{2c\sqrt{N}}(-1)2\sqrt{\frac{-\log(p_B)}{N}}=\frac{\sqrt{-\log(p_B)}}{cN}.$$

So the rate scales as $1/N$. Now we have:
\begin{align*}
&|\sigma(x_0)/\sigma(x_1)-1| \le |\exp(\pm r \norm{x_0-x_1})-1| \le \exp(r \norm{x_0-x_1})-1 \le \\ &\exp(\frac{\sqrt{-\log(p_B)}}{cN}C\sqrt{N})-1 \sim \frac{\sqrt{-C\log(p_B)}}{c\sqrt{N}}.
\end{align*}

However, we must also note another important effect -- while it is \textit{desirable} to scale the base $\sigma$ as $\sqrt{N}$ with the dimension, it might not be \textit{possible}. The reason is that the noise norm also grows with $\sqrt{N},$ therefore increasing the $\sigma$ would increase the relative strength of perturbations. From this point of view, to maintain equally strong perturbations we should keep the $\sigma$ independent of dimension. In this case, the curse of dimensionality of input-dependent smoothign would be mitigated. However, in computer vision, it was demonstrated in \cite{cohen2019certified} that the increased dimension allows for bigger noise thanks to the noise-cancelling effect which grows with increasing dimension too. All-in-all, it is rather non-trivial to predict how strong will the curse of dimensionality act in practice, as many countering effect play significant role here. However it is clear that $r=\mathcal{O}(N^{-1})$ is a strong scaling that will probably influence any practical learning regime. 

\subsection{Does the Curse of Dimensionality Apply in Multi-class Regime?} \label{appC: multi-class regime}
In the main text, we presented a setup, where $\overline{p_B}$ is set to be $1-\underline{p_A}$. This is equivalent to pretending that we have just 2 classes. By not estimating the proper value of $p_B$ we lose some amount of power and the resulting certified radius is smaller than it could have been, did we have the $\overline{p_B}$ as well. This is most pronounced for datasets with many classes. The natural question, therefore, is, whether we could avoid the curse of dimensionality by properly estimating the $p_B$ together with $p_A$. The answer is no. The problem is that the the theory in Section~\ref{sec:theory} already implicitly works with the estimate of $p_B$ in a form of $1-\underline{p_A}$. The theory would work also with any other estimate of $p_B$. Assuming constant $p_B$, instead of constant $p_A$, as we did in Section~\ref{sec:theory}, will, therefore, yield the same conclusions. Moreover, there is neither theoretical, nor practical reason, why should $p_B$ decrease with increasing dimension. 

This insight even applies to the question of the usage of input-dependent RS in practice. The assumption $p_B=1-p_A$ is no more important in Section~\ref{sec: practical framework} than in Section~\ref{sec:theory}. Therefore, we can apply our method also for the $\overline{p_B}$ obtained directly by Monte-Carlo sampling for the class $B$ (or by any other estimation method).

\section{Competitive Work} \label{appB: concurrent work}
As we mention in Section~\ref{intro}, the idea to use input-dependent RS is not new. It has popped out in years 2020 and 2021 in at least four works from three completely distinct groups of authors, even though none of these works has been successfully published yet. We find it necessary to comment on all of these works because of two orthogonal reasons. First, it is a good practice to compare our work with the competitive work to see what are pros and cons of these similar approaches and to what extend the approaches differ. Second, we are convinced, that three of these four works claim results, which are not mathematically valid. We find this to be a particularly critical problem in a domain such as certifiable robustness, which is by definition based on rigorous, mathematical certifications. 

\subsection{The Work of \citet{wang2021pretraintofinetune}}
In this work, authors have two main contributions -- first, they propose a two-phase training, where in the second phase, for each sample $x_i$, roughly the optimal $\sigma_i$ is being found and then this sample $x_i$ is being augmented with this $\sigma_i$ as an augmentation standard deviation. Authors call this method \textit{pretrain to finetune}. Second, they provide a specific version of input-dependent RS. Essentially, they try to overcome the mathematical problems connected to the usage of non-constant $\sigma(x)$ by splitting the input space in so called \textit{robust regions} $R_i$, where the constant $\sigma_i$ is guaranteed to be used. All the certified balls are guaranteed to lie within just one of these robust regions, making sure that within one certified region, constant level of $\sigma$ is used. Authors test this method on CIFAR10 and MNIST and show, that the method can outperform existing state-of-the-art approaches, mainly on the more complex CIFAR10 dataset. 

However, we make several points, which make the results of this work, as well as the proposed method less impressive: 
\begin{itemize}
    \item The computational complexity of both their train-time and test-time algorithms seems to be quite high.
    \item The final smoothed classifier depends on the order of the incoming samples. As a consequence, it is not clear, whether the method works well for any permutation of the would-be tested samples. This creates another adversarial attack possibility - to attack the final smoothed classifier by manipulating the test set so that the order of samples is inappropriate for the good functionality of the final smoothed classifier. 
    \item Even more, the fact, that the smoothed classifier depends on the order of the would-be tested samples makes it necessary, that the same smoothed classifier is used all the time for some test session in a real-world applications. For instance, a camera recognizing faces to approve an entry to a high-security building would need to keep the same model for its whole functional life, because restarting the model would enable attackers to create attacks on the predictions from the previous session. This might lead to significant restrictions on the practical usability of this method. 
\end{itemize}

\subsection{The Works of \citet{alfarra2020data} and \citet{eiras2021ancer}} \label{sec: alfarra}
In these works, similarly as in the work of \cite{chen2021insta}, authors suggest to optimize \textit{in each} test point $x$ for such a $\sigma(x)$, that maximizes the certified radius given by \cite{zhai2020macer}, which is an extension of \cite{cohen2019certified}'s certified radius for soft smoothing. The optimization for $\sigma(x)$ differs but is similar in some respect (as will be discussed). 

Besides, all three works further propose input-dependent training procedure, for which $\sigma(x)$ - the standard deviation of gaussian data augmentation is also optimized. Altogether, both authors claim strong improvements over all the previous impactful works like \cite{cohen2019certified, zhai2020macer, salman2019provably}. The only significant difference between the works of \cite{alfarra2020data} and \cite{eiras2021ancer} (which have strong author intersections) is that in \cite{eiras2021ancer}, authors build upon \cite{alfarra2020data}'s work and move from the isotropic smoothing to the smoothing with some specific anisotropic distributions. 

As mentioned, authors first deviate from the setup of \cite{cohen2019certified} and turn to the setup introduced by \cite{zhai2020macer}, i.e. they use soft smoothed classifier $G$ defined as $${G_F(x)}_C = \mathbb{E}_{\delta \sim \mathcal{N}(0, \sigma^2 I)} F(x+\delta)_C.$$

The key property of soft smoothed classifiers is that the \cite{cohen2019certified}'s result on certified radius holds for them too. 

\begin{theorem}[certified radius for soft smoothed classifiers] \label{cohen cr soft}
Let $G$ be the soft smoothed probability predictor. Let $x$ be s.t. $$G(x)_A \ge \underline{E_A} \ge \overline{E_B} \ge G(x)_B.$$
Then, the smoothed classifier $g$ is robust at $x$ with radius $$R=\frac{\sigma}{2}(\Phi^{-1}(\underline{E_A})-\Phi^{-1}(\overline{E_B}))=\sigma \frac{\Phi^{-1}(\underline{E_A})+\Phi^{-1}(1-\overline{E_B}))}{2},$$ where $\Phi^{-1}$ denotes the quantile function of standard normal distribution. 
\end{theorem}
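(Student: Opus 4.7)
My plan is to adapt the Neyman-Pearson recipe from Section~\ref{sec:theory} (the three-step scheme used by \citet{cohen2019certified}) to the soft-classifier setting, working separately with the scalar components $F_A(z):=F(z)_A$ and $F_B(z):=F(z)_B$, both taking values in $[0,1]$. Writing $q_0$ and $q_1$ for the densities of $\mathcal{N}(x,\sigma^2 I)$ and $\mathcal{N}(x+\delta,\sigma^2 I)$, I want to show that any measurable $F_A:\mathbb{R}^N\to[0,1]$ with $\mathbb{E}_{q_0}[F_A]\ge\underline{E_A}$ satisfies a lower bound on $\mathbb{E}_{q_1}[F_A]$, and symmetrically an upper bound holds for $F_B$ under the constraint $\mathbb{E}_{q_0}[F_B]\le\overline{E_B}$. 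Once I have both one-sided tight bounds, the conclusion $g(x+\delta)=A$ follows whenever the lower bound for class $A$ strictly exceeds the upper bound for class $B$.

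The central step is an extension of Neyman--Pearson from indicator functions to $[0,1]$-valued test functions. The cleanest route is a layer-cake argument: write $F_A(z)=\int_0^1 \mathbf{1}\{F_A(z)>t\}\,dt$, so that $\mathbb{E}_{q_i}[F_A]=\int_0^1 \mathbb{P}_{q_i}(F_A>t)\,dt$, and apply the ordinary Neyman--Pearson lemma level-by-level. Equivalently, a direct Lagrangian argument shows that the extremizer almost-surely thresholds the likelihood ratio $q_1(z)/q_0(z)$, with at most a measure-zero indeterminacy on the level set where equality holds. For two isotropic Gaussians with equal covariance, $\log(q_1/q_0)$ is an affine function of $\langle z,\delta\rangle$, so the optimal thresholding region is a half-space with normal $\delta$; this is the soft analogue of the ``linear $f^*$'' that appears in \citet{cohen2019certified} for the constant-$\sigma$ case.

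It then remains to evaluate half-space probabilities under $\mathcal{N}(x,\sigma^2 I)$ and $\mathcal{N}(x+\delta,\sigma^2 I)$. A one-dimensional reduction (project onto $\delta/\lVert\delta\rVert$) gives that the worst-case minimum of $\mathbb{E}_{q_1}[F_A]$ over the feasible set equals $\Phi\bigl(\Phi^{-1}(\underline{E_A}) - \lVert\delta\rVert/\sigma\bigr)$, and symmetrically the worst-case maximum of $\mathbb{E}_{q_1}[F_B]$ equals $\Phi\bigl(\Phi^{-1}(\overline{E_B}) + \lVert\delta\rVert/\sigma\bigr)$. Requiring the first quantity to strictly exceed the second, and inverting $\Phi$ (which is strictly increasing), yields $\lVert\delta\rVert < \tfrac{\sigma}{2}\bigl(\Phi^{-1}(\underline{E_A})-\Phi^{-1}(\overline{E_B})\bigr)$. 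The alternative form in the theorem is then immediate from the identity $\Phi^{-1}(1-p)=-\Phi^{-1}(p)$.

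The main obstacle, as I see it, is not any single calculation but giving a fully rigorous version of the extended Neyman--Pearson step: one must handle the measure-zero set where the likelihood ratio equals the threshold (fine here because Gaussian densities are continuous and the relevant level sets have Lebesgue measure zero), and one must verify that the infimum over feasible $F_A$ is actually attained by the thresholded indicator, so that the bound on $\mathbb{E}_{q_1}[F_A]$ holds uniformly over all admissible soft classifiers. The layer-cake formulation packages this cleanly by reducing everything to the hard-classifier case argued by \citet{cohen2019certified}, which is why I would favour it over a direct variational derivation.
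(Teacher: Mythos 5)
The paper does not prove this statement itself; the proof is simply deferred to \citet{zhai2020macer}, so there is no in-paper argument to compare against. Your outline is nonetheless essentially the standard route (the one followed by \citet{zhai2020macer} and \citet{salman2019provably}): treat $F_A$ and $F_B$ as $[0,1]$-valued test functions, extremize $\mathbb{E}_{q_1}[\cdot]$ under a constraint on $\mathbb{E}_{q_0}[\cdot]$ via a Neyman--Pearson-type argument, observe that for two isotropic Gaussians with equal covariance the optimal thresholding region is a half-space with normal $\delta$, reduce to a one-dimensional Gaussian calculation, and invert $\Phi$. The resulting condition $\lVert\delta\rVert < \tfrac{\sigma}{2}\bigl(\Phi^{-1}(\underline{E_A})-\Phi^{-1}(\overline{E_B})\bigr)$ and the identity $\Phi^{-1}(1-p)=-\Phi^{-1}(p)$ are both correct.

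The one place where your sketch is thinner than it should be is the claim that the layer-cake decomposition lets you ``apply the ordinary Neyman--Pearson lemma level-by-level.'' Applying the hard-classifier lemma at each level $t$ gives $\mathbb{P}_{q_1}(F_A>t)\ge\Phi\bigl(\Phi^{-1}(p_t)-\lVert\delta\rVert/\sigma\bigr)$ where $p_t:=\mathbb{P}_{q_0}(F_A>t)$, and integrating in $t$ yields
$$\mathbb{E}_{q_1}[F_A]\;\ge\;\int_0^1\Phi\bigl(\Phi^{-1}(p_t)-\lVert\delta\rVert/\sigma\bigr)\,dt,$$
which is not \emph{automatically} bounded below by $\Phi\bigl(\Phi^{-1}(\underline{E_A})-\lVert\delta\rVert/\sigma\bigr)$: the hypothesis $\int_0^1 p_t\,dt\ge\underline{E_A}$ constrains only the integral of the $p_t$'s, not each $p_t$ individually. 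To close this gap you need the map $h_c(p):=\Phi\bigl(\Phi^{-1}(p)-c\bigr)$, $c\ge 0$, to be convex and increasing on $(0,1)$; indeed $h_c'(p)=\exp\bigl(c\,\Phi^{-1}(p)-c^2/2\bigr)$ is positive and increasing in $p$, so Jensen's inequality gives $\int_0^1 h_c(p_t)\,dt\ge h_c\bigl(\int_0^1 p_t\,dt\bigr)\ge h_c(\underline{E_A})$. The symmetric upper bound for $F_B$ needs the mirror statement (concavity of $p\mapsto\Phi(\Phi^{-1}(p)+c)$), which holds by the same computation. Alternatively, the Lagrangian / generalized Neyman--Pearson route you also mention handles $[0,1]$-valued test functions (randomized tests) directly and sidesteps the Jensen step entirely, which is arguably cleaner. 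Either fix works; as written, however, the ``level-by-level'' sentence is a genuine, if small, gap.
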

\begin{proof}
Is provided in \cite{zhai2020macer}.
\end{proof}

Note, that it is, similarly as in the hard randomized smoothing version of this theorem, essential to provide lower and upper confidence bounds for $G(x)_A$ and $G(x)_B$, otherwise we cannot use this theorem with the required probability that the certified radius is valid. Denote $G(x,\sigma)$ to be the soft smoothed classifier using $\sigma$ in $x$. Authors propose to use the following theoretical $\sigma(x)$ function: 
\begin{equation} \label{alfarra theoretical fcn}
\sigma(x)=\underset{\sigma>0}{\arg\max} \frac{\sigma}{2}(\Phi^{-1}(G(x,\sigma(x))_A)-\Phi^{-1}(G(x,\sigma(x))_B)).
\end{equation}

It is of course not possible to optimize for this particular function since it is not known. It is also not feasible to run the Monte-Carlo sampling for each $\sigma$, because that is too costly and moreover due to stochasticity, it would lead to discontinuous function. Treatment of this problem is probably the most pronounced difference between the works of \cite{alfarra2020data} and \cite{chen2021insta}. 

\cite{alfarra2020data} use the following easy observation: $\mathcal{N}(0, \sigma^2 I) \equiv \sigma\mathcal{N}(0, I)$. Assume we have $\delta_i, i \in \{1, \dots, M\}$ be i.i.d. sample from $\mathcal{N}(0, I)$. Obviously, $G(x,\sigma(x))_A \sim \frac{1}{M}\sum\limits_{i=1}^M F(x+\sigma\delta_i)_A,$ since this is just the empirical mean of the theoretical expectation. Then, Expression~\ref{alfarra theoretical fcn} can be approximated as: 

\begin{equation} \label{alfarra practical fcn}
\sigma(x)=\underset{\sigma>0}{\arg\max} \frac{\sigma}{2}\left(\Phi^{-1}\left(\frac{1}{M}\sum\limits_{i=1}^M F(x+\sigma\delta_i)_A\right)-\Phi^{-1}\left(\frac{1}{M}\sum\limits_{i=1}^M F(x+\sigma\delta_i)_B\right)\right).
\end{equation}

Here, $M$ is the number of Monte-Carlo samplings used to approximate this function. Note, that this function is a random realization of stochastic process in $\sigma$ which is driven by the stochasticity in the sample $\delta_i, i \in \{1, \dots, M\}$. To find the maximum of this function, authors furhter propose to use simple gradient ascent, which is possible due to the simple differentiable form of Expression~\ref{alfarra practical fcn}. This differentiability is one of the main motivations to switch from hard to soft randomized smoothing. Now, we are able to state the exact optimization algorithm of \cite{alfarra2020data}: 

\begin{algorithm}[t!] 
\caption{Data dependent certification \citep{alfarra2020data}}
\begin{algorithmic}
\Function{OptimizeSigma}{$F, x, \beta, \sigma_0, M, K$}:
    \For{$k = 0, \dots, K$}
        \State $\text{sample} \hspace{1mm} \delta_1, \dots, \delta_M \sim \mathcal{N}(0, I)$
        \State $\phi(\sigma_k)=\frac{1}{M}\sum\limits_{i=1}^M F(x+\sigma\delta_i)$
        \State $\hat E_A(\sigma_k) = \max_C \phi(\sigma_k)_C$
        \State $\hat E_B(\sigma_k) = \max_{C \neq A} \phi(\sigma_k)_C$
        \State $R(\sigma_k)=\frac{\sigma_k}{2}(\Phi^{-1}(\hat E_A(\sigma_k))-\Phi^{-1}(\hat E_B(\sigma_k)))$
        \State $\sigma_{k+1} \gets \sigma_k + \beta \nabla_{\sigma_k} R(\sigma_k)$
    \EndFor
    \State $\sigma^* = \sigma_K$
    \State \Return $\sigma^*$
\EndFunction
\end{algorithmic}
\label{alfarra algo}
\end{algorithm}

Note, that being done in this way, this algorithm can be viewed as a stochastic gradient ascent. After obtaining $\sigma^* \equiv \sigma(x)$, authors further run the Monte-Carlo sampling to estimate the certified radius exactly as in \cite{cohen2019certified}, but with $\sigma(x)$ instead of some global $\sigma$. Using this algorithm, authors achieve significant improvement over the \cite{cohen2019certified}'s results, particularly getting rid of the first problem mentioned in Appendix~\ref{appA: the motivation}, the truncation issue. For the results, we refer to \cite{alfarra2020data}. We will now give several comments on this algorithm and this method. 

To begin with, in this optimization, authors do not adjust the estimated expectations and therefore don't use lower confidence bounds, but rather raw estimates. This is not incorrect, since these estimates are not used directly for the estimation of certified radius, but it is inconsistent with the resulting estimation. In other words, authors optimize for a slightly different function than they then use. The difference is, however, not very big apart from extreme values of $E_A$, where the difference might be really significant. 

To overcome slightly this inconsistence, authors further (without comment) use clamping of the $\hat E_A(\sigma_k)$ and $\hat E_B(\sigma_k)$ on the interval $[0.02, 0.98]$. I.e. if $\hat E_A(\sigma_k)>0.98$, it will be set to $0.98$ and this is also taken into account in the computation of gradients. This way, authors get rid of the inconvenient issue, that if $G(x)_A \sim 1$, then $\hat E_A(\sigma_k) \sim 1$ for $\sigma_k \sim 0$, what might cause very big value of  
$\Phi^{-1}(\hat E_A(\sigma_k))$, yielding strong inconsistency with what would be obtained, if lower confidence bound was used instead. 

However, the clamping causes even stronger inconsistence in the end. Note, that if $G(x)_A \sim 1$, then the true value of $E_A(\sigma_k)$ would be really close to $1$, yielding high values of $\Phi^{-1}(E_A(\sigma_k))$. This value would be far better approximated by the lower confidence bound than with the clamping, since the lower confidence bound of 1 for $M=100000$ and $\alpha=0.001$ is more than $0.9999$, while the clamped value is just $0.98$. This makes small values of $\sigma$ highly disadvantageous, since $\frac{\sigma}{2} \xrightarrow[]{} 0$ as $\sigma \xrightarrow[]{} 0$, yet $\Phi^{-1}(\hat E_A(\sigma_k))$ is being stuck on $\Phi^{-1}(0.98)$. In other words, this way authors artificially force the resulting $\sigma(x)$ to be big enough, s.t. $E(\sigma(x))_A \le 0.98$. This assumption is not commented in the article and might result in intransparent behaviour. 

Second of all, authors use $M=1$ for their experiments. This can be interpreted as using batch size 1 in classical SGD. We suppose that this small batch size is suboptimal since it yields an insanely high variance of the gradient. 

Third of all, during the search for $\sigma(x)$, it is not taken into account, whether the prediction is correct or not. This is, of course, a scientifically correct approach, since we cannot look at the label of the test sample before the very final evaluation. However, it is also problematic, since the function in Expression~\ref{alfarra theoretical fcn} might attain its optimum in such a $\sigma(x)$, which leads to misclassification. This could have been avoided if constant $\sigma$ was used instead. 

To further illustrate this issue, assume $F(x)=\mathbf{1}(B_1(0))$, i.e. $F(x)$ predicts class 1 if and only if $\norm{x}\le 1$, otherwise predicts class 0. Assume we are certifying $x \equiv 0$ and assume that $\sigma_0$ in Algorithm~\ref{alfarra algo} is initialized such that class 0 is already dominating. Then, we will have positive gradient $\nabla_{\sigma_k} R(\sigma_k)$ in all steps, because $F(\sigma \delta_i)$ is obviously non-increasing, so the number of points classified as class 1 for fixed sample $\delta_i, i \in 1, \dots, M$ is decreasing, yielding $\hat E_A(\sigma_k)$ non-decreasing in $\sigma_k$, while $\frac{\sigma_k}{2}$ strictly increasing in $\sigma_k$. This way, the $\sigma_k$ will diverge to $\infty$ for $k \xrightarrow[]{} \infty$. However, point $x \equiv 0$ is classified as class 1, yielding misclassification which is, moreover, assigned very high certified radius. 

This issue is actually even more general - the function in Expression~\ref{alfarra theoretical fcn} does in most cases (assuming infinite region $\mathbb{R}^N$) \textit{not} possess global maximum, because usually $$\underset{\sigma \xrightarrow[]{} \infty}{\lim} \frac{\sigma}{2}(\Phi^{-1}(G(x,\sigma(x))_A)-\Phi^{-1}(G(x,\sigma(x))_B)) = \infty.$$ This can be seen, for instance, easily for the $F(x)=\mathbf{1}(B_1(0))$, but it is the case for any hard classifier, for which one region becomes to have dominating area as the radius around some $x_0$ goes to infinity. This is because, if some region becomes to be dominating (for instance if all other regions are bounded), then $\frac{\sigma}{2}$ grows, while $\Phi^{-1}(G(x,\sigma(x))_A)-\Phi^{-1}(G(x,\sigma(x))_B)$ either grows too, or stagnates, making the whole function strictly increasing with sufficiently high slope. 

This issue also throws the hyperparameter $K$ under closer inspection. What is the effect of this hyperparameter on the performance of the algorithm? From the previous paragraph, it seems, that this parameter serves not only as the `` scaled number of epochs'', but also as some stability parameter, which, however, does not have theoretical, but rather practical justification. 

Another issue is, that the function in Expression~\ref{alfarra theoretical fcn} might be non-convex and might possess many different local minima, from which not all (or rather just a few) are actually reasonable. Therefore, the Algorithm~\ref{alfarra algo} is very sensitive to initialization $\sigma_0$. 

However, probably the biggest issue of all is connected to the impossibility result showed in Section~\ref{sec:theory}, which shows, that the Algorithm~\ref{alfarra algo} actually yields invalid certified radiuses. Why it is so?

First of all, we must justify, that our impossibility result is applicable also for the \textit{soft} randomized smoothing. This is because classifiers of type $F(x)_C=\textbf{1}(x \in R_C)$ for $R_C$ being decision region for class $C$ are among applicable classifiers s.t. $G(x, \sigma)_A=E_A$. With such classifiers, however, there is no difference between soft and hard smoothing and moreover $E_A \equiv p_A$ from our setup. This way we can construct the worst-case classifiers $F^*$ exactly as in our setup and therefore the same worst-case classifiers and subsequent adversarial examples are applicable here as well. In other words, for fixed value of soft smoothed $G(x, \sigma)_A=E_A$ we can denote $p_A = E_A$ and find the worst-case hard classifier $F$ defined as indicator of the worst-case ball, which will yield $E_B \equiv \mathbb{P}_1(B)$ from Theorem~\ref{thm ncchsq} in some queried point $x_1$. 

As we have seen in previous paragraphs, the resulting $\sigma(x)$ yielded in Algorithm~\ref{alfarra algo} is very instable and stochastic - it depends heavily on $F, \sigma_0, K, \beta, M$ and of course $\delta_i, i  \in \{1, \dots, M\}$ for each iteration of the for cycle. Now, for instance for CIFAR10 and $p_A=0.99$, we have the minimal possible ratio $\frac{\sigma_0}{\sigma_1}$ equal to more than $0.96$. It is hard to believe, that such instable, highly stochastic and non-regularized (except for $K, \beta$) method will yield $\sigma(x)$ sufficiently slowly varying such that within the certified radius around $x_0$, there will be no $x_1$ for which $\sigma_1$ deviates more than by this strict threshold from $\sigma_0$. This is even more pronounced on ImageNet, where the minimal possible ratio $\frac{\sigma_1}{\sigma_0}$ is above 0.99 for any $p_A$ or $E_A$. 

Even without the help of curse of dimensionality, we can construct a counterexample for which the algorithm will not yield valid certified radius. Assume again $F(x)=\mathbf{1}(B_1(0))$ and assume modest dimension $N=2$. Assume we try to certify point $x_0 \equiv [50, 0]$. Then, the theoretical $\sigma$-dependent function from Equation~\ref{alfarra theoretical fcn} is depicted on Figure~\ref{alfarra counterexample}.

\begin{figure}[t!]
    \centering
    \includegraphics[width=0.60\textwidth]{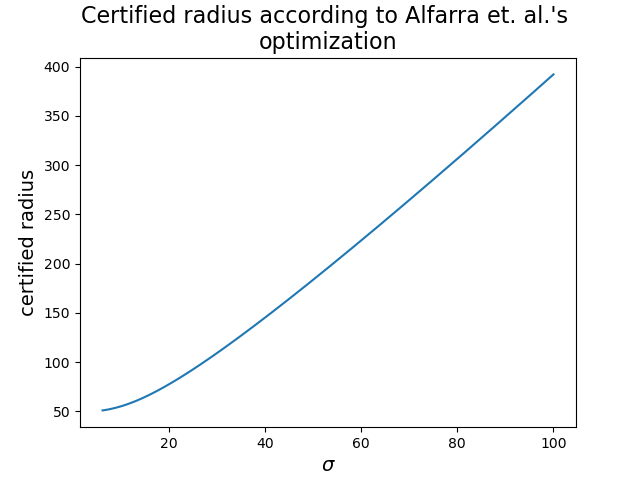}
    \caption{The theoretical certified radius as in Expression~\ref{alfarra theoretical fcn}. The function is monotonically increasing on interval $[0, 100]$ and will further be increasing too.} 
    \label{alfarra counterexample}
\end{figure}

We can see, that the resulting $\sigma(x_0)$ will be as big as our regularizers $K$ and $\beta$ in Algorithm~\ref{alfarra algo} will allow. Therefore, if we run the algorithm for $K$ high-enough, surely the resulting certified radius will be far bigger than 50. However, if we certify the point $x_0 \equiv [0,0]$ and we start with $\sigma_0=0.25$, for instance, then the $\sigma$-dependent certified radius in Expression~\ref{alfarra theoretical fcn} will be decreasing in this $\sigma_0$, yielding $\sigma(x_0)<0.25$, which will result in classification of class 1. This point $[0, 0]$ lies within the ``certified'' range of $[50, 0]$, yet it is not classified the same, because, obviously, $[50, 0]$ is classified as class 0. This is therefore a counterexample to the validity of \cite{alfarra2020data}'s certification method and their results. 

Note that even though our counterexample is a bit ``extreme'' and one could argue that in practice such a situation would not occur, we must emphasize, that this counterexample is constructed even without the help of the curse of dimensionality. In practice, it fully suffices, that for $x_0$ and some certified radius $R$ in $x_0$, there exists $x_1$ within the range of this certified radius, s.t. $\sigma_1$ is quite dissimilar to $\sigma_0$. If such situation occurs, then $R$ surely is \textit{not} a valid certified radius. 

\begin{remark}
In the latest version of their work, the authors themselves point out the issue outlined here. To overcome this issue, the authors further propose the following fix. Let test set be $\{x_i\}_{i=1}^d$. The samples are being processed consecutively from 1 to $d$. After obtaining a radius $R_i$ for a sample $x_i$, they ``reserve'' the ball around $x_i$ with radius $R_i$ so that every point falling in this ball from now on will be classified the same class as $x_i$. They also adjust a newly-computed certified radius $R_j$ of a sample $x_j$ so that the ball around $x_j$ doesn't intersect with any previously reserved ball $x_i, \hspace{1mm} i<j$. If the sample falls into the already reserved region, the prediction on this sample is being forced to be the prediction of the sample in the region, and the certified radius is set such that the ball around the new sample is a subset of a ball reserved by the old sample. 

Let us note that this fix by far does not solve the issues. Technically, it partially gets rid of the mathematical invalidity of certified radiuses, since now we at least have a guarantee that within the session for which we store all the previous test samples, there will be no direct violations of the certificates. However, this fix brings many other subtle and serious problems. 

\begin{itemize}
    \item Similarly as in \citet{wang2021pretraintofinetune} it does depend on the order of incoming samples, creating possibilities for new adversarial attacks. To compare fairly, one would need to compare this method taking the worst possible certified accuracy result from all orderings of the test set. 
    \item This method requires storing all the previously checked test samples, which might be a really big burden in some inference-heavy applications such as autonomous driving, but also many others. 
    \item The classical randomized smoothing or also our framework provide guarantees which are session-agnostic. In other words, given a fixed model $f$ and a fixed variance function $\sigma(x)$ (possibly constant), the certified radius for a sample $x_0$ holds anywhere in the world at any time, on any device. Yet the model of \citet{alfarra2020data} requires synchronization of all the testing devices and if this synchronization is not provided then the guarantees hold just locally. Imagine our cell phones use some fixed classifier. If the phones were not synchronized, then within the certified radius around $x_0$ provided by one phone, there might be an adversarial example in some other phone. 
    \item Probably the biggest issue is that the method uses a formula for the computation of certified radiuses which completely loses its meaning when the guarantees are actually based just on the fix. The \citet{cohen2019certified}'s formula for certified radius is completely tailored for constant smoothing and used in this way it is just a random formula. A very simple and naive method would outperform everything existent. It just suffices to classify with $f$ or $g$ created with very small, constant $\sigma$ and then give constant certified radius to all the samples, computed such that given the size of the test set (which is known for instance in scientific comparisons) and the average distances between closest samples, we would choose radius which is just small enough to statistically not create intersections of certified regions. For instance for CIFAR10 and 10000 test samples, certified radiuses of 4 or 5 would be completely safe and wouldn't create possibly any intersections. However, it is clear that such a method doesn't make sense. 
\end{itemize}

Why this method is so appealing at the first sight? The problem is that in practical comparisons on CIFAR or MNIST, there are no clashes of certified regions. That would not be the case if the test sets would be bigger. The average distance from the closest neighbor goes as $\frac{1}{\sqrt{d}}$ where $d$ is the number of samples.

One could argue that there might be something as ``blessing of dimensionality''. The distances between samples grow with dimension and so the chance of clashes is smaller and smaller and thus extremely big amounts of test data would be needed to create some problems. However, this is not correct reasoning since with growing dimension, also the required $\sigma$ grows to provide appropriately big certified radiuses. From this point of view, the amount of test samples necessary to create troubles is roughly constant with respect to dimension and depends just on the topology of the data distribution. 
\end{remark}

\subsection{The Work of \citet{chen2021insta}}

The methodology of \cite{chen2021insta} is rather similar to that of \cite{alfarra2020data}. The biggest difference consists in the optimization of Expression~\ref{alfarra theoretical fcn}. 

Instead of stochastic gradient descent, they use more sophisticated version of grid search - so called \textit{multiple-start fast gradient sign search}. Simply speaking, this method first generates a set of pairs $(\sigma_0, s)_i, i \in \{1, \dots, K\}$ and then for each of the $i$ runs, it runs a $j$-while cycle, where in each step $j$, it increases $\sigma_j^2=\sigma_0^2+js$ to $\sigma_{j+1}^2=\sigma_0^2+(j+1)s$ and checks, whether the $\sigma$-dependent empirical certified radius in Expression~\ref{alfarra practical fcn} increases or not. If yes, they continue until $j$ is above some threshold $T$, if not, they break and report $\sigma_i$ as the $\sigma_j$ from the inner step where while cycle was broken. After obtaining $\sigma_i$ for $i \in \{1, \dots, K\}$, they choose $\sigma(x)$ to be the one, that maximizes Expression~\ref{alfarra practical fcn}. More concretely, their multiple-start fast gradient sign search algorithm looks as follows: 

\begin{algorithm}[t!] 
\caption{Instance-wise multiple-start FGSS \citep{chen2021insta}}
\begin{algorithmic}
\Function{OptimizeSigma}{$F, x, \sigma_0, M, K, T$}:
    \State $\text{generate} \hspace{1mm} (l_k, s_k), k \in \{1, \dots, K\}$
    \For{$k = 1, \dots, K$}
        \State $\text{sample} \hspace{1mm} \delta_1, \dots, \delta_M \sim \mathcal{N}(0, l_k \sigma_0^2 I)$
        \State $\phi(\sqrt{l_k} \sigma_0)=\frac{1}{M}\sum\limits_{i=1}^M F(x+\delta_i)$
        \State $\hat E_A(\sqrt{l_k} \sigma_0) = \max_C \phi(\sqrt{l_k} \sigma_0)_C$
        \State $R(\sqrt{l_k} \sigma_0)=\sqrt{l_k} \sigma_0\Phi^{-1}(\hat E_A(\sqrt{l_k} \sigma_0))$
        \State $m_k = R(\sqrt{l_k} \sigma_0)$
        \While{$l_k \in [1, T]$}
            \State $\text{sample} \hspace{1mm} \delta_1, \dots, \delta_M \sim \mathcal{N}(0, (l_k+s_k) \sigma_0^2 I)$
            \State $\phi(\sqrt{l_k+s_k} \sigma_0)=\frac{1}{M}\sum\limits_{i=1}^M F(x+\delta_i)$
            \State $\hat E_A(\sqrt{l_k+s_k} \sigma_0) = \max_C \phi(\sqrt{l_k+s_k} \sigma_0)_C$
            \State $R(\sqrt{l_k+s_k} \sigma_0)=\sqrt{l_k+s_k} \sigma_0\Phi^{-1}(\hat E_A(\sqrt{l_k+s_k}\sigma_0))$
            \If{$R(\sqrt{l_k+s_k} \sigma_0) \ge R(\sqrt{l_k} \sigma_0)$}
                \State $l_k \gets l_k+s_k$
                \State $m_k = R(\sqrt{l_k+s_k} \sigma_0)$
            \Else
                \State $\text{break}$
            \EndIf
        \EndWhile
    \EndFor
    \State $\sigma(x) = \underset{k \in \{1, \dots, K\}}{\max}m_k$
    \State \Return $\sigma(x)$
\EndFunction
\end{algorithmic}
\label{chen algo}
\end{algorithm}

It is not entirely clear from the text of \cite{chen2021insta}, how exactly are $l_k, s_k$ sampled, but it is written there, that the interval for $l$ is $[1,16]$ and for $s$ it is $(-1,1)$. Moreover, the authors don't provide the code and from the text, it seems, that they don't use lower confidence bounds during the evaluation of certified radiuses, what we consider to be a serious mistake (if really the case). However, we add some comments to this method regardless of the lower confidence bounds. 

Generally, this method possesses most of the disadvantages mentioned in Section~\ref{sec: alfarra}. They use the same function for optimization, the Expression~\ref{alfarra theoretical fcn} and its empirical version~\ref{alfarra practical fcn}. This means, that the method suffers from having several local optima, having no global optimum in general (and in most cases with limit infinity). Similarly like before, here is also no control over the correctness of the prediction, i.e. many or all local optima might lead to misclassification. 

On the other hand, in this paper authors use $M=500$ (the effective batch size), which is definitely more reasonable than $M=1$ as in \cite{alfarra2020data}. Furthermore, they use multiple initializations, making the optimization more robust and improving the chances to obtain global, or at least very good local minimum. 

However, the main problem, the curse of dimensionality yielding invalid results is even more pronounced here. Unlike the ``continuous approach'' in \cite{alfarra2020data}, here authors for each $x_0$ sample just some discrete grid (more complex, since there are more initializations) of possible values of $\sigma(x)$. For instance, if $s=1$, then the smallest possible ratio between two consecutive $l$'s in the Algorithm~\ref{chen algo} is $\sqrt{15}/4 \sim 0.97$, making it impossible to certify some $x_1$ w.r.t. $x_0$ if for both $s=1$ and $l_0 \neq l_1$ on ImageNet and also for a lot of samples on CIFAR10. Of course, the fact that $s$ is randomly sampled from $(-1, 1)$ makes this counter-argumentation more difficult, but it is, again, highly unlikely that this highly stochastic method without control over $\sigma(x)$ would yield function with sufficiently small semi-elasticity. Therefore also the impressive results of \cite{chen2021insta} are, unfortunately, scientifically invalid.

\section{Implementation Details} \label{appD: implementation details}
Even though our algorithm is rather easy, there are some perks that should be discussed before one can safely use it in practice. First, we show the actual Algorithm~\ref{my method algo}.

\begin{algorithm}[t!]
\caption{Pseudocode for certification and prediction of my method based on \cite{cohen2019certified}}
\begin{algorithmic}
\State \textit{\# evaluate } $g$ \textit{at } $x_0$
\Function{PREDICT}{$f, \sigma_0, x_0, n, \alpha$}:
    \State $\texttt{counts} \xleftarrow[]{} \text{SampleUnderNoise(}f, x_0, n, \sigma_0\text{)}$
    \State $\hat c_A, \hat c_B \xleftarrow[]{} \text{two top indices in } \texttt{counts}$
    \State $n_A, n_B \xleftarrow[]{} \texttt{counts}(c_A), \texttt{counts}(\hat c_B)$
    \If{\text{BinomPValue}$(n_A, n_A+n_B, 0.5)\le \alpha$} \Return $\hat c_A$
    \Else \hspace{1mm} \Return \text{ABSTAIN} \EndIf
\EndFunction
\newline
\State \textit{\# certify the robustness of} $g$ \textit{around} $x_0$
\Function{CERTIFY}{$f, \sigma_0, x_0, n_0, n, \alpha$}:
    \State $\texttt{counts0} \xleftarrow[]{} \text{SampleUnderNoise(}f, x_0, n_0, \sigma_0\text{)}$
    \State $\hat c_A \xleftarrow[]{} \text{top index in } \texttt{counts0}$
    \State $\texttt{counts} \xleftarrow[]{} \text{SampleUnderNoise(}f, x_0, n, \sigma_0\text{)}$
    \State $\underline{p_A} \xleftarrow[]{} \text{LowerConfBound}(\texttt{counts}[\hat c_A], n, 1-\alpha)$
    \If{$\underline{p_A}>1/2$} \Return \text{prediction } $\hat c_A$ \text{and radius } \text{ComputeCertifiedRadius(}$\sigma_0, r, N, \underline{p_A}, \text{num\_steps}$\text{)}
    \Else \hspace{1mm} \Return \text{ABSTAIN} \EndIf
\EndFunction
\newline
\Function{ComputeCertifiedRadius}{$\sigma_0, r, N, \underline{p_A}, \text{num\_steps}$}
    \State $\texttt{radiuses} \xleftarrow[]{} \text{linspace(num\_space)}$
    \For{$R$ \text{in } \texttt{radiuses}} 
        \State $\sigma_{11} \xleftarrow[]{} \sigma_0 \exp(-rR)$ 
        \State $\sigma_{12} \xleftarrow[]{} \sigma_0 \exp(rR)$ 
        \State $\texttt{xi\_bigger} \gets \xi_>(R, \sigma_{11})$
        \State $\texttt{xi\_lower} \gets \xi_<(R, \sigma_{12})$
        \If{$\max \{\texttt{xi\_bigger}, \texttt{xi\_lower}\} > 0.5$} \text{BREAK} \EndIf
    \EndFor
    \State \Return $R$
\EndFunction
\end{algorithmic}
\label{my method algo}
\end{algorithm}

Note that the function \texttt{ComputeCertifiedRadius} is a bit more complicated than depicted in Algorithm~\ref{my method algo}. We don't use a simple for-loop, but rather quite an efficient search method. 

Theoretically speaking, this algorithm works perfectly. However, in practice, it is a bit problematic. The issue is, that since we use $\sigma_{11}$ and $\sigma_{12}$, which are extremely close to $\sigma_0$ for small tested radiuses $R$, the NCCHSQ functions will get extremely high inputs, making the results numerically instable. To prevent this, we use a simple trick. Since the more extreme $\sigma_1$ will we assume in evaluation at particular distance $R$, the worse for us, we can prevent numerical issues simply by putting $\sigma_t<\sigma_0$ and $\sigma_T>\sigma_0$ to be maximal and minimal used $\sigma$'s in our evaluation, i.e. the true $\sigma$ used will be $\min \{\sigma_t, \sigma_0 \exp(-rR)\}$ and $\max \{\sigma_T, \sigma_0 \exp(rR)\}$. This way, we avoid numerical issues, because we can put $\sigma_t, \sigma_T$ to be s.t. $\frac{1}{\sigma_0^2-\sigma_t^2}$ is not too big and in the same time maintainting the correct certification thanks to the Lemma~\ref{correctness of certification procedure}. The problems of this workarounds are first that it decreases the certification power, since it assumes $\sigma_1$'s that are even worse than the theoretically guaranteed worst-case possibilities and second, more importantly, that it requires some engineering to design the $\sigma_t, \sigma_T$ designs. It is submoptimal to put one constant value for these thresholds, because the numerical problems occur at different ratio thresholds of $\sigma_1/\sigma_0$ for different class probabilities $\underline{p_A}$ \textit{and} the dimension $N$. This requires to design a specific $\sigma_t(\underline{p_A})$ and $\sigma_T(\underline{p_A})$ functions for each dimension $N$ which we want to apply. For instance, we use $\sigma_t(\underline{p_A})=0.9993+0.001\log_{10}(\overline{p_B})$, and $\sigma_T(\underline{p_A})=1/\sigma_t(\underline{p_A})$ for CIFAR10, while for MNIST we use $\sigma_t(\underline{p_A})=0.9988+0.001\log_{10}(\overline{p_B})$, and $\sigma_T(\underline{p_A})=1/\sigma_t(\underline{p_A})$. To design such functions, one needs to plot \texttt{plot\_real\_probability\_of\_a\_ball\_with\_fixed\_variances\_as\_fcn\_of\_dist} function, which computes the $\xi$ functions, for particular $N$ and several values of $\underline{p_A}$ and look, whether it computes correctly. As an example, we provide such a plots for well and ill working setups on Figure~\ref{implementation well and ill working real prob funct}. 

\begin{figure}[t!]
    \centering
    \begin{minipage}[b]{0.48\linewidth}
        \includegraphics[width=\textwidth]{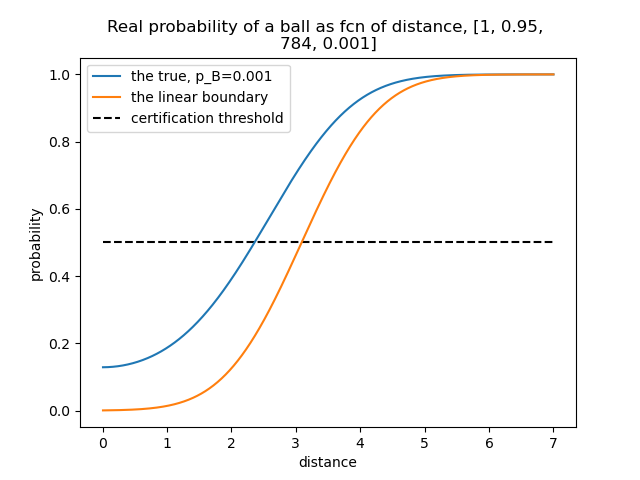}
    \end{minipage}
    \begin{minipage}[b]{0.48\linewidth}
        \includegraphics[width=\textwidth]{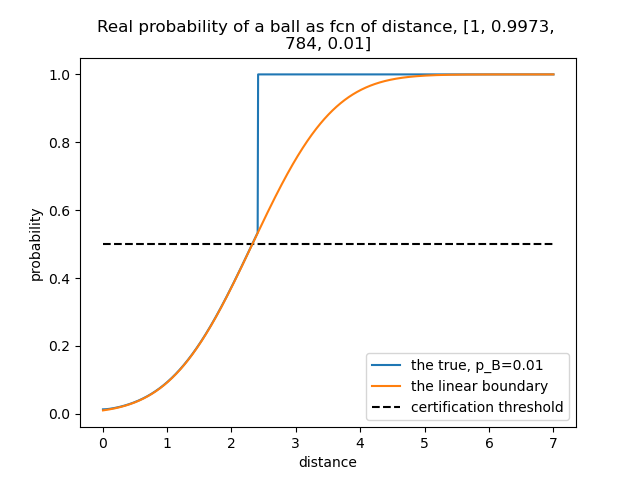}
    \end{minipage}
    \caption{Well and ill working function \newline \texttt{plot\_real\_probability\_of\_a\_ball\_with\_fixed\_variances\_as\_fcn\_of\_dist}, \newline which computes the $\xi$ functions. The coding is $[\sigma_0, \sigma_1, N, p_A]$.} 
    \label{implementation well and ill working real prob funct}
\end{figure}

Another performance trick is to not evaluate $\xi$ for each $R_i$, where $R_i$ is $i$-th grid point of evaluation, but rather evaluate sequentially $R_{i^2}$, i.e. just every $i^2$-th point, until we reach value $>0.5$ and then to search just the interval $[(i-1)^2, i^2]$, where $i$ is the first iteration for which $\xi_>(R_{i^2}, \sigma_1)\ge 0.5$.

\section{More to Experiments and Ablations} \label{appE: experiments and ablations}
Before we present our further results, we must emphasize that our certification procedure is barely any slower than that of \cite{cohen2019certified}. More specifically, given 100000 iterations of monte-carlo sampling, certification of one sample using \cite{cohen2019certified}'s algorithm on CIFAR10 takes $\sim 15$ seconds on our machine, while certification of a sample using our Algorithm~\ref{my method algo} takes $15-20$ seconds depending on the $\sigma_b, r$ setup. If at least one of $\sigma_b$ and $r$ is not small, then our method runs practically instantly. If both parameters are small, then one evaluation can take up to 5 seconds depending on the exact value of parameters and on the $\underline{p_A}$. Note, that this part of the certification is dimension-independent and therefore can run in the same time also on much higher-dimensional problems. 

Besides the actual certification, we have to compute $\sigma(x)$ for each of the test examples. This part of the algorithm is being executed before the actual certification and usually takes around 1 minute on our machine and on CIFAR10.

All in all, even in the really worst-case scenario, our method runs at most $1/3$-times longer than the old method on CIFAR10. On MNIST, the ratio between our run and the original run is higher, since MNIST is smaller-dimensional problem. However, since our part of evaluation is practically independent of the setup (except the values of $\sigma_b$ and $r$, which, however, can yield just some upper-bounded amount of slow-down), our algorithm does not bring any added asymptotic time complexity.

\subsection{How to Choose the Hyperparameters?} \label{ssec: hyperparameter selection}
Our design of $\sigma(x)$ function defined in Equation~\ref{eq: the sigma fcn} uses several hyperparameters. These are: $r$ for the rate, $m$ for the scaling, $\sigma_b$ for the base sigma and $k$ for the $k$-nearest neighbors. How do we choose these hyperparameters? 

The $m$ parameter depends on our goals. We can set it so that $\sigma(x)$ achieves lowest values at $\sigma_b$ by setting it so that it is roughly equal to the minimal distance from $k$ nearest neighbors across, for instance, training samples. Other possibility is to set it so that it is roughly equal to the average distance from $k$ nearest neighbors, to ensure that the average $\sigma(x)$ will roughly correspond to the $\sigma_b$. 

The $k$ parameter needs to be set with two objectives in mind. Firstly, it would be unwise to set it too small, because then the distance from $k$ nearest neighbors would be too noisy. On the other hand, we don't want it too high, because then it will not be changing fast enough with changing the position of $x$. The suitable value can be obtained by looking at histograms of average distances from $k$ nearest neighbors and choosing the $k$ for which the histogram is enough scattered, but it is not too small. 

The $r$ parameter needs to be chosen so that we can have some significant advantage over constant smoothing, but it cannot be too big, because otherwise the curse of dimensionality would apply. The value can be decided either by trial and error, or by plotting the certified radius given linear classifier, or from Theorem~\ref{main theorem}, setting the rate low-enough so that within the expected certified radius range, the ratio $\frac{\sigma_1}{\sigma_0}$ can't move anywhere near the theoretical thresholds implied by Theorem~\ref{main theorem}. 

The $\sigma_b$ is the base $\sigma$ and should be used according to the level of smoothing variance we want to use. More discussion on this can be found in \cite{cohen2019certified}. 

\subsection{Comparison with \citet{cohen2019certified}'s Methodology on CIFAR10 Datasets} \label{ssec: cifar10 cohen comparison}

Here, we compare \cite{cohen2019certified}'s evaluations for $\sigma= 0.12, 0.25, 0.50$ with our evaluations directly on models trained by \cite{cohen2019certified}, setting $\sigma_b=\sigma$, $r=0.005, 0.01$ and $0.015$ for $\sigma_b=\sigma=0.12$, $k=20$ and $m=5$. In this way, the levels of $\sigma(x)$ used in direct comparison will rise from the values roughly equal to \cite{cohen2019certified}'s constant $\sigma$ to higher values. The results are depicted in Figure~\ref{me vs. cohen second main comparison}.

\begin{figure}[t!]
    \centering
    \begin{minipage}[b]{0.32\linewidth}
        \includegraphics[width=\textwidth]{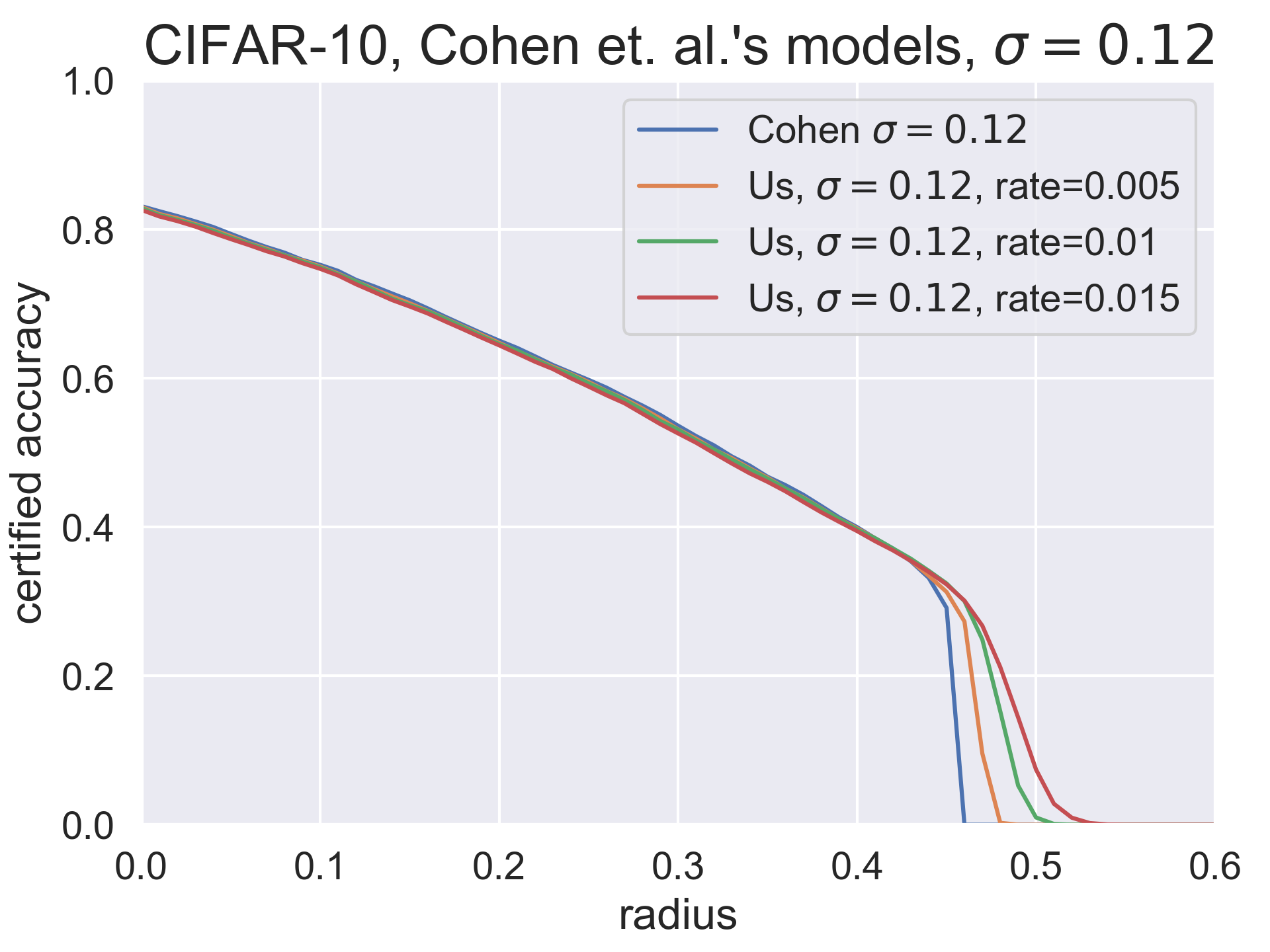}
    \end{minipage}
    \begin{minipage}[b]{0.32\linewidth}
        \includegraphics[width=\textwidth]{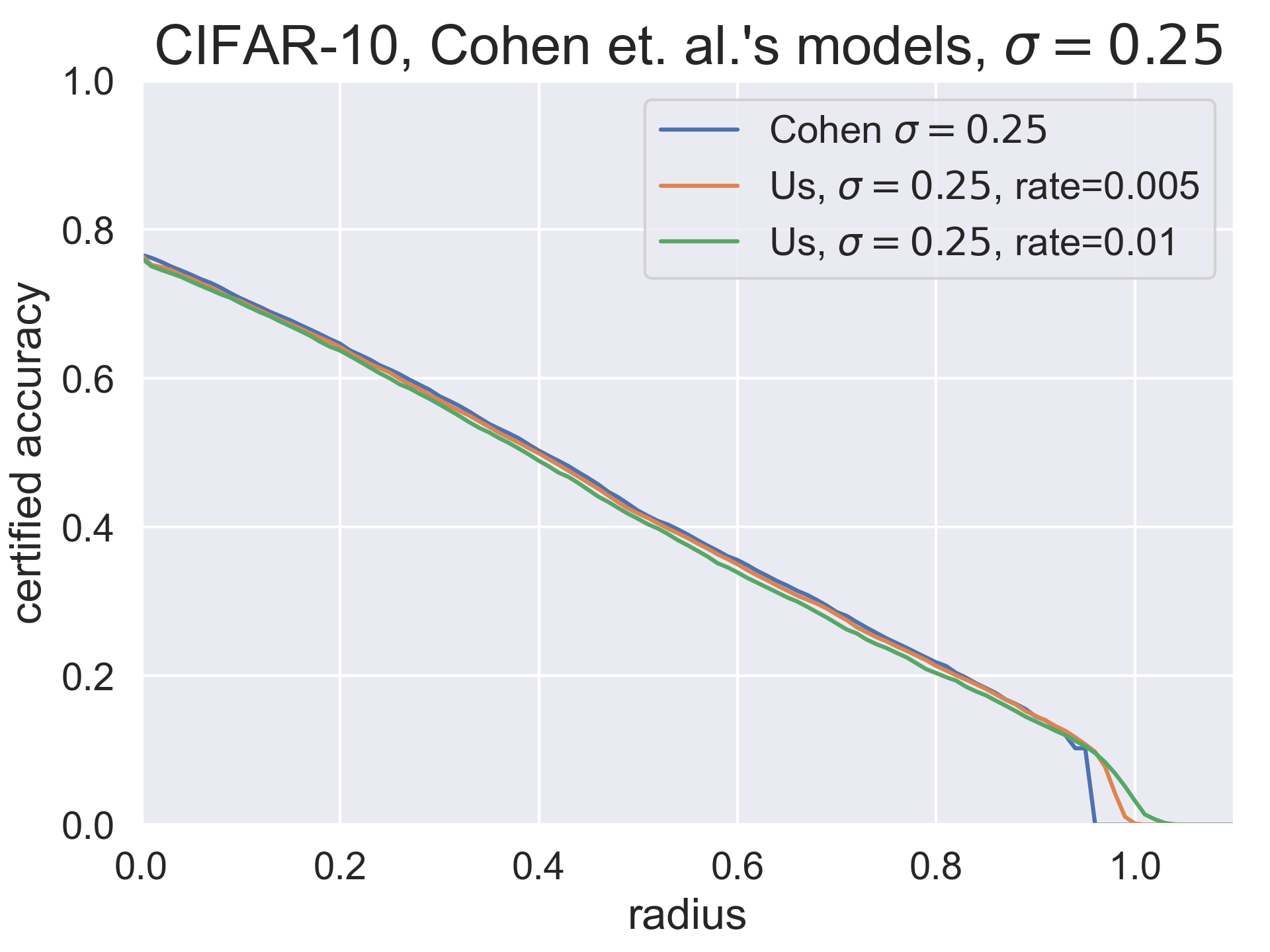}
    \end{minipage}
    \begin{minipage}[b]{0.32\linewidth}
        \includegraphics[width=\textwidth]{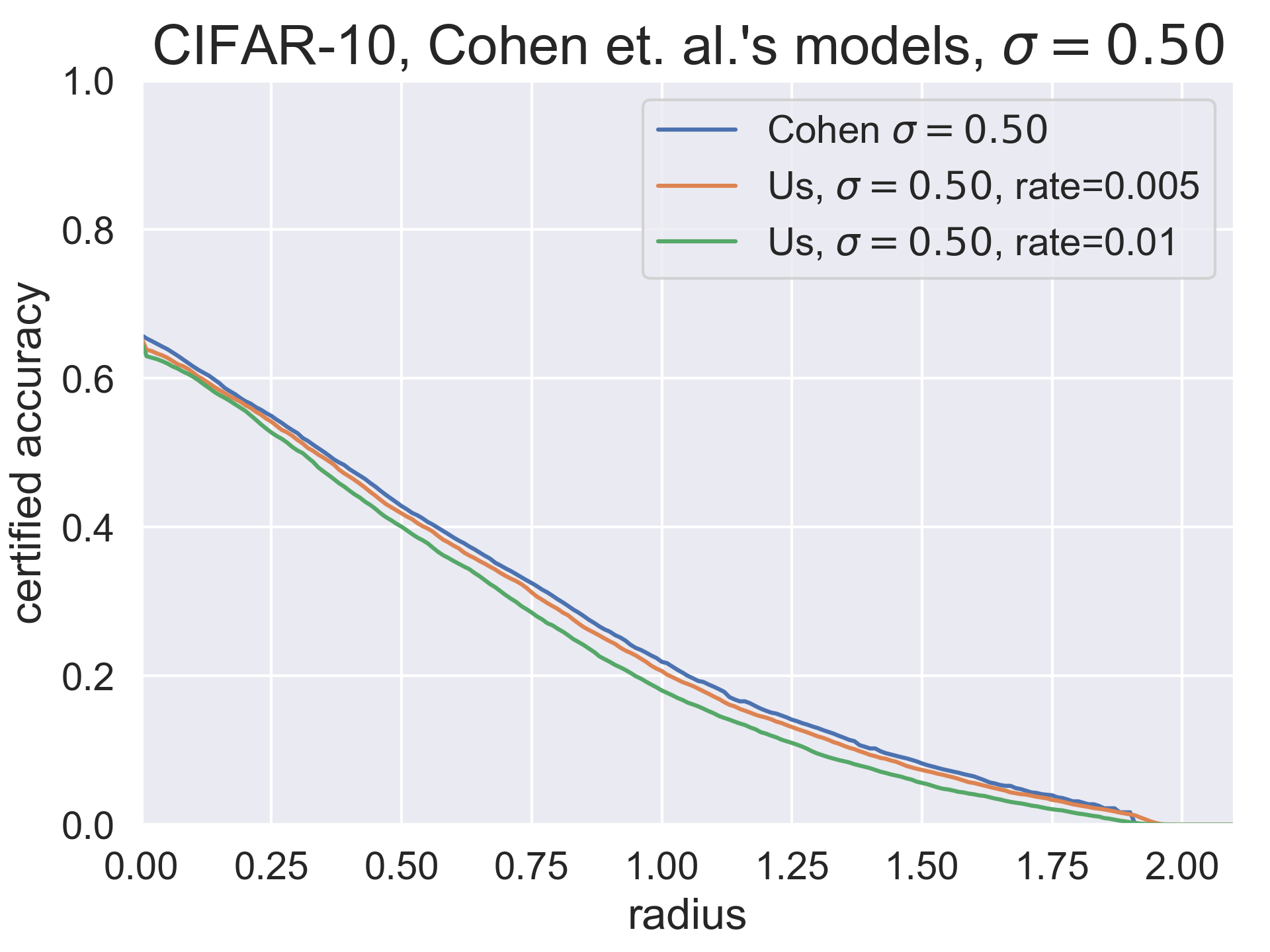}
    \end{minipage}
    \caption{Comparison of certified accuracy plots for \cite{cohen2019certified} and our work. For each plot, the same base model $f$ is used for evaluation.}
    \label{me vs. cohen second main comparison}
\end{figure}

Note, that this evaluation is being done on the models trained directly by \cite{cohen2019certified} and therefore the variance of Gaussian data augmentation is not entirely consistent with the optimal variance that should be used for non-constant $\sigma$, which should be either the same, $\sigma(x)$ or constant, but in average equal to $\sigma(x)$. The results are similar as in the Section~\ref{experiments}. Note, that for $\sigma_b=\sigma=0.50$, the curse of dimensionality becomes most pronounced, as explained in Appendix~\ref{appC: more on theory}. Further, we provide the Tables~\ref{accuracy table for Cohen's comparison}, \ref{standard devs of class-wise accuracies}, where the clean accuracies and class-wise standard deviations are displayed.

\begin{table}[t!]
\centering
\begin{tabular}{||c||c|c|c||} 
\hline\hline
 & $\sigma=0.12$ & $\sigma=0.25$ & $\sigma=0.50$ \\ 
\hline\hline
$r=0.00$ & 0.831 & 0.766 & 0.658 \\ 
\hline
$r=0.005$ & 0.830 & 0.766 & 0.654 \\
\hline
$r=0.01$ & 0.828 & 0.762 & 0.649 \\
\hline
$r=0.015$ & 0.826 & - & - \\
\hline\hline
\end{tabular}
\vspace{2mm}
\caption{Clean accuracies for Cohen's models and our non-constant $\sigma(x)$ models.}
\label{accuracy table for Cohen's comparison}
\end{table}

\begin{table}[t!]
\centering
\begin{tabular}{||c||c|c|c||} 
\hline\hline
 & $\sigma=0.12$ & $\sigma=0.25$ & $\sigma=0.50$ \\ 
\hline\hline
$r=0.00$ & 0.084 & 0.108 & 0.131 \\ 
\hline
$r=0.005$ & 0.086 & 0.112 & 0.135 \\
\hline
$r=0.01$ & 0.088 & 0.119 & 0.142 \\
\hline\hline
\end{tabular}
\vspace{2mm}
\caption{Standard deviations of class-wise accuracies for different levels of $\sigma$ and $r$.}
\label{standard devs of class-wise accuracies}
\end{table}

The results are, again, similar as in the section~\ref{experiments}. 

\subsection{Comparison with \citet{cohen2019certified}'s Methodology on MNIST Datasets}
Here, we present similar comparison as in Subsection~\ref{ssec: cifar10 cohen comparison}, but on MNIST and with models trained by us. Again, the setup is similar as in the Section~\ref{experiments}. We compare $\sigma=\sigma_b= 0.12, 0.25, 0.50$ with test-time rates $r = 0.005, 0.01, 0.02, 0.05$ \textit{and} train-time level of $\sigma$ again equal to $\sigma=\sigma_b$. It is important to note, that we use different normalization constant $m$ in the MNIST case. In CIFAR10, we set $m=5$, in MNIST, the suitable $m$ is $1.5$. This way we assure, that the smallest $\sigma(x)$ values in the test set will roughly equal the $\sigma_b=\sigma$. The certified accuracy plots are depicted on Figure~\ref{me vs. cohen main comparison MNIST}. We also add the clean accuracy table and class-wise clean accuracies standard deviation table (\ref{accuracy table for Cohen's comparison, MNIST}, \ref{standard devs of class-wise accuracies, MNIST}).

\begin{figure}[t!]
    \centering
    \begin{minipage}[b]{0.32\linewidth}
        \includegraphics[width=\textwidth]{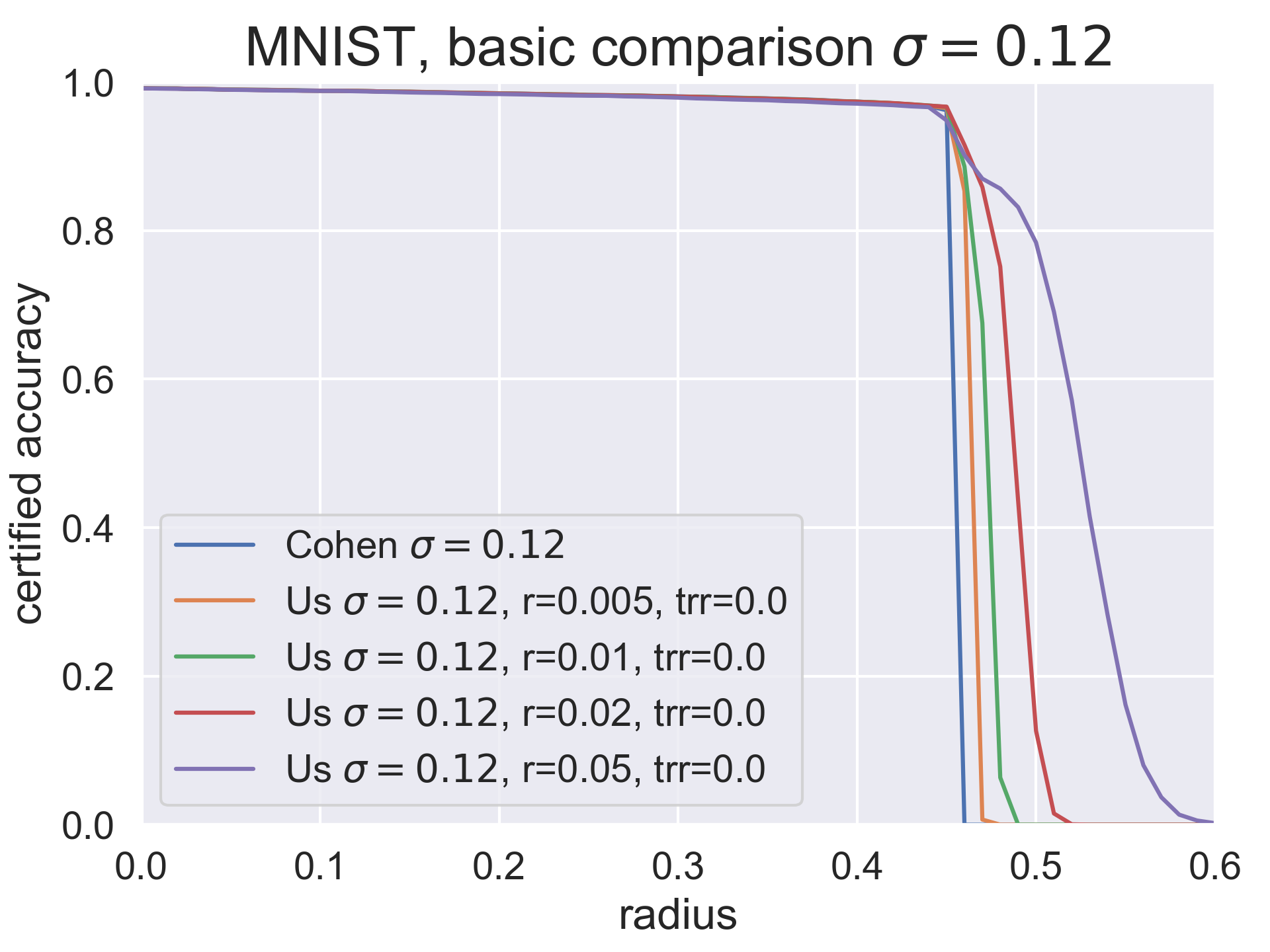}
    \end{minipage}
    \begin{minipage}[b]{0.32\linewidth}
        \includegraphics[width=\textwidth]{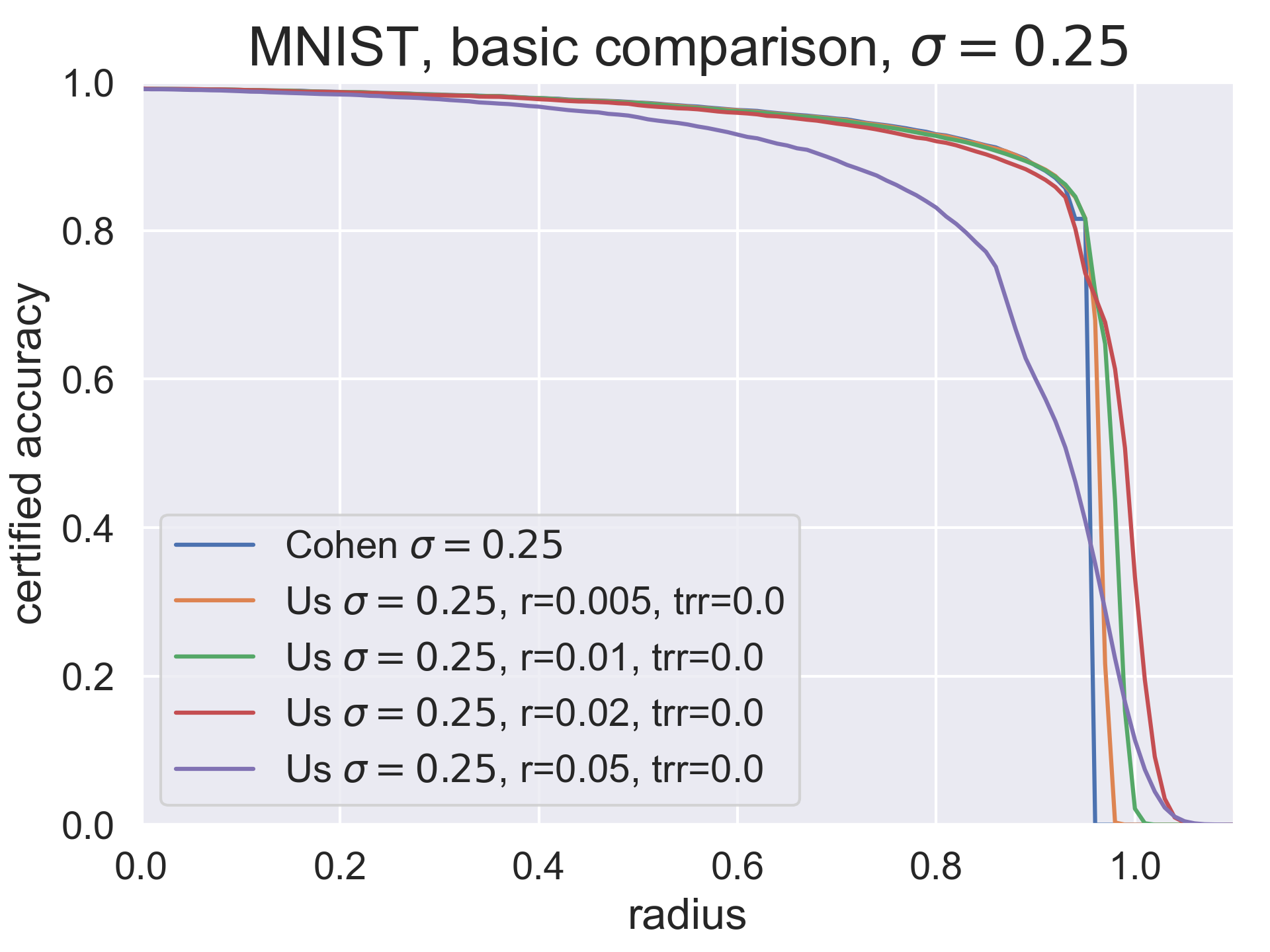}
    \end{minipage}
    \begin{minipage}[b]{0.32\linewidth}
        \includegraphics[width=\textwidth]{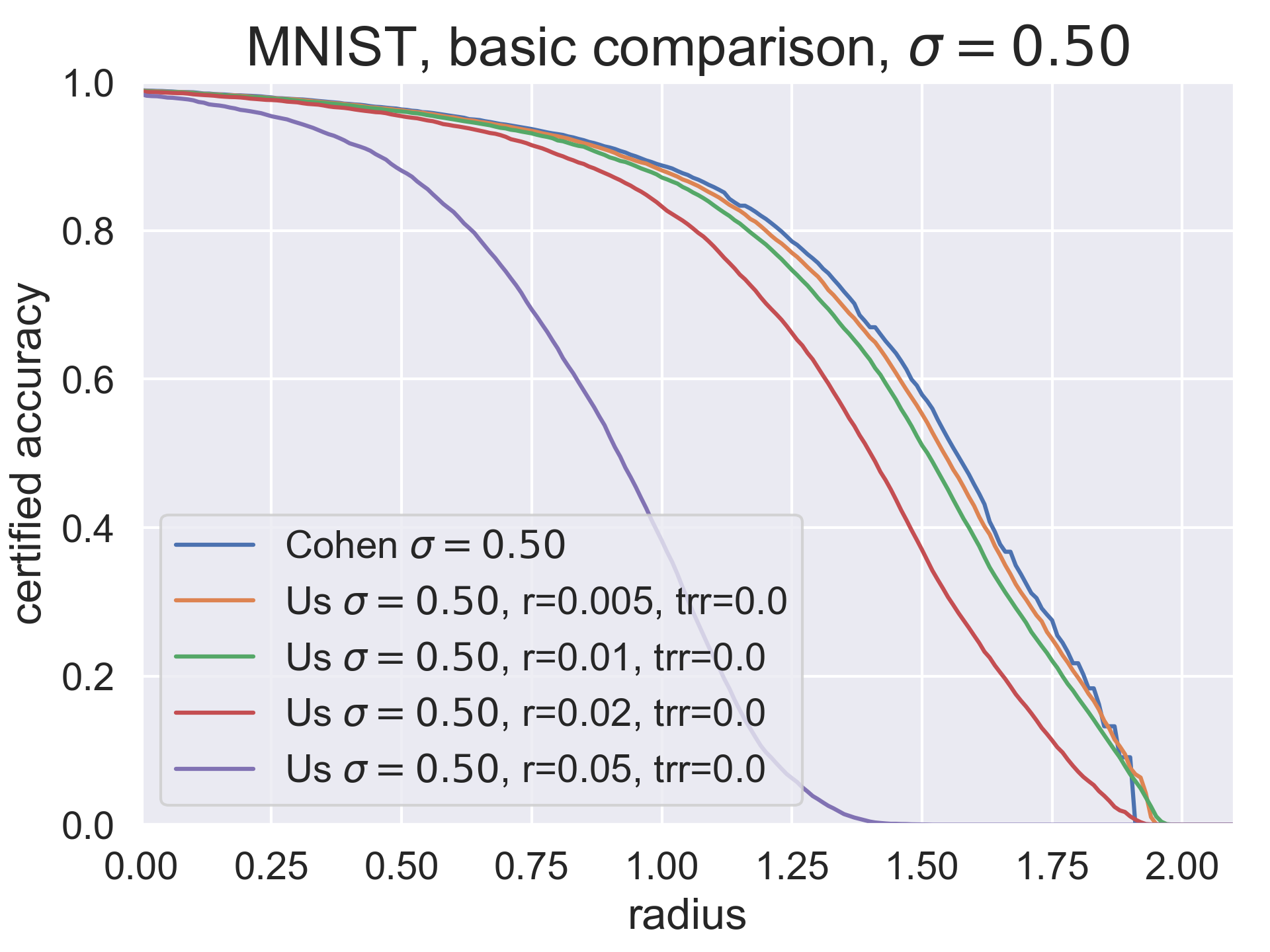}
    \end{minipage}
    \caption{Comparison of certified accuracy plots for \cite{cohen2019certified} and our work, MNIST. For each plot, the same base model $f$ is used for evaluation. The term $trr$ stands for \textit{train-time rate}, will be discussed later and can be ignored now.}
    \label{me vs. cohen main comparison MNIST}
\end{figure}

\begin{table}[t!]
\centering
\begin{tabular}{||c||c|c|c||} 
\hline\hline
 & $\sigma=0.12$ & $\sigma=0.25$ & $\sigma=0.50$ \\ 
\hline\hline
$r=0.00$ & 0.9913 & 0.9910 & 0.9888 \\ 
\hline
$r=0.005$ & 0.9914 & 0.9912 & 0.9885 \\
\hline
$r=0.01$ & 0.9914 & 0.9910 & 0.9887 \\
\hline
$r=0.02$ & 0.9914 & 0.9912 & 0.9876 \\
\hline
$r=0.05$ & 0.9914 & 0.9906 & 0.9836 \\
\hline\hline
\end{tabular}
\vspace{2mm}
\caption{Clean accuracies for Cohen's models and our non-constant $\sigma(x)$ models on MNIST.}
\label{accuracy table for Cohen's comparison, MNIST}
\end{table}

\begin{table}[t!]
\centering
\begin{tabular}{||c||c|c|c||} 
\hline\hline
 & $\sigma=0.12$ & $\sigma=0.25$ & $\sigma=0.50$ \\ 
\hline\hline
$r=0.00$ & 0.677 & 0.729 & 0.909 \\ 
\hline
$r=0.005$ & 0.659 & 0.735 & 0.905 \\
\hline
$r=0.01$ & 0.659 & 0.722 & 0.9318 \\
\hline
$r=0.02$ & 0.659 & 0.713 & 0.960 \\
\hline
$r=0.05$ & 0.715 & 0.796 & 1.159 \\
\hline\hline
\end{tabular}
\vspace{2mm}
\caption{Standard deviations of class-wise accuracies for different levels of $\sigma$ and $r$. The printed values are multiples of 100 of the real standard deviations.}
\label{standard devs of class-wise accuracies, MNIST}
\end{table}

All the results are, again, very similar to those presented in Section~\ref{experiments}, even though the gain in certified accuracies is marginally worse, since our evaluations run on models trained with in average smaller train-time data-augmentation standard deviation $\sigma_b=\sigma$. 

\subsection{Investigation of the Effect of Training with Input-dependent Gaussian Augmentation} \label{ssec: cifar10 input-dependent training effect}
It has been shown by many works, that apart from a good test-time certification method, also the appropriate training plays a very important role in the final robustness of our smoothed classifier $g$. Already \cite{cohen2019certified} realize this and propose to train with gaussian data augmentation with constant $\sigma$. They experiment with different levels of $\sigma$ during training and conclude that training with the same level of $\sigma$ that will be later used in the test time is usually the most suitable option. 

The question of best-possible training to boost the certified robustness didn't stay without the interest of different researchers. Both \cite{zhai2020macer} and \cite{salman2019provably} try to improve the way of training and propose two different, yet interesting and effective training methods. While
\cite{zhai2020macer} manage to incorporate the adversarial robustness into the training loss function, therefore training directly for the robustness, \cite{salman2019provably} propose to use adversarial training to achieve more robust classifiers. 

Both \cite{alfarra2020data} and \cite{chen2021insta} already propose to use training with input-dependent $\sigma$ as the variance of gaussian data augmentation. Both of them proceed similarly as during test time - to obtain training $\sigma(x)$, they optimize for such, that would maximize the certified accuracy of training samples.

In this section, we propose and test our own training method. We propose to use again gaussian data augmentation with input-dependent $\sigma(x)$, but we suggest to use the simple $\sigma(x)$ defined in Equation~\ref{eq: the sigma fcn}. In other words, we suggest using the same $\sigma(x)$ during training as during testing (up to parametrization, which might differ).
 
Note, that, unlike the certification, the training procedure does not require any mathematical analysis nor certification. It is totally up to us how we train the base classifier $f$ and the way of training does not influence the validity of subsequent certification guarantees during test time. However, it is good to have a reasonable training procedure, because otherwise, we would achieve a satisfactory model neither in terms of clean accuracy nor in terms of adversarial robustness. 

In the subsequent analysis, we evaluate and compare our certification procedures on models trained with different training parametrizations. For this particular section, we run the comparison only on the CIFAR10 dataset. For each test-time $\sigma_b, r$, we evaluate our method with these parameters on base models $f$ trained with the same $\sigma_b$, but different level of \textit{training rate} $trr$. The training rate $trr$ plays exactly the same role as the evaluation rate $r$ but is used exclusively during training. Note, that this makes our $\sigma(x)$ different during training and testing since it is parametrized with different rates. 

On the Figure~\ref{me, the trainings} we plot evaluations on CIFAR10 of our method for rate $0.01$, all levels of $\sigma_b = 0.12, 0.25, 0.50$ and each of these test-time setups is evaluated on 4 different levels of train-time rate \\ $trr = 0.0, 0.01, 0.04, 0.1$.

\begin{figure}[t!]
    \centering
    \begin{minipage}[b]{0.32\linewidth}
        \includegraphics[width=\textwidth]{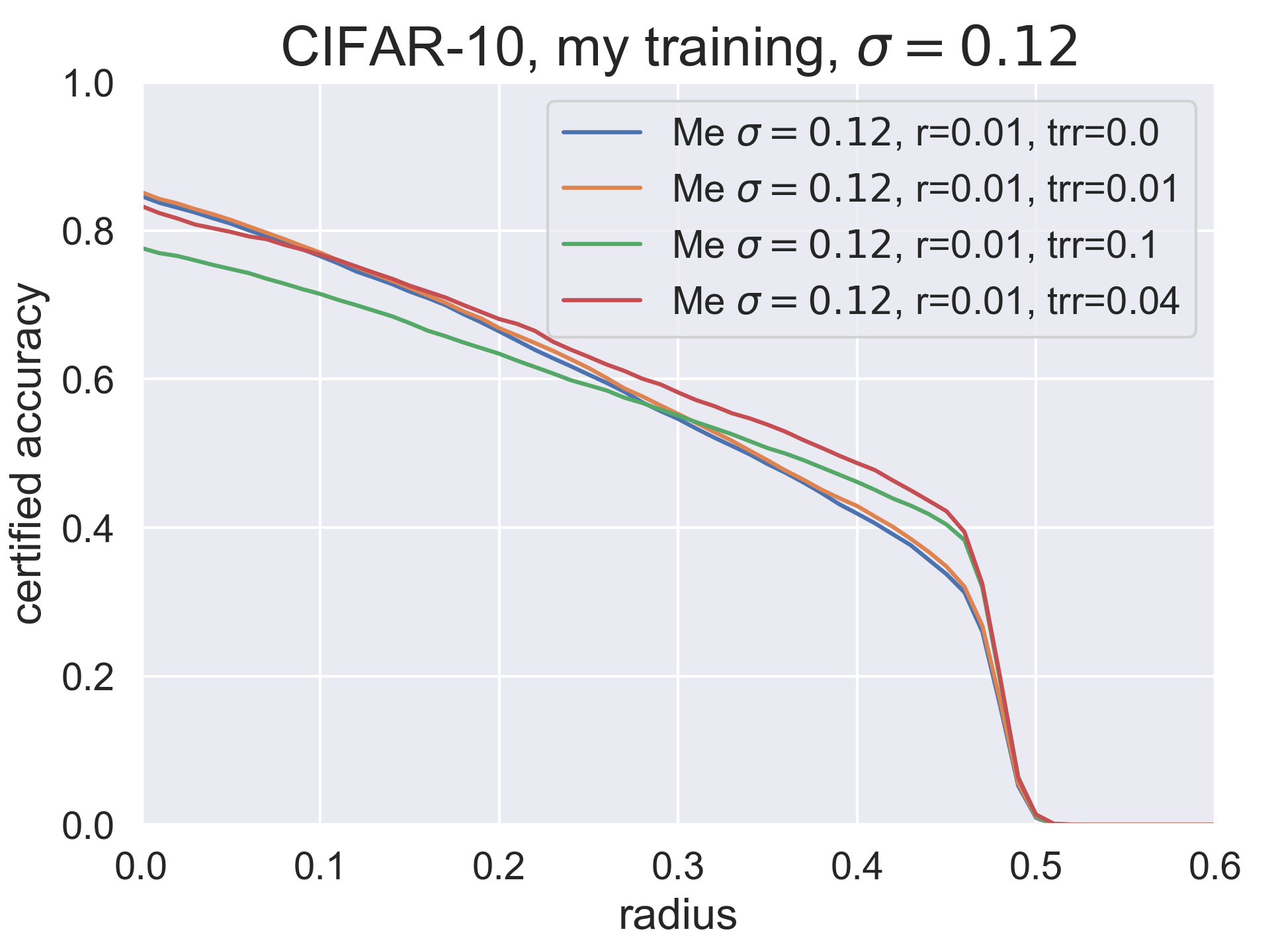}
    \end{minipage}
    \begin{minipage}[b]{0.32\linewidth}
        \includegraphics[width=\textwidth]{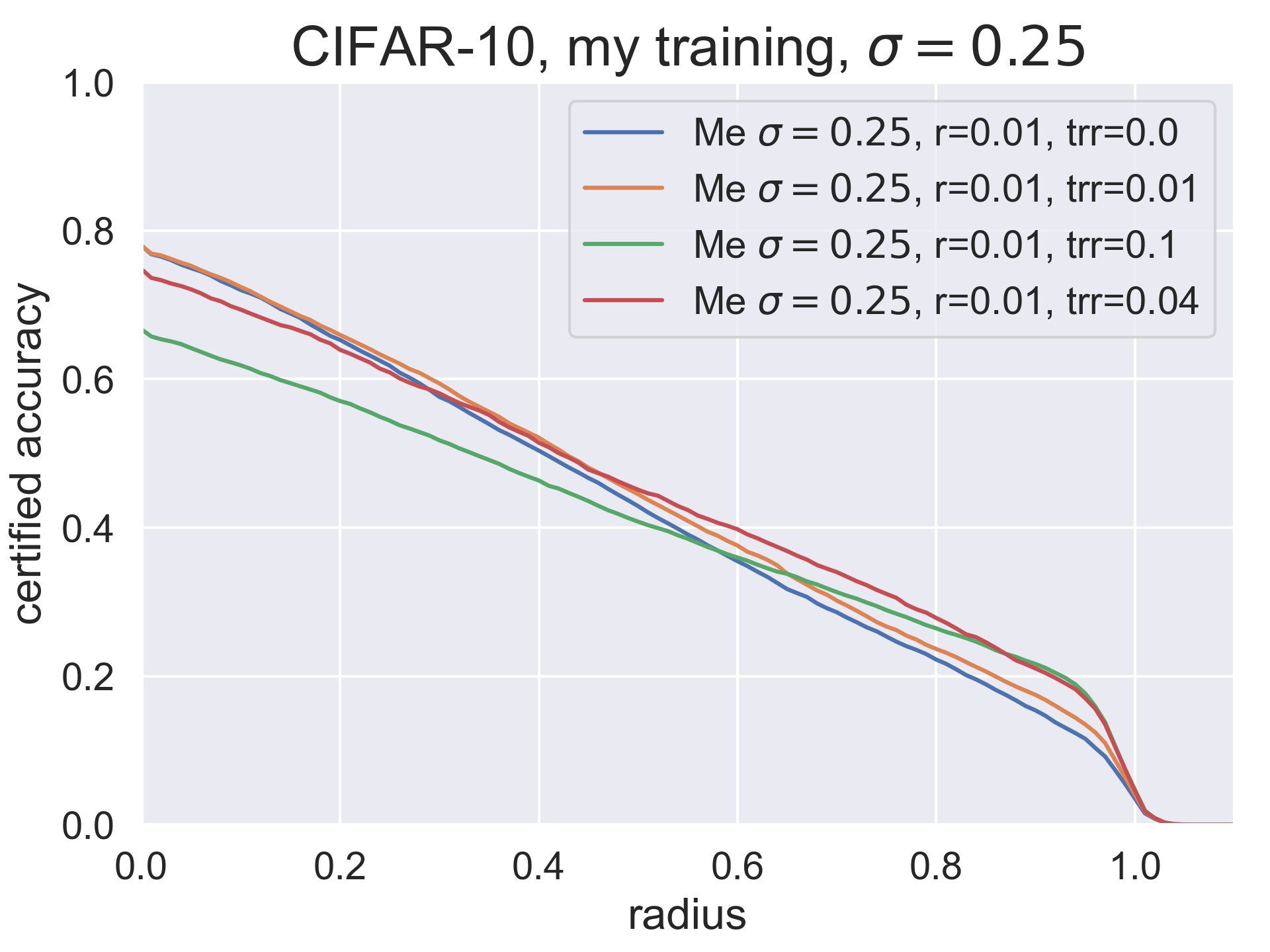}
    \end{minipage}
    \begin{minipage}[b]{0.32\linewidth}
        \includegraphics[width=\textwidth]{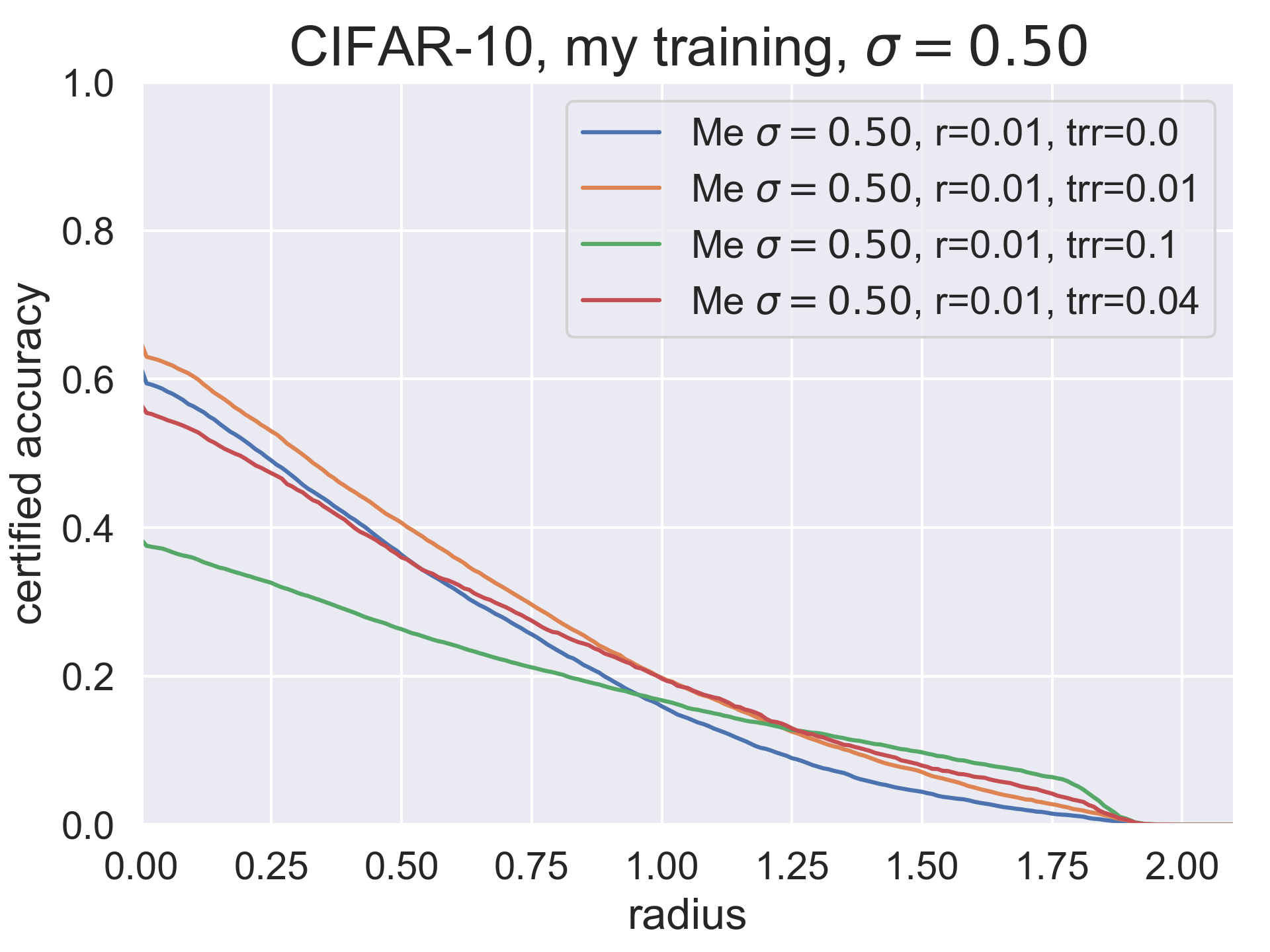}
    \end{minipage}
    \caption{The certified accuracies of our procedure on CIFAR10 for $\sigma_b= 0.12, 0.25, 0.50$, rate $r=0.01$ and training rate $trr= 0.0, 0.01, 0.04, 0.1$.} 
    \label{me, the trainings}
\end{figure}

From the results, we judge, that our training procedure works satisfactorily well. It can generally outperform the constant $\sigma$ training, yet the standard accuracy vs.\ robustness trade-off is present in some cases. If we train with small train-time rate, the improvement of the certified accuracies is not pronounced (the case for $\sigma_b=\sigma=0.50$ is slightly misleading, since such a configuration is just a result of the variance of clean accuracy w.r.t different traning runs) enough, but we also don't lose almost any clean accuracy. Increasing the rate to $trr=0.04$ results in much more pronounced improvements in high certified accuracies, yet also comes at a prize of clean accuracy drop, especially for large $\sigma$ levels. Even bigger training rate, such as $trr=0.1$ seems to be too big and does not bring almost any improvement over the rate $trr=0.04$, yet loses a large amount of clean accuracy. 

These results suggest, that the input-dependent training with a carefully chosen training rate for $\sigma(x)$ can lead to significant improvements in certifiable robustness. However, it is important to note, that the optimal $trr$ seems to be dependent on the $\sigma_b$, therefore for each value of $\sigma_b$, some effort has to be invested to find the optimal hyperparameters. 

Besides, we were also interested, whether using an input-dependent $\sigma(x)$ during training influences the class-wise accuracy balance. In Table~\ref{training class-wise std devs} we report the standard deviations of class-wise accuracies. 

\begin{table}[t!]
\centering
\begin{tabular}{||c||c|c|c||} 
\hline\hline
 & $\sigma=0.12$ & $\sigma=0.25$ & $\sigma=0.50$ \\ 
\hline\hline
$trr=0.00$ & 0.084 & 0.107 & 0.153 \\ 
\hline
$trr=0.01$ & 0.078 & 0.099 & 0.126 \\
\hline
$trr=0.04$ & 0.068 & 0.081 & 0.117 \\
\hline
$trr=0.1$ & 0.088 & 0.099 & 0.230 \\
\hline\hline
\end{tabular}
\vspace{2mm}
\caption{Standard deviations of class-wise accuracies for different levels of $\sigma$ and $trr$, under constant rate $r=0.01$.}
\label{training class-wise std devs}
\end{table}

We can observe, that unlike the pure input-dependent evaluation, the input-dependent training is partially capable of mitigating the effects of the shrinking. For instance, the $trr=0.04$ for $\sigma_b=0.12$ provides obvious improvement in establishing class-wise balance. Similarly successful are trainings with $trr=0.01$ for $\sigma_b=0.12$ and both $trr=0.01, 0.04$ for $\sigma_b=0.25$. Also for $\sigma=0.50$ the mitigation is present for small-enough training rates. However, we must emphasize, that if we use too big training rate, the disbalance between class accuracies will be re-established and in some cases even magnified. Therefore, we must be careful to choose the appropriate training rate for the $\sigma_b, r$. 

\subsection{Why Do We Not Compare with the Current State-of-the-art?} 
Since we claim one type of improvement over \citet{cohen2019certified}'s model (experiment-wise) and don't develop new state-of-the-art training method, we don't find it necessary to compare with methods of \citet{salman2019provably} and \citet{zhai2020macer}. It is obvious that we would outperform these methods in the question of \textit{certified accuracy waterfalls}, since these methods focus on the training phase. Since we do not outperform RS by \citet{cohen2019certified} neither in terms of the clean accuracies nor in terms of class-wise accuracies, it is not our belief that we would outperform the two modern methods in these metrics. Moreover, we find the comparison with \citet{cohen2019certified}'s work most appropriate, since we extend the theory built by them. 

\subsection{Ablations}

Even though our results so far might look impressive, we can't claim that it is fully due to our particular method until we exclude the possibility, that some different effects play an essential role in the improvement over \cite{cohen2019certified}'s work. 

To investigate, whether our particular method dominates the contribution to the performance boost, we conduct several ablation studies - first, we study the variance of our evaluations and trainings, second, we study the effect of input-dependent test-time randomized smoothing, and third, we study the effect of input-dependent train-time data augmentation. 

\subsubsection{Variance of the Evaluation} \label{ssec: ablation eval variance}

To find out, whether there is a significant variance in the evaluation of certified radiuses, we conduct a simple experiment - we train a single model on CIFAR10 and evaluate our method on this model for the very same setup of parameters multiple times. This way, the only present stochasticity is in the Monte-Carlo sampling, which influences the evaluation of certified radiuses. We pick the parameters as follows: $\sigma_b=0.50, r=0.01, trr=0.0$, since the $\sigma_b=0.50$ turns out to have biggest variance in the training. The results are depicted in Figure~\ref{ablation eval var}.

\begin{figure}[t!]
    \centering
    \includegraphics[width=0.60\textwidth]{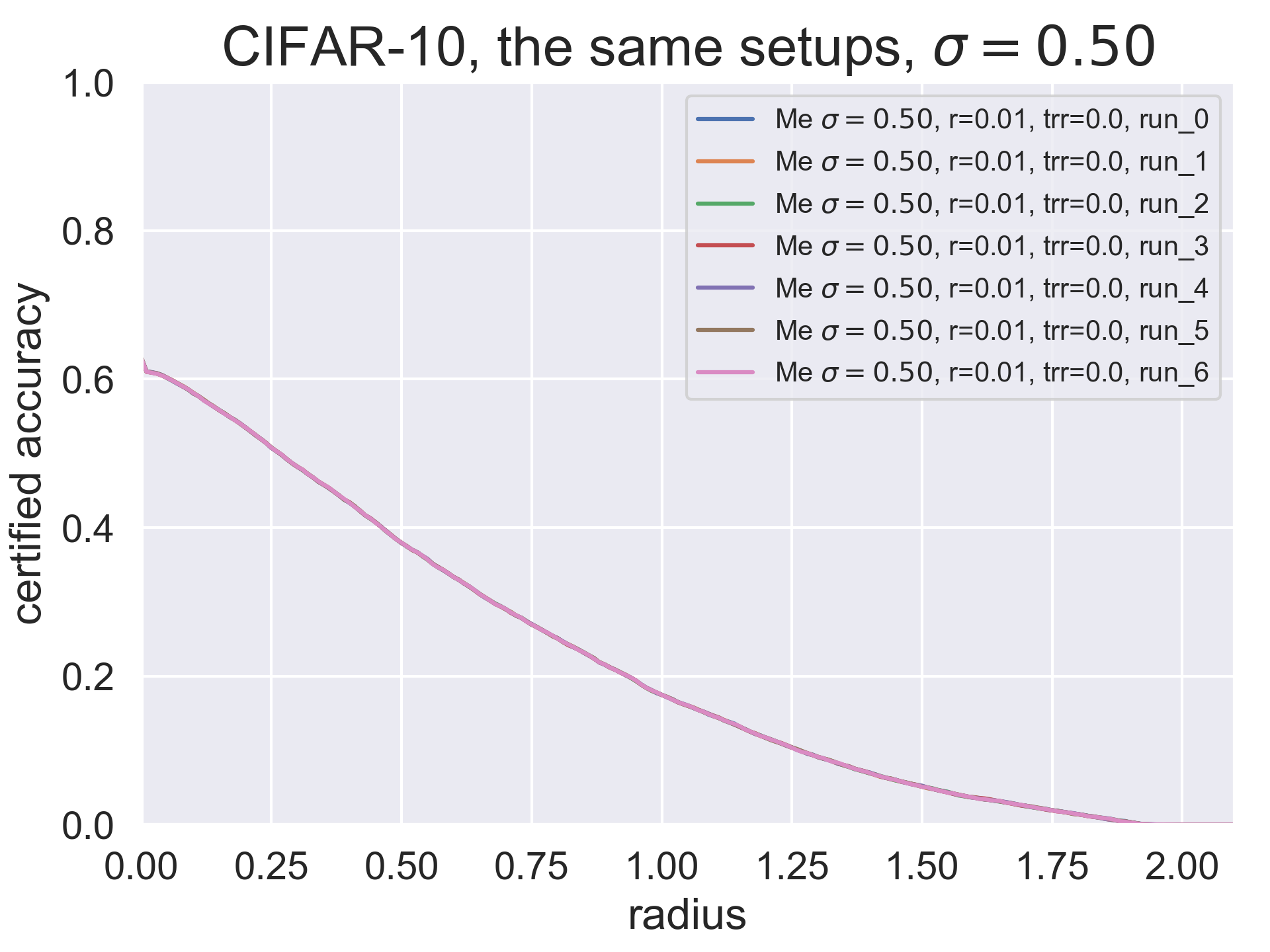}
    \caption{The variance of evaluation. Parameters are $\sigma_b=0.50, r=0.01, trr=0.0$, the evaluated model is the same for all runs. There are 7 runs on CIFAR10.}
    \label{ablation eval var}
\end{figure}

From the results, it is obvious that the variance in the evaluation phase is absolutely negotiable. Therefore, there is no need to run the same evaluation setup more times. 
\subsubsection{Variance of the Training} \label{ssec: ablation train variance}

To estimate the variance of the training, we train several models for one specific training setup and evaluate them with the same evaluation setup (knowing, that there is no variance in the evaluation phase, this is equivalent to measuring directly the training variance). We pick our classical non-constant $\sigma(x)$ for the evaluation, but we train with constant variance data augmentation. The concrete parameters we work with are: $\sigma_b \in \{0.12, 0.25, 0.50\}, r=0.01, trr=0.0$ and we run 9 trainings for each of these parameter configurations. Then we run full certification to not only see the variance in clean accuracy, but also the variance in the certified radiuses. The results are depicted in Figure~\ref{ablation train var}.

\begin{figure}[t!]
    \centering
    \begin{minipage}[b]{0.32\linewidth}
        \includegraphics[width=\textwidth]{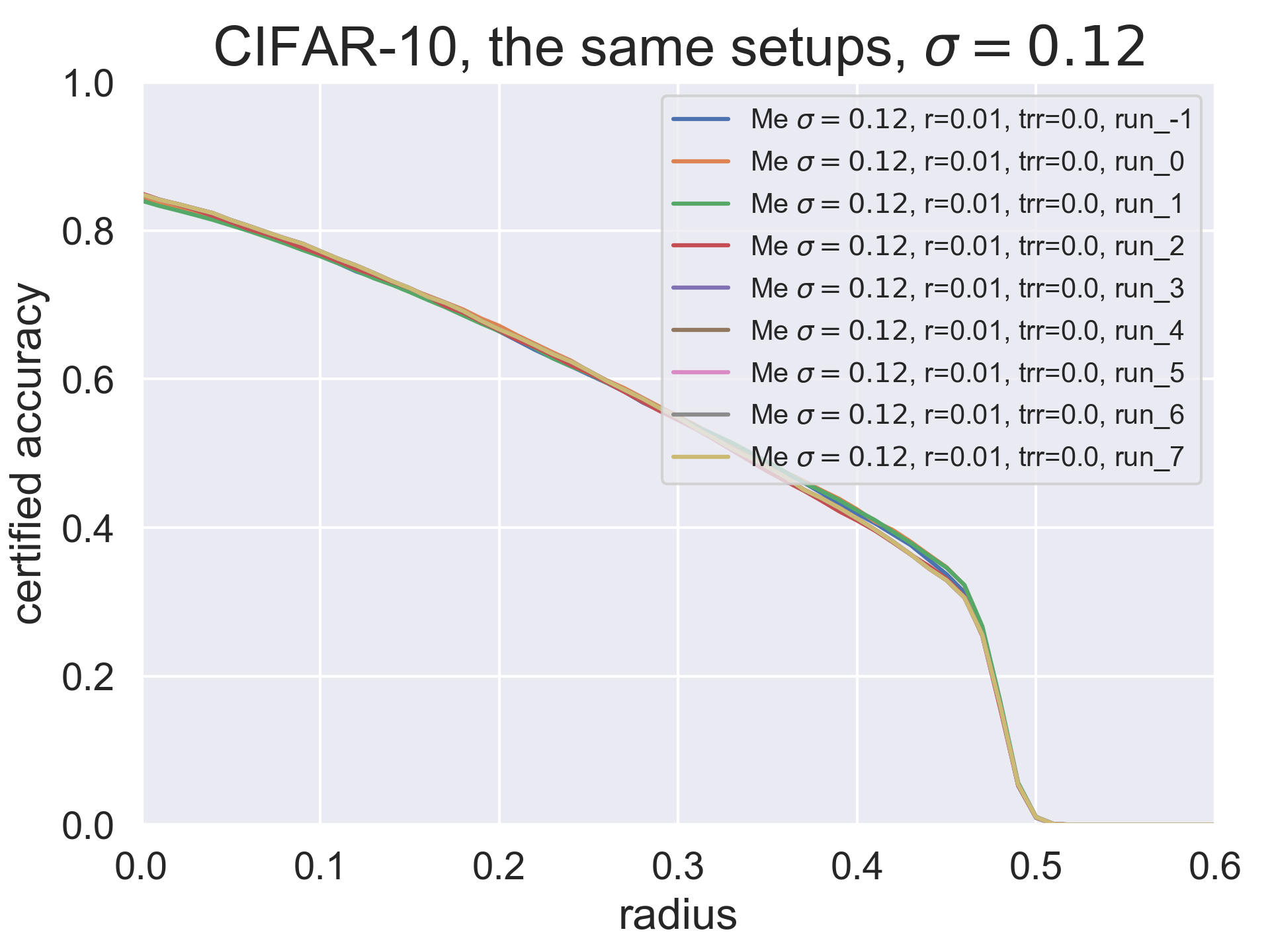}
    \end{minipage}
    \begin{minipage}[b]{0.32\linewidth}
        \includegraphics[width=\textwidth]{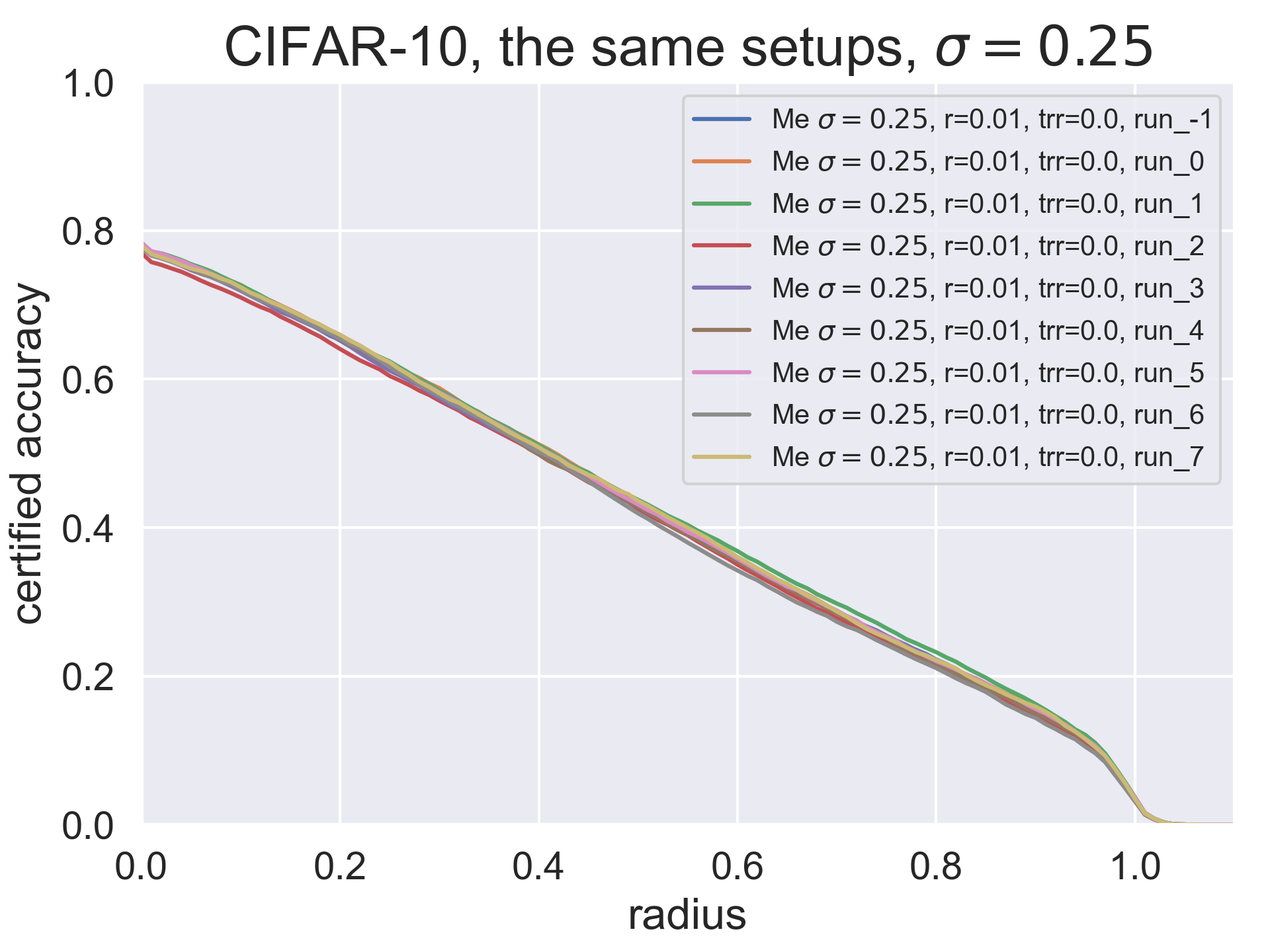}
    \end{minipage}
    \begin{minipage}[b]{0.32\linewidth}
        \includegraphics[width=\textwidth]{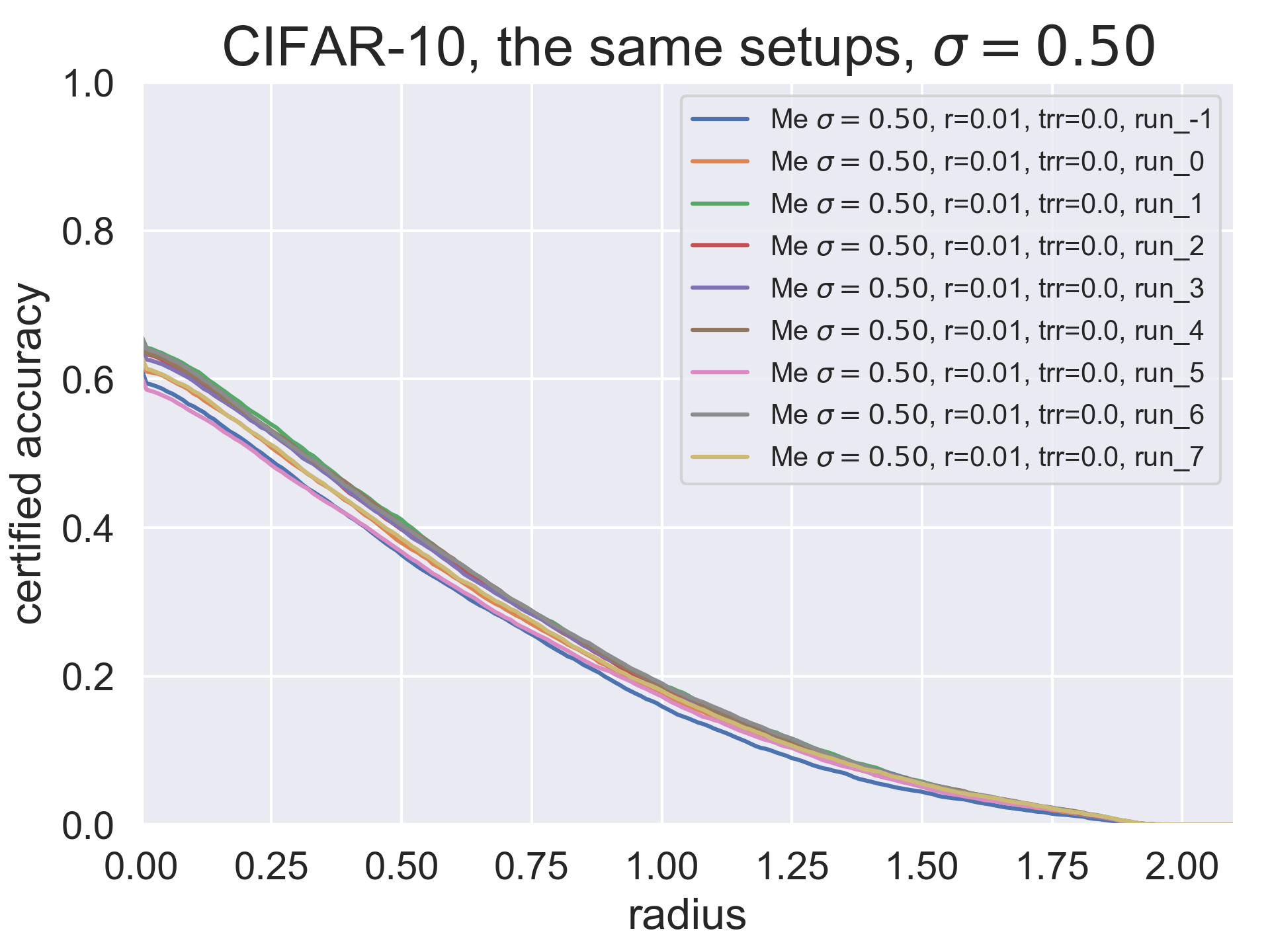}
    \end{minipage}
    \caption{The certified accuracies of our procedure on CIFAR10 for $\sigma_b= 0.12, 0.25, 0.50$, rate $r=0.01$ and training rate $trr= 0.0$ evaluated on 9 different trained models for each of the setups.} 
    \label{ablation train var}
\end{figure}

From the figures we see, that the variance of the training is strongly $\sigma_b$-dependent. Most volatile clean accuracy is present for the case $\sigma_b=0.50$. However, fortunately, the biggest variability is present for the clean accuracy and the curves seem to be less scattered in the areas of high certified radiuses. The concrete standard deviations of clean accuracies are in Table~\ref{ablation training var acc table}. The standard deviations of clean accuracies for MNIST dataset and the same parameters are in Table~\ref{ablation training var acc table MNIST}. 

\begin{table}[t!]
\centering
\begin{tabular}{||c||c|c|c||} 
\hline\hline
 & $\sigma=0.12$ & $\sigma=0.25$ & $\sigma=0.50$ \\ 
\hline\hline
accuracy & 0.61\% & 0.40\% & 1.86\% \\ 
\hline
abstention rate & 0.17\% & 0.34\% & 0.59\% \\
\hline
misclassification rate & 0.60\% & 0.24\% & 1.48\% \\
\hline\hline
\end{tabular}
\vspace{2mm}
\caption{Standard deviations of clean accuracies, abstention rates and misclassification rates for 9 runs of each parameter configuration on CIFAR10.}
\label{ablation training var acc table}
\end{table}

\begin{table}[t!]
\centering
\begin{tabular}{||c||c|c|c||} 
\hline\hline
 & $\sigma=0.12$ & $\sigma=0.25$ & $\sigma=0.50$ \\ 
\hline\hline
accuracy & 0.036\% & 0.042\% & 0.044\% \\ 
\hline
abstention rate & 0.037\% & 0.027\% & 0.058\% \\
\hline
misclassification rate & 0.043\% & 0.029\% & 0.021\% \\
\hline\hline
\end{tabular}
\vspace{2mm}
\caption{Standard deviations of clean accuracies, abstention rates and misclassification rates for 8 runs of each parameter configuration on MNIST.}
\label{ablation training var acc table MNIST}
\end{table}

Since the differences in accuracies of different methods are very subtle, it is hard to obtain statistically trustworthy results. For instance, given, that the standard deviation $0.4\%$ is the true standard deviation of the $\sigma_b=0.25$ runs, we would need 16 runs to decrease it to a standard deviation of $0.1\%$, which might be considered to be precise-enough. To do the same in the case of $\sigma_b=0.50$ on CIFAR10, we would roughly need 400 runs to decrease the standard deviation below $0.1\%$. Therefore, the results we provide in the subsequent subsections, being the average of ``just'' 8 runs, have to be taken just modulo variance in the results, which might still be considerable. 

\subsubsection{Effect of Input-dependent Evaluation} \label{ssec: ablation eval}

In this ablation study, we compare the certification method for particular $\sigma_b, r=0.01, trr=0.0$ with the constant-$\sigma$ certification method with $C\sigma_{b}, r=0.0, trr=0.0$, where $C$ is an appropriate constant. The motivation behind such an experiment is, that our $\sigma(x)$ is generally bigger than $\sigma_b$, but originally, we compare this method to constant $\sigma=\sigma_b$ evaluation. Therefore, in average, samples in our method enjoy bigger values of $\sigma(x)$. Natural question is, whether we cannot obtain the same performance boost using just the constant $\sigma$ method with $C\sigma_{b}>\sigma_b$ set to such value, which roughly corresponds to the average of $\sigma(x_i)$ for $x_i, i \in \{1, \dots, T\}$ being the test set. The problem of using bigger $C\sigma_{b}$ is, that we encounter performance drop and more severe case of shrinking, but we need to check, to what extent is the performance drop present in the input-dependent $\sigma(x)$ method. Comparing the performance drops of larger constant $C\sigma_{b}$ and input-dependent $\sigma(x)$, which is in average larger (but in average the same as the $C\sigma_{b}$), we will be able to answer, to what degree is the usage of input-dependent $\sigma(x)$ really justified. If we remind ourselves, that $$\sigma(x)=\sigma_b \exp\left(r \left(\frac{1}{k} \left( \sum\limits_{x_i \in \mathcal{N}_k(x)} \norm{x-x_i}\right) -m\right)\right),$$ then we see, that the constant $C$ we are searching for is the average (or rather median) value of $$\exp\left(r \left(\frac{1}{k} \left( \sum\limits_{x_i \in \mathcal{N}_k(x)} \norm{x-x_i}\right) -m\right)\right).$$

Fortunately, empirically, the mean and median of the above expression are roughly equal for both CIFAR10 and MNIST, so we are not forced to choose between them. For $r=0.01, m=5$, we choose the rounded value of $C=\exp(0.05)$ on CIFAR10. For $r=0.01, m=1.5$ as in MNIST, the constant is set to $C=1.035$. In the end, the values of $C\sigma$ used in this experiment are $C\sigma= 0.126, 0.263, 0.53$ for CIFAR10 and $C\sigma= 0.124, 0.258, 0.517$ for MNIST. To obtain a fair comparison, though, we evaluate the input-dependent $\sigma(x)$ evaluation strategy on models trained with constant $C\sigma_b$ standard deviation of gaussian augmentation. This is because this level of $\sigma$ is equal to the mean value of the $\sigma(x)$ and we believe, that such a training data augmentation standard deviation is more consistent with our $\sigma(x)$ function. We provide the plots of single-run evaluations of certified accuracies for CIFAR10 in Figure~\ref{ablation non-constant sigma effect} and for MNIST in Figure~\ref{ablation non-constant sigma effect MNIST}. The models on which we evaluate differ because for the increased constant $\sigma$ evaluations we needed to also use an increased level of the training data augmentation variance. 

\begin{figure}[t!]
    \centering
    \begin{minipage}[b]{0.32\linewidth}
        \includegraphics[width=\textwidth]{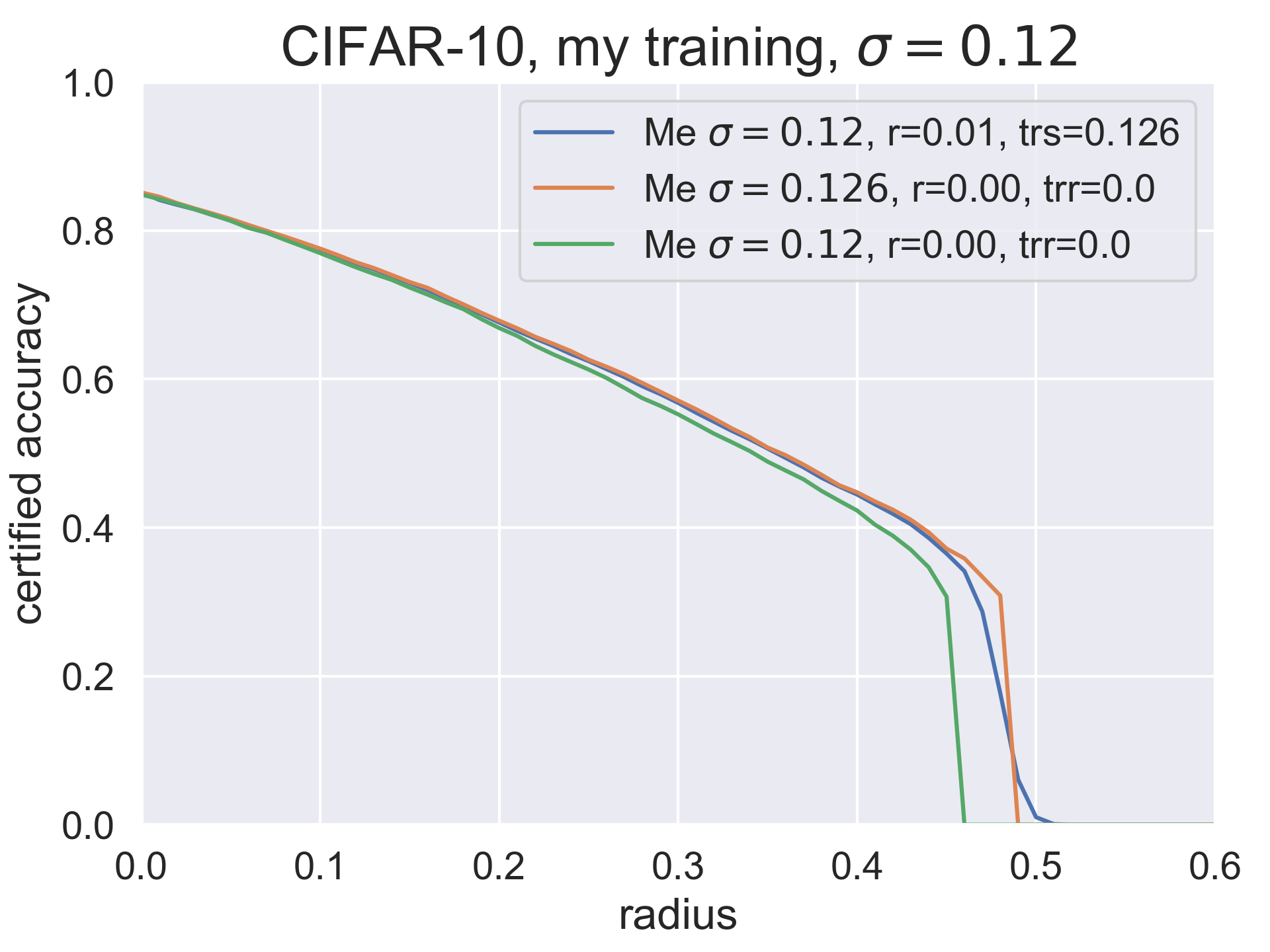}
    \end{minipage}
    \begin{minipage}[b]{0.32\linewidth}
        \includegraphics[width=\textwidth]{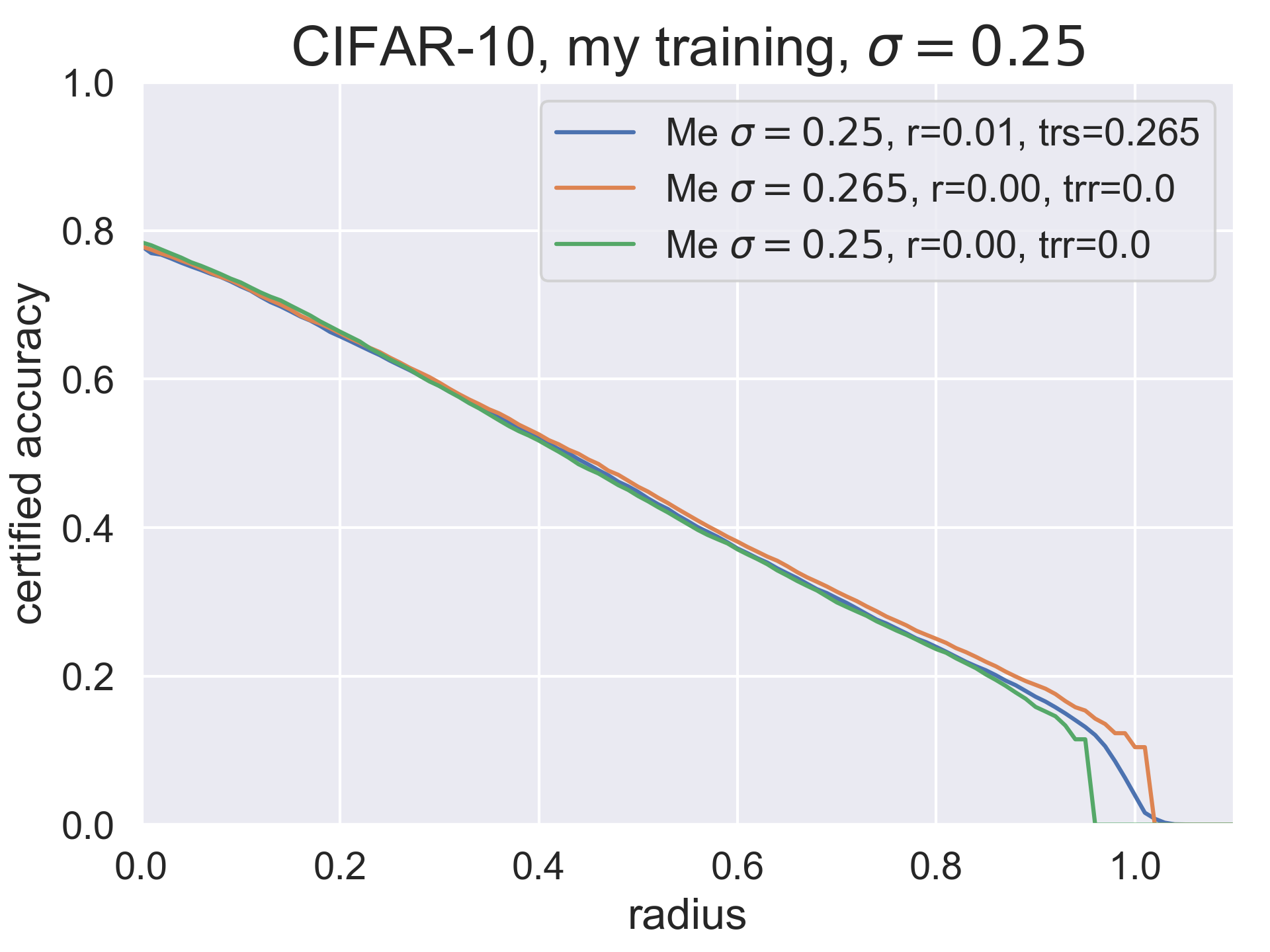}
    \end{minipage}
    \begin{minipage}[b]{0.32\linewidth}
        \includegraphics[width=\textwidth]{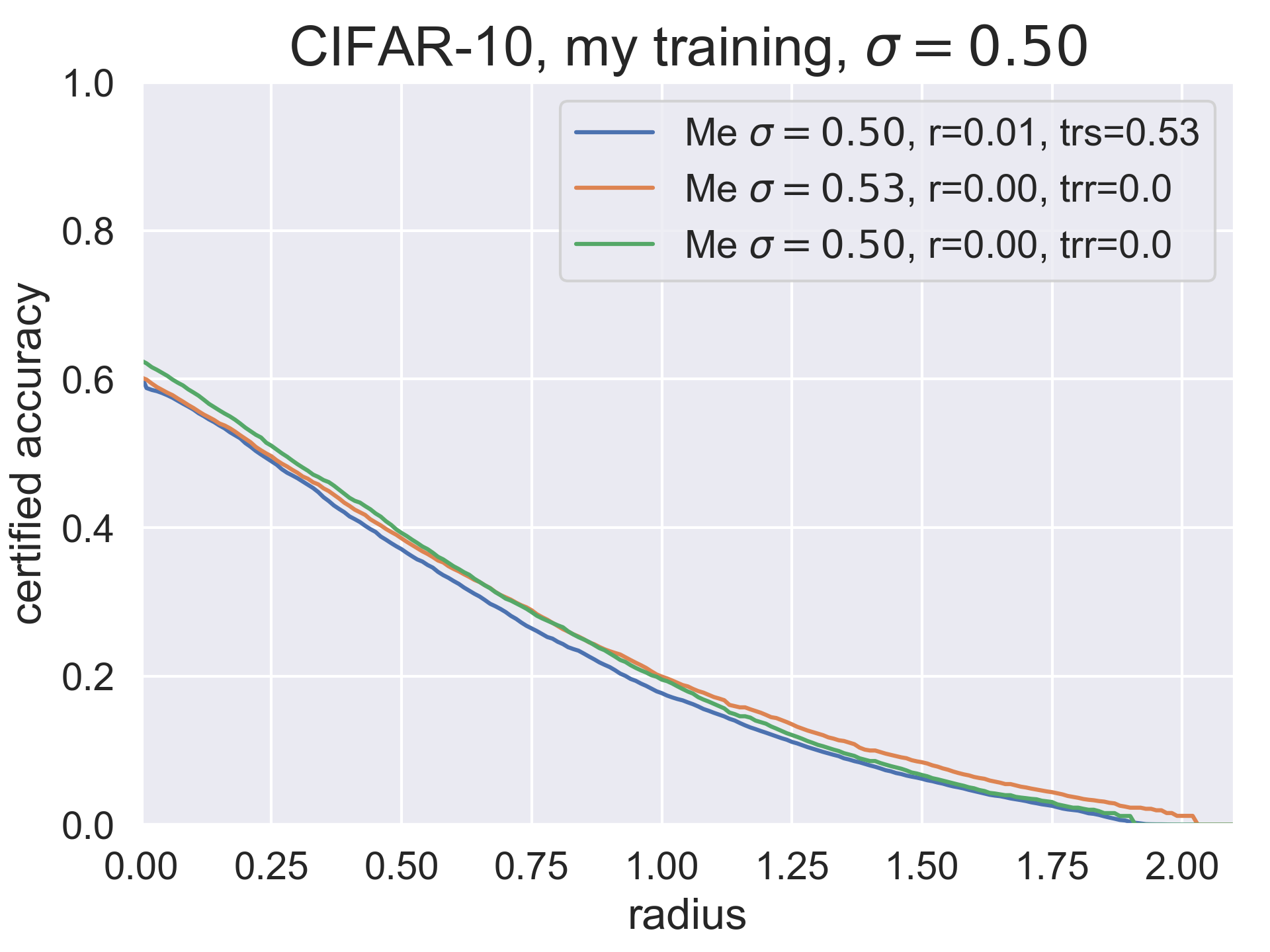}
    \end{minipage}
    \caption{The certified accuracies of our procedure on CIFAR10 for $\sigma_b= 0.12, 0.25, 0.50$, rate $r=0.01$ and constant, yet increased $C\sigma_b$ training variance, compared to certified accuracies of the constant $\sigma$ method for $\sigma=\sigma_b= 0.12, 0.25, 0.50$ and also $\sigma=C\sigma_b= 0.126, 0.265, 0.53$. Evaluated on a single training.} 
    \label{ablation non-constant sigma effect}
\end{figure}

\begin{figure}[t!]
    \centering
    \begin{minipage}[b]{0.32\linewidth}
        \includegraphics[width=\textwidth]{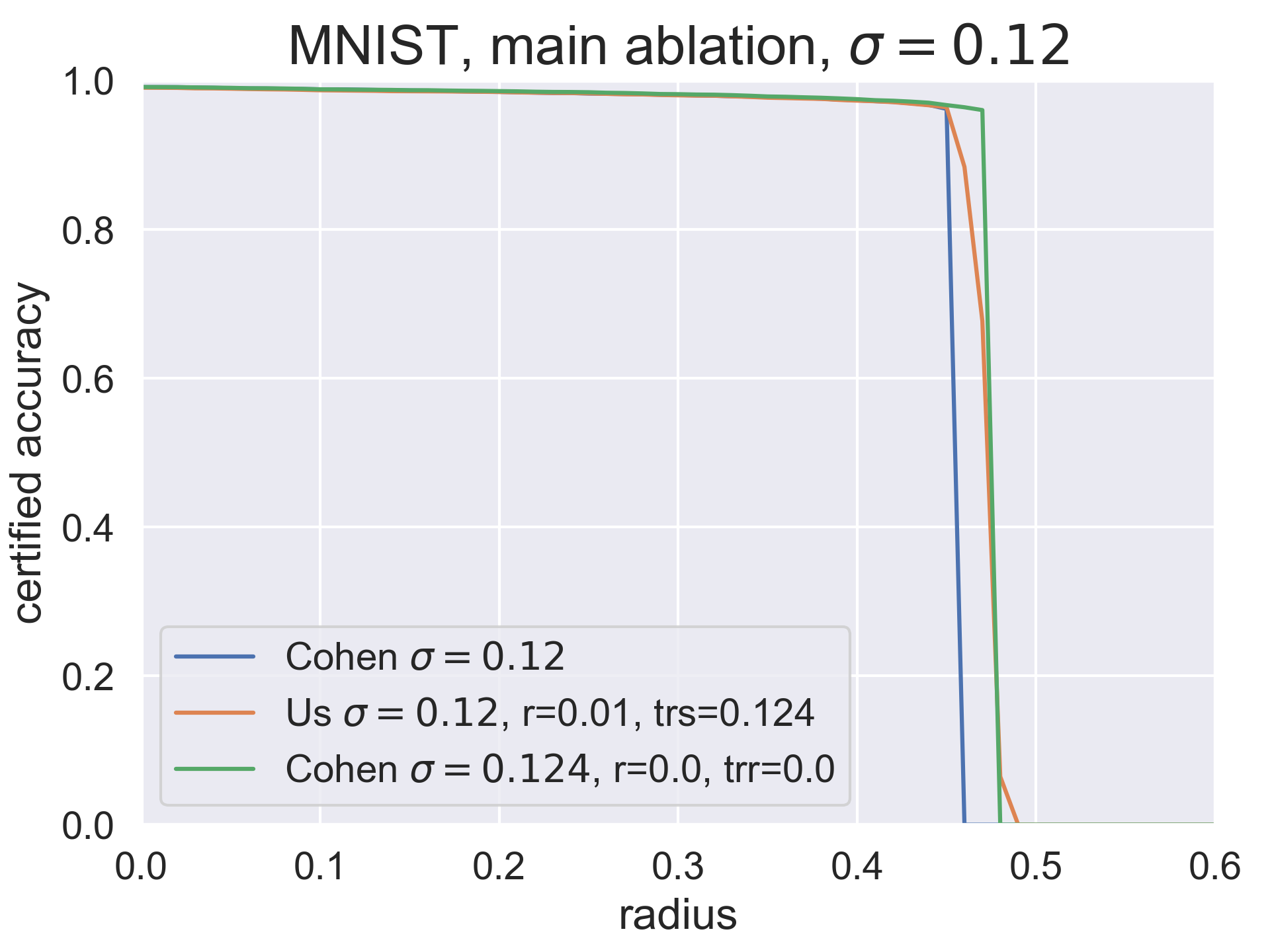}
    \end{minipage}
    \begin{minipage}[b]{0.32\linewidth}
        \includegraphics[width=\textwidth]{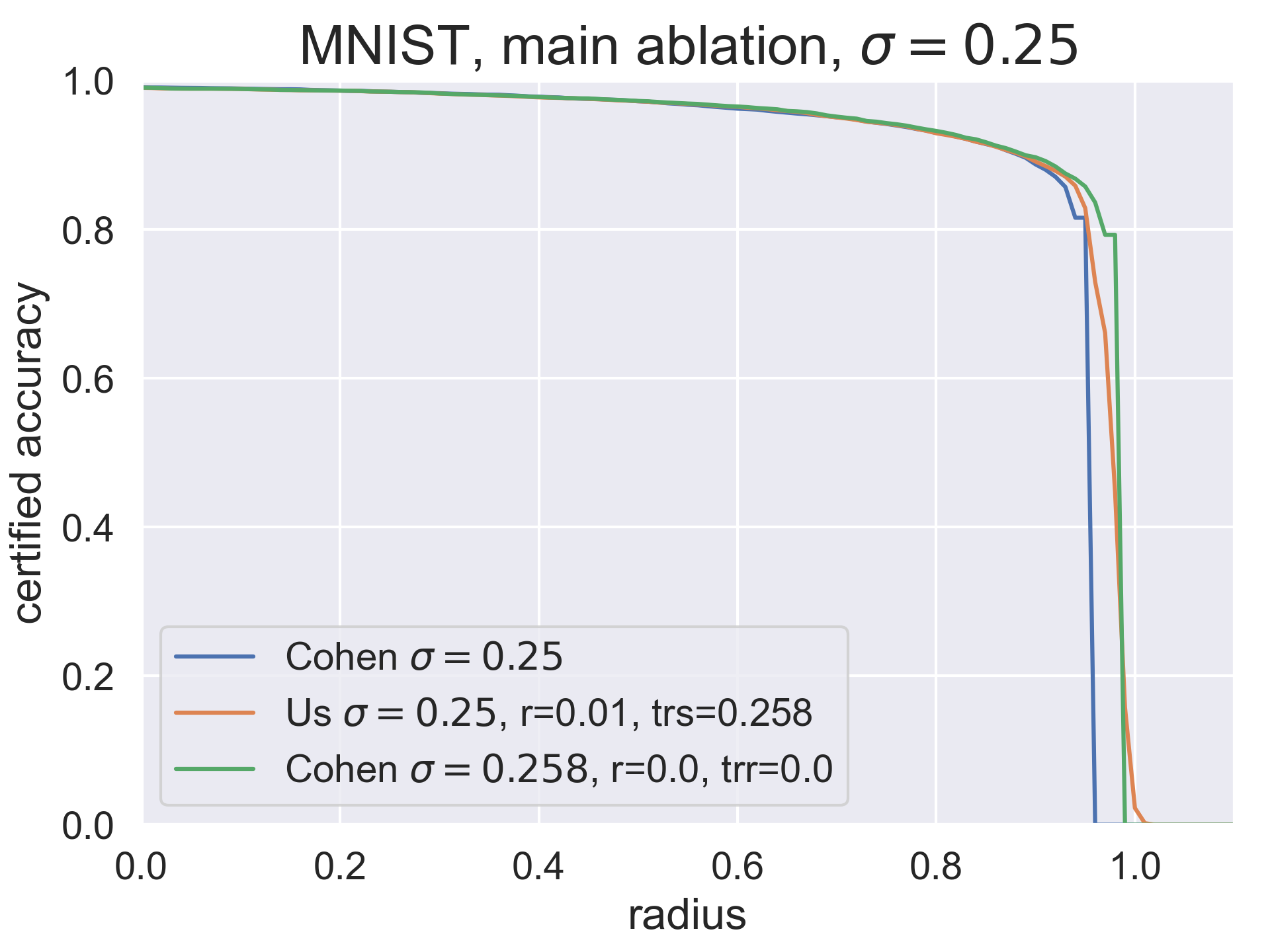}
    \end{minipage}
    \begin{minipage}[b]{0.32\linewidth}
        \includegraphics[width=\textwidth]{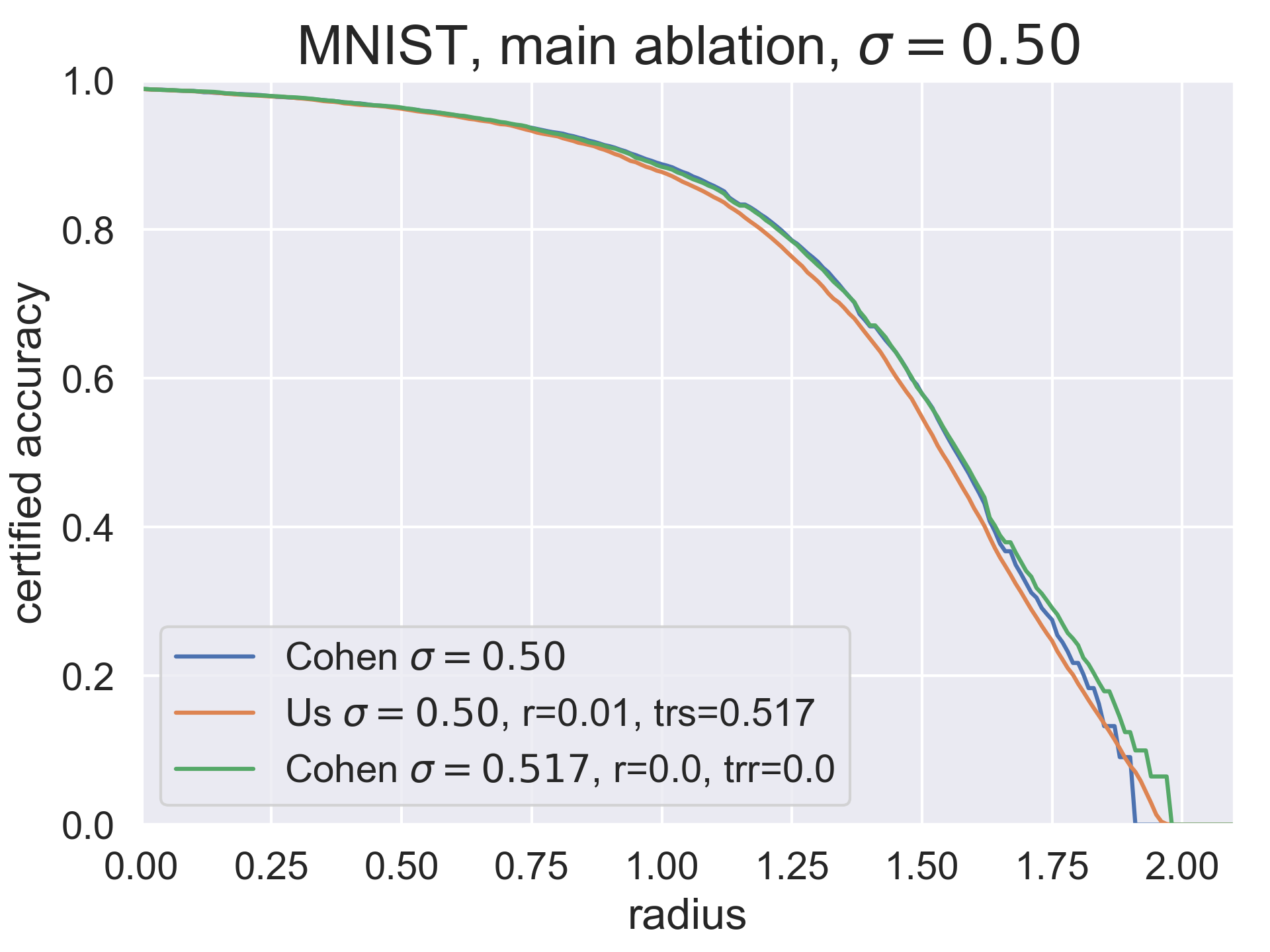}
    \end{minipage}
    \caption{The certified accuracies of our procedure on MNIST for $\sigma_b= 0.12, 0.25, 0.50$, rate $r=0.01$ and constant, yet increased $C\sigma_b$ training variance, compared to certified accuracies of the constant $\sigma$ method for $\sigma=\sigma_b= 0.12, 0.25, 0.50$ and also $\sigma=C\sigma_b= 0.124, 0.258, 0.517$. Evaluated on a single training.} 
    \label{ablation non-constant sigma effect MNIST}
\end{figure}

From the figures, it is obvious, that our method is not able to outperform the constant $\sigma$ method using the same mean $\sigma$ in terms of certified accuracy, not even for our strongest $\sigma_b=0.12$. This fact might not be in general bad news, if we demonstrated, that our method suffers from less pronounced accuracy drop or less pronounced disbalance in class-wise accuracies. To find out, we measure average accuracies of the evaluation strategies from 8 runs for each, as well as average class-wise accuracy standard deviations from 8 runs. The results are provided in Tables~\ref{ablation eval accuracy table} and \ref{ablation eval class-wise accuracy var table} for CIFAR10 and Tables~\ref{ablation eval accuracy table MNIST} and \ref{ablation eval class-wise accuracy var table MNIST} for MNIST.

\begin{table}[t!]
\centering
\begin{tabular}{||c||c|c|c||} 
\hline\hline
 & $\sigma=0.12$ & $\sigma=0.25$ & $\sigma=0.50$ \\ 
\hline\hline
$r=0.01, trs$ increased & 0.852 & 0.780 & 0.673 \\ 
\hline
$r=0.00$ classical & 0.851 & 0.792 & 0.674 \\
\hline
$r=0.00$ increased & 0.853 & 0.780 & 0.673 \\
\hline\hline
\end{tabular}
\vspace{2mm}
\caption{Clean accuracies for both input-dependent and constant $\sigma$ evaluation strategies on CIFAR10.}
\label{ablation eval accuracy table}
\end{table}

\begin{table}[t!]
\centering
\begin{tabular}{||c||c|c|c||} 
\hline\hline
 & $\sigma=0.12$ & $\sigma=0.25$ & $\sigma=0.50$ \\ 
\hline\hline
$r=0.01, trs$ increased & 0.076 & 0.099 & 0.120 \\ 
\hline
$r=0.00$ classical & 0.076 & 0.097 & 0.122 \\
\hline
$r=0.00$ increased & 0.076 & 0.101 & 0.123 \\
\hline\hline
\end{tabular}
\vspace{2mm}
\caption{Class-wise accuracy standard deviations for both input-dependent and constant $\sigma$ evaluation strategies on CIFAR10.}
\label{ablation eval class-wise accuracy var table}
\end{table}

\begin{table}[t!]
\centering
\begin{tabular}{||c||c|c|c||} 
\hline\hline
 & $\sigma=0.12$ & $\sigma=0.25$ & $\sigma=0.50$ \\ 
\hline\hline
$r=0.01, trs$ increased & 0.9913 & 0.9905 & 0.9885 \\ 
\hline
$r=0.00$ classical & 0.9914 & 0.9907 & 0.9886 \\
\hline
$r=0.00$ increased & 0.9914 & 0.9904 & 0.9885 \\
\hline\hline
\end{tabular}
\vspace{2mm}
\caption{Clean accuracies for both input-dependent and constant $\sigma$ evaluation strategies on MNIST.}
\label{ablation eval accuracy table MNIST}
\end{table}

\begin{table}[t!]
\centering
\begin{tabular}{||c||c|c|c||} 
\hline\hline
 & $\sigma=0.12$ & $\sigma=0.25$ & $\sigma=0.50$ \\ 
\hline\hline
$r=0.01, trs$ increased & 0.00757 & 0.00798 & 0.00929 \\ 
\hline
$r=0.00$ classical & 0.00751 & 0.00778 & 0.00934 \\
\hline
$r=0.00$ increased & 0.00750 & 0.00798 & 0.00925 \\
\hline\hline
\end{tabular}
\vspace{2mm}
\caption{Class-wise accuracy standard deviations for both input-dependent and constant $\sigma$ evaluation strategies on MNIST. Printed are multiples of 100 of the real values.}
\label{ablation eval class-wise accuracy var table MNIST}
\end{table}

As for CIFAR10, except for $\sigma=\sigma_b=0.25$, the differences in accuracies between different evaluation strategies are so small, that we cannot consider them to be statistically significant. Even though the difference for $\sigma=\sigma_b=0.25$ is high, it is still not possible to draw some definite conclusions, especially for the difference between the input-dependent $\sigma(x)$ and the increased constant $C\sigma_b$ evaluations. In general, it is not easy to judge, whether our method possesses some advantage (or disadvantage) over the increased $C\sigma_b$ method in terms of clean accuracy. Similar conclusions can be drawn in the context of the shrinking phenomenon. Here, the differences are also very small, but unlike in the comparison with \cite{cohen2019certified} models, where we evaluate our input-dependent $\sigma(x)$ method on classifiers trained with inconsistent data-augmentation variance, here we observe the general trend, that our method is able to outperform the increased constant $C\sigma_b$ evaluation. This is good news and it confirms our suspicion, that the bad results from Subsection~\ref{ssec: cifar10 cohen comparison} could come from the train-test $\sigma$ inconsistency. 

The results on MNIST suggest similar conclusions for the accuracy vs.\ robustness tradeoff. Similarly, the $\sigma=\sigma_b=0.12, 0.50$ are not telling much, and for $\sigma=\sigma_b=0.25$, the differences are still rather small (yet the standard deviation of the results should be $\sim 0.0001$, so it is rather on the edge). The conclusions for the shrinking phenomenon are a bit more pesimistic than in the case of CIFAR10. Here we don't see any improvement over the constant $\sigma$, not even the one with increased $\sigma$ level. 

\subsubsection{Effect of Input-dependent Training} \label{ssec: ablation train}

In this last ablation study, we compare our input-dependent data augmentation for particular $\sigma_b, r$ and particular training rate $trr$ with the constant $C\sigma_b$ data augmentation, where the training rate $trr$ is set to 0. The strategy for choosing the constant $C$ is exactly the same as in the first experiment. Particularly, we evaluate our method with $r=0.01$ and $\sigma_b=0.12, 0.25, 0.50$, trained with the same level of $\sigma_b$ and training rate $trr=0.01$ with the evaluations using $r=0.01$ and $\sigma_b=0.12, 0.25, 0.50$ during test time, while during train time using training rate $trr=0.0$, but using the constant $\sigma= 0.126, 0.263, 0.53$ for CIFAR10 and $\sigma= 0.124, 0.258, 0.517$ for MNIST. This way, we compensate for the ``increased levels of $\sigma(x)$'' with respect to $\sigma_b$. We present our comparisons in the Figure~\ref{ablation input-dependent training effect} for CIFAR10 and \ref{ablation input-dependent training effect MNIST} for MNIST, providing the evaluations with $r=0.01$, $\sigma_b=0.12, 0.25, 0.50$ and the same $\sigma_b$ and $trr=0.0$ during train time as a reference. 

\begin{figure}[t!]
    \centering
    \begin{minipage}[b]{0.32\linewidth}
        \includegraphics[width=\textwidth]{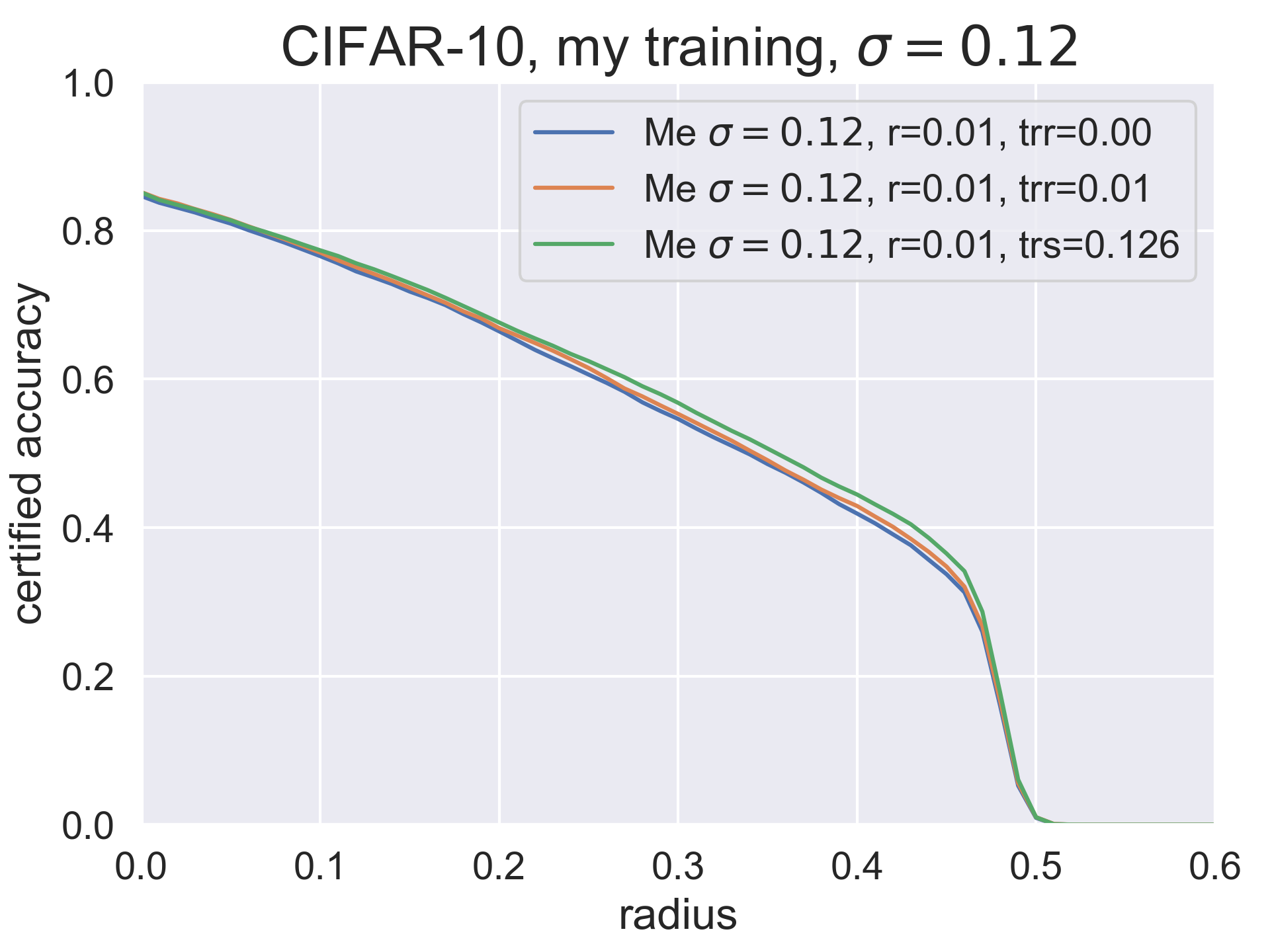}
    \end{minipage}
    \begin{minipage}[b]{0.32\linewidth}
        \includegraphics[width=\textwidth]{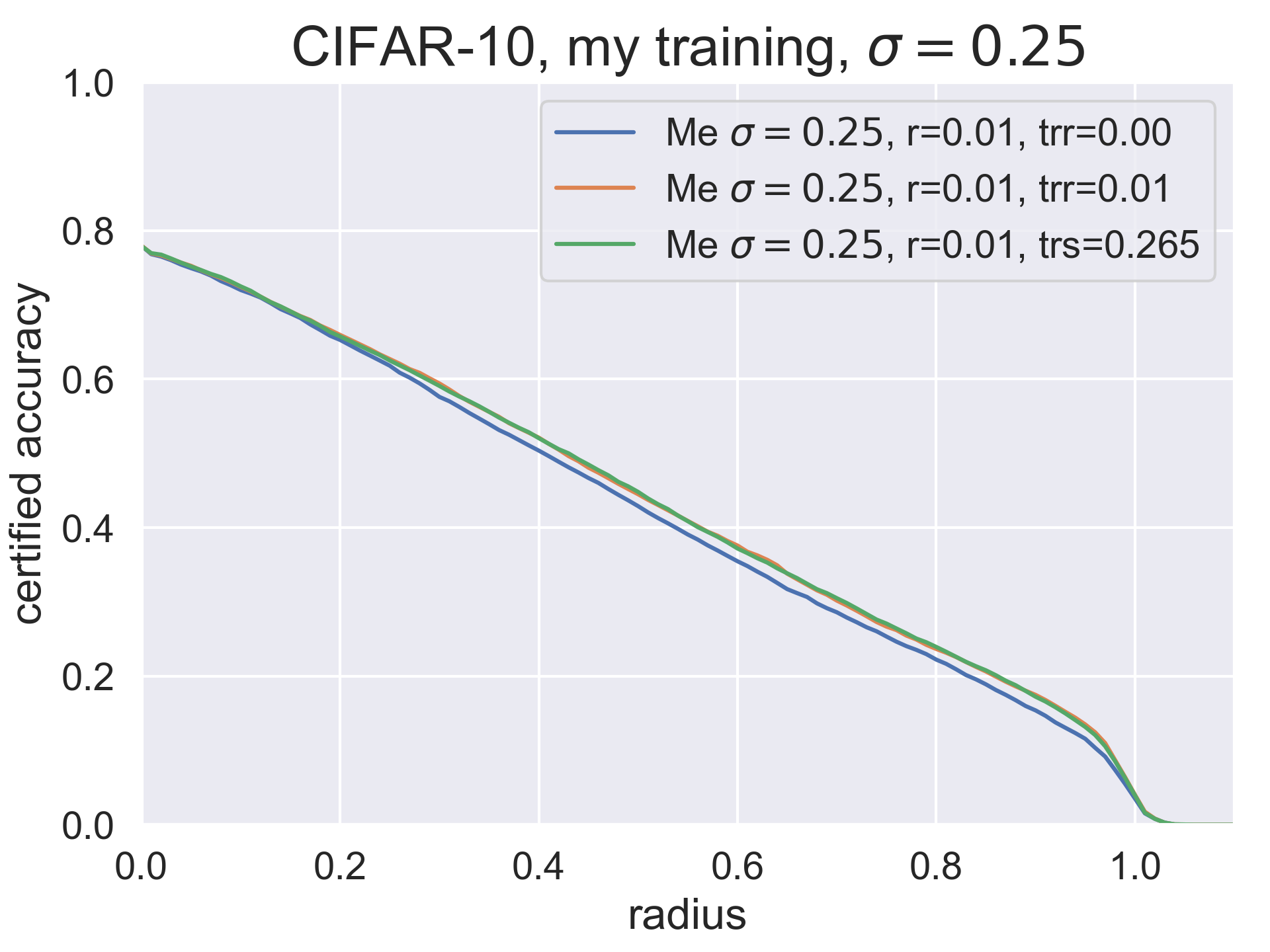}
    \end{minipage}
    \begin{minipage}[b]{0.32\linewidth}
        \includegraphics[width=\textwidth]{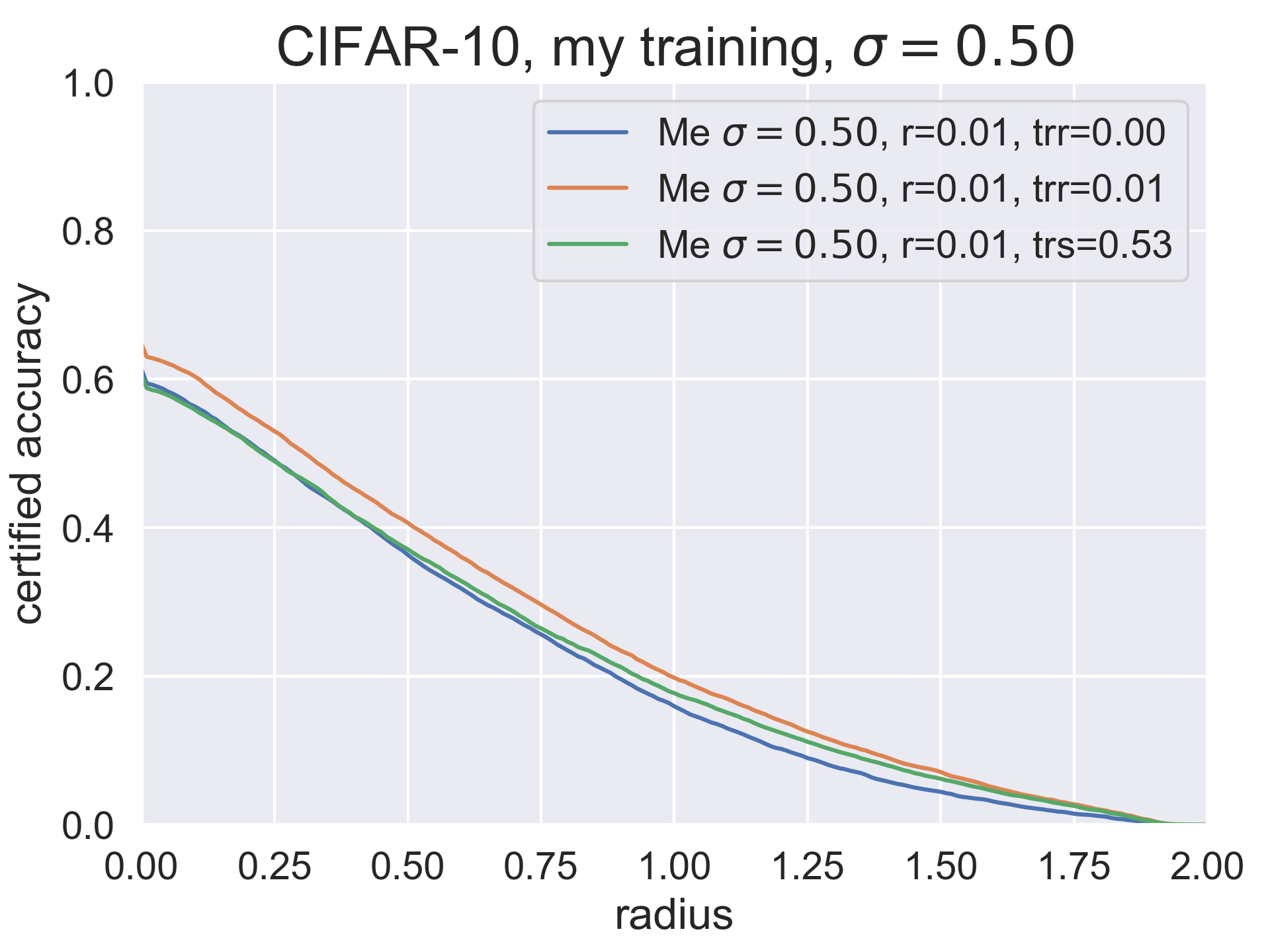}
    \end{minipage}
    \caption{The certified radiuses on CIFAR10 of the non-constant $\sigma(x)$ method with rate $r=0.01$, but different training strategies. Used training strategies are input-dependent training with the same $\sigma(x)$ function and constant-$\sigma$ training with either $\sigma_b$ or $C\sigma_b$ variance level. Evaluations are being done from single run.} 
    \label{ablation input-dependent training effect}
\end{figure}

\begin{figure}[t!]
    \centering
    \begin{minipage}[b]{0.32\linewidth}
        \includegraphics[width=\textwidth]{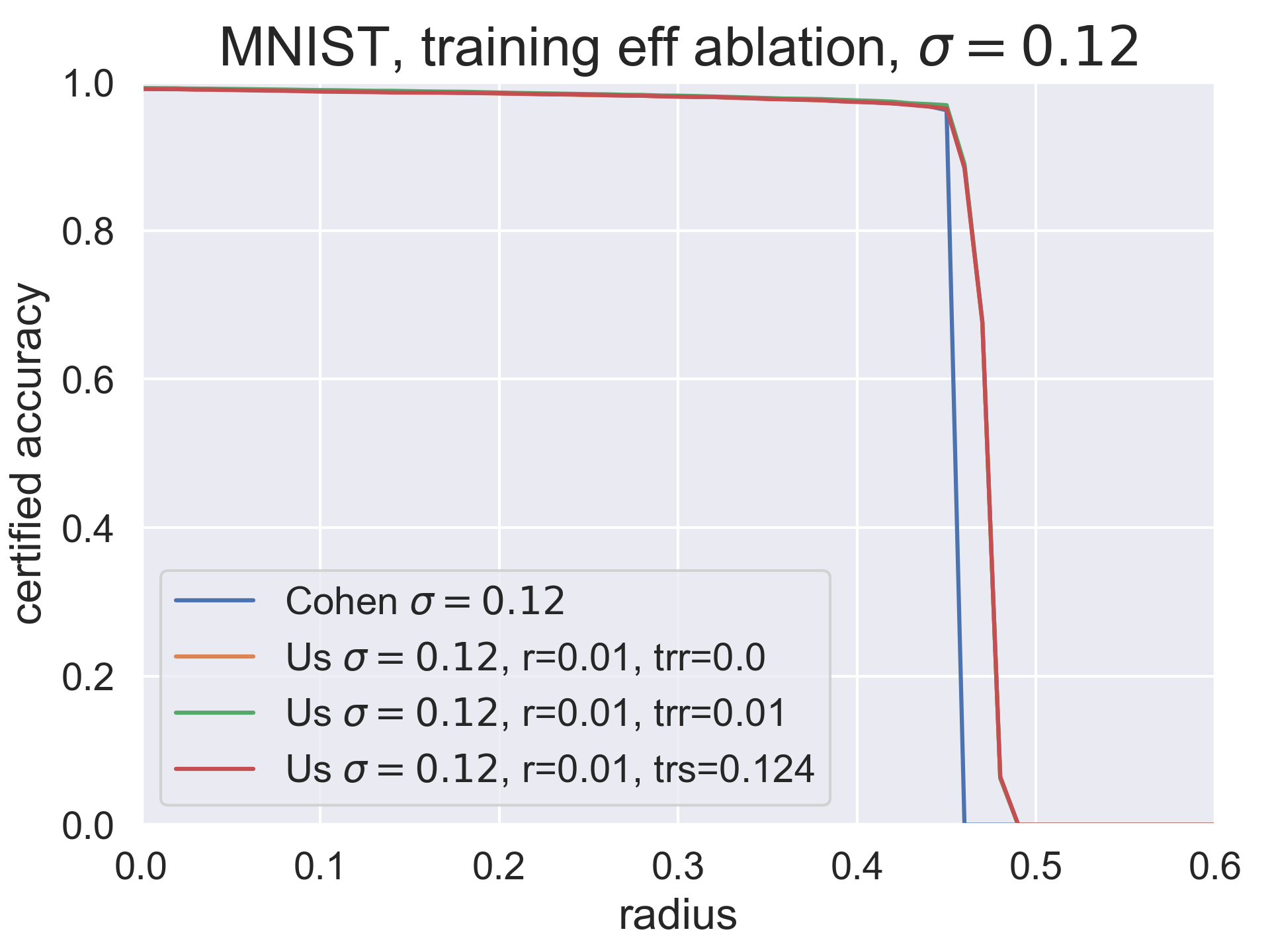}
    \end{minipage}
    \begin{minipage}[b]{0.32\linewidth}
        \includegraphics[width=\textwidth]{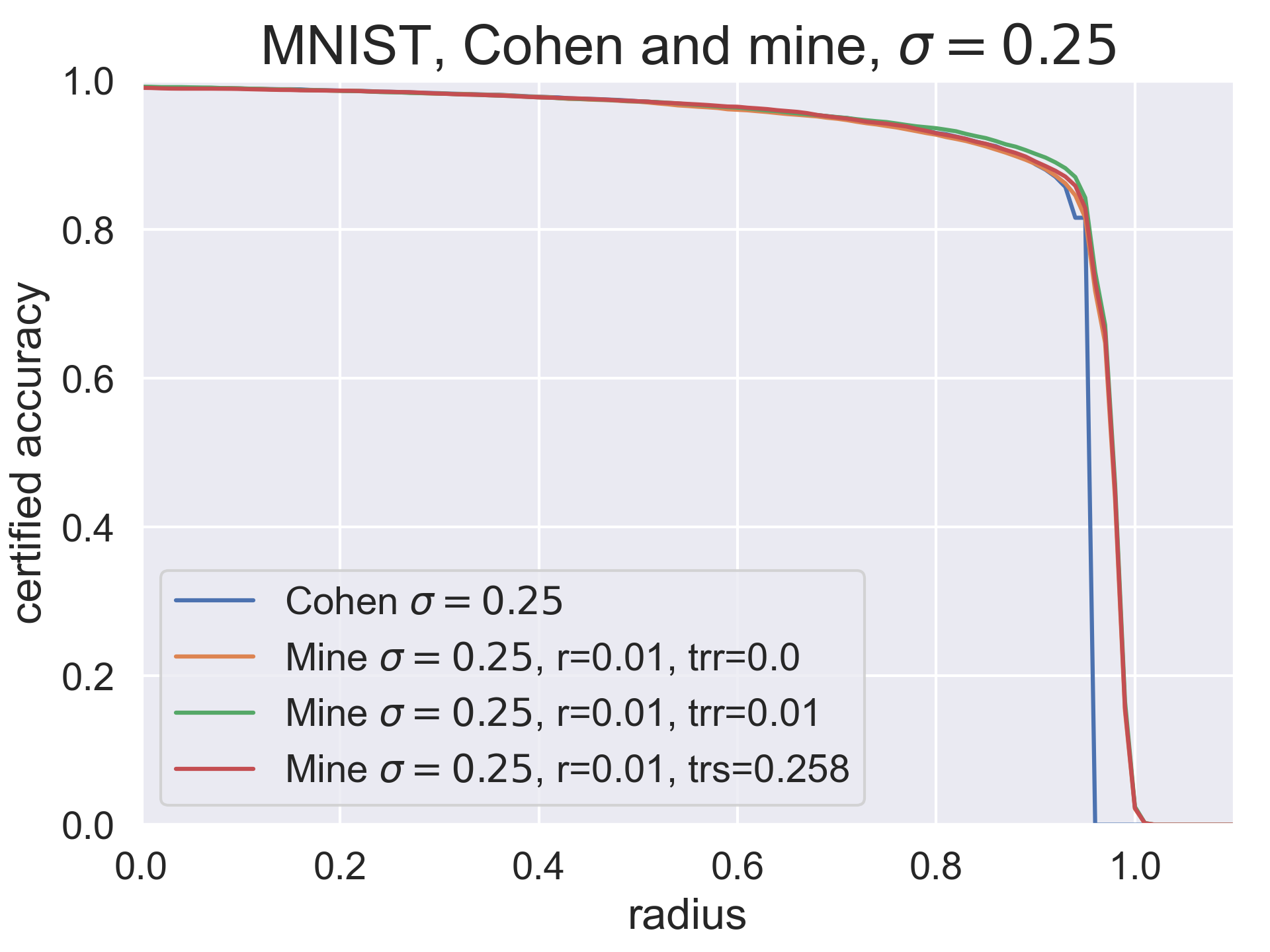}
    \end{minipage}
    \begin{minipage}[b]{0.32\linewidth}
        \includegraphics[width=\textwidth]{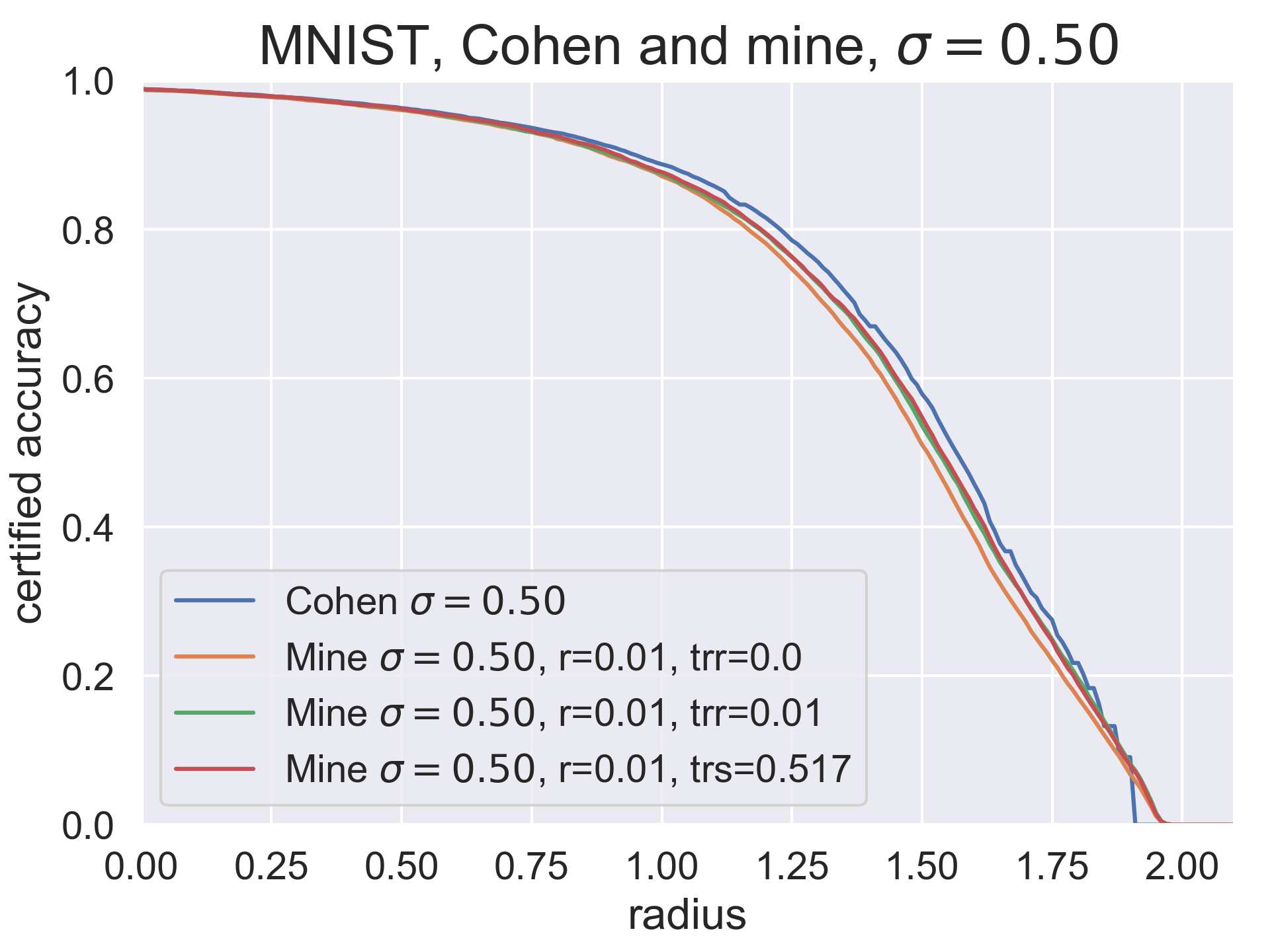}
    \end{minipage}
    \caption{The certified radiuses on MNIST of the non-constant $\sigma(x)$ method with rate $r=0.01$, but different training strategies. Used training strategies are input-dependent training with the same $\sigma(x)$ function and constant-$\sigma$ training with either $\sigma_b$ or $C\sigma_b$ variance level. Evaluations are being done from single run.} 
    \label{ablation input-dependent training effect MNIST}
\end{figure}

The certified accuracy results for the CIFAR10 and the MNIST differ a bit. For CIFAR10 training with rate $r=0.01$ is does not overperform the constant $C\sigma_b$ training. For $\sigma_b=0.12$, the constant $C\sigma_b$ training clearly outperforms the input-dependent training. For $\sigma_b=0.25$, these two training strategies seem to have almost identical performances. For $\sigma_b=0.50$, the input-dependent $\sigma(x)$ strategy outperforms the constant $\sigma$ ones, but now we know, that it is purely due to the variance in the training. On the other hand on MNIST, we either have very similar performance or even slightly outperform the constant $\sigma$ training. 

\begin{table}[t!]
\centering
\begin{tabular}{||c||c|c|c||} 
\hline\hline
 & $\sigma=0.12$ & $\sigma=0.25$ & $\sigma=0.50$ \\ 
\hline\hline
$trr=0.01$ & 0.843 & 0.780 & 0.671 \\ 
\hline
$trr=0.00$ classical & 0.849 & 0.790 & 0.670 \\
\hline
$trr=0.00$ increased & 0.852 & 0.780 & 0.673 \\
\hline\hline
\end{tabular}
\vspace{2mm}
\caption{Clean accuracies for both input-dependent and constant $\sigma$ training strategies on CIFAR10.}
\label{ablation train accuracy table}
\end{table}

\begin{table}[t!]
\centering
\begin{tabular}{||c||c|c|c||} 
\hline\hline
 & $\sigma=0.12$ & $\sigma=0.25$ & $\sigma=0.50$ \\ 
\hline\hline
$trr=0.01$ & 0.080 & 0.101 & 0.121 \\ 
\hline
$trr=0.00$ classical & 0.080 & 0.105 & 0.135 \\
\hline
$trr=0.00$ increased & 0.076 & 0.099 & 0.120 \\
\hline\hline
\end{tabular}
\vspace{2mm}
\caption{Class-wise accuracy standard deviations for both input-dependent and constant $\sigma$ training strategies on CIFAR10.}
\label{ablation train class-wise accuracy var table}
\end{table}

\begin{table}[t!]
\centering
\begin{tabular}{||c||c|c|c||} 
\hline\hline
 & $\sigma=0.12$ & $\sigma=0.25$ & $\sigma=0.50$ \\ 
\hline\hline
$trr=0.01$ & 0.9912 & 0.9910 & 0.9883 \\ 
\hline
$trr=0.00$ classical & 0.9914 & 0.9906 & 0.9884 \\
\hline
$trr=0.00$ increased & 0.9913 & 0.9905 & 0.9885 \\
\hline\hline
\end{tabular}
\vspace{2mm}
\caption{Clean accuracies for both input-dependent and constant $\sigma$ training strategies on MNIST.}
\label{ablation train accuracy table MNIST}
\end{table}

\begin{table}[t!]
\centering
\begin{tabular}{||c||c|c|c||} 
\hline\hline
 & $\sigma=0.12$ & $\sigma=0.25$ & $\sigma=0.50$ \\ 
\hline\hline
$trr=0.01$ & 0.00757 & 0.00800 & 0.00947 \\ 
\hline
$trr=0.00$ classical & 0.00743 & 0.00789 & 0.00929 \\
\hline
$trr=0.00$ increased & 0.00757 & 0.00798 & 0.00929 \\
\hline\hline
\end{tabular}
\vspace{2mm}
\caption{Class-wise accuracy standard deviations for both input-dependent and constant $\sigma$ training strategies on MNIST.}
\label{ablation train class-wise accuracy var table MNIST}
\end{table}

Looking at the accuracy and standard deviation Tables~\ref{ablation train accuracy table}, \ref{ablation train class-wise accuracy var table}, \ref{ablation train accuracy table MNIST} and \ref{ablation train class-wise accuracy var table MNIST}, we can deduce the following. In terms of clean accuracy, the input-dependent training strategy performs worst in most of the cases, even though the differences in performance might not be statistically significant. We see, that we would need far more evaluations to see some clear pattern. However, these results are definitely not good news for the use of input-dependent $\sigma(x)$ during training. 

In terms of the class-wise accuracy standard deviation, we again see countering results for CIFAR10 and MNIST datasets. For CIFAR10 the input-dependent $\sigma(x)$ clearly outperforms the smaller constant $\sigma_b$ training method, particularly for $\sigma_b= 0.50$. However, the constant $C\sigma_b$ method seem to outperform even the input-dependent $\sigma(x)$. For MNIST, the smaller constant $\sigma_b$ outperforms both other methods, while they are rather similar. 

Together with findings from previous sections, these results suggest, that usage of this particular design of input-dependent $\sigma(x)$ might not be worthy until a more precise evaluation is conducted. However, the combination of input-dependent test-time evaluation with constant, yet increased train-time augmentation is possibly the strongest combination that can be achieved using input-dependent sigma at all (especially for CIFAR10).

\section{Proofs} \label{appF: proofs}
\begin{lemma}[Neyman-Pearson] \label{Neyman-Pearson}
Let $X, Y$ be random vectors in $\mathbb{R}^N$ with densities $x, y$. Let $h: \mathbb{R}^N \xrightarrow[]{} \{0, 1\}$ be a random or deterministic function. Then, the following two implications hold: 
\begin{enumerate}
    \item If $S=\left\{z \in \mathbb{R}^N: \frac{y(z)}{x(z)} \le t\right\}$ for some $t>0$ and $\mathbb{P}(h(X)=1)\ge \mathbb{P}(X \in S)$, then $\mathbb{P}(h(Y)=1)\ge \mathbb{P}(Y \in S)$.
    \item If $S=\left\{z \in \mathbb{R}^N: \frac{y(z)}{x(z)} \ge t\right\}$ for some $t>0$ and $\mathbb{P}(h(X)=1)\le \mathbb{P}(X \in S)$, then $\mathbb{P}(h(Y)=1)\le \mathbb{P}(Y \in S)$.
\end{enumerate}
\end{lemma}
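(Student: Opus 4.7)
The plan is to reduce the (possibly random) function $h$ to a single deterministic object by setting $\phi(z) := \mathbb{P}(h(z)=1)$, so that $\mathbb{P}(h(X)=1) = \int \phi(z)\,x(z)\,dz$ and $\mathbb{P}(h(Y)=1) = \int \phi(z)\,y(z)\,dz$. With this reformulation, both cases become comparisons between $\phi$ and the indicator $\mathbf{1}_S$ of the likelihood-ratio region, integrated against either $x$ or $y$.

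The key device is the auxiliary integral
\[
I \;:=\; \int_{\mathbb{R}^N} \bigl(\phi(z) - \mathbf{1}_S(z)\bigr)\bigl(y(z) - t\,x(z)\bigr)\,dz.
\]
The crucial observation is that, in each of the two cases, the two factors of the integrand have matching signs pointwise, so $I \ge 0$ case (1) and $I \le 0$ in case (2). Concretely, in case (1), on $S$ we have $y(z) - t x(z) \le 0$ and $\phi(z) - \mathbf{1}_S(z) \le 0$ (since $\mathbf{1}_S(z)=1$ dominates $\phi \in [0,1]$); on $S^c$ we have $y(z) - t x(z) \ge 0$ and $\phi(z) - \mathbf{1}_S(z) \ge 0$; hence the product is pointwise non-negative. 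Case (2) is symmetric, with the inequalities reversed.

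Now I would expand $I$ linearly:
\[
I \;=\; \bigl(\mathbb{P}(h(Y)=1) - \mathbb{P}(Y \in S)\bigr) \;-\; t\bigl(\mathbb{P}(h(X)=1) - \mathbb{P}(X \in S)\bigr).
\]
In case (1), the hypothesis $\mathbb{P}(h(X)=1) \ge \mathbb{P}(X \in S)$ together with $t > 0$ makes the subtracted term non-positive, so $I \ge 0$ forces $\mathbb{P}(h(Y)=1) \ge \mathbb{P}(Y \in S)$. In case (2) the hypothesis reverses, the subtracted term becomes non-negative, and $I \le 0$ yields $\mathbb{P}(h(Y)=1) \le \mathbb{P}(Y \in S)$.

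The main obstacle is essentially bookkeeping rather than mathematical depth: one must (i) justify the reduction to $\phi$ for random $h$ (this is just Fubini, assuming $h$ is independent of the input randomness, which is the implicit convention), and (ii) check the sign analysis carefully in both cases, keeping track of which inequality in the definition of $S$ flips the direction. No further measure-theoretic subtlety arises because $x$ and $y$ are honest densities, so the set $\{x = 0\}$ can be handled either by convention ($y/x := +\infty$ there, which puts such points into $S$ in case (2) and outside $S$ in case (1)) or by noting it contributes nothing to integrals against $x$, and the comparison on $y$ still works out because the integrand's sign is preserved.
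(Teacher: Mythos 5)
Your proof is correct. The paper itself does not prove this lemma but simply cites \citet{cohen2019certified}; the argument you give is the standard one and is essentially the same as the one in that reference. Your auxiliary integral $I = \int(\phi - \mathbf{1}_S)(y - t x)$ packages the two sign observations into a single non-negativity (resp.\ non-positivity) claim, whereas Cohen et al.\ split $\mathbb{P}(h(Y)=1) - \mathbb{P}(Y\in S)$ into separate integrals over $S$ and $S^c$ and bound each term by $y \lessgtr tx$ before recombining; these are the same computation, organized slightly differently. The reduction to $\phi(z) = \mathbb{P}(h(z)=1)$ for a randomized $h$ is indeed just Fubini under the usual independence convention, and your remark about the set $\{x=0\}$ is the right way to dispose of the boundary of the likelihood ratio.
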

\begin{proof}
See \cite{cohen2019certified}.
\end{proof}

\textbf{Lemma~\ref{np lemma}:}
Out of all possible classifiers $f$ such that ${G_f(x_0)}_B \le p_B = 1-p_A$, the one, for which ${G_{f}(x_0+\delta)}_B$ is maximized is the one, which predicts class $B$ in a region determined by the likelihood ratio: \begin{displaymath} B=\left\{x \in \mathbb{R}^N: \frac{q_1(x)}{q_0(x)} \ge \frac{1}{r}\right\}, \end{displaymath} where $r$ is fixed, such that $\mathbb{P}_0(B)=p_B$. Note, that we use $B$ to denote both the class and the region of that class.
\begin{proof}
Let $f$ be arbitrary classifier. To invoke the Neyman-Pearson Lemma~\ref{Neyman-Pearson}, define $h \equiv f$ (with the only difference, that $h$ goes to $\{0, 1\}$ instead of $\{A, B\}$). Moreover, let $S \equiv B$ and $X \sim \mathcal{N}(x, \sigma_0^2 I), Y \sim \mathcal{N}(x+\delta, \sigma_1^2 I)$. Let also $f^*$ classify $S$ as $B$. Then obviously, $\mathbb{P}(X \in S)=\mathbb{P}_0(B)=p_B$. Since $G_f(x)_B \le p_B$, we have $\mathbb{P}(h(X)=1) \le p_B$. Using directly the second part of Neyman-Pearson Lemma~\ref{Neyman-Pearson}, this will yield $\mathbb{P}(Y \in S) \ge \mathbb{P}(h(Y)=1)$. Rewritten in the words of our setup, $G_{f^*}(x+\delta) \ge G_{f}(x+\delta)$. 
\end{proof}

\textbf{Theorem~\ref{lrt set}:}
If $\sigma_0 > \sigma_1$, then $B$ is a $N$-dimensional ball with the center at $S_>$ and radius $R_>$: $$S_>=x_0+\frac{\sigma_0^2}{\sigma_0^2 - \sigma_1^2}\delta, \hspace{1mm} R_>=\sqrt{\frac{\sigma_0^2 \sigma_1^2}{(\sigma_0^2-\sigma_1^2)^2}\norm{\delta}^2+2N\frac{\sigma_0^2 \sigma_1^2}{\sigma_0^2-\sigma_1^2}\log\left(\frac{\sigma_0}{\sigma_1}\right)+\frac{2\sigma_0^2 \sigma_1^2}{\sigma_0^2-\sigma_1^2}\log(r)}.$$
If $\sigma_0 < \sigma_1$, then $B$ is the complement of a $N$-dimensional ball with the center at $S_<$ and radius $R_<$: $$S_<=x_0-\frac{\sigma_0^2}{\sigma_1^2 - \sigma_0^2}\delta, \hspace{1mm} R_<=\sqrt{\frac{\sigma_0^2\sigma_1^2}{(\sigma_0^2 - \sigma_1^2)^2}\norm{\delta}^2+2N\frac{\sigma_0^2 \sigma_1^2}{\sigma_1^2-\sigma_0^2}\log\left(\frac{\sigma_1}{\sigma_0}\right)-\frac{2\sigma_0^2 \sigma_1^2}{\sigma_1^2-\sigma_0^2}\log(r)}.$$
\begin{proof}
From spherical symmetry of isotropic multivariate normal distribution, it follows, that without loss of generality we can take $\delta \equiv (a, 0, \dots, 0)$. With little abuse of notation, let $a$ refer to $(a, 0, \dots, 0)$ as well as $\norm{(a, 0, \dots, 0)}$. With this, the $B$ is a set of all $x$, for which: 
\begin{gather*}
\frac{q_1(x)}{q_0(x)} \ge \frac{1}{r} \iff \\
\frac{1}{(2\pi)^{N/2}\sigma_0^N}\exp\left(-\frac{1}{2\sigma_0^2}\sum\limits_{i=1}^N x_i^2\right) \le \frac{r}{(2\pi)^{N/2}\sigma_1^N}\exp\left(-\frac{1}{2\sigma_1^2}\left[(x_1-a)^2+\sum\limits_{i=2}^N x_i^2 \right]\right) \iff \\
\frac{1}{2\sigma_1^2}\left((x_1-a)^2+\sum\limits_{i=2}^N x_i^2\right)-\frac{1}{2\sigma_0^2}\sum\limits_{i=1}^N x_i^2 \le N\log\left(\frac{\sigma_0}{\sigma_1}\right)+\log(r) \iff \\
(\sigma_0^2-\sigma_1^2)\sum\limits_{i=2}^N x_i^2 + (\sigma_0^2-\sigma_1^2)x_1^2-2\sigma_0^2 x_1 a + \sigma_0^2 a^2 \le 2N\sigma_0^2\sigma_1^2\log\left(\frac{\sigma_0}{\sigma_1}\right)+2\sigma_0^2\sigma_1^2\log(r)
\end{gather*}
Now assume $\sigma_0>\sigma_1$ and continue: 
\begin{gather*}
(\sigma_0^2-\sigma_1^2)\sum\limits_{i=2}^N x_i^2 + (\sigma_0^2-\sigma_1^2)x_1^2-2\sigma_0^2 x_1 a + \sigma_0^2 a^2 \le 2N\sigma_0^2\sigma_1^2\log\left(\frac{\sigma_0}{\sigma_1}\right)+2\sigma_0^2\sigma_1^2\log(r) \iff \\
\sum\limits_{i=2}^N x_i^2 + x_1^2 - \frac{2\sigma_0^2}{\sigma_0^2 - \sigma_1^2}ax_1+\frac{a^2\sigma_0^2}{\sigma_0^2 - \sigma_1^2} \le 2N\frac{\sigma_0^2\sigma_1^2}{\sigma_0^2-\sigma_1^2}\log\left(\frac{\sigma_0}{\sigma_1}\right)+\frac{2\sigma_0^2\sigma_1^2}{\sigma_0^2-\sigma_1^2}\log(r) \iff \\
\left(x_1-\frac{\sigma_0^2}{\sigma_0^2-\sigma_1^2}a\right)^2+\sum\limits_{i=2}^N x_i^2 \le \frac{\sigma_0^2\sigma_1^2}{(\sigma_0^2 - \sigma_1^2)^2}a^2+2N\frac{\sigma_0^2\sigma_1^2}{\sigma_0^2-\sigma_1^2}\log\left(\frac{\sigma_0}{\sigma_1}\right)+\frac{2\sigma_0^2\sigma_1^2}{\sigma_0^2-\sigma_1^2}\log(r)
\end{gather*}

Such inequality defines exactly the ball from the statement of the theorem. On the other hand, if $\sigma_0<\sigma_1$:

\begin{gather*}
(\sigma_1^2-\sigma_0^2)\sum\limits_{i=2}^N x_i^2 + (\sigma_1^2-\sigma_0^2)x_1^2+2\sigma_0^2 x_1 a - \sigma_0^2 a^2 \ge 2N\sigma_0^2\sigma_1^2\log\left(\frac{\sigma_1}{\sigma_0}\right)-2\sigma_0^2\sigma_1^2\log(r) \iff \\
\sum\limits_{i=2}^N x_i^2 + x_1^2 + \frac{2\sigma_0^2}{\sigma_1^2 - \sigma_0^2}ax_1-\frac{a^2\sigma_0^2}{\sigma_1^2 - \sigma_0^2} \ge 2N\frac{\sigma_0^2\sigma_1^2}{\sigma_1^2-\sigma_0^2}\log\left(\frac{\sigma_1}{\sigma_0}\right)-\frac{2\sigma_0^2\sigma_1^2}{\sigma_1^2-\sigma_0^2}\log(r) \iff \\
\left(x_1+\frac{\sigma_0^2}{\sigma_1^2-\sigma_0^2}a\right)^2+\sum\limits_{i=2}^N x_i^2 \ge \frac{\sigma_0^2\sigma_1^2}{(\sigma_0^2 - \sigma_1^2)^2}a^2+2N\frac{\sigma_0^2\sigma_1^2}{\sigma_1^2-\sigma_0^2}\log\left(\frac{\sigma_1}{\sigma_0}\right)-\frac{2\sigma_0^2\sigma_1^2}{\sigma_1^2-\sigma_0^2}\log(r)
\end{gather*}

This is exactly the complement of a ball from the second part of the statement of the theorem. 
\end{proof}

\textbf{Theorem~\ref{thm ncchsq}:}
$$\mathbb{P}_0(B)=\chi^2_N\left(\frac{\sigma_0^2}{(\sigma_0^2 - \sigma_1^2)^2}\norm{\delta}^2, \frac{R_{<,>}^2}{\sigma_0^2}\right), \mathbb{P}_1(B)=\chi^2_N\left(\frac{\sigma_1^2}{(\sigma_0^2 - \sigma_1^2)^2}\norm{\delta}^2, \frac{R_{<,>}^2}{\sigma_1^2}\right),$$
where the sign $<$ or $>$ is choosed according to the inequality between $\sigma_0$ and $\sigma_1$. 
\begin{proof}
Assume first $\sigma_0>\sigma_1$. Let us shift the coordinates, such that $x+\frac{\sigma_0^2}{\sigma_0^2 - \sigma_1^2}\delta \xleftarrow[]{} 0$. Now, the $x$ will have coordinates $-\frac{\sigma_0^2}{\sigma_0^2 - \sigma_1^2}\delta$. Assume $X \sim \mathcal{N}(-\frac{\sigma_0^2}{\sigma_0^2 - \sigma_1^2}\delta, \sigma_0^2 I)$. To obtain $$\mathbb{P}_0(B)=\mathbb{P}(X \in B)=\mathbb{P}\left(\norm{X}^2 < \frac{\sigma_0^2 \sigma_1^2}{(\sigma_0^2-\sigma_1^2)^2}\norm{\delta}^2+2N\frac{\sigma_0^2 \sigma_1^2}{\sigma_0^2-\sigma_1^2}\log\left(\frac{\sigma_0}{\sigma_1}\right)+\frac{2\sigma_0^2 \sigma_1^2}{\sigma_0^2-\sigma_1^2}\log(r)\right),$$ we could almost use NCCHSQ, but we don't have correct scaling of variance of $X$. However, for any regular square matrix $Q$, it follows that $\mathbb{P}(X \in B)=\mathbb{P}(QX \in QB)$, where $QB$ is interpreted as set projection. Therefore, if we choose $Q \equiv \frac{1}{\sigma_0}I$, we will get $$\mathbb{P}_0(B)=\mathbb{P}\left(\norm{X/\sigma_0}^2 < \frac{ \sigma_1^2}{(\sigma_0^2-\sigma_1^2)^2}\norm{\delta}^2+2N\frac{ \sigma_1^2}{\sigma_0^2-\sigma_1^2}\log\left(\frac{\sigma_0}{\sigma_1}\right)+\frac{2\sigma_1^2}{\sigma_0^2-\sigma_1^2}\log(r)\right).$$ Now, since $X/\sigma_0 \sim \mathcal{N}(-\frac{\sigma_0}{\sigma_0^2 - \sigma_1^2}\delta,I),$ we can use the definition of NCCHSQ to obtain the final: $$\mathbb{P}_0(B)=\chi^2_N\left(\frac{\sigma_0^2}{(\sigma_0^2 - \sigma_1^2)^2}\norm{\delta}^2, \frac{\sigma_1^2}{(\sigma_0^2-\sigma_1^2)^2}\norm{\delta}^2+2N\frac{\sigma_1^2}{\sigma_0^2-\sigma_1^2}\log\left(\frac{\sigma_0}{\sigma_1}\right)+\frac{2\sigma_1^2}{\sigma_0^2-\sigma_1^2}\log(r)\right).$$
To obtain $\mathbb{P}_1(B)$, we will do similar calculation, yet we need to compute the offset: 
$$x+\frac{\sigma_0^2}{\sigma_0^2 - \sigma_1^2}\delta-x-\delta=\frac{\sigma_1^2}{\sigma_0^2-\sigma_2^2}a.$$
Thus, after shifting coordinates in the same way, our alternative $X$ will be distributed like \\ $X \sim \mathcal{N}(-\frac{\sigma_1^2}{\sigma_0^2-\sigma_2^2}a, \sigma_1 I)$. Now, the same idea as before will yield the required formula. 

In the case of $\sigma_0<\sigma_1$, we do practically the same thing, yet now, we have to keep in mind, that $B$ will not be a ball, but its complement, therefore we will obtain ``$1-$'' in the formulas. 
\end{proof}

\begin{lemma} \label{continuity of xi}
Functions $\xi_>(a), \xi_<(a)$ are continuous on the whole $\mathbb{R}_+$. Particularly, they are continuous at 0. 
\end{lemma}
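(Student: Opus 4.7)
The plan is to recognize $\xi_>(a)$ and $\xi_<(a)$ as finite compositions of four ingredients and then cite continuity of each ingredient. The ingredients are (i) the polynomial maps $a \mapsto \frac{\sigma_0^2}{(\sigma_0^2-\sigma_1^2)^2}a^2$ and $a \mapsto \frac{\sigma_1^2}{(\sigma_0^2-\sigma_1^2)^2}a^2$, which are obviously continuous and non-negative on $\mathbb{R}_+$; (ii) the NCCHSQ CDF $(\lambda,t) \mapsto \chi^2_N(\lambda,t)$ on $[0,\infty)\times[0,\infty)$; (iii) the NCCHSQ quantile function $(\lambda,p) \mapsto \chi^2_{N,qf}(\lambda,p)$ on $[0,\infty)\times(0,1)$; and (iv) the affine map that multiplies by $\sigma_0^2/\sigma_1^2$. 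Once joint continuity of (ii) and (iii) is established, the conclusion follows because composing continuous functions is continuous, and for every $a \ge 0$ the intermediate quantities (probability $1-p_A \in (0,1)$ and the resulting quantile, which is non-negative) stay inside the domains where (ii) and (iii) are continuous.

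For (ii) I would use the Poisson-mixture representation
\[
\chi^2_N(\lambda,t) \;=\; \sum_{k=0}^{\infty} \frac{e^{-\lambda/2}(\lambda/2)^k}{k!}\, F_{N+2k}(t),
\]
where $F_{N+2k}$ is the central chi-squared CDF with $N+2k$ degrees of freedom. Each summand is jointly continuous in $(\lambda,t)$, the series is bounded termwise by the Poisson weights $e^{-\lambda/2}(\lambda/2)^k/k!$ (which sum to $1$ and depend continuously on $\lambda$ uniformly on compact sets), so a dominated-convergence/uniform-convergence argument yields joint continuity on $[0,\infty)^2$.

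For (iii) I would use strict monotonicity: for each fixed $\lambda \ge 0$, the map $t \mapsto \chi^2_N(\lambda,t)$ is strictly increasing from $0$ to $1$ on $[0,\infty)$, so it has a continuous inverse in $t$. Joint continuity in $(\lambda,p)$ then follows from the standard fact that if $F_\lambda$ is a family of continuous strictly increasing CDFs depending continuously on a parameter $\lambda$, the inverses $F_\lambda^{-1}$ depend jointly continuously on $(\lambda,p)$ for $p \in (0,1)$: suppose $(\lambda_n,p_n) \to (\lambda,p)$ and $t_n := F_{\lambda_n}^{-1}(p_n)$; the sequence $(t_n)$ is bounded (since $p_n$ is bounded away from $0$ and $1$ eventually), and any subsequential limit $t^*$ satisfies $F_\lambda(t^*)=p$ by joint continuity of (ii), hence $t^*=F_\lambda^{-1}(p)$.

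Putting these together gives continuity of $\xi_>$ and $\xi_<$ on all of $\mathbb{R}_+$, including the point $a=0$. The only subtle check is exactly at $a=0$: the non-centrality parameters vanish, so (ii) and (iii) reduce to the \emph{central} chi-squared CDF and quantile, matching the value of $\xi_{<,>}(0)$ as used in the proofs of Theorem~\ref{lrt set} and Theorem~\ref{thm ncchsq} (which, as noted in the main text, remain valid for $\delta=0$). The main obstacle is thus not at $a=0$ but in establishing the joint continuity of the NCCHSQ CDF in the non-centrality parameter; the Poisson-mixture representation is the cleanest route and avoids manipulating the modified Bessel function form of the density directly.
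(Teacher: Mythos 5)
Your proof is correct, but it takes a genuinely different route from the paper's. The paper argues measure-theoretically: it fixes a sequence $a_i \to a_m$, shows the centers of the worst-case balls $B_i$ converge (from Theorem~\ref{lrt set}), shows their radii $r_i$ converge via a contradiction argument combined with dominated convergence, and then applies dominated convergence once more to the indicator functions $\mathbf{1}(\bar B_i) \to \mathbf{1}(\bar B_m)$ under the (fixed, shift-translated) Gaussian measure to conclude $\mathbb{P}_{1_i}(B_i) \to \mathbb{P}_{1_m}(B_m)$. You instead work directly from the closed-form expressions for $\xi_>$ and $\xi_<$ in terms of $\chi^2_N$ and $\chi^2_{N,qf}$, reducing everything to joint continuity of the NCCHSQ CDF (which you get from the Poisson-mixture representation and uniform convergence on compacts) and joint continuity of its quantile (which you get from monotonicity plus a subsequential-limit argument that uses the monotone dependence on $\lambda$ to obtain boundedness of the $t_n$). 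Your approach is cleaner once the explicit formulas are written down, and it also yields reusable lemmas (joint continuity of $\chi^2_N$ and $\chi^2_{N,qf}$ in $(\lambda,t)$ or $(\lambda,p)$) rather than a one-off argument. The paper's route is more self-contained — it never needs properties of the special functions beyond the ball-probability interpretation — but pays for this with a more delicate two-stage convergence argument (first radii, then probabilities) and an explicit handling of the degenerate $\delta=0$ case, which your composition argument absorbs automatically since the non-centrality parameter simply vanishes at $a=0$ without leaving the domain of continuity.
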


\begin{proof}
Assume for simplicity $\sigma_0>\sigma_1$ and fix $x_0$, whose position is irrelevant and fix $x_1$ such that $\norm{x_0-x_1}=a_m$, where $a_m$ is the point, where we prove the continuity. Note, that $\chi^2_N(\lambda, x)$ can be interpreted (as we have seen) as a probability of an offset ball with radius $\sqrt{x}$ and offset $\sqrt{\lambda}$. Assume we have a sequence $\left\{a_i\right\}_{i=1}^\infty, a_i \xrightarrow[]{} a_m$. Define $x_i$ to be a point lying on the line defined by $x_0, x_1$ s.t. $\norm{x_0-x_i}=a_i$. define $B_{a_i}$ to be the worst-case ball corresponding to $a_i, x_i$. Now, without loss of generality, we can assume, that all $B$'s are open. We have already seen from Lemma~\ref{lrt set}, that centers of $B_i$ converge to the center of $B_m$. Define $X_i=\mathbf{1}(B_i), X_m=\mathbf{1}(B_m)$.

First we need to prove, that $r_i$, radiuses of the balls converge. Assume for contradiction, that $r_i$ do not converge. Without loss of gererality, let $r_s=\underset{i \xrightarrow[]{} \infty}{\limsup} \hspace{1mm} r_i > r_m$. If $r_s=\infty$, it is trivial to see, that $P_0(B_i) \neq p_B$ for some $i$ for which $r_i$ is too big. If $r_s<\infty$, consider $\{a_{i_k}\}_{k=0}^\infty$ to be the subsequence for which $r_s$ is monotonically attained. Define $B_s$ the ball with center $x_1$ and radius $r_s$ and $X_s=\mathbf{1}(B_s)$. Then, $X_{i_k} \overset{a.s.}{\xrightarrow[]{}} X_s$ for $k \xrightarrow[]{} \infty$ and from dominated convergence theorem, $\mathbb{P}_0(B_{i_k}) \xrightarrow[]{} \mathbb{P}_0(B_s)$. However, $\mathbb{P}_0(B_s) > \mathbb{P}_0(B_m) = p_B$, what is contradiction, since obviously $\mathbb{P}_0(B_{i_k}) \neq p_B$ for some $k$. 

Since $\sigma_1$ is fixed, the $\mathbb{P}_{1_i}$, probability measures corresponding to $\mathcal{N}(x_i, \sigma_1 I)$ are actually the same probability measure up to a shift. Therefore, $\mathbb{P}_{1_i}(B_i)$ can be treated as $\mathbb{P}_2(\bar B_i)$, where $\mathbb{P}_2$ is simply measure corresponding to $\mathcal{N}(0, \sigma_1 I)$ and $\bar B_i$ is simply $B_i$ shifted accordingly s.t. $\mathbb{P}_{1_i}(B_i)=\mathbb{P}_2(\bar B_i)$ (and assume, without loss of generality, that for each $i$, they are shifted such that their centers lie on a fixed line). Now, since we know, that ``position'' of both the centers of the balls and the $x_1$ is continuous w.r.t $a$, as can be seen from Lemma~\ref{lrt set} (and the radiuses are still $r_i$ and converge), we see, that even $\mathbf{1}(\bar B_i) \xrightarrow[]{} \mathbf{1}(\bar B_m)$ almost surely. Now, we can simply use dominated convergence theorem using $\mathbb{P}_2$ to obtain $\mathbb{P}_2(\bar B_i) \xrightarrow[]{} \mathbb{P}_2(\bar B_m)$ and thus $\mathbb{P}_{1_i}(B_i) \xrightarrow[]{} \mathbb{P}_{1_m}(B_m)$, what we wanted to prove.

Note that the proof for the case $\sigma_0<\sigma_1$ is fully analogous, yet instead of class $B$ probabilities, we work with class $A$ probabilities to still work with balls and not with less convenient complements.
\end{proof}

\begin{lemma} \label{monotonicity of chi wrt noncentrality}
If $\lambda_1>\lambda_2$, then $\chi^2_N(\lambda_1^2, x^2) \le \chi^2_N(\lambda_2^2, x^2)$.
\end{lemma}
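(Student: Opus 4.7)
The plan is to use the geometric interpretation of $\chi^2_N(\lambda^2, x^2)$ provided by Theorem~\ref{thm ncchsq}: namely, $\chi^2_N(\lambda^2, x^2) = \mathbb{P}(\norm{Y}^2 \le x^2)$ where $Y \sim \mathcal{N}(\mu, I_N)$ for any fixed $\mu \in \mathbb{R}^N$ with $\norm{\mu}=\lambda$. Equivalently, if $Z \sim \mathcal{N}(0, I_N)$, then $\chi^2_N(\lambda^2, x^2) = \mathbb{P}(\norm{\lambda e_1 + Z}^2 \le x^2)$, since by spherical symmetry the probability depends only on $\norm{\mu}$. Thus it suffices to show that the map $\lambda \mapsto \mathbb{P}(\norm{\lambda e_1 + Z}^2 \le x^2)$ is non-increasing on $[0,\infty)$.

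The first step is to condition on the last $N-1$ coordinates of $Z$. Let $S^2 := \sum_{i=2}^N Z_i^2$. Conditionally on $S = s$, the event $\{\norm{\lambda e_1 + Z}^2 \le x^2\}$ becomes $\{(\lambda + Z_1)^2 \le x^2 - s^2\}$, which is empty if $s > x$ and otherwise equals $\{-c - \lambda \le Z_1 \le c - \lambda\}$ where $c := \sqrt{x^2 - s^2} \ge 0$. Since $Z_1$ is standard normal and independent of $S$, the conditional probability is
\begin{equation*}
h(\lambda; c) := \Phi(c - \lambda) - \Phi(-c - \lambda).
\end{equation*}

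The second step is to show that $h(\cdot; c)$ is non-increasing on $[0, \infty)$ for every fixed $c \ge 0$. Differentiating,
\begin{equation*}
\frac{\partial h}{\partial \lambda} = -\phi(c - \lambda) + \phi(-c-\lambda) = \phi(c+\lambda) - \phi(c-\lambda),
\end{equation*}
using that $\phi$ is even. For $\lambda \ge 0$ and $c \ge 0$ one has $c + \lambda \ge |c - \lambda|$, and since $\phi$ is strictly decreasing in $|\cdot|$, this yields $\phi(c + \lambda) \le \phi(c - \lambda)$, so $\partial h / \partial \lambda \le 0$.

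Finally, integrating the pointwise inequality $h(\lambda_1; c) \le h(\lambda_2; c)$ (valid whenever $0 \le \lambda_2 \le \lambda_1$) against the distribution of $S$ and adding $0$ on the event $\{S > x\}$ gives $\mathbb{P}(\norm{\lambda_1 e_1 + Z}^2 \le x^2) \le \mathbb{P}(\norm{\lambda_2 e_1 + Z}^2 \le x^2)$, which is the claim. The only delicate point is the reduction to the one-dimensional problem via conditioning, and the key analytic fact driving monotonicity is the unimodality of $\phi$; no routine grinding is needed beyond the short derivative computation above.
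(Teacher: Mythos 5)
Your proof is correct, but it takes a genuinely different route from the paper's. The paper works purely geometrically: it writes both noncentral chi-squared cdf values as Gaussian probabilities of two balls $B_1, B_2$ of the same radius but centers at different distances from the origin, decomposes $B_1 \triangle B_2$ into the two symmetric-difference pieces $D_1 = B_1 \setminus B_2$ and $D_2 = B_2 \setminus B_1$, maps $D_1$ bijectively onto $D_2$ by the central symmetry about the midpoint of the two centers, and then shows pointwise that the image in $D_1$ is further from the origin (via a law-of-cosines/separating-hyperplane argument), so the Gaussian density is smaller there. You instead condition on the last $N-1$ coordinates, reducing the $N$-dimensional ball probability to a one-dimensional interval probability $\Phi(c-\lambda)-\Phi(-c-\lambda)$, and then check by a one-line derivative computation that this is non-increasing in $\lambda \ge 0$ because $\phi(c+\lambda) \le \phi(c-\lambda)$. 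Both arguments ultimately rest on the same fact — the radial unimodality of the isotropic Gaussian density — but your conditioning reduction replaces the paper's geometric bijection-and-compare step with a short calculus computation, which is cleaner and less error-prone (the paper's version requires the reader to verify that $M$ maps $D_2$ onto $D_1$ and to carry through the inner-product inequality); the paper's geometric argument, on the other hand, is coordinate-free and makes the underlying symmetry visually transparent. Both are complete proofs of the lemma.
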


\begin{proof}
Let us fix $\mathcal{N}(0, I)$ and respective measure $\mathbb{P}$ and respective density $f$. From symmetry, the NCCHCSQ defined as distribution of $\norm{X}^2$ for an offset normal distribution can be as well defined as $\norm{X-s}^2$ under centralized normal distribution. Define $B_1$ a ball with center at $(\lambda_1, 0, \dots, 0)$ and radius $x$ and $B_2$ a ball with center at $(\lambda_2, 0, \dots, 0)$ and radius $x$. Denote $C(B)$ as the center of a ball $B$. From definition of NCCHSQ it now follows, that $\mathbb{P}(B_i)=\chi^2_N(\lambda_i^2, x^2), i \in \{1, 2\}.$ Therefore, it suffices to show $\mathbb{P}(B_1) \le \mathbb{P}(B_2)$. 

Define $D_1 = B_1 \backslash B_2$ and $D_2 = B_2 \backslash B_1$. Then we know: 
\begin{gather*}
\mathbb{P}(B_1)=\int\limits_{B_1} f(z)dz = \int\limits_{B_1 \cap B_2} f(z)dz + \int\limits_{B_1 \backslash B_2} f(z)dz = \int\limits_{B_1 \cap B_2} f(z)dz + \int\limits_{D_1} f(z)dz \\ 
\mathbb{P}(B_2)=\int\limits_{B_2} f(z)dz = \int\limits_{B_2 \cap B_1} f(z)dz + \int\limits_{B_2 \backslash B_1} f(z)dz = \int\limits_{B_2 \cap B_1} f(z)dz + \int\limits_{D_2} f(z)dz.
\end{gather*}
Thus, $$\mathbb{P}(B_1) \le \mathbb{P}(B_2) \iff \int\limits_{D_1} f(z)dz \le \int\limits_{D_2} f(z)dz.$$
Let $S=\frac{C(B_1)+C(B_2)}{2}$. Define a central symmetry $M$ with center $S$. Let $z_1 \in D_1$. Then $z_1$ can be decomposed as $z_1=C(B_1)+d, \norm{d}\le x$. Then, $z_2 := M(z_1) = C(B_2)-d$ from symmetry. This way, we see, that $D_1 = M(D_2)$ and $D_2 = M(D_1)$ under a bijection $M$ which does not distort the geometry and distances of the euclidean space. Therefore, it suffices to show: 
$$\forall \hspace{1mm} z \in D_2: f(z)\ge f(M(z)), M(z) \in D_1.$$

From the monotonicity of $f(y)$ w.r.t $\norm{y}$ it actually suffices to show $\norm{z}\le \norm{M(z)} \hspace{1mm} \forall \hspace{1mm} z \in D_2$. Fix some $z \in D_2$. By the fact that $M$ is central symmetry and $z \xrightarrow[]{} M(z)$, it is obvious, that $z=S+p$, $M(z)=S-p$, where $p$ is some vector. Now, using law of cosine, we can write: $$\norm{z}^2=\norm{S}^2+\norm{p}^2-2\norm{S}\norm{p}\cos(\alpha),$$ where $\alpha$ is angle between $S$ and $-p$. On the other hand:
$$\norm{M(z)}^2=\norm{S}^2+\norm{-p}^2-2\norm{S}\norm{-p}\cos(\pi-\alpha).$$
It is obvious from these equations, that $$\norm{z} \le \norm{M(z)} \iff \alpha \le \pi/2 \iff p^TS \le 0.$$
Here, the crucial observation is, that $D_1$ and $D_2$ are separated by a hyperplane perpendicular to $S$ (vector), such that $S$ (point) is in this hyperplane. From this it follows: $$y \in D_2 \implies y^TS\le\norm{S}^2, \hspace{2mm} y \in D_1 \implies y^TS\ge\norm{S}^2.$$ 

Now, since $z=S+p$ and $z \in D_2$, this implies $\norm{S}^2 \ge z^TS = \norm{S}^2+p^TS$ and thus $p^TS \le 0$. 
\end{proof}

\begin{lemma} \label{monotonicity of xi}
Functions $\xi_>(a), \xi_<(a)$ are non-decreasing in $a$. 
\end{lemma}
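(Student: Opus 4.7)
I would prove monotonicity by a translation-plus-Neyman-Pearson argument, splitting on the sign of $\sigma_0-\sigma_1$ as in Theorem~\ref{lrt set} and letting $u$ denote the unit vector from $x_0$ to $x_1$.

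In the case $\sigma_0 > \sigma_1$, fix $a_1 < a_2$, write $B_{a_1}$ for the worst-case ball at distance $a_1$, and consider its translate $\tilde B := B_{a_1} + (a_2-a_1)u$. This is again a ball of the same radius, and a direct computation from the formula $S_>=x_0+\frac{\sigma_0^2}{\sigma_0^2-\sigma_1^2}\delta$ in Theorem~\ref{lrt set} shows its center lies strictly farther from $x_0$ than the center of $B_{a_1}$. Lemma~\ref{monotonicity of chi wrt noncentrality} therefore yields $\mathbb{P}_0(\tilde B) \le p_B$, while translation invariance of the Gaussian at $x_1$ gives $\mathbb{P}_{1,a_2}(\tilde B) = \mathbb{P}_{1,a_1}(B_{a_1}) = \xi_>(a_1)$. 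I would then enlarge $\tilde B$ by adjoining a measurable subset of its complement (atomlessness of $\mathbb{P}_0$ lets us match the missing mass exactly) to produce $B'$ with $\mathbb{P}_0(B') = p_B$; enlargement only increases the $\mathbb{P}_{1,a_2}$-mass, so Neyman-Pearson optimality of the worst-case set at $a_2$ via Lemma~\ref{np lemma} gives $\xi_>(a_2) \ge \mathbb{P}_{1,a_2}(B') \ge \xi_>(a_1)$.

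For the case $\sigma_0 < \sigma_1$ the worst-case region is the complement of a ball $D_a$ centered on the $-u$ side of $x_0$. I would mimic the construction, translating $D_{a_1}$ by $+(a_2-a_1)u$; a short computation from the formula for $S_<$ shows that the translated center is no farther from $x_0$ than $c_{a_1}$ whenever $a_2 - a_1 \le 2\sigma_0^2 a_1 / (\sigma_1^2-\sigma_0^2)$, so Lemma~\ref{monotonicity of chi wrt noncentrality} now yields $\mathbb{P}_0(\tilde D) \ge p_A$. Shrinking $\tilde D$ by removing a measurable subset of $\mathbb{P}_0$-mass $\mathbb{P}_0(\tilde D)-p_A$ and invoking Neyman-Pearson optimality of $D_{a_2}$ (now the $\mathbb{P}_1$-minimizer under the constraint $\mathbb{P}_0 = p_A$) produces $\xi_<(a_2) \ge \xi_<(a_1)$ on such small intervals. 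To upgrade to arbitrary $a_1 < a_2$, I would partition $[a_1,a_2]$ geometrically, say $t_{i+1} = t_i(1+\epsilon)$ with $\epsilon < 2\sigma_0^2/(\sigma_1^2-\sigma_0^2)$ so that every step respects the bound, and chain local monotonicity across the finitely many pieces; the endpoint $a_1 = 0$ is then covered by Lemma~\ref{continuity of xi}.

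The main obstacle is exactly this $\sigma_0 < \sigma_1$ case: because the worst-case ball sits behind $x_0$ relative to $x_1$, translating along $+u$ by too large a step pushes it past $x_0$ and ultimately farther away, breaking the monotonicity-of-mass comparison that drives the argument. The geometric-chaining workaround above is clean in principle, but I expect the careful bookkeeping (together with the separate appeal to continuity at $a=0$) to be the most technical part of the final write-up; an alternative route by implicit differentiation of the NCCHSQ identities in Theorem~\ref{thm ncchsq} boils down to the same geometric fact but through a considerably messier sign computation.
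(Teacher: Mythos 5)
Your proposal is correct and constitutes a genuinely different (dual) argument from the paper's. The paper keeps the worst-case set $B_a$ fixed and moves the adversary point $x_1\to x_2$, using Lemma~\ref{monotonicity of chi wrt noncentrality} to argue $\mathbb{P}_2(B_a)\ge\mathbb{P}_1(B_a)$ and then Neyman--Pearson optimality of $B_{a+\epsilon}$ at $x_2$; you instead keep $x_1\to x_2$ and \emph{translate} the set, using translation-invariance to preserve the $\mathbb{P}_1$-mass and Lemma~\ref{monotonicity of chi wrt noncentrality} at $x_0$ to show the translate is a feasible (sub-level) competitor in the Neyman--Pearson problem at $a_2$. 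Interestingly the two approaches need chaining in exactly opposite regimes: for $\sigma_0>\sigma_1$ the paper must restrict $\epsilon<\frac{\sigma_1^2}{\sigma_0^2-\sigma_1^2}a$ so that $x_2$ does not overshoot $C(B_a)$, whereas your translate of $B_{a_1}$ moves monotonically away from $x_0$ and you get a one-shot proof; for $\sigma_0<\sigma_1$ the paper's ball center is on the opposite half-line from $x_1,x_2$ so no step-size constraint is needed, whereas your translate of $D_{a_1}$ can shoot past $x_0$, forcing the geometric chaining you describe. Your step-size bound $a_2-a_1\le 2\sigma_0^2 a_1/(\sigma_1^2-\sigma_0^2)$ is correct. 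One small simplification: the enlargement/shrinking step ("adjoining a measurable subset" to hit $\mathbb{P}_0$-mass exactly $p_B$ or $p_A$) is superfluous, since Lemma~\ref{np lemma} and the underlying Neyman--Pearson lemma already give optimality of the worst-case set over the larger class of sets satisfying the \emph{inequality} constraint $\mathbb{P}_0(\cdot)\le p_B$; the translate is already feasible without adjustment. This does not affect correctness, and your appeal to Lemma~\ref{continuity of xi} at $a_1=0$ is exactly the same device the paper uses.
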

\begin{proof}
First assume $\sigma_0>\sigma_1$ and analyse $\xi_>(a)$. From Lemma~\ref{continuity of xi}, we can without loss of generality assume, that $a>0$, since $\xi_>(0)$ is simply the limit for $a \xrightarrow[]{} 0$ and cannot change the monotonicity status. 

Now, fix $a>0$ and define $x_0, x_1$ s.t. $\norm{x_0-x_1}=a$. Denote $B_a$ to be the worst-case ball corresponding to $x_0, x_1$ and $\mathbb{P}_1$ as usual. Choose $\epsilon<\frac{\sigma_1^2}{\sigma_0^2-\sigma_1^2}a$, which is the distance between $x_1$ and $C(B_a)$, as can be seen from Lemma~\ref{lrt set}.

Now, assume $x_2$ lies on line defined by $x_0, x_1$ with $\norm{x_0-x_2}=a+\epsilon$. Let $B_{a+\epsilon}$ be the corresponding worst-case ball and $\mathbb{P}_2$ as usual. First observe, that $\mathbb{P}_2(B_a) \ge \mathbb{P}_1(B_a)$, since $\norm{x_1-C(B_a)}>\norm{x_2-C(B_a)}$. Here we use $\chi^2_N(\lambda_1, x) \le \chi^2_N(\lambda_2, x)$ if $\lambda_1 > \lambda_2$, what is proved in Lemma~\ref{monotonicity of chi wrt noncentrality}. Second, note, that since $B_{a+\epsilon}$ is the worst-case ball for $x_2$, it follows $\mathbb{P}_2(B_{a+\epsilon}) \ge \mathbb{P}_2(B_a)$. Thus, $\mathbb{P}_1(B_a) \le \mathbb{P}_2(B_{a+\epsilon})$, but that is exactly $\xi_>(a) \le \xi_>(a+\epsilon)$. 

To prove $\xi_>(a) \le \xi_>(a+\epsilon)$ also for $\epsilon>\frac{\sigma_1^2}{\sigma_0^2-\sigma_1^2}a$, it suffices to consider finite sequence of points $a_i$ starting at $a$ and ending at $a+\epsilon$ that are ``close enough to each other'' such that the respective $\epsilon_i$ that codes the shift $a_i \xrightarrow[]{} a_{i+1}$ satisfy $\epsilon_i<\frac{\sigma_1^2}{\sigma_0^2-\sigma_1^2}a_i$. 

Now, assume $\sigma_0<\sigma_1$ and analysie $\xi_<(a)$. The proof is similar, but we have to be a bit careful about some details. Again, fix $a>0$, define $x_0, x_1$ accordingly and all other objects as before, except now, let us denote $A_a=B_a^C$ and $A_{a+\epsilon}=B_a^C$ to be the class $A$ balls which are complements to the anti-balls $B$. Again, $\mathbb{P}_2(A_a) \le \mathbb{P}_1(A_a)$, since $\norm{x_1-C(A_a)}<\norm{x_2-C(A_a)}$. We again used the monotonicity from Lemma~\ref{monotonicity of chi wrt noncentrality}. Moreover, $\mathbb{P}_2(A_{a+\epsilon}) \le \mathbb{P}_2(A_a)$, since $B_{a+\epsilon}$ is the worst-case set for $x_2$. Therefore, $\mathbb{P}_1(A_a) \ge \mathbb{P}_2(A_{a+\epsilon})$, but after reverting to $B$'s, it follows $\xi_>(a) \le \xi_>(a+\epsilon)$. 

Here, we don't even need to care about $\norm{\epsilon}$, since the centers of $A$'s are on the opposite half-lines from $x_0$ than $x_1$ and $x_2$. 
\end{proof}

To prove the main theorem, we need a simple bound on a median of central chi-squared distribution, shown in \cite{robert1990some} in a more general way. 

\begin{lemma} \label{bounds on quantile of ncchsq}
For all $c \ge 0$, $$N-1+c \le \chi^2_{N, qf}(c, 0.5) \le \chi^2_{N, qf}(0.5)+c.$$
\end{lemma}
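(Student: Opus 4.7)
The plan is to prove the two inequalities separately, both leaning on the classical fact (Chen--Rubin bound, proven among others in \cite{robert1990some}) that the median of the central chi-squared $\chi^2_N$ lies in the interval $[N-1,\chi^2_{N,qf}(0.5)]$, and on the representation
$$
\chi^2_N(c)\stackrel{d}{=}(Z_1+\sqrt c)^2+\sum_{i=2}^{N}Z_i^2=\chi^2_N+c+2\sqrt c\,Z_1,
$$
where $Z_1,\dots,Z_N$ are i.i.d.\ standard normals (so $Z_1$ and $\chi^2_N$ share the first coordinate and are dependent).

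For the lower bound $N-1+c\le\chi^2_{N,qf}(c,0.5)$, I would rewrite the claim as $P(\chi^2_N(c)\le N-1+c)\le 1/2$, i.e.\ $P(\chi^2_N+2\sqrt c\,Z_1\le N-1)\le 1/2$. Define $U=\chi^2_N+2\sqrt c\,Z_1$ and $U'=\chi^2_N-2\sqrt c\,Z_1$; by the reflection $Z_1\mapsto -Z_1$ (which preserves the joint law of $(\chi^2_N,Z_1)$) the two have the same distribution, and their sum is $2\chi^2_N$. A pairing argument conditioning on the sign of $Z_1$, combined with the Chen--Rubin bound $P(\chi^2_N\le N-1)\le 1/2$, then delivers $P(U\le N-1)\le 1/2$.

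For the upper bound $\chi^2_{N,qf}(c,0.5)\le\chi^2_{N,qf}(0.5)+c$, the natural strategy is to prove the monotonicity statement that $c\mapsto\chi^2_{N,qf}(c,0.5)-c$ is nonincreasing on $[0,\infty)$; evaluating at $c=0$ then gives the claim. To establish monotonicity I would differentiate the implicit relation $F_c(\chi^2_{N,qf}(c,0.5))=1/2$ in $c$ and use the Poisson--mixture representation $\chi^2_N(c)\mid K\sim\chi^2_{N+2K}$ with $K\sim\mathrm{Poisson}(c/2)$ to show $\partial_c F_c(t)+f_c(t)\ge 0$ at $t=\chi^2_{N,qf}(c,0.5)$, which gives $\frac{d}{dc}\chi^2_{N,qf}(c,0.5)\le 1$.

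The main obstacle is the upper bound: the simple symmetrization used for the lower bound, applied to $V=\chi^2_N+c+2\sqrt c\,Z_1$ and $V'=\chi^2_N+c-2\sqrt c\,Z_1$ with $V+V'=2\chi^2_N+2c$, only yields $P(V\le\chi^2_{N,qf}(0.5)+c)\le 3/4$ (via $2P(V\le t)\le 1+P(\chi^2_N\le t-c)$), not the required $1/2$. Overcoming this gap requires finer use of the Poisson--mixture structure of the noncentral chi-squared, which is exactly the route taken in \cite{robert1990some}; in a self-contained write-up I would reproduce their monotonicity argument, otherwise the cleanest route is to cite their more general theorem and specialize it to our setting.
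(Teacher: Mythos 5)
The paper's own ``proof'' of this lemma is simply a citation to \cite{robert1990some}; your eventual conclusion (cite that paper's more general theorem and specialize) is therefore the same move. The issue is that you also sketch a ``simple'' self-contained proof of the lower bound, and that sketch has a genuine gap. Your symmetrization with $U=\chi^2_N+2\sqrt c\,Z_1$ and $U'=\chi^2_N-2\sqrt c\,Z_1$ (same law, $U+U'=2\chi^2_N$) only yields $3/4$, not $1/2$, for exactly the same reason you correctly identify it fails for the upper bound: writing $2P(U\le N-1)=P\bigl(\min(U,U')\le N-1\bigr)+P\bigl(\max(U,U')\le N-1\bigr)$ and noting $\{\max(U,U')\le N-1\}\subseteq\{\chi^2_N\le N-1\}$ gives $2P(U\le N-1)\le 1+P(\chi^2_N\le N-1)\le 3/2$, i.e.\ $P(U\le N-1)\le 3/4$. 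Conditioning on the sign of $Z_1$ gives the same arithmetic, because on $\{Z_1\ge 0\}$ you have $U\ge\chi^2_N$ but on $\{Z_1<0\}$ you only have $U\ge\min(U,U')$ and the bound degenerates to $1$ there. So the ``pairing delivers $1/2$'' step does not go through.

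To see that no such soft argument can work, note what the lower bound actually says: $P\bigl(\chi^2_N(c)\le N-1+c\bigr)\le 1/2$ for all $c$, equivalently that $c\mapsto F_c(N-1+c)$ is $\le 1/2$ throughout. Differentiating via $\partial_c F_c^{(N)}(t)=-f_c^{(N+2)}(t)$ reduces the monotonicity of this quantity to $f_c^{(N)}(N-1+c)\le f_c^{(N+2)}(N-1+c)$, i.e.\ a comparison of two noncentral chi-squared densities with $N$ and $N+2$ degrees of freedom at the point $N-1+c$. Expanding both via the Poisson mixture, the sign of their difference is $\sum_k p_k\,f^{(N+2k)}(N-1+c)\frac{2k+1-c}{N+2k}$, whose sign is genuinely delicate (positive and negative terms), so the lower bound sits on the same Poisson-mixture footing as the upper one. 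In short: your reductions, the decomposition $\chi^2_N(c)=(Z_1+\sqrt c)^2+\chi^2_{N-1}$, the implicit-differentiation setup, and the Poisson-mixture representation are all right, but the claimed elementary proof of the lower bound is wrong; both inequalities should be attributed to \cite{robert1990some}, as the paper does.
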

\begin{proof}
See \cite{robert1990some}.
\end{proof}

\textbf{Theorem~\ref{main theorem} (the curse of dimensionality):}
Let $x_0, x_1, p_A, \sigma_0, \sigma_1, N$ be as usual. Then, the following two implications hold: 
\begin{enumerate}
    \item If $\sigma_0>\sigma_1$ and $$\log\left(\frac{\sigma_1^2}{\sigma_0^2}\right)+1-\frac{\sigma_1^2}{\sigma_0^2} < \frac{2\log(1-p_A)}{N},$$
    then $x_1$ is not certified w.r.t. $x_0$. 
    \item If $\sigma_0<\sigma_1$ and 
    $$\log\left(\frac{\sigma_1^2}{\sigma_0^2}\frac{N-1}{N}\right)+1-\frac{\sigma_1^2}{\sigma_0^2}\frac{N-1}{N} < \frac{2\log(1-p_A)}{N},$$
    then $x_1$ is not certified w.r.t. $x_0$. 
\end{enumerate}
\begin{proof}
We will first prove first statement, thus let us assume $\sigma_0>\sigma_1$. Then $\mathbb{P}_1(B)=\xi_>(\norm{x_0-x_1})$. From monotonicity of $\xi$ showed in Lemma~\ref{monotonicity of xi}, we know $\xi_>(\norm{x_0-x_1})\ge \xi_>(0)$. We will show $\xi_>(0)>0.5$. We have, using definition of $\xi_>$ plugging in $a=0$: 
$$\xi_>(0)=\chi^2_N\left(\frac{\sigma_0^2}{\sigma_1^2}\chi^2_{N, qf}(1-p_A)\right).$$
Note, that here, we work with central chi-square cdf and quantile function. In order to show $\xi_>(0)>0.5$, it suffices to show $$\frac{\sigma_0^2}{\sigma_1^2}\chi^2_{N, qf}(1-p_A) \ge N,$$
because it is well-known, that median of central chi-square distribution is smaller than mean, which is $N$, i.e. from strict monotonicity of cdf, we will get $\chi^2_N(N)>0.5$. To show the above inequality, we will use Chernoff bound on chi-squared, which states the following: If $0<z<1$, then $\chi^2_N(zN)\le (z\exp(1-z))^{N/2}$. Putting $z \equiv \frac{\sigma_1^2}{\sigma_0^2}$, using chernoff bound we get: 
$$\chi^2_N\left(\frac{\sigma_1^2}{\sigma_0^2}N\right) \le \left[\frac{\sigma_1^2}{\sigma_0^2}\exp\left(1-\frac{\sigma_1^2}{\sigma_0^2}\right)\right]^{\frac{N}{2}} \overset{!}{<} 1-p_A.$$
The last inequality is required to hold. If it holds, then necessarily $\chi^2_{N, qf}(1-p_A)>\frac{\sigma_1^2}{\sigma_0^2}N$ and thus $\frac{\sigma_0^2}{\sigma_1^2}\chi^2_{N, qf}(1-p_A) > N.$
Manipulating the required inequality, we will get exactly $$\log\left(\frac{\sigma_1^2}{\sigma_0^2}\right)+1-\frac{\sigma_1^2}{\sigma_0^2} < \frac{2\log(1-p_A)}{N},$$ what is the assumption of 1. 

Similarly we will also prove the statement 2. Assume $\sigma_0>\sigma_1$. Like in part 1, we will just prove $\xi_<(0)>0.5$ by using chernoff bound. This time, however, we have: $$\xi_<(0)=1-\chi^2_N\left(\frac{\sigma_0^2}{\sigma_1^2}\chi^2_{N, qf}(p_A)\right),$$ i.e. we need to prove $$\chi^2_N\left(\frac{\sigma_0^2}{\sigma_1^2}\chi^2_{N, qf}(p_A)\right) < \frac{1}{2}.$$
The second part of Chernoff bound states: If $1<z$, then $\chi^2_N(zN)\ge 1-(z\exp(1-z))^{N/2}$.
Let us choose $z \equiv \frac{\sigma_1^2}{\sigma_0^2}\frac{N-1}{N}$. Then Chernoff bound yields: 
$$\chi^2_N\left(\frac{\sigma_1^2}{\sigma_0^2}\frac{N-1}{N} N\right)\ge 1-\left[\frac{\sigma_1^2}{\sigma_0^2}\frac{N-1}{N}\exp\left(1-\frac{\sigma_1^2}{\sigma_0^2}\frac{N-1}{N}\right)\right] \overset{!}{>} p_A.$$
If this holds, then $$\frac{\sigma_1^2}{\sigma_0^2}\frac{N-1}{N} N > \chi^2_{N, qf}(p_A) \iff \frac{\sigma_0^2}{\sigma_1^2}\chi^2_{N, qf}(p_A)<N-1.$$
Now, using Lemma~\ref{bounds on quantile of ncchsq} for the easy case of central chi-squared, we see: $\chi^2_N(N-1)<0.5$ and thus $$\chi^2_N\left(\frac{\sigma_0^2}{\sigma_1^2}\chi^2_{N, qf}(p_A)\right) < \frac{1}{2},$$
what we wanted to prove. 
\end{proof}

\textbf{Corollary~\ref{main thm corollary} (one-sided simpler bound):}
Let $x_0, x_1, p_A, \sigma_0, \sigma_1, N$ be as usual and assume now $\sigma_0>\sigma_1$. Then, if
$$\frac{\sigma_1}{\sigma_0}<\sqrt{1-2\sqrt{\frac{-\log(1-p_A)}{N}}},$$ then $x_1$ is not certified w.r.t $x_0$. 
\begin{proof}
We will simply prove $$\frac{\sigma_1}{\sigma_0}<\sqrt{1-2\sqrt{\frac{-\log(1-p_A)}{N}}} \implies \log\left(\frac{\sigma_1^2}{\sigma_0^2}\right)+1-\frac{\sigma_1^2}{\sigma_0^2} < \frac{2\log(1-p_A)}{N}.$$

Assume expression $\log(1-y)+y; 1>y>0$. From Taylor series, it is apparent, that $\log(1-y)+y < -\frac{y^2}{2}$. Therefore, if $-\frac{y^2}{2} < \frac{2\log(1-p_A)}{N}$, then also $\log(1-y)+y < \frac{2\log(1-p_A)}{N}$. Solving for $y$ in the first inequality, we get sufficient condition $y>2\sqrt{\frac{-\log(1-p_A)}{N}}$. Plugging $1-\frac{\sigma_1^2}{\sigma_0^2}$ into $y$ we get: $$1-\frac{\sigma_1^2}{\sigma_0^2}>2\sqrt{\frac{-\log(1-p_A)}{N}},$$ which is very easily manipulated to the inequality from theorem statement. 
\end{proof}

\textbf{Theorem~\ref{correctness of certification procedure}:}
Let $x_0, x_1, p_A, \sigma_0$ be as usual and let $\norm{x_0-x_1}=R$. Then, the following two statements hold:  
\begin{enumerate}
    \item Let $\sigma_1 \le \sigma_0$. Then, for all $\sigma_2: \sigma_1 \le \sigma_2 \le \sigma_0$, if $\xi_>(R, \sigma_2)>0.5$, then $\xi_>(R, \sigma_1)>0.5$.
    \item Let $\sigma_1 \ge \sigma_0$. Then, for all $\sigma_2: \sigma_1 \ge \sigma_2 \ge \sigma_0$, if $\xi_<(R, \sigma_2)>0.5$, then $\xi_>(R, \sigma_1)>0.5$.
\end{enumerate}
\begin{proof}
We will first prove the first statement. Denote, as usual in the proofs $B_i$ the worst-case ball for $\sigma_i$, $\mathbb{P}_i$ the probability associated to $\mathcal{N}(x_1, \sigma_i^2 I)$. Since $\xi_>(R, \sigma_2)>0.5$ and since it is essentially $\mathbb{P}_2(B_2)$, we see, that the probability of a ball under normal distribution is bigger than half. This is obviously possible just if $x_1 \in B_2$. From the fact, that $B_1$ is the worst-case ball for $\sigma_1$ we see $\xi_>(R, \sigma_1)=\mathbb{P}_1(B_1) \ge \mathbb{P}_1(B_2)$. It suffices to show $\mathbb{P}_1(B_2) \ge \mathbb{P}_2(B_2)$. 

This follows, since $\sigma_1 \le \sigma_2$ and $x_1 \in B_2$. We know, that we can rescale the space such that $$\mathbb{P}_1(B_2) = \mathbb{P}_2\left(\frac{\sigma_2}{\sigma_1}(B_2-x_1)+x_1\right),$$ using the fact that $\sigma$ just scales the normal distribution. The set $\frac{\sigma_2}{\sigma_1}(B_2-x_1)+x_1$ is just an image of $B_2$ via homothety with center $x_1$ and rate $\frac{\sigma_2}{\sigma_1}$. So it suffices to prove $$\mathbb{P}_2\left(\frac{\sigma_2}{\sigma_1}(B_2-x_1)+x_1\right) \ge \mathbb{P}_2(B_2).$$
However, obviously $\frac{\sigma_2}{\sigma_1}(B_2-x_1)+x_1 \supset B_2$ from convexity of a ball. If, namely, $x_1+z \in B_2$, then from convexity also $x_1+\frac{\sigma_1}{\sigma_2} z \in B_2$ and this maps back to $x_1+z$, thus $x_1+z$ is in an image. Applying monotonicity of $\mathbb{P}$, we obtain the result.

Now we will prove the second statement and as usual, let $A_1, A_2, \mathbb{P}_1, \mathbb{P}_2$ be as usual ($A$ is now the ball connected to class $A$). As always, $\mathbb{P}_1(A_1) \le \mathbb{P}_1(A_2)$, so it suffices to show $\mathbb{P}_2(A_2)<0.5 \implies \mathbb{P}_1(A_2)<0.5$. Now, we need to distinguish two cases. 
If $x_1 \in A_2$, the proof is completely analogical to the first part, but now reasoning on $A$'s rather than $B$'s. In this case, we will even get stronger $\mathbb{P}_1(A_2) \le \mathbb{P}_2(A_2)$ just like in the first part. If $x_1 \not\in A_2$, then it is easy to see, that indeed $\mathbb{P}_2(A_2)<0.5$, yet it is also obvious to see that $\mathbb{P}_1(A_2)<0.5$. This finishes the proof of the lemma. 
\end{proof}

\textbf{Theorem~\ref{thm the concrete method}:}
Let $\sigma(x)$ be $r$-semi-elastic function and $x_0, p_A, N, \sigma_0$ as usual. Then, the certified radius at $x_0$ guaranteed by our method is $$CR(x_0)= \max \left\{0, \sup \left\{R \ge 0; \hspace{1mm} \xi_>(R, \sigma_0 \exp(-rR))<0.5 \hspace{2mm} \text{and} \hspace{2mm} \xi_<(R, \sigma_0 \exp(rR))<0.5\right\}\right\}.$$
\begin{proof}
This follows easily from Theorem~\ref{correctness of certification procedure}
\end{proof}

\begin{lemma} \label{lipschitz continuity of many lc functions}
Let us have $ f(x)= \sum\limits_{i=1}^M f_i(x) \mathbf{1}(x \in R_i)$, where $\{R_i\}_{i=1}^M$ is finite set of regions that divide the $\mathbb{R}^N$ and $\{f_i\}_{i=1}^M, f_i: R_i \xrightarrow[]{} \mathbb{R}$ is finite set of 1-Lipschitz continuous (1-LC) functions. Moreover assume, that $f(x)$ is continuous. Then, $f(x)$ is 1-LC.
\end{lemma}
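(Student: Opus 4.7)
The plan is to argue by contradiction via iterated bisection along a straight segment. Suppose $f$ is not $1$-Lipschitz, so that for some $x, y \in \mathbb{R}^N$ and some $\epsilon > 0$,
\[ |f(x) - f(y)| \ge (1+\epsilon)\|x-y\|. \]
First I would observe that on setting $m = (x+y)/2$, at least one of $[x,m]$, $[m,y]$ must satisfy the same violation with the same constant $(1+\epsilon)$. Otherwise, by the triangle inequality and collinearity, $|f(x)-f(m)| + |f(m)-f(y)| < (1+\epsilon)(\|x-m\|+\|m-y\|) = (1+\epsilon)\|x-y\|$, contradicting the hypothesis on $|f(x)-f(y)|$. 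Iterating the bisection yields nested segments $[x_n, y_n] \subset [x,y]$ with $\|x_n - y_n\| \to 0$, a common limit $z \in [x,y]$, and the invariant $|f(x_n) - f(y_n)| \ge (1+\epsilon)\|x_n - y_n\|$ preserved at every step.

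Next, since only $M$ regions are available, pigeonhole lets me pass to a subsequence on which $x_n \in R_j$ and $y_n \in R_k$ for fixed indices $j,k$. By continuity of the limit, $z \in \overline{R_j} \cap \overline{R_k}$. The auxiliary fact I would invoke is that a continuous function on $\overline{R_i}$ which is $1$-Lipschitz on the dense subset $R_i$ is automatically $1$-Lipschitz on all of $\overline{R_i}$; this follows routinely by passing to limits along sequences in $R_i$. In particular, $f$ is $1$-Lipschitz on each $\overline{R_i}$.

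Combining these bounds yields $|f(x_n) - f(z)| \le \|x_n - z\|$ and $|f(y_n) - f(z)| \le \|y_n - z\|$. Because $z$ lies on the original segment between $x_n$ and $y_n$, collinearity gives $\|x_n - z\| + \|z - y_n\| = \|x_n - y_n\|$, so
\[ |f(x_n) - f(y_n)| \le |f(x_n) - f(z)| + |f(z) - f(y_n)| \le \|x_n - y_n\|, \]
contradicting the bisection invariant $(1+\epsilon)\|x_n - y_n\|$.

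The main obstacle I anticipate is precisely the treatment of the limit point $z$: it may lie on the boundary of several regions and, strictly speaking, belong to a region distinct from those containing the $x_n$ and the $y_n$. The continuity assumption on $f$ is exactly what bridges this gap, forcing the $1$-Lipschitz extensions of the relevant $f_i$ to agree at $z$ and permitting the two one-sided estimates to be chained. No topological regularity of the $R_i$ (openness, closedness, measurability) is required beyond continuity of the global $f$.
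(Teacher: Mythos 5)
Your proof is correct but takes a genuinely different route from the paper. The paper argues directly and constructively: it decomposes the segment $[x_1, x_2]$ into finitely many consecutive sub-segments, each dominated by a single region (via suprema of ``color'' sets on $[0,1]$ and approximating sequences at transition points), uses continuity of $f$ to pass Lipschitz estimates through those transition points, and then sums the telescoping bounds. You argue by contradiction via iterated bisection: if the $1$-Lipschitz bound fails by a factor $(1+\epsilon)$ on some segment, it must fail on one of the two halves, producing nested segments $[x_n, y_n]$ shrinking to a point $z$ where the $(1+\epsilon)$ violation persists; pigeonhole over the finitely many regions fixes $R_j \ni x_n$ and $R_k \ni y_n$ along a subsequence, so $z \in \overline{R_j} \cap \overline{R_k}$, and the auxiliary fact that continuity upgrades $1$-Lipschitzness on $R_i$ to $1$-Lipschitzness on $\overline{R_i}$ lets you chain $|f(x_n)-f(z)| + |f(z)-f(y_n)| \le \|x_n - z\| + \|z - y_n\| = \|x_n - y_n\|$, contradicting the invariant. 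Your approach avoids the paper's somewhat delicate bookkeeping (the $\sup$ construction, the case split on whether $C(d_2)=C(d_1)$, and the termination count), at the cost of introducing a compactness/subsequence extraction step and the closure lemma; the paper's approach is constructive and avoids subsequences but requires more care to show the chain of sub-segments terminates. Both exploit exactly the same two hypotheses in the same way: finiteness of $\{R_i\}$ and continuity of $f$ across region boundaries. The proof is complete and correct.
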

\begin{proof}
We do not assume any nice behaviour from our decision regions, what can make the situation quite ugly. For instance, regions might not be measurable. However, it will not be a problem for us. 

Fix $x_1, x_2$. Consider line segment $S = x_1+\alpha(x_2-x_1), \alpha \in [0,1]$. Let us instead of points in $S$ work with numbers in $[0,1]$ via the $\alpha$ encoding. Consider the following coloring $C$ of $[0, 1]$: Each point $a \in [0, 1]$ will be assigned one of $M$ colors according to which region the $x_1+a(x_2-x_1)$ belongs. Define $d_1 \equiv 0$ and $d_2 = \sup \{z \in [0, 1], C(z)=C(d_1)\}.$ Thus, $d_2$ is the supremum of all numbers colored the same color as $0$. Then, $$|f(x_1+d_2(x_2-x_1))-f(x_1+d_1(x_2-x_1))| \le (d_2-d_1)\norm{x_2-x_1}.$$

Why? Let $\{z_j\}_{j=1}^\infty$ be a non-decreasing sequence s.t. $C(z_j)=C(0)$ and $z_j \xrightarrow[]{} d_2$. Since $f$ is continuous, obviously $f(x_1+d_2(x_2-x_1))= \underset{j \xrightarrow[]{} \infty}{\lim} f(x_1+z_j(x_2-x_1))$. Now, since norm and absolute value are both continuous functions and since from 1-LC of $f_{C(0)}$ on $R_{C(0)}$ we have $$\forall j \in \mathbb{N}: \frac{|f(x_1+z_j(x_2-x_1))-f(x_1+d_1(x_2-x_1))|}{(z_j-d_1)\norm{x_2-x_1}}\le 1,$$ we also necessarily have $$\frac{|f(x_1+d_2(x_2-x_1))-f(x_1+d_1(x_2-x_1))|}{(d_2-d_1)\norm{x_2-x_1}}\le 1.$$

If $d_2=1$, we finish the construction. If not, distinguish two cases. First assume $C(d_2)=C(d_1)$. In this case, take some color $C$ s.t. $$\exists \{z_j\}_{j=1}^\infty: z_{j+1}\le z_{j} \hspace{1mm} \forall j \in \mathbb{N} \hspace{2mm} \text{and} \hspace{2mm} z_j \xrightarrow[]{} d_2 \hspace{2mm} \text{and} \hspace{2mm} z_j>d_2 \hspace{1mm}\forall j \in \mathbb{N},$$ and fix one such $\{z_j\}_{j=1}^\infty$. Obviously, $C \neq C(0),$ since $d_2$ is upper-bound on points of color $C(0)$. Then, define $d_3 = \sup \{z \in [0, 1], C(z)=C\}$ and also define $\{\overline{z_j}\}_{j=1}^\infty: \overline{z_{j+1}} \ge \overline{z_j} \hspace{1mm} \forall j \in \mathbb{N} \hspace{2mm} \text{and} \hspace{2mm} \overline{z_j} \xrightarrow[]{} d_3$.  From continuity of $f$, we again have $f(x_1+d_2(x_2-x_1))= \underset{j \xrightarrow[]{} \infty}{\lim} f(x_1+z_j(x_2-x_1))$ and similarly $f(x_1+d_3(x_2-x_1))= \underset{j \xrightarrow[]{} \infty}{\lim} f(x_1+\overline{z_j}(x_2-x_1))$. Again from continuity of absolute value and norm and 1-LC of all the partial functions we have: 
$$\forall j \in \mathbb{N}: \frac{|f(x_1+\overline{z_j}(x_2-x_1))-f(x_1+z_j(x_2-x_1))|}{(\overline{z_j}-z_j)\norm{x_2-x_1}}\le 1$$ and 
$$\frac{|f(x_1+d_3(x_2-x_1))-f(x_1+d_2(x_2-x_1))|}{(d_3-d_2)\norm{x_2-x_1}}\le 1.$$

Now assume $C(d_2)\neq C(d_1)$. Then, we can take as $C$ directly $C(d_2)$ and do the same as in the last paragraph (note, that this case could have implicitly come up in the previous construction too, but we would need to not take $C=C(0)$ and we find this case distinction to be more elegant). 

If $d_3=1$, we finish the construction. If not, we continue in exactly the same manner as before. Since the number of colors $M$ is finite, we will run out of colors in finite number of steps and thus, eventually there will be $l \le M$ s.t. $d_l=1$. The final 1-LC is now trivially obtained as follows: 
\begin{gather*}
|f(x_1+1(x_2-x_1))-f(x_1+0(x_2-x_1))|=\bigg|\sum\limits_{i=1}^{l-1} f(x_1+d_{i+1}(x_2-x_1))-f(x_1+d_i(x_2-x_1))\bigg| \\ \le \sum\limits_{i=1}^{l-1} \Big|f(x_1+d_{i+1}(x_2-x_1))-f(x_1+d_i(x_2-x_1))\Big| \le \sum\limits_{i=1}^{l-1} \big|(d_{i+1}-d_i)\norm{x_2-x_1}\big| \\ = \norm{x_2-x_1} \sum\limits_{i=1}^{l-1} (d_{i+1}-d_i) = \norm{x_2-x_1}
\end{gather*}
\end{proof}

\textbf{Theorem~\ref{r semi elasticity of sigma(x)}:}
The $\sigma(x)$ defined in Equation~\ref{eq: the sigma fcn} is $r$-semi-elastic. 
\begin{proof}
Our aim is to prove, that $$\log(\sigma(x))= \log(\sigma_b)+r \left(\frac{1}{k} \left( \sum\limits_{x_i \in \mathcal{N}_k(x)} \norm{x-x_i}\right) -m\right)$$ 
is $r$ lipschitz continuous. Obviously, this does not depend neither on $\log(\sigma_b)$, nor on $-rm$, so we will focus just on $\frac{r}{k}\sum\limits_{x_i \in \mathcal{N}_k(x)} \norm{x-x_i}$. Obviously, this function is $r$ lipschitz continuous if and only if $\frac{1}{k}\sum\limits_{x_i \in \mathcal{N}_k(x)} \norm{x-x_i}$ is 1-LC. 

Let us fix $y \in \mathbb{R}^N$. We will first prove $\norm{x-y}$ is 1-LC. Let us fix $x_1, x_2$. From triangle inequality we have $$\big|\norm{x_1-y}-\norm{x_2-y}\big| \le \norm{x_1-x_2},$$
what is exactly what we wanted to prove. 

Now fix $y_1, y_2, \dots, y_k$ and $x_1, x_2$. Then 
\begin{gather*}
\bigg|\frac{1}{k}\sum\limits_{i=1}^k \norm{x_1-y_i}-\frac{1}{k}\sum\limits_{i=1}^k \norm{x_2-y_i}\bigg|=\frac{1}{k}\bigg|\sum\limits_{i=1}^k \norm{x_1-y_i} - \norm{x_2-y_i}\bigg| \\ \le \frac{1}{k}\sum\limits_{i=1}^k \Big|\norm{x_1-y_i} - \norm{x_2-y_i}\Big| \le \frac{1}{k}\sum\limits_{i=1}^k \norm{x_1-x_2}=1
\end{gather*}

Finally note, that using the $k$ nearest neighbors out of finite training dataset will divide $\mathbb{R}^N$ in a finite number of regions, where each region is defined by the set of $k$ nearest neighbors for $x$ in that region. Note, that the average distance from $k$ nearest neighbors is obviously continuous. Then, using Lemma~\ref{lipschitz continuity of many lc functions}, the claim follows.
\end{proof}


\end{document}